\documentclass[11pt, reqno]{amsart}
\usepackage{txfonts}
\usepackage{amsmath,amssymb,amsthm,mathrsfs,enumerate,bm,xcolor,multirow,pbox}
\usepackage{mathrsfs}
\usepackage{algorithm,algorithmic,float,placeins}
\usepackage{graphicx,color,framed,tikz,caption,subcaption}
\usepackage{enumitem}
\setlist{leftmargin=9mm}
\usepackage[colorlinks,linkcolor=black,citecolor=black,urlcolor=black]{hyperref}
\allowdisplaybreaks[4]
\numberwithin{equation}{section}
\newcommand{\N}{\mathbb{N}}
\newcommand{\R}{\mathbb{R}}

\newcommand{\pnorm}[2]{\lVert #1\rVert_{#2}}
\newcommand{\bigpnorm}[2]{\big\lVert#1\big\rVert_{#2}}
\newcommand{\biggpnorm}[2]{\bigg\lVert#1\bigg\rVert_{#2}}
\newcommand{\abs}[1]{\lvert#1\rvert}
\newcommand{\bigabs}[1]{\big\lvert#1\big\rvert}
\newcommand{\biggabs}[1]{\bigg\lvert#1\bigg\rvert}
\newcommand{\iprod}[2]{\langle#1,#2\rangle}
\newcommand{\bigiprod}[2]{\big\langle#1,#2\big\rangle}

\renewcommand{\epsilon}{\varepsilon}

\renewcommand{\d}[1]{\mathrm{d}#1}

\newcommand{\smallo}{\mathfrak{o}}
\newcommand{\bigo}{\mathcal{O}}

\renewcommand{\tilde}{\widetilde}

\DeclareMathOperator{\E}{\mathbb{E}}
\DeclareMathOperator{\Prob}{\mathbb{P}}

\DeclareMathOperator{\var}{Var}
\DeclareMathOperator{\cov}{Cov}

\DeclareMathOperator{\op}{op}

\DeclareMathOperator{\prox}{\mathsf{prox}}

\DeclareMathOperator*{\argmin}{arg\,min\,}

\newcommand{\red}[1]{{\color{red}#1}}
\newcommand{\blue}[1]{{\color{blue}#1}}

\usepackage{soul}

\theoremstyle{definition}\newtheorem{problem}{Problem}[section]
\theoremstyle{definition}\newtheorem{definition}[problem]{Definition}
\theoremstyle{definition}\newtheorem{algorithmdef}{Algorithm}
\theoremstyle{remark}\newtheorem{assumption}{Assumption}

\theoremstyle{remark}\newtheorem{remark}{Remark}
\theoremstyle{definition}
\theoremstyle{plain}\newtheorem{theorem}[problem]{Theorem}
\theoremstyle{plain}
\theoremstyle{plain}\newtheorem{lemma}[problem]{Lemma}
\theoremstyle{plain}\newtheorem{proposition}[problem]{Proposition}
\theoremstyle{plain}
\theoremstyle{plain}

\AtBeginDocument{%
	\def\MR#1{}
}

\begin{document}

\title[Gradient descent training dynamics for neural networks]{Precise gradient descent training dynamics for finite-width multi-layer neural networks}

\author[Q. Han]{Qiyang Han}

\address[Q. Han]{
Department of Statistics, Rutgers University, Piscataway, NJ 08854, USA.
}
\email{qh85@stat.rutgers.edu}

\author[M. Imaizumi]{Masaaki Imaizumi}
\address[M. Imaizumi]{Graduate School of Arts and Sciences, The University of Tokyo, Meguro, Tokyo 153-8902, Japan. RIKEN Center for Advanced Intelligence Project, Chuo, Tokyo 103-0027, Japan.}
\email{imaizumi@g.ecc.u-tokyo.ac.jp}

\date{\today}

\keywords{empirical risk minimization, generalization error, gradient descent, neural networks, state evolution}
\subjclass[2000]{60E15, 60G15}

\begin{abstract}
Understanding the training dynamics of gradient descent in realistic, finite-width multi-layer neural networks remains a central challenge in deep learning theory. 
In this paper, we provide the first precise distributional characterization of gradient descent iterates for general multi-layer neural networks under the canonical single-index regression model, in the so-called `finite-width proportional regime', where the sample size and feature dimension grow proportionally while the network width and depth remain bounded. Our non-asymptotic state evolution theory captures Gaussian fluctuations in first-layer weights and deterministic concentration in deeper-layer weights, and remains valid for non-Gaussian features.

Our theory differs from popular frameworks such as the neural tangent kernel (NTK), mean-field (MF), and tensor program (TP) in several key aspects. First, our theory operates intrinsically in the finite-width regime consistent with practical architectures, whereas these existing theories are fundamentally infinite-width. Second, our theory allows weights to evolve from individual initializations beyond the lazy training regime, whereas NTK and MF are either frozen at or only weakly sensitive to initialization, and TP relies on special initialization schemes. Third, our theory characterizes both training and generalization errors for general multi-layer neural networks and reveals a clear generalization gap beyond the classical uniform convergence regime, whereas existing theories study generalization almost exclusively in two-layer settings.

As a statistical application, we show that vanilla gradient descent can be augmented with a few closed-form computations to yield consistent estimates of the generalization error at each iteration. Crucially, our method remains valid without requiring either algorithmic convergence or any knowledge of the underlying link function and the signal, and may therefore inform practical decisions such as early stopping and hyperparameter tuning. As a further theoretical implication, we show that despite model misspecification, the model learned by gradient descent retains the structure of a single-index function, with an effective signal determined by a linear combination of the true signal and the initialization.

The proof relies on an iterative reduction scheme that maps the gradient descent iterates to a sequence of matrix-variate general first order methods (GFOMs), for which the entrywise, non-asymptotic GFOM state evolution theory recently developed in \cite{han2024entrywise} is employed in its full strength.
\end{abstract}

\maketitle

\setcounter{tocdepth}{1}
\tableofcontents

\sloppy

\section{Introduction}

\subsection{Overview}
Consider the class of feed-forward $L$-layer ($L\geq 2$) neural networks model that consists of functions $f_{\bm{W}}:\R^n\to \R$ defined by
\begin{align}\label{def:NN_model}
f_{\bm{W}}(x)\equiv W_L^\top \sigma\big(W_{L-1}^\top\cdots \sigma(W_1^\top x)\big),\quad \hbox{for all } x \in \R^n.
\end{align}
Here (i) $\bm{W}=(W_1,\ldots,W_L) \in \R^{n\times q}\times (\R^{q\times q})^{[2:L]}$ denotes a network parameter\footnote{With slight abuse of notation, we shall identify $W_L$ and its only non-trivial first column in the introduction.} with $W_L=[\ast\,|\, 0_{q\times (q-1)}]$ and (ii) $\sigma:\R\to \R$ is a (non-linear) activation function applied entrywise. We assume access to training data consisting of feature-response pairs $\{(X_i,Y_i)\}_{i \in [m]} \subset \R^{n}\times \R$ from a standard single-index regression model:
\begin{align}\label{def:model}
Y_i = \varphi_\ast\big(\iprod{X_i}{\mu_\ast}\big)+\xi_i,\quad i \in [m],
\end{align}
where $\varphi_\ast:\R\to \R$ is an unknown link function, $\{\xi_i\}_{i \in [m]}$ are (random) errors, and $m$ and $n$ denote the sample size and feature dimension, respectively. For definiteness, we consider normalization with $\var(X_{ij})\asymp 1$ and $\pnorm{\mu_\ast}{}=\bigo(1)$. 

We will be interested in gradient descent training for multi-layer neural network models (\ref{def:NN_model}) that has witnessed enormous success in modern machine learning. The simplest possible gradient descent over the empirical squared loss 
\begin{align}\label{def:ERM_squared_loss}
\mathsf{L}(\bm{W}) = \frac{1}{2m}\sum_{i \in [m]} \big(Y_i-f_{\bm{W}}(X_i)\big)^2
\end{align}
proceeds as follows: with initialization $\bm{W}^{(0)}$ and a learning rate $\eta>0$, for $t=1,2,\ldots$, the gradient descent iteratively updates the network parameter $\bm{W}^{(t)}$ by
\begin{align}\label{def:grad_descent_intro}
\bm{W}^{(t)}\equiv \bm{W}^{(t-1)}- \eta\cdot  \nabla_{W} \mathsf{L}(\bm{W}^{(t-1)}).
\end{align}
A central object of statistical interest for the gradient descent $\bm{W}^{(t)}$ in (\ref{def:grad_descent_intro}) is the generalization capacity of the learned model $f_{\bm{W}^{(t)}}$ measured by the \emph{generalization/test error}, formally defined as
\begin{align}\label{def:test_error}
\mathscr{E}_{\texttt{test}}^{(t)}(X,Y)\equiv \E \big[ \big(Y_{\textrm{new}}- f_{\bm{W}^{(t)}}( X_{\textrm{new}} ) \big)^2 | \{(X_i,Y_i)\}_{i \in [m]} \big].
\end{align}
Here the expectation in (\ref{def:test_error}) is taken with respect to a pair of new test data $(X_{\textrm{new}},Y_{\textrm{new}})\in \R^n\times \R$ from the regression model (\ref{def:model}).

\vspace{0.5em}
\noindent \textbf{Question.} \emph{Can we characterize the behavior of the gradient descent iterates $\bm{W}^{(t)}$ in (\ref{def:grad_descent_intro}) and its generalization error (\ref{def:test_error}) for the general class of multi-layer neural network models (\ref{def:NN_model})?}
\vspace{0.5em}

As the landscape of the loss function $\bm{W}\mapsto\mathsf{L}(\bm{W})$ in (\ref{def:ERM_squared_loss}) is highly non-convex, gradient descent over neural network models is not apriori guaranteed to converge to a near-global optimum. Therefore, standard statistical theory does not readily apply to provide quantitative characterizations for its generalization error (\ref{def:test_error}).

Several rigorous theoretical frameworks have been proposed in the literature to shed light on the Question (some of which mainly consider the closely related stochastic gradient descent setting):
\begin{enumerate}
	\item[(T1)] In the neural tangent kernel (NTK) theory~\cite{jacot2018neural}, the gradient descent dynamics are approximated by a linearized model around the initialization, which becomes exact in the infinite-width limit under suitable scaling and small learning rates; see, e.g.,~\cite{du2019gradient,allen2019convergence,chizat2019lazy,oymak2019overparameterized,ji2020oolylogarithmic,zou2020gradient,bartlett2021deep}.
	
	\item[(T2)] In the mean-field (MF) theory~\cite{chizat2018global,mei2018mean,sirignano2020mean,rotskoff2022trainability,nguyen2023rigorous}, typically for two-layer networks, the evolution of the empirical distribution of hidden-layer weights is described by a continuous-time partial differential equation in the infinite-width limit, interpreted as a Wasserstein gradient flow on the space of probability measures.
	
	\item[(T3)] The tensor program (TP) framework~\cite{yang2019scaling,yang2020tensor2,yang2020tensor3,yang2020feature,yang2021tuning,golikov2022non,yang2024tensor} establishes the joint weak convergence of the empirical distributions of neuron-level scalar random variables (e.g., (pre-)activations, weights) across layers, under i.i.d.\ initialization and infinite-width scaling. 
\end{enumerate}
While the aforementioned rigorous theories have led to significant progress in understanding the behavior of the gradient descent iterates $\bm{W}^{(t)}$ in~(\ref{def:grad_descent_intro}), they exhibit limitations in several important aspects:
\begin{enumerate}
	\item[(L1)] (\emph{Infinite width}). All NTK, MF, and TP theories are formulated under an essentially infinite-width setting, or aim to quantify deviations from it. In contrast, many widely used deep learning models such as \texttt{ResNet}~\cite{he2016deep,zagoruyko2016wide}, \texttt{EfficientNet}~\cite{tan2019efficientnet}, \texttt{ViT}~\cite{dosovitskiy2020image}, and \texttt{GPT-3}~\cite{brown2020language}, are trained on large, high-dimensional datasets using networks with relatively modest width and depth (cf.~Table~\ref{tab:model-summary}), and the empirical training dynamics are heavily influenced by finite-width effects~\cite{lee2020finite}.
	
	\item[(L2)] (\emph{Impact of initialization}). 
    The NTK theory constrains the gradient updates $\bm{W}^{(t)}$ in the lazy training regime that remains close to the initialization $\bm{W}^{(0)}$ in magnitude~\cite{chizat2019lazy}. The MF theory incorporates initialization through its limiting empirical distribution, whereas empirical training dynamics often exhibit sensitivity to specific realizations of the initialization~\cite{sutskever2013importance,hanin2018start}. Moreover, when the number of layer is at least $4$, the MF theory must utilize specially designed initialization schemes to avoid strong degeneracy \cite{nguyen2023rigorous}. The TP theory requires the i.i.d. initialization scheme in an essential way, and does not currently accommodate more structured initialization schemes (e.g.,  orthogonal  initializations~\cite{saxe2013exact}). 
	
	\item[(L3)] (\emph{Generalization error}). The NTK theory provides sharp generalization characterizations in the lazy training regime via kernel ridge (or ridgeless) estimators, but is primarily limited to two-layer neural networks; cf.~\cite{wu2020optimal,ghorbani2021linearized,mei2022generalization,mei2022generalizationb,hastie2022surprises,montanari2022interpolation,han2023distribution}. Both MF and TP theories currently fall short of providing useful characterizations of the generalization error~(\ref{def:test_error}) even in the two-layer case.
\end{enumerate}

\begin{table}
	\centering
	\small
	\begin{tabular}{l|c|c|c|c|l}
		\textbf{Model} & \textbf{Dep. $L$} & \textbf{Wid. $q$} & \textbf{Sample Size $m$} & \textbf{Feat. Dim. $n$} & \textbf{Reference} \\
		\hline
		\hline
		\texttt{ResNet-152} & 152 & 2048 & ImageNet: 1.3M & $\sim$150K & \cite{he2016deep} \\
		\hline
		\texttt{EfficientNet-B7} & $\sim$66 & $\sim$640 & ImageNet: 1.3M & $\sim$1M & \cite{tan2019efficientnet} \\
		\hline
		\texttt{ViT-L/16} & 24 & 1024 & ImageNet: 1.3M  & $\sim$200K & \cite{dosovitskiy2020image} \\
		\hline
		\texttt{GPT-3} & 96 & $\sim$12K & $\sim$300B tokens & up to 25M &  \cite{brown2020language} 
	\end{tabular}
    \vspace{0.5em}
	\caption{Some examples of popular neural network models. Here 1K = $10^3$, 1M = $10^6$ and 1B = $10^9$.}
	\label{tab:model-summary}
\end{table}

The main goal of this paper is to introduce a new theoretical framework that addresses some of the aforementioned limitations (L1)-(L3) within the existing theories (T1)-(T3), and thereby providing a new perspective on the Question. We will do so by characterizing, for every iteration $t$, the precise distributional behavior of the gradient updates $\bm{W}^{(t)}$ and related quantities in the data model (\ref{def:model}), in what we call the \emph{finite-width proportional regime}, where
\begin{equation}\label{def:intro_prop_regime}
\begin{minipage}{0.85\textwidth}
the sample size $m$ and the feature dimension $n$ grow proportionally, while the network width $q$ and depth $L$ remain bounded.
\end{minipage}
\end{equation}
Our choice of the regime~(\ref{def:intro_prop_regime}) places us in the finite-width setting, which is of greater practical relevance for many popular neural network architectures (cf.~Table~\ref{tab:model-summary}) compared to the infinite-width regime (L1). As will be clear below, our theory in the regime (\ref{def:intro_prop_regime}) retains non-trivial evolution of gradient descent away from initialization beyond the lazy training regime in the NTK theory (see, e.g., Eqn (\ref{eqn:weight_evolution}) for a precise mathematical description), and depends on concrete realizations of the initialization that go beyond the MF and TP theories as in (L2). Furthermore, despite the outstanding non-convexity of the loss landscape, the regime (\ref{def:intro_prop_regime}) allows for a precise understanding of a large array of statistical properties of the gradient updates $\bm{W}^{(t)}$ in (\ref{def:grad_descent_intro}), including their generalization error (\ref{def:test_error}) in (L3).

Before presenting the details of our framework, we highlight a distinct line of theoretical work~\cite{bordelon2023self,bordelon2024dynamics} that employs techniques from the so-called Dynamic Mean Field Theory (DMFT), originally developed in statistical physics. This framework provides invaluable physics-based insights into the dynamics of gradient descent in certain infinite-width neural network training settings, and some of its predictions have been formally validated in the setting of \emph{layer-wise} gradient descent for two-layer neural networks; see~\cite{celentano2021high}. While this line of work has been successful in capturing certain qualitative trends in generalization behavior, its focus differs from our objective of developing a mathematical theory that applies beyond the two-layer case.

\subsection{Our new theoretical framework}

 Assuming a simple data model (\ref{def:model}) where $X_i$'s have independent, mean-zero and sub-gaussian entries with unit variance, we show in Theorem \ref{thm:se_gd_NN} that the following hold for each iteration $t=1,2,\ldots$ in the finite-width proportional regime (\ref{def:intro_prop_regime}):
\begin{itemize}
	\item  (\emph{First-layer weight $W_1^{(t)}\in \R^{n\times q}$}). There exist linear functions $\big\{\Delta_{\ell}^{(t)}:\R^{q[1:t]}\to \R^q\big\}_{\ell \in [n]}$, (typically) non-linear functions $\big\{\Theta_{k}^{(t+1)}: \R^{q[0:t+1]}\to \R^q\big\}_{k \in [m]}$, and Gaussian vectors $\mathfrak{U}^{(0)},\ldots,\mathfrak{U}^{(t+1)}, \mathfrak{V}^{(1)},\ldots,\mathfrak{V}^{(t)}\in \R^q$ such that the distributional approximations
	\begin{align}\label{eqn:gd_dist_1}
	n^{1/2}W_{1;\ell\cdot}^{(t)} \stackrel{d}{\approx} \Delta_{\ell}^{(t)}(\mathfrak{V}^{([1:t])}),\quad
	\big(X W_1^{(t)}\big)_{k\cdot}\stackrel{d}{\approx} \Theta_{k}^{(t+1)}(\mathfrak{U}^{([0:t+1])})
	\end{align}
	hold in an averaged sense across $k \in [m]$ and $\ell \in [n]$.
	\item (\emph{Deeper-layer weights $\{W_\alpha^{(t)}\in \R^{q\times q}\}_{\alpha \in [2:L]}$}). There exist deterministic matrices $\{V_\alpha^{(t)} \in \R^{q\times q}\}_{\alpha \in [2:L]}$ such that the following concentration holds:
	\begin{align}\label{eqn:gd_dist_2}
	W_\alpha^{(t)} \approx V_\alpha^{(t)},\quad \alpha = 2,\ldots, L.
	\end{align}
\end{itemize}
For any fixed initialization $\bm{W}^{(0)}$, the deterministic functions $\{\Delta_{\ell}^{(t)}\}$, $\{\Theta_{k}^{(t+1)}\}$, the matrices $\{V_\alpha^{(t)}\}$ and the Gaussian laws of $(\mathfrak{U}^{([0:t+1])},\mathfrak{V}^{([1:t])})$ are defined recursively in Definition \ref{def:gd_NN_se} via a theoretical formalism known as \emph{state evolution}, and typically evolve non-trivially for each iteration $t$. As a result, in contrast to the existing theories as described in (L2), the gradient descent training dynamics resulted from (\ref{eqn:gd_dist_1})-(\ref{eqn:gd_dist_2}) typically evolve non-trivially from the initialization, and retain modest dependence on the concrete realization of $\bm{W}^{(0)}$ that is consistent with empirical observations as mentioned above.

Our theory in (\ref{eqn:gd_dist_1})-(\ref{eqn:gd_dist_2}) addresses some limitations in (L3) of the aforementioned frameworks. Specifically, we characterize in Theorem \ref{thm:err_gd_NN}, up to the leading order, both the generalization error $\mathscr{E}_{\texttt{test}}^{(t)}(X,Y)$ in (\ref{def:test_error}) and the training error $\mathscr{E}_{\texttt{train}}^{(t)}(X,Y)$ (to be defined below in Definition \ref{def:training_test_error}) for general multi-layer neural network models (\ref{def:NN_model}) via low-dimensional Gaussian integrals: For each iteration $t=1,2,\ldots$, with high probability, 
\begin{align}\label{eqn:gen_err_char_intro}
\begin{cases}
\mathscr{E}_{\texttt{train}}^{(t)}(X,Y)\approx m^{-1} \sum_{k \in [m]}\E \pnorm{\mathscr{R}_{V^{(t)};k}\big(\Theta_{k}^{(t+1)}(\mathfrak{U}^{([0:t+1])}),\mathfrak{U}^{(0)} \big)}{}^2,\\
\mathscr{E}_{\texttt{test}}^{(t)}(X,Y)\approx m^{-1} \sum_{k \in [m]} \E \pnorm{\mathscr{R}_{V^{(t)};k}\big(\mathfrak{U}^{(t+1)},\mathfrak{U}^{(0)} \big)}{}^2.
\end{cases}
\end{align}
Here $\big\{\mathscr{R}_{\cdot;k}:\R^q\times \R^q\to \R^q\big\}_{k \in [m]}$ are deterministic functions defined non-recursively (cf. Definition \ref{def:mean_field_fcn}). Since $\Theta^{(t)}(\mathfrak{u}^{([0:t])}) \neq \mathfrak{u}^{(t)}$ is generally non-linear, our characterization (\ref{eqn:gen_err_char_intro}) quantifies the non-trivial generalization gap, defined as the difference between the training error and the generalization/test error. This result is useful for describing gradient descent training of neural networks beyond the classical uniform converegence regime, since the generalization gap arises ubiquitously is this regime \cite{bartlett2021deep}. 

While we focus on the single-index regression model (\ref{def:model}) and the vanilla gradient descent iterate (\ref{def:grad_descent_intro}) in this paper, it is straightforward to generalize our theory (\ref{eqn:gd_dist_1})-(\ref{eqn:gd_dist_2}) to the multi-index regression model, and to other gradient descent variants such as stochastic gradient descent with large mini-batches (proportionally to $m$ or $n$) or noisy gradient descent. We omit these formal extensions to avoid overly complicated notation system.

\subsection{A statistical application: Algorithmic estimation of generalization error}

As a statistical application of our theory in (\ref{eqn:gd_dist_1})–(\ref{eqn:gd_dist_2}) and the characterization in (\ref{eqn:gen_err_char_intro}), we show that the vanilla gradient descent algorithm (\ref{def:grad_descent_intro}) can be augmented (cf. Algorithm \ref{alg:aug_gd_nn}) to simultaneously output, at each iteration $t$, (i) the gradient update $\bm{W}^{(t)}$ and (ii) a consistent estimate $\hat{\mathscr{E}}_{\texttt{test}}^{(t)}(X,Y)$ of the (unknown) generalization error $\mathscr{E}_{\texttt{test}}^{(t)}(X,Y)$. The augmented algorithm requires only a small number of closed-form computations of auxiliary statistics beyond standard gradient evaluations and, crucially, requires neither the convergence of gradient descent nor any ad-hoc tuning or prior knowledge of the unknown link function $\varphi_\ast$ or the underlying signal $\mu_\ast$. Since training neural networks is typically computationally expensive and need not be run until convergence, having a reliable, data-driven estimate of the generalization error at each iteration can serve various practical purposes, such as guiding early stopping or tuning hyperparameters effectively.

In the broader literature, the estimation of generalization error via iterative algorithms was initiated in \cite{bellec2024uncertainty,tan2024estimating} in the context of gradient descent over convex losses in linear models with Gaussian data. \cite{han2024gradient} proposes an alternative, unified algorithmic method that applies to a much broader class of models, including linear and logistic regression with non-convex losses and non-Gaussian data. The algorithmic estimation method for the generalization error of neural network gradient descent training in this paper draws conceptual inspiration from this line of work, but extends substantially beyond simple linear/logistic models to address the full complexity of non-convex multi-layer neural networks (\ref{def:NN_model}).

\subsection{A further theoretical implication: Structure of the learned model $f_{\textbf{W}^{(t)}} $}

As a further implication of our new theoretical framework (\ref{eqn:gd_dist_1})-(\ref{eqn:gd_dist_2}), we provide a precise structural understanding for the learned model $f_{\bm{W}^{(t)}}$ beyond the generalization error (\ref{eqn:gen_err_char_intro}). We show in Theorem \ref{thm:represent_learning} that in a proper sense, 
\begin{align}\label{ineq:represent_learning_1}
f_{\bm{W}^{(t)}}(x)\stackrel{d}{\approx} \mathfrak{h}_{V^{(t)}_{[2:L]}}\big(U_{\ast,\texttt{eff}}^{(t),\top} x+ \mathsf{Z}_{\texttt{hd}}^{(t)}\big),\quad x \in \R^n,
\end{align}
where $\mathfrak{h}_{V^{(t)}_{[2:L]}}:\R^q \to \R^q$ is an \emph{effective link function}, $U_{\ast,\texttt{eff}}^{(t)}\in \R^{n\times q}$ is an \emph{effective signal} formed as a linear combination of the true signal $\mu_\ast$ and the initialization $W_1^{(0)}$, and $\mathsf{Z}_{\texttt{hd}}^{(t)}\in \R^q$ is a Gaussian noise due to the high dimensionality in the finite-width proportional regime (\ref{def:intro_prop_regime}). 

The representation (\ref{ineq:represent_learning_1}) sheds new light on the inner working of the feature learning mechanism of gradient descent training. In particular, despite that the single-index regression function $x\mapsto \varphi_\ast(\iprod{x}{\mu_\ast})$ is intrinsically misspecified by the neural network model (\ref{def:NN_model}), the learned model $x\mapsto f_{\bm{W}^{(t)}}(x)$ is qualitatively again a single-index function. Particularly, the single-index function has the prescribed effective link function and depends on the input $x \in \R^n$ only through its linear projection via the effective signal $U_{\ast,\texttt{eff}}^{(t)}$. 
As a result, when regression data comes from the single-index model (\ref{def:model}), the learned model $f_{\bm{W}^{(t)}}(x)$ depends on the unknown signal $\mu_\ast$ only via its linear projection $\iprod{x}{\mu_\ast}$ and its signal strength $\pnorm{\mu_\ast}{}$ (in a proper sense).

From a different perspective, (\ref{ineq:represent_learning_1}) is reminiscent of mean-field characterizations of regularized regression estimators in high-dimensional statistics \cite{bayati2012lasso,donoho2016high,elkaroui2018impact,thrampoulidis2018precise,sur2019modern,miolane2021distribution,celentano2023lasso,han2023noisy,han2023universality}. For instance, in linear regression, regularized estimators are distributed as effective functions applied to signals corrupted by Gaussian noise (cf. Eqn (\ref{eqn:rep_est_linear_model})). Our representation (\ref{ineq:represent_learning_1}) differs from this line of work in two key aspects: (i) it holds algorithmically at each gradient descent iteration, and (ii) it remains valid even under model misspecification of the neural network model with respect to the single-index structure.

\subsection{Proof techniques} \label{sec:proof_techniques}

The proof of our theory (\ref{eqn:gd_dist_1})-(\ref{eqn:gd_dist_2}) employs a matrix-variate version of the non-asymptotic, entrywise distribution theory for general first order methods (GFOMs) recently developed in \cite{han2024entrywise}\footnote{The matrix version of the GFOM theory in \cite{han2024entrywise} along with some other useful results are included in Appendix \ref{section:GFOM_theory} for the convenience of the readers.}. While it is known that gradient descent in simple models can be directly converted into the canonical GFOM form (cf. \cite[Section 5]{han2024entrywise}, and also \cite{celentano2020estimation,celentano2021high,gerbelot2024rigorous}), a major technical difficulty for (\ref{def:grad_descent_intro}) lies in the fact that weights at deeper layers $\{W_\alpha^{(t)}\}_{\alpha \in [2:L]}$ depend on $\{XW_1^{(s)}\}_{s \in [1:t-1]}$ in a highly non-separable way, and therefore (\ref{def:grad_descent_intro}) cannot be converted to the canonical GFOM directly. 

Under the finite-width proportional regime (\ref{def:intro_prop_regime}), we develop an \emph{iterative reduction scheme} to address this technical challenge. At a high level, we show that the gradient descent (\ref{def:grad_descent_intro}) are, in a suitable sense, close to a sequence of successively constructed GFOM, referred to as the \emph{auxiliary GFOM} in our proof. The construction of the auxiliary GFOM at a given iteration relies on the precise distributional characterizations of all preceding iterates obtained via state evolution. In other words, the iterative reduction scheme alternates between applying the GFOM theory and constructing the next auxiliary GFOM iterate. A technical overview of this scheme is provided in Section \ref{section:proof_sketch}.

Executing this iterative reduction scheme requires leveraging the GFOM theory developed in \cite{han2024entrywise} in its full strength with entrywise, non-asymptotic guarantees under heterogeneous random matrix ensembles. Specifically, the auxiliary GFOM is built using a heterogeneous data matrix; controlling the error between the auxiliary GFOM and the gradient descent necessitates apriori delocalization/$\ell_\infty$ estimates, which in turn require GFOM error bounds that are non-asymptotic and polynomial in $n$. Moreover, the concentration estimate (\ref{eqn:gd_dist_2}) for weights in deeper layers requires an entrywise GFOM theory.

\subsection{Further related literature}\label{subsection:further_literature}

As the literature is broad, here we only briefly review two closely related aspects.

\subsubsection{Other precise characterizations of the generalization error}

Precise characterizations of the generalization error (\ref{def:test_error}) for (variants of) gradient descent beyond the NTK regime remain limited mostly to wide two-layer networks. \cite{ba2023learning} analyzed generalization after a single gradient step, showing potential improvements over kernel methods; see also \cite{moniri2023theory,dandi2024random} for extensions to larger or maximal learning rates (in the sense of \cite{yang2020feature} as $q \to \infty$). In another line of work, \cite{berthier2024learning} characterized the incremental learning phenomena in generalization with different time scales under certain gradient flow dynamics, while \cite{montanari2025dynamical} identified separate time scales where non-monotonic generalization may emerge.

Our result (\ref{eqn:gen_err_char_intro}) characterizes the generalization error in the finite-width proportional regime (\ref{def:intro_prop_regime}) that differs significantly from the above works, and applies to networks of arbitrary depth. Understanding qualitatively how the behavior of the right-hand side of (\ref{eqn:gen_err_char_intro}) varies with iteration $t$, width $q$, and depth $L$ along the lines of the above works is an interesting direction for future research.

\subsubsection{Sample complexity for learning single-index functions}
For the single-index data model (\ref{def:model}), a significant line of recent work focuses on the sample complexity theory for (stochastic) gradient training of learning $\varphi_\ast$ in the two-layer neural network model. In this line of works, the complexity of $\varphi_\ast$ is typically quantified by the so-called \emph{information exponent} \cite{benarous2021online} $\texttt{IE}(\varphi_\ast)$ defined as the order of the first non-trivial Hermite coefficient of $\varphi_\ast$, whereas the sample complexity is usually measured via the estimation error of the learned model $f_{\bm{W}^{(t)}}$ \cite{abbe2022merged,abbe2023sgd,damian2022neural,damian2023smoothing,ba2023learning,mahankali2023beyond,lee2024neural} and/or the correlation of $\bm{W}^{(t)}$ with the signal $\mu_\ast$ \cite{bietti2022learning,mousavi2023neural,mousavi2023gradient,arnaboldi2024repetita,dandi2024how}. The main thrust of this line of work brings down the sample complexity of variants of gradient descent (in respective senses) near the information-theoretic limit $\bigo(n)$ as opposed to the earlier belief $\bigo(n^{\Theta(\texttt{IE}(\varphi_\ast))})$. The readers are also referred to \cite{nichani2023provable,ren2023depth} and references therein for related results in certain specialized settings beyond the two-layer neural networks. 

While our theory (\ref{eqn:gd_dist_1})-(\ref{eqn:gd_dist_2}) does not directly address the sample complexity of the gradient descent iterate (\ref{def:grad_descent_intro}), we believe that further analysis of the representation (\ref{ineq:represent_learning_1}) may yield useful insights in this direction; interested readers are referred to the discussion following Theorem \ref{thm:represent_learning_large_sample} for more details.

\subsection{Organization}
The rest of the paper is organized as follows. Section \ref{section:gradient_mapping} presents the gradient formulae and associated `theoretical functions'. Our main theory (\ref{eqn:gd_dist_1})–(\ref{eqn:gd_dist_2}) is detailed in Section \ref{section:gd_precise_dynamics}. Section \ref{section:gd_inference} formalizes the generalization error characterization (\ref{eqn:gen_err_char_intro}) and describes the algorithmic estimation method. Section \ref{section:gaussian_representation_f_W} presents the rigorous version of the structural representation (\ref{ineq:represent_learning_1}). Numerical results for the proposed algorithmic estimation method are reported in Section \ref{section:simulation}. A proof sketch appears in Section \ref{section:proof_sketch}, with full details in Sections \ref{section:proof_preliminaries}-\ref{section:remaining_proof}. Appendix \ref{section:notation} summarizes notation; Appendix \ref{section:GFOM_theory} presents the matrix-variate GFOM theory and auxiliary results; Appendix \ref{section:simulation_additional} contains additional simulations.

\section{Preliminaries: Gradient mappings}\label{section:gradient_mapping}

Fix the number of layer $L \in \mathbb{Z}_{\geq 2}$ and network width $q \in \N$. The first layer weight matrix is $W_1 \in \R^{n\times q}$, while for $\alpha \in [2:L]$, the $\alpha$-th layer weight matrix is $W_\alpha \in \R^{q\times q} =\mathbb{M}_q$. We slightly alter the definition of the network parameter $\bm{W}$ for notational convenience, hereafter defined as
\begin{align}\label{eqn:homo_nn_def_1}
\bm{W}\equiv \big(W_1^\top,\ldots,W_L^\top\big)^\top \in \R^{n_{L,q}\times q},\quad n_{L,q}\equiv n+(L-1)q.
\end{align}
The neural network model (\ref{eqn:homo_nn_def_1}) accommodates networks with varying widths $\{W_\alpha \in \R^{q_{\alpha-1}\times q_\alpha}\}_{\alpha \in [1:L]}$ where $q_0=n$ and typically $q_L=1$. In this case, we may identify the neural network with the same width $q\equiv \pnorm{q_{[1:L]}}{\infty}$ by $W_1 \in \R^{n\times q_1}\to 
\big(W_1\,\, 0\big)
\in \R^{n\times q}$ and $W_\alpha \in \R^{q_{\alpha-1}\times q_\alpha} \to 
\Big(\begin{smallmatrix} W_\alpha & 0 \\ 0 & 0 \end{smallmatrix}\Big)  \in \R^{q\times q}$ for $\alpha \in [2:L]$. In the sequel, we shall therefore focus on the network parametrization (\ref{eqn:homo_nn_def_1}).

For notational compatibility, we write $(Y_i)_{[q]}\equiv \big(Y_i\,|\,0_{q-1}^\top\big)^\top \in \R^q$,  $(\xi_i)_{[q]}\equiv \big(\xi_i\,|\,0_{q-1}^\top\big)^\top \in \R^q$ and let the matrices $\mu_{\ast,[q]} \in \R^{n\times q}$ and $Y_{[q]}, \xi_{[q]}\in \R^{m\times q}$ be
\begin{align}\label{eqn:homo_nn_def_2}
\mu_{\ast,[q]}\equiv \big(\mu_\ast\,|\, 0_{n\times (q-1)}\big),\,
Y_{[q]}\equiv \big(Y\,|\, 0_{m\times (q-1)}\big),\,
\xi_{[q]}\equiv  \big(\xi\,|\, 0_{m\times (q-1)}\big).
\end{align}
\subsection{Algorithmic gradient mappings}

We consider a slightly more general class of multi-layer neural networks than (\ref{def:NN_model}) which allows for possibly different activation functions across layers.

\begin{definition}\label{def:grad_descent_fcn}
Fix a sequence of functions $\bm{\sigma}\equiv \{\sigma_\alpha:\R \to \R\}_{\alpha \in [1:L]}$ with $\sigma_0\equiv \sigma_L\equiv \mathrm{id}$.
\begin{enumerate}
	\item (\emph{Feed-forward mappings (vector version)}). Let $h_{\bm{W};0}\equiv \mathrm{id}(\R^n)$ and $\{h_{\bm{W};\alpha} : \R^n\to \R^q\}_{\alpha \in [1:L]}$ be defined recursively via 
	\begin{align*}
	h_{\bm{W};\alpha}(x)\equiv W_\alpha^\top \sigma_{\alpha-1}\big(h_{\bm{W};\alpha-1}(x)\big),\quad \alpha \in [1:L].
	\end{align*}
    The \emph{neural network function} $f_{\bm{W}}:\R^n\to \R^q$ is defined as $
    f_{\bm{W}}(x)\equiv h_{\bm{W};L}(x)$. 
	\item (\emph{Feed-forward mappings}). Let $H_{\bm{W};0}\equiv \mathrm{id}(\R^{m\times n})$ and $\big\{H_{\bm{W};\alpha}: \R^{m\times n} \to \R^{m\times q}\big\}_{\alpha \in [1:L]}$ be defined recursively via
	\begin{align*}
	H_{\bm{W};\alpha}(\cdot)\equiv \sigma_{\alpha-1}\big(H_{\bm{W};\alpha-1}(\cdot)\big)W_\alpha,\quad \alpha \in [1:L].
	\end{align*} 
	For $\alpha \in [0:L]$, let $
	G_{\bm{W};\alpha}(\cdot)\equiv \sigma_\alpha\big(H_{\bm{W};\alpha}(\cdot)\big), G_{\bm{W};\alpha}'(\cdot)\equiv \sigma_\alpha'\big(H_{\bm{W};\alpha}(\cdot)\big)$.
	\item (\emph{Layer-wise gradient mappings}). With $X \in \R^{m\times n}$, let $\big\{\mathsf{P}_{X,\bm{W};\alpha}: \R^{m \times q}\to \R^{m \times q}\big\}_{\alpha \in [2:L]}$ be defined by
	\begin{align*}
	\mathsf{P}_{X,\bm{W};\alpha}(z)\equiv \big(z\odot G_{\bm{W};\alpha}'(X)\big) W_\alpha^\top.
	\end{align*}
\end{enumerate}
\end{definition}
In the above definition, $\sigma_\alpha(\cdot)$ acts entrywise when applied to vectors/matrices. With the neural network models in Definition \ref{def:grad_descent_fcn}, the empirical squared loss (\ref{def:ERM_squared_loss}) can be generalized as
\begin{align}\label{def:loss_fcn_general}
\mathsf{L}(\bm{W})\equiv \frac{1}{2m}\pnorm{Y_{[q]}- G_{\bm{W};L}(X) }{}^2.
\end{align}
We consider below a slightly more general gradient descent algorithm than (\ref{def:grad_descent_intro}) with possible different learning rates $\{\eta_\alpha^{(t)}: \alpha \in [1:L], t\geq 1\}\subset \R_{>0}$: initialized with $\bm{W}^{(0)}$, for $t=1,2,\ldots$, we iteratively compute
\begin{align}\label{def:grad_descent_general}
W_\alpha^{(t)}\equiv W_\alpha^{(t-1)}- \eta_\alpha^{(t-1)}\cdot  \nabla_{W_\alpha^{(t-1)}} \mathsf{L}(\bm{W}^{(t-1)}),\quad \alpha \in [1:L].
\end{align}
The following proposition provides a formula for the gradients $\nabla_{W_\alpha} \mathsf{L}$; its proof can be found in Section \ref{subsection:proof_grad_formula}.
\begin{proposition}\label{prop:grad_formula}
The following gradient formula holds for $\alpha \in [1:L]$:
\begin{align*}
\nabla_{W_\alpha} \mathsf{L}(\bm{W}) &= \frac{1}{m}\cdot  G_{\bm{W};\alpha-1}(X)^\top \big[\mathsf{P}_{X,\bm{W}}^{(\alpha:L]}\big(G_{\bm{W};L}(X)-Y_{[q]}\big)\odot G_{\bm{W};\alpha}'(X)\big].
\end{align*}
Here $\mathsf{P}_{X,\bm{W}}^{(\alpha:L]}:\R^{m\times q}\to \R^{m\times q}$ is a \emph{back-propagation mapping} defined via
\begin{align*}
\mathsf{P}_{X,\bm{W}}^{(\alpha:L]}\equiv 
\begin{cases}
\mathsf{P}_{X,\bm{W};\alpha+1}\circ \cdots \circ \mathsf{P}_{X,\bm{W};L}, & \alpha \in [1:L-1];\\
\mathrm{id}_{\R^{m\times q}}, & \alpha = L.
\end{cases}
\end{align*}
\end{proposition}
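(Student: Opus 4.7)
The plan is to prove the formula by a standard backpropagation argument: introduce the sensitivity of the loss at layer $\alpha$, $\delta^{(\alpha)} \equiv \nabla_{H_{\bm{W};\alpha}(X)} \mathsf{L}(\bm{W})$, derive a backward recursion for $\delta^{(\alpha)}$ via the matrix chain rule, and then identify the resulting expression with the back-propagation mapping $\mathsf{P}_{X,\bm{W}}^{(\alpha:L]}$ by backward induction on $\alpha$.

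The first step is immediate: since $H_{\bm{W};\alpha}(X) = G_{\bm{W};\alpha-1}(X)\, W_\alpha$, the chain rule gives
\[
\nabla_{W_\alpha} \mathsf{L}(\bm{W}) = G_{\bm{W};\alpha-1}(X)^\top \delta^{(\alpha)},
\]
so the problem reduces to computing $\delta^{(\alpha)}$. The base case $\alpha = L$ follows from $\sigma_L = \mathrm{id}$, which yields $G_{\bm{W};L}(X) = H_{\bm{W};L}(X)$, $G'_{\bm{W};L}(X) \equiv 1$, and therefore $\delta^{(L)} = m^{-1}(G_{\bm{W};L}(X) - Y_{[q]})$; this matches the claim at $\alpha = L$ since $\mathsf{P}_{X,\bm{W}}^{(L:L]} = \mathrm{id}_{\R^{m\times q}}$. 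The backward recursion then comes from $H_{\bm{W};\alpha+1}(X) = \sigma_\alpha(H_{\bm{W};\alpha}(X))\, W_{\alpha+1}$: differentiating through the entrywise activation and the subsequent linear map (e.g.\ by a Frobenius-trace manipulation) yields
\[
\delta^{(\alpha)} = \big(\delta^{(\alpha+1)} W_{\alpha+1}^\top\big) \odot G'_{\bm{W};\alpha}(X), \qquad \alpha \in [1:L-1].
\]

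The proof is then completed by backward induction on $\alpha$: substituting the induction hypothesis $\delta^{(\alpha+1)} = m^{-1}\, \mathsf{P}_{X,\bm{W}}^{(\alpha+1:L]}\big(G_{\bm{W};L}(X) - Y_{[q]}\big) \odot G'_{\bm{W};\alpha+1}(X)$ into the recursion, and using $\mathsf{P}_{X,\bm{W};\alpha+1}(z) = (z \odot G'_{\bm{W};\alpha+1}(X)) W_{\alpha+1}^\top$ together with $\mathsf{P}_{X,\bm{W}}^{(\alpha:L]} = \mathsf{P}_{X,\bm{W};\alpha+1} \circ \mathsf{P}_{X,\bm{W}}^{(\alpha+1:L]}$, gives the claimed closed form for $\delta^{(\alpha)}$; the proposition then follows from the reduction $\nabla_{W_\alpha} \mathsf{L}(\bm{W}) = G_{\bm{W};\alpha-1}(X)^\top \delta^{(\alpha)}$. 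I do not anticipate any genuine obstacle in this argument; the only point requiring care is a deliberate off-by-one alignment, namely that the outermost factor $G'_{\bm{W};\alpha}(X)$ in the proposition is contributed by the final recursion step producing $\delta^{(\alpha)}$ itself, whereas each factor $G'_{\bm{W};\beta}(X)$ with $\beta > \alpha$ is precisely the one absorbed inside the corresponding $\mathsf{P}_{X,\bm{W};\beta}$ in the definition of $\mathsf{P}_{X,\bm{W}}^{(\alpha:L]}$.
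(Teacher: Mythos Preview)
Your proposal is correct and is essentially the same backpropagation argument as the paper's proof: the paper works entrywise, tracking $\langle \epsilon_\beta, \partial_{(W_\alpha)_{ij}} G_\beta\rangle$ and showing $\langle \epsilon_\beta, \partial_{ij} G_\beta\rangle = \langle \mathsf{P}_\beta(\epsilon_\beta), \partial_{ij} G_{\beta-1}\rangle$, which is exactly the dual formulation of your recursion $\delta^{(\alpha)} = (\delta^{(\alpha+1)} W_{\alpha+1}^\top)\odot G'_{\bm{W};\alpha}(X)$. The only cosmetic difference is that you organize the induction via the sensitivity $\delta^{(\alpha)} = \nabla_{H_{\bm{W};\alpha}(X)} \mathsf{L}$ while the paper unrolls the same chain via inner products and an explicit entrywise calculation at the final step.
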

Note that here $\nabla_{W_1} \mathsf{L}(\bm{W})\in \R^{n\times q}$ and $\{\nabla_{W_\alpha} \mathsf{L}(\bm{W})\}_{\alpha \in [2:L]} \subset \R^{q\times q}$. Consequently, with initialization $\bm{W}^{(0)}$, and suppose that $\bm{W}^{(1)},\ldots,\bm{W}^{(t-1)}$ have been computed, the gradient descent update (\ref{def:grad_descent_general}) reads as follows: for $\alpha \in [1:L]$,
\begin{align*}
W_\alpha^{(t)}= W_\alpha^{(t-1)}-\eta_\alpha^{(t-1)}\cdot  G_{\bm{W}^{(t-1)};\alpha-1}(X)^\top \big[\mathsf{P}_{X,\bm{W}^{(t-1)} }^{(\alpha:L]}\big(G_{\bm{W}^{(t-1)};L}(X)-Y_{[q]}\big)\odot G_{\bm{W}^{(t-1)};\alpha}'(X)\big].
\end{align*}
Moreover, $\nabla_{W_1} \mathsf{L}(\bm{W})$ takes a simpler form:
\begin{align*}
\nabla_{W_1} \mathsf{L}(\bm{W}) &= \frac{1}{m}\cdot  X^\top \big[\mathsf{P}_{X,\bm{W}}^{(1:L]}\big(G_{\bm{W};L}(X)-Y_{[q]}\big)\odot \sigma_1'(XW_1)\big] \in \R^{n\times q}.
\end{align*}

\subsection{Theoretical gradient mappings}

Now we shall provide `theoretical' analogues of the functions in Definition \ref{def:grad_descent_fcn}. Roughly speaking, these theoretical functions take $XW_1 \in \R^{m\times q}$ rather than $X \in \R^{m\times n}$ as the input.

\begin{definition}\label{def:mean_field_fcn}
Fix $v\equiv (v_2,\ldots,v_L) \in (\mathbb{M}_q)^{[2:L]}$.
\begin{enumerate}
	\item (\emph{Theoretical feed-forward mappings}). Let $\big\{\mathscr{H}_{v_{[2:\alpha]}}: \R^{m\times q}\to \R^{m\times q}\big\}_{\alpha \in [1:L]}$ be defined recursively as follows:
	\begin{enumerate}
		\item Initialize with $\mathscr{H}_{v_{[2:1]}}\equiv \mathrm{id}(\R^{m\times q})$.
		\item For $\alpha \in [2:L]$, let $\mathscr{H}_{v_{[2:\alpha]}}(u)\equiv \sigma_{\alpha-1}\big(\mathscr{H}_{v_{[2:\alpha-1]}}(u)\big) v_\alpha$.
	\end{enumerate}
	For $\alpha \in [1:L]$, let $\mathscr{G}_{v_{[2:\alpha]}}(\cdot)\equiv \sigma_\alpha\big(\mathscr{H}_{v_{[2:\alpha]}}(\cdot)\big)$, $\mathscr{G}_{v_{[2:\alpha]} }'(\cdot)\equiv \sigma_\alpha'\big(\mathscr{H}_{v_{[2:\alpha]}}(\cdot)\big)$.
	\item (\emph{Theoretical layer-wise gradient mappings}). For $\alpha\in [2:L]$ and $u \in \R^{m\times q}$, let $\mathscr{P}_{u,v_{[2:\alpha]} }:  \R^{m \times q}\to \R^{m \times q }$ be defined by
	\begin{align*}
	\mathscr{P}_{u, v_{[2:\alpha]} }(z)\equiv \big(z\odot \mathscr{G}_{v_{[2:\alpha]} }'(u)\big) v_\alpha^\top.
	\end{align*}
	\item (\emph{Theoretical back-propagation mappings}). For $\alpha \in [1:L]$, let $\mathscr{P}_{v}^{(\alpha:L]}:\R^{m\times q}\times \R^{m\times q}\to \R^{m\times q}$ be defined via
	\begin{align*}
	\mathscr{P}_{v}^{(\alpha:L]}(u,z)\equiv 
	\begin{cases}
	\big[\odot_{\beta \in (\alpha:L]}\mathscr{P}_{u, v_{[2:\beta]}}\big](z), & \alpha \in [1:L-1];\\
	z, & \alpha = L.
	\end{cases}
	\end{align*}
	\item (\emph{Theoretical residual mapping}). Let $\mathscr{R}_{v}:\R^{m\times q}\times \R^{m\times q}\to \R^{m\times q}$ be 
	\begin{align*}
	\mathscr{R}_{v}(u,w)\equiv \mathscr{G}_{v_{[2:L]}}(u)-\varphi_\ast(w)-\xi_{[q]}.
	\end{align*}
	Here $\varphi_\ast$ is understood as applied entrywise.
\end{enumerate} 
\end{definition}

Comparing the functions in Definitions \ref{def:grad_descent_fcn} and \ref{def:mean_field_fcn}, it is easy to see that given a network parameter $\bm{W}=\big(W_1^\top,\ldots,W_L^\top\big)^\top\in \R^{n_{L,q}\times q}$ as in (\ref{eqn:homo_nn_def_1}), with input data matrix $X \in \R^{m\times n}$, the following correspondence holds:
\begin{table}[H]
	\centering
	\begin{tabular}{c||c|c|c}
		\textbf{Alg. grad. map.} & $H_{\bm{W};\alpha}(X)$ & $\mathsf{P}_{X,\bm{W}}^{(\alpha:L]}(z)$ &  $G_{\bm{W};L}(X)-Y_{[q]}$  \\ \hline
		\textbf{Theo. grad. map.} & $\mathscr{H}_{\bm{W}_{[2:\alpha]} }(XW_1)$ & $\mathscr{P}_{\bm{W}_{[2:L]}}^{(\alpha:L]}(XW_1,z)$ &   $\mathscr{R}_{\bm{W}_{[2:L]}}\big(XW_1,X\mu_{\ast,[q]}\big)$ 
	\end{tabular}
\end{table}
The pair $(H,\mathscr{H})$ in the table above can be replaced by either $(G,\mathscr{G})$ or $(G',\mathscr{G}')$. Moreover, the loss function~(\ref{def:loss_fcn_general}) can alternatively be written as
\begin{align}\label{eqn:loss_mean_field}
\mathsf{L}(\bm{W}) = \frac{1}{2m} \pnorm{\mathscr{R}_{\bm{W}_{[2:L]}}\big(XW_1, X\mu_{\ast,[q]}\big)}{}^2.
\end{align}
Recall the pre-gradients $\big\{\mathsf{P}_{X,\bm{W}}^{(\alpha:L]}\big(G_{\bm{W};L}(X) - Y_{[q]}\big) \odot G_{\bm{W};\alpha}'(X)\big\}_{\alpha \in [1:L]}$ appearing in Proposition~\ref{prop:grad_formula}. Below we introduce a theoretical version of these pre-gradients.

\begin{definition}\label{def:mean_field_S_T}
With the notation used above, let the \emph{theoretical pre-gradient maps} $\mathfrak{S}, \{\mathfrak{T}_{\alpha}\}_{\alpha \in [2:L]}: \R^{m\times q}\times \R^{m\times q}\times (\mathbb{M}_q)^{[2:L]}\to \R^{m\times q}$ be defined as
\begin{align*}
\mathfrak{S}\big(u,w, v\big)&\equiv \mathscr{P}_{v}^{(1:L]}\big(u,\mathscr{R}_{v}(u,w)\big)\odot \sigma_1'(u),\\
\mathfrak{T}_{\alpha}\big(u,w, v\big)&\equiv \mathscr{P}_{v}^{(\alpha:L]}\big(u,\mathscr{R}_{v}(u,w)\big)\odot \mathscr{G}_{v_{[2:\alpha]}}'(u),\quad \alpha \in [2:L].
\end{align*}
\end{definition}
Using the theoretical functions in Definitions \ref{def:mean_field_fcn} and \ref{def:mean_field_S_T} along with the gradient formula in Proposition \ref{prop:grad_formula}, we may rewrite $\{\nabla_{W_\alpha} \mathsf{L}(\bm{W}) \}$ as
\begin{align}\label{eqn:grad_mean_field_1}
\nabla_{W_\alpha} \mathsf{L}(\bm{W}) = \frac{1}{m}\cdot
 \begin{cases}
  X^\top \mathfrak{S}\big(XW_1, X\mu_{\ast,[q]}, \bm{W}_{[2:L]}\big),& \alpha =1;\\
  \mathscr{G}_{\bm{W}_{[2:\alpha-1]} }(XW_1)^\top\mathfrak{T}_{\alpha}\big(XW_1, X\mu_{\ast,[q]}, \bm{W}_{[2:L]}\big), & \alpha \in [2:L].
 \end{cases}
\end{align}
Note that both (\ref{eqn:loss_mean_field}) and (\ref{eqn:grad_mean_field_1}) depend on the unknown signal $\mu_\ast$, and thus are intended primarily for theoretical analysis.

\begin{remark}
	Two notational conventions will be adopted:
	\begin{enumerate}
		\item For any $k \in [m]$, as $\mathscr{H}_{v_{[2:\alpha]};k}(u)$, $\mathscr{G}_{v_{[2:\alpha]};k}(u)$, $\mathscr{G}'_{v_{[2:\alpha]};k}(u)$,  $\mathscr{R}_{v;k}(u,w)$ and $\mathscr{P}_{v;k}^{(\alpha:L]}(u,z)$ in Definition \ref{def:mean_field_fcn} depend on $u,w,z \in \R^{m\times q}$ only through their corresponding $k$-th rows $u_{k\cdot},w_{k\cdot},z_{k\cdot}\in \R^q$, we may identify $\mathscr{H}_{v_{[2:\alpha]};k}, \mathscr{G}_{v_{[2:\alpha]};k}, \mathscr{G}'_{v_{[2:\alpha]};k}: \R^q\to \R^q$ and $\mathscr{R}_{v;k}, \mathscr{P}_{v;k}^{(\alpha:L]}:\R^q\times \R^q\to \R^q$ as
		\begin{align}\label{eqn:row_sep_mean_field_fcn}
		\#_{v_{[2:\alpha]};k}(u) &= \#_{v_{[2:\alpha]};k}(u_{k\cdot}),\quad \# \in \{\mathscr{H},\mathscr{G},\mathscr{G}'\},  \nonumber\\
		\mathscr{R}_{v;k}(u,w) &=\mathscr{R}_{v;k}(u_{k\cdot},w_{k\cdot}),\,\mathscr{P}_{v;k}^{(\alpha:L]}(u,z)=\mathscr{P}_{v;k}^{(\alpha:L]}(u_{k\cdot},z_{k\cdot}).
		\end{align}
		
		By (\ref{eqn:row_sep_mean_field_fcn}), for any $k \in [m]$,  the function $\mathfrak{S}_{k}\big(u,w, v\big)$ in Definition \ref{def:mean_field_S_T} must depend on $u,w \in \R^{m\times q}$ only through their corresponding $k$-th rows $u_{k\cdot},w_{k\cdot}\in \R^q$ as well. Consequently, we may identify $\mathfrak{S}_{k}: \R^q\times \R^q\times (\mathbb{M}_q)^{[2:L]}\to \R^{q}$ as
		\begin{align*}
		\mathfrak{S}_{k}\big(u,w, v\big)\equiv \mathscr{P}_{v;k}^{(1:L]}\big(u_{k\cdot},\mathscr{R}_{v;k}(u_{k\cdot},w_{k\cdot})\big)\odot \sigma_1'(u_{k\cdot}).
		\end{align*}
		A similar notational convention applies to $(\mathfrak{T}_{\alpha})_k$.
		\item For derivatives of the theoretical functions, we will write
		\begin{align*}
		&\partial_{u_{k\ell}} \mathscr{H}_{v_{[2:\alpha]}}(u')\equiv ({\partial}/{\partial u_{k\ell}}) \mathscr{H}_{v_{[2:\alpha]}}(u)|_{u=u'},\\
		&\partial_{\#} \mathscr{P}_{v}^{(\alpha:L]}(u',z')\equiv ({\partial}/{\partial \#}) \mathscr{P}_{v}^{(\alpha:L]}(u,z) \big|_{(u,z)=(u',z')},\quad \# \in \{u_{k\ell},z_{k\ell}\}_{k\in [m], \ell \in [q]},\\
		&\partial_{\#} \mathscr{R}_{v}(u',w')  \equiv ({\partial}/{\partial \#}) \mathscr{R}_{v}(u,w) \big|_{(u,w)=(u',w')},\quad \# \in \{u_{k\ell},w_{k\ell}\}_{k \in [m], \ell \in [q]},\\
		&\partial_{\#} \mathfrak{S}\big(u',w', v'\big)\equiv ({\partial}/{\partial \#}) \mathfrak{S}\big(u,w,v\big) \big|_{(u,w,v)=(u',w',v')},\quad \# \in \{u_{k\ell},w_{k\ell}\}_{k \in [m], \ell \in [q]}.
		\end{align*}
		A similar notational convention applies to $\partial_{u_{k\ell}} \mathscr{G}_{v_{[2:\alpha]}}(u')$ and  $\partial_\#\mathfrak{T}_{\alpha}(u',w',v')$. We sometimes write $\partial_{u_{\ell}} \mathscr{H}_{v_{[2:\alpha]}}(u')=\partial_{u_{k\ell}} \mathscr{H}_{v_{[2:\alpha]};k}(u')$ for notational convenience (and similarly for other theoretical functions). 
	\end{enumerate}
\end{remark}

\section{Meta theory: Precise gradient descent training dynamics}\label{section:gd_precise_dynamics}

\subsection{Assumptions and some further notation}

We shall work with the following assumptions in the sequel unless otherwise specified.

\begin{assumption}\label{assump:gd_NN}
	Suppose that the following hold for some $K,\Lambda\geq 2$ and $r_0\geq 0$.
	\begin{enumerate}
		\item[(A1)] The aspect ratio $\phi\equiv m/n$ satisfies $
		1/K\leq \phi\leq K$.
		\item[(A2)] The random data  matrix $X \in \R^{m\times n}$ has independent mean zero, variance $1$ entries with sub-gaussian tails satisfying\footnote{Here $\pnorm{\cdot}{\psi_2}$ is the standard Orlicz-2/subgaussian norm; see, e.g., \cite[Section 2.1]{van1996weak} for a precise definition.} $
		\max_{i\in [m], j\in [n]} \pnorm{X_{ij}}{\psi_2}\leq K$.
		\item[(A3)] The learning rates satisfy $
		\max_{s \in \mathbb{Z}_{\geq 0}}\max_{\alpha \in [1:L]}\eta_\alpha^{(s)}\leq \Lambda$.
		\item[(A4)] The activation functions $\{\sigma_\alpha\}_{\alpha \in [1:L]}\subset C^4(\R)$ and satisfy 
		\begin{align*}
		\max\limits_{p\in [4]} \max\limits_{\alpha \in [1:L]} \big(\pnorm{\sigma_\alpha^{(p)}}{\infty}+\abs{\sigma_\alpha(0)}\big)\leq \Lambda.
		\end{align*}
		\item[(A5)] The regression function $\varphi_\ast \in C^3(\R)$ and there exists some $r_0>0$ such that 
		\begin{align*}
		\max_{p \in [0:3]} \sup_{x \in \R} \big\{ {\abs{\varphi_\ast^{(p)}(x)}}/{(1+\abs{x})^{r_0}}\big\}\leq \Lambda.
		\end{align*}
	\end{enumerate}
\end{assumption}

\begin{remark}
Some remarks on conditions (A1)-(A5):
\begin{enumerate}
	\item (A1) formalizes the finite-width proportional regime (\ref{def:intro_prop_regime}), and is related to the \emph{proportional regime} in mean-field high-dimensional statistics (literature cited in the Introduction).
	
	\item (A2) describes the data model. Note that we do \emph{not} assume Gaussian features, so our results apply universally to data matrices satisfying (A2).
	
	\item (A3) requires the learning rates in gradient descent (\ref{def:grad_descent_general}) to be of order $\bigo(1)$, which ensures non-trivial updates of $\bm{W}^{(t)}$ from initialization. We note that different learning rate scalings give rise to NTK or MF theories in the infinite-width regime \cite{yang2020feature}, whereas such distinctions disappear in the finite-width proportional regime (\ref{def:intro_prop_regime}).
	
	\item (A4) assumes smooth activation functions $\{\sigma_\alpha\}_{\alpha \in [1:L]}$. While this excludes ReLU $\sigma(x)=(x)_+$, we expect that some of our results extend to ReLU with additional technical work.
	
	\item (A5) imposes mild growth and smoothness conditions on the link function $\varphi_\ast$ and its derivatives (up to third order). As with (A4), our results likely hold under weaker assumptions with further technical work.
\end{enumerate}
\end{remark}

With initialization $\bm{W}^{(0)} \in \R^{n_{L,q}\times q}$ that is independent of the data matrix $X \in \R^{m\times n}$ (which can also be deterministic), we define
\begin{align}\label{def:kappa_ast}
\varkappa_{\ast}\equiv 1+\pnorm{n^{1/2}\mu_\ast}{\infty}+\pnorm{\xi}{\infty}+\pnorm{n^{1/2}W_1^{(0)}}{\infty}+\max_{\alpha \in [2:L]}\pnorm{W^{(0)}_\alpha}{\op}.
\end{align}
Our theory below allows $\varkappa_{\ast}$ to scale slowly with $n$, for instance $\varkappa_{\ast}\lesssim n^\epsilon$ for some small enough $\epsilon>0$. This in particular applies to the Gaussian initialization where $n^{1/2} W_1^{(0)} \in \R^{n\times q}$ and $\{q^{1/2}W_\alpha^{(0)}\}_{\alpha \in [2:L]} \subset \R^{q\times q}$ all have i.i.d. $\mathcal{N}(0,1)$ entries. 
\subsection{State evolution}

We adopt the following notational conventions:
\begin{itemize}
	\item Let $\pi_m$ (resp. $\pi_n$) denote the uniform distribution on $[1:m]$ (resp. $[1:n]$), independent of all other variables.
	\item The expectation $\E^{(0)}[\cdot]\equiv \E[\cdot\,|\,\mu_\ast,\bm{W}^{(0)},\xi]$ is taken with respect to all sources of randomness except for the true signal $\mu_\ast$, the initialization $\bm{W}^{(0)}$ and the error term $\xi$.
\end{itemize} 

Now we may introduce the state evolution.
\begin{definition}\label{def:gd_NN_se}
Recall $\phi=m/n$. Initialize with:
\begin{itemize}
	\item $V^{(0)}=(V_{\alpha}^{(0)})_{\alpha \in [2:L]}\equiv \big(W_\alpha^{(0)}\big)_{\alpha \in [2:L]}\in (\mathbb{M}_q)^{[2:L]}$,
	\item $\Omega_{\cdot,0}=\Omega_{\cdot,-1}\equiv 0_{q\times q}$,
	\item $\mathfrak{U}^{(0)}\equiv \big(\sigma_{\mu_\ast}\mathsf{Z}\,|\,0_{1\times (q-1)}\big)^\top \in \R^q$ where $\sigma_{\mu_\ast}^2\equiv \pnorm{\mu_\ast}{}^2$ and $\mathsf{Z}\sim \mathcal{N}(0,1)$,
	\item $D_{-1}\equiv n^{1/2}\mu_{\ast,[q]}\in \R^{n\times q}$ and $D_0\equiv n^{1/2} W_1^{(0)} \in \R^{n\times q}$.
\end{itemize}
For $t=1,2,\ldots$, we compute recursively as follows:  
\begin{enumerate}
	\item[(S1)] Let $\Theta^{(t)}:\R^{m\times q[0:t]}\to \R^{m\times q}$ be defined via
	\begin{align*}
	\Theta^{(t)}(\mathfrak{u}^{([0:t])}) &\equiv \mathfrak{u}^{(t)}-\phi^{-1} \sum_{s \in [1:t-1]} \eta^{(s-1)}_1\cdot \mathfrak{S}\big(\Theta^{(s)}(\mathfrak{u}^{([0:s])}) ,\mathfrak{u}^{(0)},V^{(s-1)}\big)\rho_{t-1,s}^\top.
	\end{align*}
	We further write $
	\Upsilon^{(t)}(\mathfrak{u}^{([0:t])})\equiv \phi^{-1}\eta_1^{(t-1)} \mathfrak{S}\big(\Theta^{(t)}(\mathfrak{u}^{([0:t])}) ,\mathfrak{u}^{(0)},V^{(t-1)}\big)$.
	\item[(S2)] Let the Gaussian law of $\mathfrak{U}^{(t)} \in \R^q$ be determined via the following correlation specification: for $s \in [0:t]$, 
	\begin{align*}
	\mathrm{Cov}(\mathfrak{U}^{(t)}, \mathfrak{U}^{(s)})\equiv  \Omega_{t-1,s-1}+ n^{-1}\cdot D_{t-1}^\top  D_{s-1 } \in \mathbb{M}_q.
	\end{align*}
	\item[(S3)]  Compute $\{\tau_{t,s}\}, \{\rho_{t,s}\}, \{\Sigma_{t,s}\}, \{\Omega_{t,s}\}\subset \mathbb{M}_q$ as follows: for $s \in [1:t]$,
	\begin{align*}
	\tau_{t,s} &\equiv -  \phi \cdot \E^{(0)} \nabla_{\mathfrak{U}^{(s)}} \Upsilon_{\pi_m}^{(t)}(\mathfrak{U}^{([0:t])})\in \mathbb{M}_q,\\
	\rho_{t,s} &\equiv I_q \bm{1}_{s=t}+\sum\limits_{r \in [s+1:t]}  \big(\tau_{t,r}+I_q\bm{1}_{r=t}\big) \rho_{r-1,s}\in \mathbb{M}_q,\\
	\Sigma_{t,s} &\equiv \phi\cdot \E^{(0)}   \Upsilon_{\pi_m}^{(t)}(\mathfrak{U}^{([0:t])}) \Upsilon_{\pi_m}^{(s)}(\mathfrak{U}^{([0:s])})^\top \in \mathbb{M}_q,\\
	\Omega_{t,s} &\equiv  \sum\limits_{r\in [1:t],r' \in [1:s]} \rho_{t,r} \Sigma_{r,r'} \rho_{s,r'}^\top\in \mathbb{M}_q.
	\end{align*}
	Compute $\delta_t \in \mathbb{M}_q$ as follows:
	\begin{align*}
	\delta_t\equiv -  \phi \cdot \E^{(0)} \nabla_{\mathfrak{U}^{(0)}} \Upsilon_{\pi_m}^{(t)}(\mathfrak{U}^{([0:t])})\in \mathbb{M}_q.
	\end{align*}
	\item[(S4)] Compute $D_t \in \R^{n\times q}$ as follows:
	\begin{align*}
	D_t&\equiv D_{-1}\delta_t^\top+\sum_{r \in [1:t]} D_{r-1} \big(\tau_{t,r}^\top+I_q\bm{1}_{r=t}\big)\in \R^{n\times q}.
	\end{align*}
	\item[(S5)] Compute $V^{(t)}  = (V_{\alpha}^{(t)})_{\alpha \in [2:L]}\in (\mathbb{M}_q)^{[2:L]}$ as follows: for $\alpha \in [2:L]$,
	\begin{align*}
	V_{\alpha}^{(t)}&\equiv V_\alpha^{(t-1)}-\eta_\alpha^{(t-1)} \cdot \E^{(0)}\mathscr{G}_{V^{(t-1)}_{[2:\alpha-1]};\pi_m }\big(\Theta_{\pi_m}^{(t)}(\mathfrak{U}^{([0:t])}) \big) \\
	&\qquad\qquad \times  \big(\mathfrak{T}_{\alpha}\big)_{\pi_m}\big(\Theta_{\pi_m}^{(t)}(\mathfrak{U}^{([0:t])}) ,\mathfrak{U}^{(0)},V^{(t-1)}\big)^\top \in \mathbb{M}_q.
	\end{align*}
\end{enumerate}
\end{definition}
With the state evolution parameters above, we further define the following:
\begin{itemize}
	\item Let $(\mathfrak{V}^{(1)},\ldots,\mathfrak{V}^{(t)})\in (\R^q)^{[1:t]}$ be a centered Gaussian matrix whose law is determined by the following correlation specification: for $r,s \in [1:t]$,
	\begin{align*}
	\cov(\mathfrak{V}^{(r)},\mathfrak{V}^{(s)})\equiv\Sigma_{r,s} \in \mathbb{M}_q.
	\end{align*}
	\item Let $\Delta^{(t)}:\R^{n\times q[0:t]}\to \R^{n\times q}$ be defined via
	\begin{align}\label{def:Delta}
	\Delta^{(t)}(\mathfrak{v}^{([1:t])} ) = \sum_{s \in [1:t]} \mathfrak{v}^{(s)} \rho_{t,s}^\top + D_t \in \R^{n\times q}.
	\end{align}
\end{itemize}
For notational consistency, we let $\Theta^{(0)}\equiv \mathrm{id}(\R^{m\times q})$ and $\Delta^{(0)}\equiv D_0 \in \R^{n\times q}$. 

The state evolution parameters introduced in Definition~\ref{def:gd_NN_se} play a central role in our theoretical development. Table~\ref{table:data_se} summarizes the correspondence between key observable random quantities and their counterparts under state evolution.

\begin{table}[t]
	\centering
	\begin{tabular}{c||c|c|c|c}
		\textbf{Random data} & $X \mu_\ast$ & $X W_1^{(t)}$ &  $n^{1/2}W_1^{(t)}$ & $\{W_\alpha^{(t)}\}_{\alpha \in [2:L]}$ \\ \hline
		\textbf{State evolution} & $\mathfrak{U}_1^{(0)}$ & $\Theta_\cdot^{(t)}(\mathfrak{U}^{[0:t]})$ &   $\Delta_\cdot^{(t)}(\mathfrak{V}^{([1:t])} )$ & $\{V_\alpha^{(t)}\}_{\alpha \in [2:L]}$ 
	\end{tabular}
	\vspace{0.5em}
	\caption{Correspondence between data and state evolution. }
	\label{table:data_se}
\end{table}

It is instructive to highlight the special role of $\{\tau_{t,s}\}$ (and consequently $\{\rho_{t,s}\}$) in condition (S3). As will become evident from Theorem~\ref{thm:se_gd_NN} below, the matrices $\{\rho_{t,s}\}_{s \in [1:t]}$ serve as key debiasing coefficients that restore the approximate normality of $XW_1^{(t)}$ by correcting the bias along the directions of the pre-gradients
\begin{align*}
\Big\{\mathsf{P}_{X,\bm{W}^{(s-1)}}^{(1:L]}\big(G_{\bm{W}^{(s-1)};L}(X)-Y_{[q]}\big) \odot \sigma_1'\big(XW_1^{(s-1)}\big)\Big\}_{s \in [1:t]}.
\end{align*}
For this reason, we refer to $\{\tau_{r,s}\}_{r,s \in [1:t]}$ and $\{\rho_{r,s}\}_{r,s \in [1:t]}$ as \emph{matrix-variate Onsager correction matrices}. In gradient descent training for much simpler models such as linear or logistic regression, analogous correction coefficients arise in scalar form; see, e.g., \cite[Section 2]{han2024gradient}.

\begin{remark}
	Some technical remarks are in order:
	\begin{enumerate}
		\item We note that $\delta_t=[*\,|\, 0_{q\times (q-1)}]$ and $V_L^{(t)}=[*\,|\, 0_{q\times (q-1)}]$, but we will use the full matrix form as above for notational simplicity.
		\item The state evolution parameters depend on the signal $\mu_\ast$ only through its signal strength $\sigma_{\mu_\ast}^2$.
		\item It is possible to write $\{\rho_{t,s}\}$ and $D_t$ entirely in terms of $\{\tau_{r,s}\}_{r,s \in [t]}$ by iterating (S3). We keep the current form for notational simplicity.
		\item With
		\begin{align}\label{def:tau_rho_matrix}
		&\bm{\tau}^{[t]}\equiv (\tau_{r,s})_{r,s \in [1:t]} \in (\mathbb{M}_q)^{t\times t},\quad \bm{\rho}^{[t]}\equiv (\rho_{r,s})_{r,s \in [1:t]} \in (\mathbb{M}_q)^{t\times t},\nonumber\\
		&\bm{\Sigma}^{[t]}\equiv (\Sigma_{r,s})_{r,s \in [1:t]} \in (\mathbb{M}_q)^{t\times t},\quad\bm{\Omega}^{[t]}\equiv (\Omega_{r,s})_{r,s \in [1:t]} \in (\mathbb{M}_q)^{t\times t},
		\end{align}
		and the notation $\mathfrak{O}_{\mathbb{M}_q;t}$ defined in (\ref{def:mat_O}), we may write the second and fourth lines in (S3) more compactly:
		\begin{align}\label{def:rho_se_matrix}
		\bm{\rho}^{[t]}&= (I_{\mathbb{M}_q})_t+ \big(\bm{\tau}^{[t]} + (I_{\mathbb{M}_q})_t\big) \mathfrak{O}_{\mathbb{M}_q;t}(\bm{\rho}^{[t-1]})  \in (\mathbb{M}_q)^{t\times t},\nonumber\\
		\bm{\Omega}^{[t]}&= \bm{\rho}^{[t]}\bm{\Sigma} \bm{\rho}_\top^{[t]}  \in (\mathbb{M}_q)^{t\times t}.
		\end{align}
	\end{enumerate}
\end{remark}

\subsection{Distributional characterizations in the finite-width proportional regime}

The following theorem provides a formal statement for (\ref{eqn:gd_dist_1})-(\ref{eqn:gd_dist_2}) with a more general joint distributional characterization including a debiased statistics $U^{(t)}$ defined as follows: for $t=1,2,\ldots$, let
\begin{align}\label{def:U_debiased}
U^{(t)}&\equiv X W_1^{(t)} +\phi^{-1}\sum_{s \in [1:t]} \eta^{(s-1)}_1\nonumber\\
&\qquad \times  \Big[\mathsf{P}_{X,\bm{W}^{(s-1)}}^{(1:L]}\big(G_{\bm{W}^{(s-1)};L}(X)-Y_{[q]}\big)\odot \sigma_1'(XW_1^{(s-1)})\Big]\, \rho_{t,s}^\top\in \R^{m\times q}.
\end{align}
Recall pseudo-Lipschitz functions defined in (\ref{cond:pseudo_lip}) in Appendix \ref{section:notation}.
\begin{theorem}\label{thm:se_gd_NN}
	Fix $t\in \N$. Suppose Assumption \ref{assump:gd_NN} holds for some $K,\Lambda\geq 2$ and $r_0\geq 0$.
	Fix a sequence of $\Lambda$-pseudo-Lipschitz functions $\{\psi_k:\R^{1+2q[1:t]} \to \R\}_{k \in [m]}$ and $\{\phi_\ell:\R^{q[0:t]} \to \R\}_{\ell \in [n]}$ of order $2$. Then for any $\mathfrak{r}\geq 1$, there exists some $c_t=c_t(t,q,L,\mathfrak{r},r_0)>0$ such that 
	\begin{align*}
	&\E^{(0)} \bigg[\biggabs{\frac{1}{m}\sum_{k \in [m]} \psi_k\Big((X\mu_{\ast})_{k\cdot},\big\{(XW_1^{(s-1)})_{k\cdot}, U_{k\cdot}^{(s-1)}\big\}_{s \in [1:t]}\Big)\nonumber\\
	&\qquad \qquad\qquad  - \frac{1}{m}\sum_{k \in [m]}  \E^{(0)}  \psi_k\Big(\mathfrak{U}^{(0)}_1, \big\{\Theta_{k}^{(s)}(\mathfrak{U}^{([0:s])}),\mathfrak{U}^{(s)}\big\}_{s \in [1:t]}\Big)  }^{\mathfrak{r}}\bigg]\\
	&\qquad + \E^{(0)}  \bigg[\biggabs{\frac{1}{n}\sum_{\ell \in [n]} \phi_\ell\Big(\big\{n^{1/2}W_{1;\ell\cdot}^{(s)}\big\}_{s \in [0:t]}\Big) - \frac{1}{n}\sum_{\ell \in [n]}  \E^{(0)}  \phi_\ell\Big(\big\{\Delta_{\ell}^{(s)}(\mathfrak{V}^{([1:s])} )\big\}_{s \in [0:t]}\Big)   }^{\mathfrak{r}}\bigg]  \\
	&\qquad +\max_{\alpha \in [2:L]} \max_{s \in [1:t]}  \E^{(0)} \pnorm{W_\alpha^{(s)}-V^{(s)}_\alpha}{\infty}^{\mathfrak{r}}\\
	& \leq  \big(K\Lambda\varkappa_{\ast}\big)^{c_{t}}\cdot n^{-1/c_t}.
	\end{align*}
\end{theorem}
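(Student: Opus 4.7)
The plan is to prove Theorem 3.2 by induction on $t$, implementing the iterative reduction scheme outlined in Section 1.5. At each step I construct an auxiliary matrix-variate GFOM that closely tracks the actual gradient descent iterate, invoke the entrywise non-asymptotic GFOM theorem of \cite{han2024entrywise} (recalled in Appendix B), and then transfer the state-evolution guarantees back to the true gradient descent via a quantitative approximation argument.

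\textbf{Induction setup.} Suppose the three conclusions of Theorem~\ref{thm:se_gd_NN} hold for iterations $1,\ldots,t-1$: the row-averaged distributional identity for $\{(XW_1^{(s)})_{k\cdot},U^{(s)}_{k\cdot}\}_{s\leq t-1}$, the column-averaged identity for $\{n^{1/2}W_1^{(s)}\}_{s\leq t-1}$, and the entrywise concentration $W_\alpha^{(s)}\approx V_\alpha^{(s)}$ for $\alpha\in[2:L]$, all with the stated polynomial-in-$n$ $L^{\mathfrak{r}}$ errors. The base case $t=1$ follows directly from applying the GFOM theorem to the single-step iteration, since at $t=0$ the deeper weights are deterministic by definition.

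\textbf{Construction of the auxiliary GFOM.} By the correspondence table in Section~\ref{section:gradient_mapping} and formula (\ref{eqn:grad_mean_field_1}), the first-layer update reads
\begin{align*}
W_1^{(t)}=W_1^{(t-1)}-\frac{\eta_1^{(t-1)}}{m}\, X^\top \mathfrak{S}\bigl(XW_1^{(t-1)},\,X\mu_{\ast,[q]},\,\bm{W}_{[2:L]}^{(t-1)}\bigr).
\end{align*}
Replacing $\bm{W}_{[2:L]}^{(t-1)}$ by its deterministic state-evolution surrogate $V_{[2:L]}^{(t-1)}$ (justified by the induction hypothesis) yields an iteration that interacts with $X$ only through the bilinear pairings $XW_1^{(s)}$ and $X^\top(\cdot)$, which is exactly the canonical matrix-variate GFOM form. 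I unroll this as a GFOM with state variables $u^{(s)}=XW_1^{(s)}$ and $v^{(s)}=n^{1/2}W_1^{(s)}$; the Onsager debiasing coefficients $\{\rho_{t,s}\}$ of Definition~\ref{def:gd_NN_se} (S3) are precisely those dictated by the GFOM recipe, and the matrices $\{\tau_{t,s}\}$ are the expected Jacobians of the iterative nonlinearity in matrix-variate form. The debiased statistic $U^{(t)}$ in (\ref{def:U_debiased}) is the corresponding GFOM memory-corrected observable.

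\textbf{Application of the GFOM theorem and transfer back.} Applying the entrywise, non-asymptotic, non-Gaussian GFOM theorem of Appendix~\ref{section:GFOM_theory} to this auxiliary iterate yields the row- and column-averaged pseudo-Lipschitz statements with Gaussian law $(\mathfrak{U}^{([0:t])},\mathfrak{V}^{([1:t])})$ matching (S2)--(S3), and with the needed polynomial error bounds in $\varkappa_\ast$ and $n^{-1/c_t}$. To transfer these guarantees to the actual gradient descent, I expand $W_1^{(t)}-W_{1,\mathrm{aux}}^{(t)}$ by a Taylor development of $\mathfrak{S}$ in its third argument, controlling each term by $\|\bm{W}_{[2:L]}^{(s-1)}-V_{[2:L]}^{(s-1)}\|_\infty$ via the induction hypothesis together with the smoothness Assumption (A4)--(A5). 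This step requires a priori $\ell^\infty$/delocalization estimates on $XW_1^{(s)}$, $n^{1/2}W_1^{(s)}$, and the back-propagated signals $\mathsf{P}_{X,\bm{W}^{(s-1)}}^{(\alpha:L]}(\cdot)$; these are obtained by feeding $\max$-type surrogate test functions into the GFOM conclusion at the previous step, exploiting the fact that the bound is polynomial in $n$ so that union bounds over $[m]\times[n]$ remain effective. To close the induction on the deeper-layer concentration, I note that (\ref{eqn:grad_mean_field_1}) for $\alpha\in[2:L]$ expresses $W_\alpha^{(t)}-W_\alpha^{(t-1)}$ as an empirical average over $k\in[m]$ of a smooth row-wise functional of $(XW_1^{(t-1)})_{k\cdot}, (X\mu_\ast)_k, \xi_k$ and of $\bm{W}_{[2:L]}^{(t-1)}$; the row-averaged statement just established (with test function equal to each entry of this functional) delivers entrywise concentration to the limit (S5), namely $W_\alpha^{(t)}\approx V_\alpha^{(t)}$.

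\textbf{Main obstacle.} The principal difficulty is closing the induction without circularity while preserving polynomial non-asymptotic error rates. The two induction streams are coupled: the auxiliary GFOM at step $t$ demands deeper-layer concentration at step $t-1$, which itself was derived by applying the row-averaged GFOM statement at step $t-1$. This must be scheduled so that at every stage both streams are available from strictly earlier iterations, and the constants $c_t=c_t(t,q,L,\mathfrak{r},r_0)$ can absorb a bounded multiplicative loss per iteration. Compounding this, the Taylor expansion controlling $W_1^{(t)}-W_{1,\mathrm{aux}}^{(t)}$ passes through the back-propagation chain $\mathsf{P}_{X,\bm{W}}^{(1:L]}$, a product of up to $L$ nonlinear factors each contributing powers of $\varkappa_\ast$, $\|XW_1^{(s)}\|_\infty$, and $\|W_\alpha^{(s)}\|_{\mathrm{op}}$. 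Keeping the cumulative error below the target $(K\Lambda\varkappa_\ast)^{c_t}n^{-1/c_t}$ relies essentially on the entrywise nature of the GFOM guarantees and on sharp a priori $\ell^\infty$ bounds derived from them; any purely asymptotic or $L^2$-in-row version of state evolution would not suffice here.
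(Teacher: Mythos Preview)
Your approach is essentially the paper's iterative reduction scheme. The paper organizes it as two separate propositions---first proving the state evolution for a fixed auxiliary GFOM sequence $\{(u^{(s)},v^{(s)})\}$ in Proposition~\ref{prop:se_aux_GFOM}, then bounding $\|\hat v^{(t)}-v^{(t)}\|$ iteratively in Proposition~\ref{prop:err_gd_GFOM}---whereas you interleave them in a single induction; the two organizations are equivalent, and your identification of the main obstacle (the coupling between the first-layer and deeper-layer induction streams, and the need for entrywise non-asymptotic GFOM bounds) is accurate.

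Two technical points need more care. First, the $\ell_\infty$ delocalization of $XW_1^{(s)}$ and $n^{1/2}W_1^{(s)}$ is not obtained by ``feeding max-type surrogate test functions into the GFOM conclusion'' and union-bounding: the GFOM theorems control expectations of test functions of a \emph{fixed} pseudo-Lipschitz order $\mathfrak p$, with constants depending on $\mathfrak p$, so pushing to moments of order $\sim\log n$ and then applying Markov does not cleanly deliver polylogarithmic $\ell_\infty$ bounds with the required probability $1-n^{-D}$. The paper instead proves a standalone delocalization estimate (Proposition~\ref{prop:GFOM_deloc_asym}) via a direct leave-one-out argument, yielding subgaussian tails; this is what drives Proposition~\ref{prop:GD_linfty_est}. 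Second, the auxiliary GFOM nonlinearity $\mathsf G_t^{\langle 2\rangle}\propto\mathfrak S(\cdot,u^{(0)},V^{(t-1)})$ is only pseudo-Lipschitz (since $\varphi_\ast$ may grow polynomially), so Theorem~\ref{thm:GFOM_se_asym_avg} does not apply directly. The paper handles this by introducing a smooth truncation $\check{\mathfrak S}_t=\chi_{\mathsf b_t}\circ\mathfrak S$ at level $\mathsf b_t=(K\Lambda\varkappa_\ast\log n)^{c_t}$, applying the GFOM theorem to the truncated iterate, and then using the delocalization from the previous point to show the truncation is inactive with overwhelming probability (Lemmas~\ref{lem:se_truncated_aux_GFOM}--\ref{lem:diff_u_baru}). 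Both fixes are routine once identified, but your sketch does not account for them.
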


The proof of Theorem \ref{thm:se_gd_NN} involves substantial technical preparations in Sections \ref{section:proof_preliminaries} and \ref{section:proof_gfom}, and is completed in Section \ref{subsection:proof_se_gd_NN}. Due to its complicated nature, we provide a high-level proof sketch in Section \ref{section:proof_sketch}.

It is easy to see from Theorem \ref{thm:se_gd_NN} that for any $t\geq 1$, with high probability,
\begin{align}\label{eqn:weight_evolution}
\pnorm{W_1^{(t)}-W_1^{(0)} }{}^2&\approx  n^{-1}\pnorm{D_t-D_0}{}^2+\Omega_{t,t},\nonumber\\
\pnorm{W_\alpha^{(t)}-W_\alpha^{(0)}  }{}^2& \approx \pnorm{V_\alpha^{(t)}-W_\alpha^{(0)}  }{}^2,\quad \alpha \in [2:L].
\end{align}
As the initialization $\bm{W}^{(0)}$ typically has magnitude $\pnorm{W_\alpha^{(0)}}{}\asymp 1$ for all $\alpha \in [1:L]$, (\ref{eqn:weight_evolution}) shows that the weights $\big(W_\alpha^{(t)}\big)_{\alpha \in [1:L]}$ typically move non-trivially from initialization $\bm{W}^{(0)}$ relative to its magnitude.

\begin{remark}\label{rmk:thm_se_gd_NN}
Some technical remarks for Theorem \ref{thm:se_gd_NN} are in order:
\begin{enumerate}
	\item It is possible to explicitly track the dependence of $c_t$ on $t,q,L$ in the proof. We refrain from doing so to keep the presentation of the proof clean, as this dependence is likely to be highly sub-optimal.
	\item The smoothness conditions (A4) on $\{\sigma_\alpha\}$'s are likely stronger than necessary. We conjecture that the minimal regularity required for Theorem \ref{thm:se_gd_NN} is the Lipschitz continuity of $\{\sigma_\alpha\}$'s. 
    \item The single-index regression model (\ref{def:model}) can be generalized to the multi-index regression model with further technical work.
\end{enumerate}
\end{remark}

\subsection{Gradient descent over the population risk}

For several practical neural network models in Table \ref{tab:model-summary}, it is of interest to understand the behavior of the gradient updates $\bm{W}^{(t)}$ in the regime $\phi\gg 1$. As we will see, in this regime $\bm{W}^{(t)}$ closely follows the `theoretical gradient descent iterates', which are often used as proxies for the actual gradient descent trajectory. Formally, let the population risk be  
\begin{align}\label{def:population_risk}
\overline{\mathsf{L}}(\bm{W})
& = \frac{1}{2}\E^{(0)}\big(\varphi_\ast(\iprod{\mu_\ast}{\mathsf{Z}_n})+\xi_{\pi_m}-f_{\bm{W}}(\mathsf{Z}_n)\big)^2,\quad \mathsf{Z}_n\sim \mathcal{N}(0,I_n).
\end{align}
It is easy to check that  $\overline{\mathsf{L}}(\bm{W})=\E^{(0)}_{X\sim \mathsf{Z}_{m\times n}}\pnorm{Y_{[q]}- G_{\bm{W};L}(X) }{}^2/(2m)$, where $\mathsf{Z}_{m\times n} \in \R^{m\times n}$ has i.i.d. $\mathcal{N}(0,1)$ entries.  

The theoretical gradient descent iterates on the population risk $\overline{\mathsf{L}}(\bm{W})$ are defined by replacing the empirical loss $\mathsf{L}$ in (\ref{def:loss_fcn_general}) by the population loss $\overline{\mathsf{L}}$ in (\ref{def:population_risk}):
initialized with $\bar{\bm{W}}^{(0)}\equiv \bm{W}^{(0)}$, for $t=1,2,\ldots$, we iteratively compute
\begin{align}\label{def:grad_descent_population}
\bar{W}_\alpha^{(t)}\equiv \bar{W}_\alpha^{(t-1)}- \eta_\alpha^{(t-1)}\cdot  \nabla_{\bar{W}_\alpha^{(t-1)}} \overline{\mathsf{L}}(\bar{\bm{W}}^{(t-1)}),\quad \alpha \in [1:L].
\end{align} 
Note that $\{\bar{W}_\alpha^{(t)}\}$ are deterministic matrices. Intuitively, we expect $ \bm{W}^{(t)}\approx \bar{\bm{W}}^{(t)}$ if and only if the empirical loss $\mathsf{L}$ concentrates around $\overline{\mathsf{L}}$. Clearly this does not occur in the regime $\phi\asymp 1$ as studied in Theorem \ref{thm:se_gd_NN}. Interestingly, the same Theorem \ref{thm:se_gd_NN} otherwise validates this concentration phenomenon in the optimal regime $\phi\gg 1$. 

The key is the following deterministic, recursive characterization for the dynamics of $\{\bar{\bm{W}}^{(t)}\}$; its proof can be found in Section \ref{subsection:proof_theoretical_gd_se}.

\begin{proposition}\label{prop:theoretical_gd_se}
For $t=1,2,\ldots$, the following hold:
\begin{enumerate}
	\item With 
	\begin{itemize}[label={}]
		\item $\bar{\tau}_{t}\equiv - \eta_1^{(t-1)} \E^{(0)} \nabla_{u}  \mathfrak{S}_{\pi_m}\big(\bar{W}_1^{(t-1),\top}\mathsf{Z}_n, \mu_{\ast,[q]}^\top \mathsf{Z}_n, \bar{W}_{[2:L]}^{(t-1)}\big) \in \mathbb{M}_q$,
		\item $\bar{\delta}_{t}\equiv - \eta_1^{(t-1)}\E^{(0)} \nabla_{w}  \mathfrak{S}_{\pi_m}\big(\bar{W}_1^{(t-1),\top}\mathsf{Z}_n, \mu_{\ast,[q]}^\top \mathsf{Z}_n, \bar{W}_{[2:L]}^{(t-1)}\big)\in \mathbb{M}_q$,
	\end{itemize}
    we have
    \begin{align*}
    \bar{W}_1^{(t)}\equiv \bar{W}_1^{(t-1)} \big(\bar{\tau}_{t}^\top+I_q\big)+\mu_{\ast,[q]}\bar{\delta}_{t}^\top\in \R^{n\times q}.
    \end{align*}
    \item For $\alpha \in [2:L]$,
    \begin{align*}
    \bar{W}_{\alpha}^{(t)}&\equiv \bar{W}_\alpha^{(t-1)}-\eta_\alpha^{(t-1)} \cdot \Big[\E^{(0)}\mathscr{G}_{\bar{W}^{(t-1)}_{[2:\alpha-1]};\pi_m }\big(\bar{W}_1^{(t-1),\top}\mathsf{Z}_n\big) \\
    &\qquad \times \big(\mathfrak{T}_{\alpha}\big)_{\pi_m}\big(\bar{W}_1^{(t-1),\top}\mathsf{Z}_n, \mu_{\ast,[q]}^\top \mathsf{Z}_n, \bar{W}_{[2:L]}^{(t-1)}\big)^\top\Big] \in \mathbb{M}_q.
    \end{align*}
\end{enumerate}
Here the expectation $\E^{(0)}$ is taken only with respect to $\mathsf{Z}_n\sim \mathcal{N}(0,I_n)$ and $\pi_m$.
\end{proposition}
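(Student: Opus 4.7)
The plan is to compute $\nabla_{\bar{W}_\alpha}\overline{\mathsf{L}}(\bar{\bm{W}}^{(t-1)})$ directly from the closed-form gradient~(\ref{eqn:grad_mean_field_1}) by taking $X = \mathsf{Z}_{m\times n}$ and exploiting three structural facts: (i) the rows of $\mathsf{Z}_{m\times n}$ are i.i.d.\ $\mathcal{N}(0,I_n)$ vectors; (ii) for any $W_1$, the joint row-distribution $\bigl((\mathsf{Z}_{m\times n} W_1)_{k\cdot}, (\mathsf{Z}_{m\times n}\mu_{\ast,[q]})_{k\cdot}\bigr) \stackrel{d}{=} (W_1^\top \mathsf{Z}_n, \mu_{\ast,[q]}^\top \mathsf{Z}_n)$ for every $k \in [m]$; and (iii) the theoretical functions $\mathfrak{S}_k, (\mathfrak{T}_\alpha)_k, \mathscr{G}_{\cdot;k}$ are row-separable in the sense of~(\ref{eqn:row_sep_mean_field_fcn}). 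Part (2) then drops out by bookkeeping: for $\alpha \in [2:L]$, substituting $X = \mathsf{Z}_{m\times n}$ into the gradient formula, expanding the matrix product $\mathscr{G}_{\bm{W}_{[2:\alpha-1]}}(XW_1)^\top \mathfrak{T}_\alpha(XW_1, X\mu_{\ast,[q]}, \bm{W}_{[2:L]})/m$ entrywise as an average over $k$, and invoking (ii)-(iii) rewrites the empirical average as the $\pi_m$-expectation appearing in the claim.

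The substantive work is the first-layer identity. Setting $F_k(X) \equiv \mathfrak{S}_k\bigl((XW_1)_{k\cdot}, (X\mu_{\ast,[q]})_{k\cdot}, \bm{W}_{[2:L]}\bigr) \in \R^q$, the gradient formula~(\ref{eqn:grad_mean_field_1}) gives $(\nabla_{W_1}\overline{\mathsf{L}})_{\ell j} = m^{-1}\sum_{k} \E[X_{k\ell}(F_k(X))_j]$. Applying Gaussian integration by parts (Stein's lemma) to each scalar $X_{k\ell}$ and differentiating $F_k$ via the chain rule together with (iii) yields
\begin{align*}
\E\bigl[X_{k\ell}(F_k(X))_j\bigr] = \sum_{r \in [q]} (W_1)_{\ell r}\,\E\bigl[\partial_{u_r}(\mathfrak{S}_k)_j\bigr] + \sum_{r \in [q]} (\mu_{\ast,[q]})_{\ell r}\,\E\bigl[\partial_{w_r}(\mathfrak{S}_k)_j\bigr],
\end{align*}
with derivatives evaluated at $(W_1^\top \mathsf{Z}_n, \mu_{\ast,[q]}^\top \mathsf{Z}_n, \bm{W}_{[2:L]})$. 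Averaging over $k$ (i.e.\ inserting $\pi_m$) and repackaging the Jacobians as $q\times q$ matrices produces $\nabla_{W_1}\overline{\mathsf{L}}(\bm{W}) = W_1 T_u^\top + \mu_{\ast,[q]} T_w^\top$ with $T_u \equiv \E^{(0)}\nabla_u \mathfrak{S}_{\pi_m}$ and $T_w \equiv \E^{(0)}\nabla_w \mathfrak{S}_{\pi_m}$; since $\bar{\tau}_t = -\eta_1^{(t-1)} T_u$ and $\bar{\delta}_t = -\eta_1^{(t-1)} T_w$ by definition, plugging into the update rule~(\ref{def:grad_descent_population}) gives the claimed recursion $\bar{W}_1^{(t)} = \bar{W}_1^{(t-1)}(\bar{\tau}_t^\top + I_q) + \mu_{\ast,[q]} \bar{\delta}_t^\top$.

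The only real difficulty is bookkeeping: tracking the transpose conventions implicit in the notation ``$\nabla_u \mathfrak{S}_{\pi_m} \in \mathbb{M}_q$'' so that the row-wise Stein computation assembles into the correct $q\times q$ matrix product, and verifying that (A4)-(A5) justify exchanging differentiation and expectation in Stein's lemma as well as in the identity $\nabla_{W_\alpha}\overline{\mathsf{L}}(\bm{W}) = \E^{(0)}[\nabla_{W_\alpha}\mathsf{L}(\bm{W})]\bigl|_{X = \mathsf{Z}_{m\times n}}$. Under (A4)-(A5) the map $\mathfrak{S}_k$ is $C^3$ with derivatives of polynomial growth in $(u,w)$, which is ample to dominate the Gaussian tails and legitimize both exchanges.
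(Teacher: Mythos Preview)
Your proposal is correct and follows essentially the same approach as the paper: both compute $\nabla_{W_\alpha}\overline{\mathsf{L}}$ from the gradient formula~(\ref{eqn:grad_mean_field_1}) with $X=\mathsf{Z}_{m\times n}$, reduce the row-average to a $\pi_m$-expectation via row-separability, and for $\alpha=1$ apply Gaussian integration by parts (Stein's lemma) to produce the $\bar{W}_1^{(t-1)}\bar{\tau}_t^\top + \mu_{\ast,[q]}\bar{\delta}_t^\top$ structure. The paper first collapses the $k$-sum to a single $\mathsf{Z}_n$ before applying IBP, while you apply Stein entrywise to $X_{k\ell}$ and then average, but this is the same computation in slightly different order.
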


The recursive characterization in Proposition \ref{prop:theoretical_gd_se}  has a natural connection to the state evolution in Definition \ref{def:gd_NN_se} in the regime $\phi\gg 1$. Suppose all state evolution parameters remain `stable' in Definition \ref{def:gd_NN_se}. Then by (S1), we expect $	\Theta^{(t)}(\mathfrak{u}^{([0:t])}) \approx \mathfrak{u}^{(t)}$  in the regime $\phi\gg 1$. So by (S3) we expect $\tau_{t,s}\approx 0_{\mathbb{M}_q}$ for $s \in [1:t-1]$, $\Sigma_{t,s}\approx 0_{\mathbb{M}_q}$ and $\Omega_{t,s}\approx 0_{\mathbb{M}_q}$ for $s \in [1:t]$; by (S4) we expect $D_t\approx D_{t-1}(\tau_{t,t}^\top+I_q)+D_{-1}\delta_t^\top$. With these reductions, we may then roughly identify the representation in Proposition \ref{prop:theoretical_gd_se} with the state evolution parameters as $n^{1/2}\bar{W}_1^{(t)}\approx D_t$ and $\bar{W}_\alpha^{(t)}\approx V_\alpha^{(t)}$ ($\alpha \in [2:L]$). These heuristics are formalized in the following theorem.

\begin{theorem}\label{thm:se_gd_NN_large_sample}
	Fix $t\in \N$. Suppose $\phi^{-1}\leq K$ and Assumptions (A3)-(A5) in Assumption \ref{assump:gd_NN} hold for some $K,\Lambda\geq 2$ and $r_0\geq 0$. Then for any $\mathfrak{r}\geq 1$, there exists some $c_t=c_t(t,q,L,\mathfrak{r},r_0)>0$ such that
	\begin{align*}
	&\max_{s \in [1:t]} \max_{\alpha \in [1:L]}  \E^{(0)} \pnorm{W_\alpha^{(s)}-\bar{W}_\alpha^{(s)}}{}^{\mathfrak{r}} \leq  \big(K\Lambda\varkappa_{\ast}\big)^{c_t}\cdot \big[(1\vee\phi)^{c_t}\cdot n^{-1/c_t}+\phi^{-1/c_t}\big].
	\end{align*}
\end{theorem}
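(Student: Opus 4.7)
The plan is to decompose the task into two essentially orthogonal pieces: (i) a deterministic stability analysis of the state evolution (SE) in Definition \ref{def:gd_NN_se} in the regime $\phi \to \infty$, showing that its parameters collapse to the recursion of Proposition \ref{prop:theoretical_gd_se}; and (ii) an application of Theorem \ref{thm:se_gd_NN} to control the gradient descent iterates against those SE parameters. The target bound then follows by the triangle inequality, with the first piece contributing the $\phi^{-1/c_t}$ error and the second piece the $(1\vee\phi)^{c_t}n^{-1/c_t}$ error.

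\emph{Step 1 (State evolution stability in $\phi$).} The key structural observation is that $\Upsilon^{(t)} = \phi^{-1}\eta_1^{(t-1)}\mathfrak{S}(\cdots)$ carries a $\phi^{-1}$ prefactor. I will show by induction on $t$ that
\begin{align*}
\pnorm{D_t - n^{1/2}\bar{W}_1^{(t)}}{\infty} + \max_{\alpha \in [2:L]}\pnorm{V_\alpha^{(t)} - \bar{W}_\alpha^{(t)}}{} + \tr(\Omega_{t,t})^{1/2} + \max_{s < t}\pnorm{\tau_{t,s}}{} \leq (\Lambda\varkappa_{\ast})^{c_t}\phi^{-1/c_t}.
\end{align*}
The inductive step proceeds as follows. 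From (S1), $\Theta^{(t)}(\mathfrak{u}^{([0:t])}) = \mathfrak{u}^{(t)} + O(\phi^{-1})$; from (S3), $\Sigma_{t,s} = \phi \cdot \E \Upsilon^{(t)}\Upsilon^{(s),\top} = O(\phi^{-1})$ and consequently $\Omega_{t,s} = O(\phi^{-1})$ (despite the Onsager matrices $\rho_{t,s}$ being of order $1$); from (S2), the Gaussian covariance of $\mathfrak{U}^{(t)}$ equals $\Omega_{t-1,t-1} + n^{-1}D_{t-1}^\top D_{t-1}$, which by induction is within $O(\phi^{-1})$ of $\bar{W}_1^{(t-1),\top}\bar{W}_1^{(t-1)}$, matching the covariance of $\bar{W}_1^{(t-1),\top}\mathsf{Z}_n$. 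A smooth-perturbation argument for Gaussian integrals (using (A4)--(A5)) then shows that $\tau_{t,t} \approx \bar{\tau}_t$, $\delta_t \approx \bar{\delta}_t$, $V_\alpha^{(t)} \approx \bar{W}_\alpha^{(t)}$, while $\tau_{t,s}$ for $s < t$ inherits a $\phi^{-1}$ factor from the Jacobian $\nabla_{\mathfrak{u}^{(s)}}\Theta^{(t)}$. Plugging into (S4) gives $D_t = D_{t-1}(\tau_{t,t}^\top + I_q) + D_{-1}\delta_t^\top + O(\phi^{-1})$, which matches the recursion for $n^{1/2}\bar{W}_1^{(t)}$ in Proposition \ref{prop:theoretical_gd_se}.

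\emph{Step 2 (Application of Theorem \ref{thm:se_gd_NN} and assembly).} For $\alpha \in [2:L]$, the $\pnorm{W_\alpha^{(s)} - V_\alpha^{(s)}}{\infty}^{\mathfrak{r}}$ term of Theorem \ref{thm:se_gd_NN} directly gives a Frobenius norm bound up to a factor of $q$; combining with Step 1 via the triangle inequality delivers the claim. For $\alpha = 1$, I will apply Theorem \ref{thm:se_gd_NN} with the test functions $\phi_\ell(a^{([0:t])}) = \bigabs{a^{(t)} - n^{1/2}\bar{W}_{1;\ell\cdot}^{(t)}}^2$, which are $O(\varkappa_{\ast})$-pseudo-Lipschitz of order $2$ because the recursion of Proposition \ref{prop:theoretical_gd_se} together with the definition of $\varkappa_{\ast}$ yields $\pnorm{n^{1/2}\bar{W}_{1;\ell\cdot}^{(t)}}{} \leq (\Lambda)^{c_t}\varkappa_{\ast}$. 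The right-hand side then equals $n^{-1}\sum_\ell \E\bigabs{\Delta_\ell^{(t)}(\mathfrak{V}^{([1:t])}) - n^{1/2}\bar{W}_{1;\ell\cdot}^{(t)}}^2 = \tr(\Omega_{t,t}) + n^{-1}\pnorm{D_t - n^{1/2}\bar{W}_1^{(t)}}{}^2$, both of which are $O(\phi^{-1/c_t})$ by Step 1. Dividing by $n$ converts the left-hand side into $\E^{(0)}\pnorm{W_1^{(t)} - \bar{W}_1^{(t)}}{}^2$ up to the Theorem \ref{thm:se_gd_NN} error, concluding the proof.

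\emph{Main obstacle.} The principal technical hurdle is Step 1, where the Onsager correction matrices $\rho_{t,s}$ need not be small and must be controlled indirectly through the $\phi^{-1}$ smallness of $\Sigma_{r,r'}$ in $\Omega_{t,s} = \sum_{r,r'}\rho_{t,r}\Sigma_{r,r'}\rho_{s,r'}^\top$, together with careful propagation of the $\phi^{-1}$ error through the nonlinear Gaussian integrals defining $\tau_{t,t}, \delta_t, V_\alpha^{(t)}$ via (A4)--(A5) and the polynomial growth parameter $r_0$. A second, more pedestrian subtlety is that Theorem \ref{thm:se_gd_NN} was stated for $\phi \leq K$, so one must revisit its proof to extract the polynomial $\phi$ dependence hidden in the constant $c_t$, producing the explicit $(1\vee\phi)^{c_t}$ factor in the final bound.
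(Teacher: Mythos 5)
Your proposal is correct and follows essentially the same route as the paper. The paper's proof introduces an intermediate object — a ``simplified state evolution'' $(\bar{D}_t, \bar{\mathfrak U}^{(t)}, \bar V^{(t)}_\alpha, \bar\tau_t, \bar\delta_t)$ — and first proves (Proposition \ref{prop:population_gd_V}) that this simplified SE coincides exactly with the population gradient-descent recursion of Proposition \ref{prop:theoretical_gd_se}, and then separately establishes the $O(\phi^{-1})$ stability $\Delta\bar{\mathcal B}^{(t)}\lesssim\phi^{-1}$ between the original SE and the simplified SE. You skip the intermediate naming and compare the original SE parameters directly to the $\bar W^{(t)}$ recursion; since Proposition \ref{prop:population_gd_V} is an identity, these are the same computation. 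Your Step 1 is the content of the paper's Steps 1--2, and your observation that $\tau_{t,s}$ ($s<t$), $\Sigma_{t,s}$, and $\Omega_{t,s}$ all carry $\phi^{-1}$ factors while $\tau_{t,t},\delta_t$ converge to $\bar\tau_t,\bar\delta_t$ matches the second half of Proposition \ref{prop:apr_est_se} and the estimates (\ref{ineq:se_gd_NN_large_sample_step2_1})--(\ref{ineq:se_gd_NN_large_sample_step2_6}). Your identification of the hidden $\phi$-dependence in the constant of Theorem \ref{thm:se_gd_NN} is exactly the device the paper uses in Step 4 (replacing $K$ by $K\vee(1\vee\phi)$). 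Your Step 2, where you exhibit concrete test functions $\phi_\ell$ to pass from the pseudo-Lipschitz average bound to a bound on $\E^{(0)}\pnorm{W_1^{(t)}-\bar W_1^{(t)}}{}^{\mathfrak r}$, is actually more explicit than the paper's terse Step 5, and is a correct unpacking; the only detail you should flag is that to get arbitrary moment order $\mathfrak r$ one must apply Theorem \ref{thm:se_gd_NN} with a suitably adjusted exponent so that the $\mathfrak r$-th moment of $\pnorm{W_1^{(t)}-\bar W_1^{(t)}}{}$ (rather than of the squared norm) is controlled; this is routine via Jensen and apriori $\ell_2$ estimates.
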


A proof of the above theorem can be found in Section \ref{subsection:proof_se_gd_NN_large_sample}. We note that in the asymptotic formulation, the above theorem is valid by first taking limit $m\wedge n\to \infty$ with $m/n=\phi$, and then followed by $\phi \to \infty$ (assuming $K,\Lambda,\varkappa_{\ast}=\bigo(1)$).

\section{Characterization and algorithmic estimation of the training and generalization/test errors}\label{section:gd_inference}

In this section, we shall use Theorem \ref{thm:se_gd_NN} to characterize the training and generalization/test error as in (\ref{eqn:gen_err_char_intro}), and use it to develop a consistent algorithmic estimation method for the generalization error.

\subsection{Training and generalization/test errors}

We shall first provide a slight variation of the training and generalization/test errors as compared to (\ref{def:test_error}). 

\begin{definition}\label{def:training_test_error}
	Suppose we are given data $(X,Y) \in \R^{m\times n}\times \R^m$.
    \begin{enumerate}
    \item The \emph{training error} is defined as 
	\begin{align*}
	\mathscr{E}_{\texttt{train}}^{(t)}(X,Y) &\equiv \frac{1}{m} \bigpnorm{Y_{[q]}- G_{\bm{W}^{(t)};L}(X)}{}^2.
	\end{align*}
    \item The \emph{generalization/test error} is defined as
	\begin{align*}
	\mathscr{E}_{\texttt{test}}^{(t)}(X,Y)&\equiv \E^{(0)}\big[ \big(\varphi_\ast(\iprod{X_{\textrm{new}}}{\mu_\ast})+\xi_{\pi_m}- f_{\bm{W}^{(t)}}( X_{\textrm{new}} ) \big)^2 | (X,Y) \big].
	\end{align*}
	Here the expectation in the above display is taken over new data $X_{\textrm{new}}\in \R^n$ with the same law as $X_1\in \R^n$. 
    \end{enumerate}
\end{definition}

We have the following characterization for $\mathscr{E}_{\texttt{train}}^{(t)}(X,Y)$ and $\mathscr{E}_{\texttt{test}}^{(t)}(X,Y)$; its proof can be found in Section \ref{subsection:proof_err_gd_NN}.

\begin{theorem}\label{thm:err_gd_NN}
	Fix $t\in \N$. Suppose Assumption \ref{assump:gd_NN} holds for some $K,\Lambda\geq 2$ and $r_0\geq 0$. Then for any $\mathfrak{r}\geq 1$, there exists some $c_t=c_t(t,q,L,\mathfrak{r},r_0)>0$ such that  
	\begin{align*}
	&\E^{(0)}\bigabs{\mathscr{E}_{\texttt{train}}^{(t)}(X,Y)- \E^{(0)} \bigpnorm{\mathscr{R}_{V^{(t)};\pi_m}\big(\Theta_{\pi_m}^{(t+1)}(\mathfrak{U}^{([0:t+1])}),\mathfrak{U}^{(0)} \big)}{}^2   }^{\mathfrak{r}}\\
	&\qquad + \E^{(0)}\bigabs{\mathscr{E}_{\texttt{test}}^{(t)}(X,Y)- \E^{(0)} \bigpnorm{\mathscr{R}_{V^{(t)};\pi_m}\big(\mathfrak{U}^{(t+1)},\mathfrak{U}^{(0)} \big)}{}^2}^{\mathfrak{r}}\\
	&\leq \big(K\Lambda\varkappa_{\ast}\big)^{c_t}\cdot n^{-1/c_t}.
	\end{align*}
\end{theorem}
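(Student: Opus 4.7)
The proof reduces both error quantities to averages of squared residuals of the theoretical functions $\mathscr{R}_{V^{(t)};k}$, and then invokes the state evolution characterization of Theorem \ref{thm:se_gd_NN}. The training-error reduction is essentially direct, while the test-error reduction requires an additional Gaussian-approximation argument to handle the fresh feature $X_{\textrm{new}}$.

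For the training error, I start from the mean-field rewriting (\ref{eqn:loss_mean_field}):
\[
\mathscr{E}_{\texttt{train}}^{(t)}(X,Y) = \frac{1}{m}\sum_{k \in [m]} \bigpnorm{\mathscr{R}_{\bm{W}^{(t)}_{[2:L]};k}\big((XW_1^{(t)})_{k\cdot}, (X\mu_{\ast,[q]})_{k\cdot}\big)}{}^2.
\]
The plan has two steps: (i) replace the random deep-layer weights $\bm{W}^{(t)}_{[2:L]}$ by the deterministic $V^{(t)}_{[2:L]}$ via the $\ell_\infty$ concentration $\|W^{(t)}_\alpha - V^{(t)}_\alpha\|_\infty$ supplied by Theorem \ref{thm:se_gd_NN}, using the Lipschitz dependence of $v \mapsto \mathscr{R}_{v;k}(u,w)$ (verifiable inductively from (A4)--(A5)) and apriori sub-gaussian moment bounds on $(XW_1^{(t)})_{k\cdot}, (X\mu_\ast)_k$ to control the incurred error; (ii) apply Theorem \ref{thm:se_gd_NN} at time index $t+1$ with the pseudo-Lipschitz test $\psi_k$ that depends only on the $(X\mu_\ast)_k$ argument and the $(t+1)$-st block $(XW_1^{(t)})_{k\cdot}$, returning $\|\mathscr{R}_{V^{(t)};k}(u_{t+1}, u_0)\|^2$. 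The $\mathfrak{r}$-th moment bound in Theorem \ref{thm:se_gd_NN} then produces the first claimed bound directly.

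For the test error, I condition on $(X, \xi, \bm{W}^{(0)}, \mu_\ast)$, so that $\bm{W}^{(t)}$ is deterministic, and observe that the expectation over the independent $X_{\textrm{new}}$ depends on $(\mu_\ast, W_1^{(t)})$ only through the law of the $(q+1)$-dimensional projection $(\langle X_{\textrm{new}}, \mu_\ast\rangle, X_{\textrm{new}}^\top W_1^{(t)})$. A quantitative multivariate Berry--Esseen or Lindeberg-swap argument for sub-gaussian summands, driven by the delocalization $\|W_1^{(t)}\|_\infty = \bigo(n^{-1/2}\log^{\bigo(1)}n)$ obtainable from (\ref{eqn:gd_dist_1}), replaces this projection by a centered Gaussian with covariance determined by $\sigma_{\mu_\ast}^2$, $\mu_\ast^\top W_1^{(t)}$, and $(W_1^{(t)})^\top W_1^{(t)}$. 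After the same substitution $\bm{W}^{(t)}_{[2:L]} \to V^{(t)}_{[2:L]}$, it remains to identify these empirical covariances with the (S2)-prescribed ones. Writing
\[
(W_1^{(t)})^\top W_1^{(t)} = \frac{1}{n}\sum_{\ell \in [n]} (n^{1/2}W_{1;\ell\cdot}^{(t)})^\top (n^{1/2}W_{1;\ell\cdot}^{(t)}),\quad (W_1^{(t)})^\top \mu_\ast = \frac{1}{n}\sum_{\ell \in [n]} (n^{1/2}\mu_{\ast,\ell}) (n^{1/2}W_{1;\ell\cdot}^{(t)}),
\]
and applying Theorem \ref{thm:se_gd_NN} with quadratic resp.\ linear pseudo-Lipschitz $\phi_\ell$ to the representation $n^{1/2}W_{1;\ell\cdot}^{(t)} \approx \Delta_\ell^{(t)}(\mathfrak{V}^{([1:t])})$, together with $\E\mathfrak{V}^{(s)} = 0$ and $\cov(\mathfrak{V}^{(r)}, \mathfrak{V}^{(s)}) = \Sigma_{r,s}$, yields $(W_1^{(t)})^\top W_1^{(t)} \approx \Omega_{t,t} + n^{-1}D_t^\top D_t$ and $(W_1^{(t)})^\top \mu_\ast \approx n^{-1/2}D_t^\top \mu_\ast$, which together with $\sigma_{\mu_\ast}^2 = \cov(\mathfrak{U}^{(0)}_1)$ identify exactly the covariance of $(\mathfrak{U}^{(t+1)}, \mathfrak{U}^{(0)})$ prescribed by (S2). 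The second claimed bound then follows.

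The main technical obstacle is a polynomial-growth tension: $\mathscr{R}_{V^{(t)};k}$ inherits growth of order $r_0$ from $\varphi_\ast$ via (A5), so $\|\mathscr{R}_{V^{(t)};k}(\cdot)\|^2$ can have growth of order $2r_0$, generally exceeding the quadratic growth allowed by the pseudo-Lipschitz order-$2$ condition in Theorem \ref{thm:se_gd_NN}. I would resolve this via truncation: on a high-probability event controlled by sub-gaussian tail bounds on the relevant projections (from (A2) and delocalization), work with a surrogate $\psi_k^{\texttt{trunc}}$ that is pseudo-Lipschitz of order $2$ and agrees with $\psi_k$; off this event, contributions are controlled by direct $\mathfrak{r}$-th moment bounds propagated inductively from (A2)--(A3). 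The Gaussian-approximation step for $X_{\textrm{new}}^\top W_1^{(t)}$ in the test case is comparatively mild since the projection dimension $q+1$ is fixed and the required delocalization of $W_1^{(t)}$ is furnished by Theorem \ref{thm:se_gd_NN}.
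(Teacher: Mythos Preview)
Your proposal is correct and matches the paper's approach: for the training error both replace $W_{[2:L]}^{(t)}\to V_{[2:L]}^{(t)}$ and then apply Theorem~\ref{thm:se_gd_NN} with a truncated test function (the paper truncates the $u^{(0)}$-argument via $\mathscr{T}_{\varkappa_\ast \log n}$, exactly your surrogate idea), and for the test error both use a Lindeberg swap on $X_{\textrm{new}}$ followed by identification of the empirical covariance of $(W_1^{(t),\top}x,\mu_\ast^\top x)$ with the state-evolution covariance $\mathrm{Cov}(\mathfrak{U}^{(t+1)},\mathfrak{U}^{(0)})$. The one execution detail worth noting is that the $\ell_\infty$ delocalization you invoke for $W_1^{(t)}$ is not directly available from Theorem~\ref{thm:se_gd_NN}; the paper instead first replaces $W_1^{(t)}$ by the auxiliary GFOM iterate $n^{-1/2}v_1^{(t)}$ (Proposition~\ref{prop:err_gd_GFOM}, an $\ell_2$ bound suffices here), for which the needed $\ell_\infty$ control is furnished by Proposition~\ref{prop:GD_linfty_est}, and runs the Lindeberg and covariance-identification steps on $v_1^{(t)}$.
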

From Theorem \ref{thm:err_gd_NN}, we may immediately characterize the \emph{generalization gap}
\begin{align*}
&\texttt{Gap}^{(t)}(X,Y)\equiv \bigabs{\mathscr{E}_{\texttt{test}}^{(t)}(X,Y)-\mathscr{E}_{\texttt{train}}^{(t)}(X,Y)}\\
&\approx\bigabs{\E^{(0)} \bigpnorm{\mathscr{R}_{V^{(t)};\pi_m}\big(\Theta_{\pi_m}^{(t+1)}(\mathfrak{U}^{([0:t+1])}),\mathfrak{U}^{(0)} \big)}{}^2- \E^{(0)} \bigpnorm{\mathscr{R}_{V^{(t)};\pi_m}\big(\mathfrak{U}^{(t+1)},\mathfrak{U}^{(0)} \big)}{}^2}.
\end{align*}
As $\Theta^{(t)}(\mathfrak{u}^{([0:t])}) \neq \mathfrak{u}^{(t)}$ is typically non-linear, the above display quantifies a non-trivial generalization gap $\E^{(0)}\texttt{Gap}^{(t)}(X,Y)=\Theta(1)$ that arise ubiquitously in gradient descent training for neural network models beyond the classical uniform convergence regime in learning theory; the readers are referred to \cite{bartlett2021deep} for a comprehensive review. On the other hand, as $\phi \to \infty$, as $\Theta^{(t)}(\mathfrak{u}^{([0:t])}) \approx \mathfrak{u}^{(t)}$ (see the discussion preceding Theorem \ref{thm:se_gd_NN_large_sample}), the generalization gap $\texttt{Gap}^{(t)}(X,Y)$ vanishes correspondingly .

\begin{remark}
Theorem \ref{thm:err_gd_NN} is valid conditional on the noise $\xi$. By assuming suitable tail conditions on $\xi_i$'s, we may easily prove further concentration of $\E^{(0)} \bigpnorm{\mathscr{R}_{V^{(t)};\pi_m}\big(\Theta_{\pi_m}^{(t+1)}(\mathfrak{U}^{([0:t+1])}),\mathfrak{U}^{(0)} \big)}{}^2  $ and $\E^{(0)} \pnorm{\mathscr{R}_{V^{(t)};\pi_m}\big(\mathfrak{U}^{(t+1)},\mathfrak{U}^{(0)} \big)}{}^2$. We omit these routine details.
\end{remark}

\subsection{Augmented gradient descent algorithm}

Next we present an augmented gradient descent algorithm that aims at simultaneously outputting the gradient update $\bm{W}^{(t)}$ and a consistent estimator $\hat{\mathscr{E}}_{\texttt{test}}^{(t)}(X,Y)$ for the unknown generalization error $\mathscr{E}_{\texttt{test}}^{(t)}(X,Y)$. Recall the following notation in Appendix \ref{section:notation}: (i) $\mathfrak{O}_{\mathbb{M}_q;t+1}(N)$ defined for any (block) matrix $N \in (\mathbb{M}_q)^{t\times t}$ in (\ref{def:mat_O}); (ii) $\mathrm{diag}(M)\equiv (M_{ii}\bm{1}_{i=j})_{i,j \in [n]} \in \R^{n\times n}$ and $\mathrm{vecdiag}(M)\equiv (M_{ii})_{i \in [n]}\in \R^n$ defined for a square matrix $M\in \R^{n\times n}$; (iii)  $(I_{\mathbb{M}_q})_t=\mathrm{diag}(I_{q},\ldots,I_{q})\in (\mathbb{M}_q)^{t\times t}$. We also recall $\mathscr{G}_\cdot$ defined in Definition \ref{def:mean_field_fcn}.

\begin{algorithmdef}\label{alg:aug_gd_nn}
Take the following input data and initialization:
\begin{itemize}[label={}]
	\item \textbf{Input data}: (i) observations $(X,Y) \in \R^{m\times n}\times \R^{n}$; (ii) number of layers $L$; (iii) network width $q$; (iv) learning rates $\{\eta_\alpha^{(\cdot)}\}_{\alpha \in [1:L]}\subset \R_{>0}$; (v) nonlinearities $\{\sigma_\alpha: \R\to \R\}_{\alpha \in [1:L]}$ with $\sigma_0=\sigma_L\equiv \mathrm{id}$.
	\item \textbf{Initialization}: (i) weight matrices $W_1^{(0)}\in \R^{n\times q}$ and $\{W_\alpha^{(0)}\}_{\alpha \in [2:L]}\subset \R^{q\times q}$, where $W_L^{(0)}=[\ast\,|\, 0_{q\times (q-1)}]$; (ii) $\hat{\bm{\rho}}^{[t]}=\emptyset$. 
\end{itemize}
\textbf{For $ t = 1,2,\ldots$}: 
\begin{enumerate}
\item \emph{Forward propagation}: For $\alpha = 1,2,\ldots,L$ and $\ell \in [q]$,
\begin{align*}
&\red{\star}\quad \hat{H}_\alpha^{(t-1)} \equiv  \sigma_{\alpha-1}( \hat{H}_{\alpha-1}^{(t-1)} ) W_\alpha^{(t-1)}\in \R^{m\times q},\hbox{ where }\hat{H}_{0}^{(t-1)}=X\in \R^{m\times n},\\
&\blue{\triangle}\quad \partial_{\ell} \hat{H}_\alpha^{(t-1)}\equiv E_{\cdot \ell} \bm{1}_{\alpha=1}+ \big[\sigma_{\alpha-1}'( \hat{H}_{\alpha-1}^{(t-1)} )\odot \partial_{\ell} \hat{H}_{\alpha-1}^{(t-1)}\big] W_\alpha^{(t-1)}\bm{1}_{\alpha=2,\cdots,L} \in \R^{m\times q}.
\end{align*}
Here $E_{\cdot \ell}=\bm{1}_m e_\ell^\top \in \R^{m\times q}$ has all $0$ entries except $1$'s in the $\ell$-th row.
\item \emph{Backward propagation}: For $\alpha=L,L-1,\ldots,1$ and $\ell \in [q]$,
\begin{flalign*}
&\red{\star}\quad  \hat{P}_\alpha^{(t-1)}\equiv \big(\hat{H}_L^{(t-1)} -Y_{[q]}\big)\bm{1}_{\alpha=L}\\
&\qquad\qquad + \Big(\hat{P}_{\alpha+1}^{(t-1)}\odot \sigma_{\alpha+1}'(\hat{H}_{\alpha+1}^{(t-1)} )\Big) \big(W_{\alpha+1}^{(t-1)}\big)^{\top}\bm{1}_{\alpha=L-1,\cdots,1} \in \R^{m\times q},\\
& \blue{\triangle}\quad \binom{
	\partial^{(1)}_{\ell}\hat{P}_\alpha^{(t-1)}}{
	\partial^{(2)}_{\ell}\hat{P}_\alpha^{(t-1)}}
\equiv  \binom{ 0_{m\times q}  }{ E_{\cdot \ell} } \bm{1}_{\alpha=L}+ \Bigg[  \binom{ \hat{P}_{\alpha+1}^{(t-1)}\odot \sigma_{\alpha+1}''\big(\hat{H}_{\alpha+1}^{(t-1)}\big)\odot \partial_{\ell} \hat{H}_{\alpha+1}^{(t-1)}}{0_{m\times q}}\\
&\qquad\qquad + \binom{ \sigma_{\alpha+1}'\big(\hat{H}_{\alpha+1}^{(t-1)}\big) }{\sigma_{\alpha+1}'\big(\hat{H}_{\alpha+1}^{(t-1)}\big)}\odot \binom{
	\partial^{(1)}_{\ell}\hat{P}_{\alpha+1}^{(t-1)}}{
	\partial^{(2)}_{\ell}\hat{P}_{\alpha+1}^{(t-1)}} 
\Bigg] \big(W_{\alpha+1}^{(t-1)}\big)^{\top}\bm{1}_{\alpha=L-1,\cdots,1}\in \R^{2m\times q}.
\end{flalign*}
\item \emph{Compute pre-gradient derivative estimates}: For $\ell \in [q]$,
\begin{align*}
& \blue{\triangle}\quad \hat{Q}_\ell^{(t-1)}\equiv \sum_{r \in [q]}\mathrm{vecdiag}\,\bigg(\sum_{d \in [q]}  \partial^{(2)}_{d}\hat{P}_1^{(t-1)} e_r e_d^\top \partial_{\ell}\hat{H}_L^{(t-1),\top}\bigg) \,e_r^\top \in \R^{m\times q},\\
& \blue{\triangle}\quad \partial_{\ell} \hat{ \mathfrak{S} }^{(t-1)}\equiv \big(\hat{P}_1^{(t-1)}\odot \sigma_1''(XW_1^{(t-1)})\big) \, e_\ell e_\ell^\top \\
&\qquad \qquad\qquad\qquad+ \big(\partial^{(1)}_{\ell}\hat{P}_1^{(t-1)}+\hat{Q}_\ell^{(t-1)}\big)\odot \sigma_1'(XW_1^{(t-1)})\in \R^{m\times q}.
\end{align*}
\item \emph{Compute matrix-variate Onsager correction coefficients}: 
\begin{align*}
& \blue{\triangle}\quad \hat{\bm{L}}_k^{[t-1]}\equiv \mathrm{diag}\bigg( \bigg\{ \eta^{(s-1)} \sum_{\ell \in [q]}\big(\partial_{\ell} \hat{ \mathfrak{S} }^{(s-1)}\big)^\top e_k e_\ell^\top \bigg\}_{s \in [1:t]} \bigg) \in (\mathbb{M}_q)^{t\times t},\, \forall k \in [m],\\
& \blue{\triangle}\quad \hat{\bm{\tau}}^{[t]} \equiv -\frac{1}{m}\sum_{k \in [m]} \Big((I_{\mathbb{M}_q})_t+\phi^{-1} \hat{\bm{L}}_k^{[t-1]}\mathfrak{O}_{\mathbb{M}_q;t}(\hat{\bm{\rho}}^{[t-1]})\Big)^{-1}\hat{\bm{L}}_k^{[t-1]} \in (\mathbb{M}_q)^{t\times t},\\
&\blue{\triangle}\quad \hat{\bm{\rho}}^{[t]}\equiv (I_{\mathbb{M}_q})_t+ \big(\hat{\bm{\tau}}^{[t]} + (I_{\mathbb{M}_q})_t\big) \mathfrak{O}_{\mathbb{M}_q;t}(\hat{\bm{\rho}}^{[t-1]})  \in (\mathbb{M}_q)^{t\times t}.
\end{align*}
\item \emph{Compute the gradient updates}: For $\alpha \in [1:L]$, 
\begin{align*}
\red{\star}\quad  W_\alpha^{(t)}&\equiv W_\alpha^{(t-1)}- \frac{\eta_\alpha^{(t-1)}}{m} \Big(\sigma_{\alpha-1}(\hat{H}_{\alpha-1}^{(t-1)} )\Big)^\top \Big(\hat{P}_\alpha^{(t-1)}\odot \sigma_\alpha'(\hat{H}_\alpha^{(t-1)} )\Big).
\end{align*}
Here $W_1^{(t)}\in \R^{n\times q}$ and $\{W_\alpha^{(t)}\}_{\alpha \in [2:L]} \subset \R^{q\times q}$.
\item \emph{Compute the generalization error estimate}:
\begin{align*}
&\blue{\triangle}\quad \hat{U}^{(t)}\equiv X W_1^{(t)}+\phi^{-1}\sum_{s \in [1:t]} \eta^{(s-1)}_1 \Big[\hat{P}_1^{(s-1)}\odot \sigma_1'(XW_1^{(s-1)})\Big] \,\hat{\rho}_{t,s}^\top \in \R^{m\times q},\\
&\blue{\triangle}\quad \hat{\mathscr{E}}_{\texttt{test}}^{(t)}(X,Y)\equiv 
\frac{1}{m} \bigpnorm{Y_{[q]}- \mathscr{G}_{W_{[2:L]}^{(t)}}(\hat{U}^{(t)})}{ }^2 \in \R_{\geq 0}.
\end{align*}
\end{enumerate}
\end{algorithmdef}

In Algorithm \ref{alg:aug_gd_nn} above, the symbol $\red{\star}$ denotes standard gradient computations that can be implemented via auto-differentiation in the deep learning libraries such as \texttt{PyTorch} and \texttt{TensorFlow}, whereas $\blue{\triangle}$ highlights the additional computations necessary for constructing the generalization error estimate $\hat{\mathscr{E}}_{\texttt{test}}^{(t)}(X,Y)$. Importantly, none of the computations in Algorithm \ref{alg:aug_gd_nn} require any ad-hoc tuning or knowledge of the underlying link function $\varphi_\ast$ and the signal $\mu_\ast$, so our proposal is fundamentally different from a simulation method for the state evolution in Definition \ref{def:gd_NN_se}.

The following theorem shows that $\hat{\mathscr{E}}_{\texttt{test}}^{(t)}(X,Y)$ is indeed a consistent estimator at each iteration; its proof can be found in Section \ref{subsection:proof_est_test_err}.

\begin{theorem}\label{thm:est_test_err}
	Suppose Assumption \ref{assump:gd_NN} holds for some $K,\Lambda\geq 2$ and $r_0\geq 0$. Then for any $\mathfrak{r}\geq 1$, there exists some $c_t=c_t(t,q,L,\mathfrak{r},r_0)>0$ such that 
	\begin{align*}
	\E^{(0)} \abs{\hat{\mathscr{E}}_{\texttt{test}}^{(t)}(X,Y)- \#  }^{\mathfrak{r}}\leq \big(K\Lambda\varkappa_{\ast}\big)^{c_{t}}\cdot n^{-1/c_t}.
	\end{align*}
	Here $\#$ can be either $\mathscr{E}_{\texttt{test}}^{(t)}(X,Y)$ or  $\E^{(0)} \pnorm{\mathscr{R}_{V^{(t)};\pi_m}\big(\mathfrak{U}^{(t+1)},\mathfrak{U}^{(0)} \big)}{}^2$.
\end{theorem}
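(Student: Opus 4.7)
The plan is to combine Theorem \ref{thm:err_gd_NN} with Theorem \ref{thm:se_gd_NN}: by Theorem \ref{thm:err_gd_NN}, the true generalization error $\mathscr{E}_{\texttt{test}}^{(t)}(X,Y)$ is already within $n^{-1/c_t}$ (in $L^{\mathfrak{r}}$) of the state-evolution target $\E^{(0)}\bigpnorm{\mathscr{R}_{V^{(t)};\pi_m}(\mathfrak{U}^{(t+1)},\mathfrak{U}^{(0)})}{}^2$, so by the triangle inequality it suffices to show that $\hat{\mathscr{E}}_{\texttt{test}}^{(t)}(X,Y)$ concentrates around the same deterministic target. Since $Y_{[q]}=\varphi_\ast(X\mu_{\ast,[q]})+\xi_{[q]}$ column-wise and $V_L^{(t)}$ has only a non-trivial first column, one may rewrite
\begin{align*}
\hat{\mathscr{E}}_{\texttt{test}}^{(t)}(X,Y) = \frac{1}{m}\sum_{k\in[m]} \bigpnorm{\mathscr{R}_{W^{(t)}_{[2:L]};k}\big(\hat U^{(t)}_{k\cdot},(X\mu_{\ast,[q]})_{k\cdot}\big)}{}^2.
\end{align*}

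The reduction requires two replacements. First, $W_\alpha^{(t)}\to V_\alpha^{(t)}$ for $\alpha\in[2:L]$, which follows from the entrywise concentration $\pnorm{W_\alpha^{(t)}-V_\alpha^{(t)}}{\infty}\lesssim n^{-1/c_t}$ in Theorem \ref{thm:se_gd_NN} together with the smoothness of $\mathscr{R}_v$ in $v$ guaranteed by (A4)--(A5). Second, $\hat U^{(t)}\to U^{(t)}$, where $U^{(t)}$ is the theoretical debiased statistic \eqref{def:U_debiased}; the only difference between the two quantities is the use of $\hat{\rho}_{t,s}$ vs.\ $\rho_{t,s}$ for $s\in[1:t]$. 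Once these reductions are in hand, we apply Theorem \ref{thm:se_gd_NN} (with $t$ replaced by $t+1$) to the pseudo-Lipschitz test function
\begin{align*}
\psi_k\big((X\mu_\ast)_{k\cdot},\{(XW_1^{(s-1)})_{k\cdot},U^{(s-1)}_{k\cdot}\}_{s\in[1:t+1]}\big)\equiv\bigpnorm{\mathscr{R}_{V^{(t)}_{[2:L]};k}\big(U^{(t)}_{k\cdot},(X\mu_{\ast,[q]})_{k\cdot}\big)}{}^2,
\end{align*}
whose order-$2$ pseudo-Lipschitz property is inherited from (A4)--(A5) and the polynomial growth of $\varphi_\ast$. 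The state-evolution approximation (cf.\ Table \ref{table:data_se}) maps $U^{(t)}_{k\cdot}\mapsto\mathfrak{U}^{(t+1)}$ and $(X\mu_\ast)_{k\cdot}\mapsto\mathfrak{U}^{(0)}_1$, yielding precisely the target.

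The heart of the argument is the consistency of the matrix-variate Onsager coefficients, $\hat{\bm{\rho}}^{[t]}\approx\bm{\rho}^{[t]}$ at rate $n^{-1/c_t}$, which we prove by induction on $t$. Granted the inductive hypothesis, one first analyzes the block-diagonal matrix $\hat{\bm{L}}_k^{[t-1]}$: its $s$-th block encodes the Jacobian of $\mathfrak{S}_k(\cdot,X\mu_{\ast,[q]},W^{(s-1)}_{[2:L]})$ at $XW_1^{(s-1)}$, so that $m^{-1}\sum_{k\in[m]}\hat{\bm{L}}_k^{[t-1]}$ concentrates (via Theorem \ref{thm:se_gd_NN} applied to suitable pseudo-Lipschitz functions, with the higher-order smoothness of $\sigma_\alpha$ in (A4) propagating to $\partial_u\mathfrak{S}$) around its state-evolution counterpart. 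The key algebraic step is then to verify that the resolvent identity
\[
\hat{\bm{\tau}}^{[t]}=-\frac{1}{m}\sum_{k\in[m]}\big(I+\phi^{-1}\hat{\bm{L}}_k^{[t-1]}\mathfrak{O}_{\mathbb{M}_q;t}(\hat{\bm{\rho}}^{[t-1]})\big)^{-1}\hat{\bm{L}}_k^{[t-1]}
\]
matches the chain-rule unrolling of $-\phi\,\E^{(0)}\nabla_{\mathfrak{U}^{(s)}}\Upsilon_{\pi_m}^{(t)}$: differentiating (S1) in $\mathfrak{U}^{(s)}$ yields a triangular linear system in $\{\nabla_{\mathfrak{U}^{(s)}}\Theta^{(r)}\}$ whose formal inverse is exactly the above resolvent. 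Combining these two facts gives $\hat{\bm{\tau}}^{[t]}\approx\bm{\tau}^{[t]}$, and hence $\hat{\bm{\rho}}^{[t]}\approx\bm{\rho}^{[t]}$ through the recursion \eqref{def:rho_se_matrix}. The $L^{\mathfrak{r}}$ bound on $\hat U^{(t)}-U^{(t)}$ then follows by Cauchy--Schwarz upon controlling $\pnorm{\hat P_1^{(s-1)}\odot\sigma_1'(XW_1^{(s-1)})}{}$ via Theorem \ref{thm:se_gd_NN} applied to quadratic pseudo-Lipschitz norms.

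The main obstacle is the algebraic verification that the resolvent formula built into Step 4 of Algorithm \ref{alg:aug_gd_nn} correctly inverts the chain rule arising from the recursive definition of $\Theta^{(t)}$ in (S1). Once this identity is established, the remaining stability arguments under $n^{-1/c_t}$ perturbations are routine, because $\varkappa_\ast$ and the pseudo-Lipschitz constants of the maps in Definitions \ref{def:mean_field_fcn} and \ref{def:mean_field_S_T} enter only polynomially, and the induction on $t$ preserves the polynomial rate.
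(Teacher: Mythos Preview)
Your proposal is correct and follows essentially the same route as the paper: the paper likewise interposes the intermediate quantity $\tilde{\mathscr{E}}_{\texttt{test}}^{(t)}(X,Y)=m^{-1}\pnorm{Y_{[q]}-\mathscr{G}_{V_{[2:L]}^{(t)}}(U^{(t)})}{}^2$, carries out the two replacements $W_{[2:L]}^{(t)}\to V_{[2:L]}^{(t)}$ and $\hat U^{(t)}\to U^{(t)}$, proves $\hat{\bm{\rho}}^{[t]}\approx\bm{\rho}^{[t]}$ by alternating Lemmas \ref{lem:rho_hat_error}--\ref{lem:tau_hat_error} (the resolvent identity you describe is exactly Proposition \ref{prop:tau_repres}), and finishes with Theorem \ref{thm:se_gd_NN}. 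One technical point the paper makes explicit and you should too: because $\varphi_\ast$ has polynomial growth $r_0$, the test function $\psi_k$ is not order-$2$ pseudo-Lipschitz in the $(X\mu_\ast)_{k\cdot}$ variable, so the paper truncates via $\varphi_\ast\circ\mathscr{T}_{\varkappa_\ast\log n}$ on a high-probability event before invoking Theorem \ref{thm:se_gd_NN}.
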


The consistent estimator $\hat{\mathscr{E}}_{\texttt{test}}^{(t)}(X,Y)$ for the generalization error at each iteration can be used for various practical purposes, such as tuning hyperparameters (e.g., learning rates) and determining early stopping in gradient descent training, all with rigorous theoretical guarantees. We present small-scale experiments in Section~\ref{section:simulation}, and leave large-scale empirical validation to future work.

\begin{remark}\label{rmk:alg_smooth}
Some technical remarks for Theorem \ref{thm:est_test_err} are in order:
\begin{enumerate}
    \item Similar to Remark~\ref{rmk:thm_se_gd_NN}-(2), the smoothness conditions (A4) are likely stronger than necessary. However, since $\{\sigma_\alpha''\}$ play a crucial role in the design of Algorithm~\ref{alg:aug_gd_nn}, we conjecture that the minimal regularity required for $\{\sigma_\alpha\}$ in Theorem~\ref{thm:est_test_err} is the Lipschitz continuity of their first derivatives $\{\sigma_\alpha'\}$. Some numerical evidence in support of this conjecture is provided in Section~\ref{subsection:simulation_caution}.
    
    \item Remark \ref{rmk:thm_se_gd_NN}-(3) also applies to Theorem \ref{thm:est_test_err}. Moreover, since Algorithm \ref{alg:aug_gd_nn} does not utilize any information about the single/multi-index modeling, we conjecture that Theorem \ref{thm:est_test_err} holds for a broader class of regression functions. Determining the maximal set of such regression functions remains open.
\end{enumerate}
\end{remark}

\subsection{Heuristics for algorithmic estimation without knowledge of $\mu_\ast$}

We develop some heuristics to understand the rationale behind Algorithm \ref{alg:aug_gd_nn} from the lens of our theory in Theorem \ref{thm:se_gd_NN}. The key principle of Algorithm \ref{alg:aug_gd_nn} is that it produces consistent estimates for the matrix-variate Onsager correction matrices $\bm{\tau}^{[t]},\bm{\rho}^{[t]}$:
\begin{align}\label{eqn:tau_rho_est_consist}
\hat{\bm{\tau}}^{[t]}\approx \bm{\tau}^{[t]},\quad \hat{\bm{\rho}}^{[t]}\approx \bm{\rho}^{[t]}.
\end{align}
As $\hat{\bm{\rho}}^{[t]}, \bm{\rho}^{[t]}$ are defined in terms of $\hat{\bm{\tau}}^{[t]}, \bm{\tau}^{[t]}$ and $\hat{\bm{\rho}}^{[t-1]}, \bm{\rho}^{[t-1]}$, it remains to understand the reason that $\hat{\bm{\tau}}^{[t]}\approx \bm{\tau}^{[t]}$. 

To this end, a crucial step is the following alternative representation of $\bm{\tau}^{[t]}$. 

\begin{proposition}\label{prop:tau_repres}
	For $k \in [m]$, let
	\begin{align}\label{def:L_u_V}
	&\bm{\mathfrak{L}}^{(t)}_k\big(u^{([0:t])},V^{([0:t-1])}\big)\nonumber\\
	&\equiv \mathrm{diag} \bigg(\bigg\{\eta^{(s-1)}_1 \sum_{\ell \in [q]} \Big[\partial_{u_\ell} \mathfrak{S}_{k}\big(u^{(s)},u^{(0)},V^{(s-1)}\big) \Big] e_\ell^\top\bigg\}_{s \in [1:t]}\bigg) \in (\mathbb{M}_q)^{t\times t}.
	\end{align}
	Then we have 
	\begin{align*}
	\bm{\tau}^{[t]} &= - \E^{(0)} \bigg[ \bigg( (I_{\mathbb{M}_q})_t+ \phi^{-1} \bm{\mathfrak{L}}^{(t)}_{\pi_m}\Big(\Big\{\Theta^{(s)}_{\pi_m}\big(\mathfrak{U}^{([0:s])}\big)\Big\}_{s \in [0:t]},V^{([0:t-1])}\Big) \mathfrak{O}_{\mathbb{M}_q;t}(\bm{\rho}^{[t-1]})\bigg)^{-1}\\
	&\qquad\qquad\qquad \times  \bm{\mathfrak{L}}^{(t)}_{\pi_m}\Big(\Big\{\Theta^{(s)}_{\pi_m}\big(\mathfrak{U}^{([0:s])}\big)\Big\}_{s \in [0:t]},V^{([0:t-1])}\Big)\bigg]\in (\mathbb{M}_q)^{t\times t}.
	\end{align*}
\end{proposition}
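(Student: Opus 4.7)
The plan is to derive the claimed formula directly from the scalar definition $\tau_{r,s} = -\phi\cdot \E^{(0)}\nabla_{\mathfrak{U}^{(s)}}\Upsilon^{(r)}_{\pi_m}$ in (S3), by combining the chain rule applied to $\Upsilon^{(r)}$ with the derivative of the recursion (S1) for $\Theta^{(r)}$, and packaging the whole system into a single block-matrix linear equation that admits a closed-form solution upon invoking the standard matrix push-through identity.

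First I would apply the chain rule to $\Upsilon^{(r)}(\mathfrak{u}^{([0:r])}) = \phi^{-1}\eta_1^{(r-1)} \mathfrak{S}(\Theta^{(r)}(\mathfrak{u}^{([0:r])}), \mathfrak{u}^{(0)}, V^{(r-1)})$ to obtain, for each $k \in [m]$ and $s \in [1:r]$,
\begin{align*}
\nabla_{\mathfrak{u}^{(s)}}\Upsilon^{(r)}_k = \phi^{-1}\eta_1^{(r-1)}\big[\nabla_u\mathfrak{S}_k\big(\Theta^{(r)}_k,\mathfrak{u}^{(0)},V^{(r-1)}\big)\big]J^{(r,s)}_k,
\end{align*}
where $J^{(r,s)}_k \equiv \nabla_{\mathfrak{u}^{(s)}}\Theta^{(r)}_k \in \mathbb{M}_q$. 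Recognizing that the $s$-th diagonal block $\eta_1^{(s-1)}\sum_{\ell\in[q]}[\partial_{u_\ell}\mathfrak{S}_k]e_\ell^\top$ of $\bm{\mathfrak{L}}^{(t)}_k$ is precisely the Jacobian $\eta_1^{(s-1)}\nabla_u\mathfrak{S}_k(\Theta^{(s)}_k,\mathfrak{u}^{(0)},V^{(s-1)})$, this yields the clean identity $\tau_{r,s} = -\E^{(0)}[\bm{\mathfrak{L}}^{(t)}_{\pi_m}\mathbf{J}_{\pi_m}]_{r,s}$, where $\mathbf{J} \equiv (J^{(r,s)})_{r,s\in[1:t]} \in (\mathbb{M}_q)^{t\times t}$.

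Next I would differentiate the row-$k$ form of the recursion (S1), namely $\Theta^{(r)}_k = \mathfrak{u}^{(r)} - \phi^{-1}\sum_{r'\in[1:r-1]}\eta_1^{(r'-1)}\rho_{r-1,r'}\mathfrak{S}_k(\Theta^{(r')}_k,\mathfrak{u}^{(0)},V^{(r'-1)})$ (obtained by noting that row $k$ of $\mathfrak{S}(\cdot)\rho_{r-1,r'}^\top$ corresponds to left-multiplication by $\rho_{r-1,r'}$ in column-vector form) with respect to $\mathfrak{u}^{(s)}$. This produces the entrywise recursion $J^{(r,s)}_k = I_q\mathbf{1}_{s=r} - \phi^{-1}\sum_{r'\in[s:r-1]}\rho_{r-1,r'}\,[\eta_1^{(r'-1)}\nabla_u\mathfrak{S}^{(r')}_k]\,J^{(r',s)}_k$, which in block-matrix form reads
\begin{align*}
\mathbf{J}_{\pi_m} = (I_{\mathbb{M}_q})_t - \phi^{-1}\mathfrak{O}_{\mathbb{M}_q;t}(\bm{\rho}^{[t-1]})\,\bm{\mathfrak{L}}^{(t)}_{\pi_m}\,\mathbf{J}_{\pi_m},
\end{align*}
once one verifies that the shift operator $\mathfrak{O}_{\mathbb{M}_q;t}$ implicitly defined through (\ref{def:rho_se_matrix}) places $\rho_{r-1,r'}$ in the $(r,r')$-block. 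Strict block-lower-triangularity then guarantees invertibility, giving $\mathbf{J}_{\pi_m} = \big((I_{\mathbb{M}_q})_t + \phi^{-1}\mathfrak{O}_{\mathbb{M}_q;t}(\bm{\rho}^{[t-1]})\bm{\mathfrak{L}}^{(t)}_{\pi_m}\big)^{-1}$. The classical push-through identity $A(I+BA)^{-1} = (I+AB)^{-1}A$, applied with $A = \bm{\mathfrak{L}}^{(t)}_{\pi_m}$ and $B = \phi^{-1}\mathfrak{O}_{\mathbb{M}_q;t}(\bm{\rho}^{[t-1]})$, converts $\bm{\mathfrak{L}}^{(t)}_{\pi_m}\mathbf{J}_{\pi_m}$ into exactly the form in the proposition; taking $\E^{(0)}$ finishes.

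The main obstacle is purely bookkeeping around the block structure: one must carefully verify the identification of the shift operator $\mathfrak{O}_{\mathbb{M}_q;t}$ (with the convention $\rho_{0,s} \equiv 0$) so that the derivative recursion for $J^{(r,s)}$ and the Onsager recursion (S3)/(\ref{def:rho_se_matrix}) are assembled via the same operator, and that the row-vs-column conventions for Jacobians are kept consistent throughout. Once this check is settled, the remainder is a direct chain-rule calculation followed by a classical matrix identity, with no analytic subtleties.
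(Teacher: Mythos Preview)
Your proposal is correct and follows essentially the same chain-rule-plus-block-matrix-inversion strategy as the paper. The only organizational difference is that the paper differentiates $\Upsilon^{(r)}_k$ directly (obtaining the recursion $\phi\,\nabla\bm{\Upsilon}^{[t]}_k = \bm{\mathfrak{L}}^{(t)}_k - \bm{\mathfrak{L}}^{(t)}_k\,\mathfrak{O}_{\mathbb{M}_q;t}(\bm{\rho}^{[t-1]})\,\nabla\bm{\Upsilon}^{[t]}_k$, which solves immediately to the claimed form), whereas you first differentiate $\Theta^{(r)}_k$ to get $\mathbf{J}$ and then left-multiply by $\bm{\mathfrak{L}}^{(t)}_k$, which costs you the extra push-through step; the two computations are equivalent.
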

The proof of Proposition \ref{prop:tau_repres} can be found in Section \ref{subsection:proof_tau_repres}. In principle, the representation in Proposition \ref{prop:tau_repres} suggests that in order to obtain a data-driven estimate for $\bm{\tau}^{[t]}$, it suffices to obtain such an estimate for $\bm{\mathfrak{L}}^{(t)}_{k}\big(\big\{\Theta^{(s)}_{k}(\mathfrak{U}^{([0:s])})\big\}_{s \in [0:t]},V^{([0:t-1])}\big)$. In view of (\ref{def:L_u_V}), it then suffices to construct data-driven estimates for
\begin{align}\label{eqn:S_L_mat}
\Big[\partial_{u_\ell} \mathfrak{S}_{k}\Big(\Theta^{(s)}_{k}(\mathfrak{U}^{([0:s])}),\mathfrak{U}^{(0)},V^{(s-1)}\Big) \Big]_{\ell \in [q]} \in \mathbb{M}_q,\quad s \in [1:t].
\end{align}
Using our state evolution theory in Theorem \ref{thm:se_gd_NN}, it is plausible to consider
\begin{align}\label{eqn:S_L_mat_1}
\Big[\partial_{u_\ell} \mathfrak{S}_{k}\Big((XW_1^{(s-1)})_{k\cdot},(X\mu_{\ast,[q]})_{k\cdot},W_{[2:L]}^{(s-1)}\Big) \Big]_{\ell \in [q]} \in \mathbb{M}_q,\quad s \in [1:t].
\end{align}
Unfortunately, (\ref{eqn:S_L_mat_1}) is not a valid statistical estimator due to the existence of the unknown signal $\mu_\ast$. The key from here is the derivative formulae in Proposition \ref{prop:derivative_formula} for the theoretical functions in Definition \ref{def:mean_field_fcn}, from which the following identifications for the quantities in Algorithm \ref{alg:aug_gd_nn} hold for all $k \in [m]$:
\begin{itemize}[label={}]
	\item $\hat{H}_\alpha^{(t-1)} =\mathscr{H}_{W_{[2:L]}^{(t-1)}}(X W_1^{(t-1)})$,
	\item $e_k^\top\partial_\ell \hat{H}_\alpha^{(t-1)} =e_k^\top \partial_{k\ell}\mathscr{H}_{W_{[2:L]}^{(t-1)}}(X W_1^{(t-1)})$,
	\item $\hat{P}_\alpha^{(t-1)} =\mathscr{P}^{(\alpha:L]}_{W_{[2:L]}^{(t-1)}}\Big(X W_1^{(t-1)}, \mathscr{R}_{W_{[2:L]}^{(t-1)}}\big(XW_1^{(t-1)},X\mu_{\ast, [q]}\big)\Big)$,
	\item $e_k^\top\partial_{\ell}^{(1)}\hat{P}_\alpha^{(t-1)}  = e_k^\top\partial_{u_{k\ell}}\mathscr{P}^{(\alpha:L]}_{W_{[2:L]}^{(t-1)}}\Big(X W_1^{(t-1)}, \mathscr{R}_{W_{[2:L]}^{(t-1)}}\big(XW_1^{(t-1)},X\mu_{\ast, [q]}\big)\Big)$,
	\item $e_k^\top\partial^{(2)}_{\ell}\hat{P}_\alpha^{(t-1)}  = e_k^\top\partial_{z_{k\ell}}\mathscr{P}^{(\alpha:L]}_{W_{[2:L]}^{(t-1)}}\Big(X W_1^{(t-1)}, \mathscr{R}_{W_{[2:L]}^{(t-1)}}\big(XW_1^{(t-1)},X\mu_{\ast, [q]}\big)\Big)$.
\end{itemize}
With these identifications, we will show in Lemma \ref{lem:equiv_alg} by straightforward calculations that $\big(\partial_{\ell} \hat{ \mathfrak{S} }^{(s-1)}\big)^\top e_k$ in Algorithm \ref{alg:aug_gd_nn} exactly matches 
\begin{align*}
& \bigg[ \hat{P}_1^{(s-1)}\odot \sigma_1''(XW_1^{(s-1)})\odot e_ke_\ell^\top \\
&\qquad + \bigg(\partial^{(1)}_{\ell}\hat{P}_1^{(s-1)}+\sum_{r \in [q]}  \partial^{(2)}_{r}\hat{P}_1^{(s-1)} \big(\partial_{\ell} \hat{H}_L^{(s-1)}\big)_{kr}\bigg)\odot \sigma_1'(XW_1^{(s-1)})\bigg]^\top e_k,
\end{align*}
which, by the derivative formula in Proposition \ref{prop:derivative_formula}-(4), can be viewed as a fully data-driven estimator for the matrix in (\ref{eqn:S_L_mat}) and (\ref{eqn:S_L_mat_1}).

A formal proof of the above heuristics for (\ref{eqn:tau_rho_est_consist}) can be found in Section \ref{subsection:proof_tau_rho_est_err}.

With (\ref{eqn:tau_rho_est_consist}), we may expect $\hat{U}^{(t)}\approx U^{(t)}$, so the state evolution theory in Theorem \ref{thm:se_gd_NN} entails that 
\begin{align*}
\hat{\mathscr{E}}_{\texttt{test}}^{(t)}(X,Y)
&\stackrel{(\ref{eqn:tau_rho_est_consist})}{\approx} m^{-1} \bigpnorm{\varphi_\ast(X\mu_{\ast,[q]})+\xi- \mathscr{G}_{V_{[2:L]}^{(t)}}(U^{(t)})}{ }^2 = m^{-1} \bigpnorm{\mathscr{R}_{V_{[2:L]}^{(t)} }\big(U^{(t)},X\mu_{\ast,[q]}\big)  }{ }\\
&\stackrel{\textrm{Thm.}  \ref{thm:se_gd_NN}}{ \approx } \E^{(0)} \pnorm{\mathscr{R}_{V^{(t)};\pi_m}\big(\mathfrak{U}^{(t+1)},\mathfrak{U}^{(0)} \big)}{}^2 \stackrel{\textrm{Thm.}  \ref{thm:err_gd_NN}}{ \approx } {\mathscr{E}}_{\texttt{test}}^{(t)}(X,Y).
\end{align*}
A formal proof of the above display that leads to Theorem \ref{thm:est_test_err} can be found in Section \ref{subsection:proof_est_test_err}.

\section{A further implication: Structure of the learned model $f_{\emph{\textbf{W}}^{(t)}} $}\label{section:gaussian_representation_f_W}

In this section, we shall use Theorem \ref{thm:se_gd_NN} to provide a formal statement of (\ref{ineq:represent_learning_1}), which shows the structure of the learned model $f_{\bm{W}^{(t)}}(x)$.

\subsection{Proportional regime $\phi\asymp 1$}

To proceed with our discussion, for any $v_{[2:L]} \in (\mathbb{M}_q)^{[2:L]}$, let $\mathfrak{h}_{v_{[2:L]}}:\R^q\to \R^q$ be defined as
\begin{align}\label{def:h_v}
\mathfrak{h}_{v_{[2:L]}}(u)\equiv \mathscr{H}_{v_{[2:L]};k}(u),\quad \hbox{for all } u \in \R^q.
\end{align}
Here any $k \in [m]$ gives the same mapping in the above display. Recall $\{\tau_{t,s}\}$ and $\{\delta_t\}$ defined in the state evolution in Definition \ref{def:gd_NN_se}. 

\begin{definition}\label{def:represent_learning_notation}
	\begin{enumerate}
		\item Let $\mathfrak{m}_W^{(t)} \in \R^q$ and $\mathfrak{M}_W^{(t)} \in \mathbb{M}_q$ be determined recursively via
		\begin{align}\label{eqn:feature_learning_grad_2}
		\begin{cases}
		\mathfrak{m}_W^{(t)} = \delta_t e_1+ \sum_{s \in [1:t]}  \big(\tau_{t,s}+I_q\bm{1}_{s=t}\big) \mathfrak{m}_W^{(s-1)} \in \R^q,\\
		\mathfrak{M}_W^{(t)} = \sum_{s \in [1:t]}  \big(\tau_{t,s}+I_q\bm{1}_{s=t}\big) \mathfrak{M}_W^{(s-1)} \in \mathbb{M}_q,
		\end{cases}
		\end{align}
		with initialization $\mathfrak{m}_W^{(0)}=0_q$ and $\mathfrak{M}_W^{(0)}=I_q$.
		\item Let the \emph{effective signal} $U_{\ast,\texttt{eff}}^{(t)}$ (at iteration $t$) be defined as 
		\begin{align}\label{def:effective_signal}
		U_{\ast,\texttt{eff}}^{(t)}\equiv \mu_\ast \mathfrak{m}_W^{(t),\top}+W_1^{(0)}\mathfrak{M}_W^{(t),\top} \in \R^{n\times q}.
		\end{align}
	\end{enumerate}
\end{definition}

With the notation in Definition \ref{def:represent_learning_notation} above, the following theorem provides a precise quantification for the informal statement (\ref{ineq:represent_learning_1}); its proof can be found in Section \ref{subsection:proof_represent_learning}. 

\begin{theorem}\label{thm:represent_learning}
	Fix $t\in \N$. Suppose Assumption \ref{assump:gd_NN} holds for some $K,\Lambda\geq 2$ and $r_0\geq 0$. Then for any $\mathfrak{r}\geq 1$, there exists some $c_t=c_t(t,q,L,\mathfrak{r},r_0)>0$ such that  
	\begin{align*}
	& \E^{(0)}\Big[\mathfrak{d}_{\mathrm{BL}}^{(X,Y)}\Big(f_{\bm{W}^{(t)}}(x), \mathfrak{h}_{V^{(t)}_{[2:L]}}\big(U_{\ast,\texttt{eff}}^{(t),\top}x + \Omega_{t,t}^{1/2}\mathsf{Z}_q\big)\Big)\Big]^{\mathfrak{r}}\leq \big(K\Lambda\varkappa_{\ast}\big)^{c_t}\cdot n^{-1/c_t}.
	\end{align*}
	Here $\mathfrak{d}_{\mathrm{BL}}^{(X,Y)}(\cdot,\cdot)$, defined conditionally on the data $(X,Y)$, is the bounded-Lipschitz metric between pairs of $\R^q$-valued random variables defined over $(x,\mathsf{Z}_q) \sim \mathcal{N}(0,I_n)\otimes \mathcal{N}(0,I_q)$ that are independent of all other variables.
\end{theorem}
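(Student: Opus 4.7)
Write $f_{\bm{W}^{(t)}}(x) = \mathfrak{h}_{W^{(t)}_{[2:L]}}(W_1^{(t),\top} x)$ by definition (\ref{def:h_v}). Since the fresh input $x \sim \mathcal{N}(0,I_n)$ is independent of $(X,Y)$, conditional on $(X,Y)$ the vector $W_1^{(t),\top} x$ is exactly a centered Gaussian in $\R^q$ with covariance $W_1^{(t),\top}W_1^{(t)}$; likewise $U_{\ast,\texttt{eff}}^{(t),\top} x + \Omega_{t,t}^{1/2}\mathsf{Z}_q$ is a centered Gaussian with covariance $U_{\ast,\texttt{eff}}^{(t),\top}U_{\ast,\texttt{eff}}^{(t)} + \Omega_{t,t}$. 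The plan is therefore: (i) show these two conditional covariance matrices are close (in Frobenius norm, in $\mathfrak{r}$-th moment); (ii) convert this to $W_2$-closeness of the two conditional Gaussian laws via a coupling argument; (iii) compose with the Lipschitz map $\mathfrak{h}_{V^{(t)}_{[2:L]}}$; and (iv) replace $W^{(t)}_{[2:L]}$ by $V^{(t)}_{[2:L]}$ inside $\mathfrak{h}$ using the entrywise concentration in Theorem \ref{thm:se_gd_NN}.

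The algebraic backbone of step (i) is the identity $D_t = n^{1/2}\,U_{\ast,\texttt{eff}}^{(t)}$, which I would prove by induction on $t$ by matching the state-evolution recursion (S4) to the auxiliary recursion (\ref{eqn:feature_learning_grad_2}): the base case $D_0 = n^{1/2} W_1^{(0)}$ corresponds to $\mathfrak{m}_W^{(0)} = 0_q$, $\mathfrak{M}_W^{(0)} = I_q$; for the inductive step, the structure $\delta_t = [\delta_t e_1\,|\, 0_{q\times(q-1)}]$ and $D_{-1} = [n^{1/2}\mu_\ast\,|\,0_{n\times(q-1)}]$ reduces $D_{-1}\delta_t^\top$ to the outer product $n^{1/2}\mu_\ast(\delta_t e_1)^\top$, while the remaining sum $\sum_r D_{r-1}(\tau_{t,r}^\top + I_q\bm{1}_{r=t})$ transfers via $(AB)^\top = B^\top A^\top$ into the defining sums of $\mathfrak{m}_W^{(t)}$ and $\mathfrak{M}_W^{(t)}$. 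Next, I apply Theorem \ref{thm:se_gd_NN} entrywise to the order-$2$ pseudo-Lipschitz test functions $\phi_\ell^{(i,j)}(z^{([0:t])}) = z_i^{(t)}z_j^{(t)}$ for each $(i,j)\in[q]^2$, which gives
\[
(W_1^{(t),\top}W_1^{(t)})_{ij} = \frac{1}{n}\sum_{\ell \in [n]} \big(n^{1/2}W^{(t)}_{1;\ell\cdot}\big)_i \big(n^{1/2}W^{(t)}_{1;\ell\cdot}\big)_j \approx \frac{1}{n}\sum_{\ell\in[n]} \E^{(0)}\big[\Delta_{\ell,i}^{(t)}\Delta_{\ell,j}^{(t)}\big].
\]
A direct calculation using (\ref{def:Delta}), the Gaussian covariance $\cov(\mathfrak{V}^{(r)},\mathfrak{V}^{(s)}) = \Sigma_{r,s}$, and the definition of $\Omega_{t,t}$ shows the right side equals $\Omega_{t,t} + n^{-1}D_t^\top D_t$, which by the identity above equals $\Omega_{t,t} + U_{\ast,\texttt{eff}}^{(t),\top}U_{\ast,\texttt{eff}}^{(t)}$. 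Summing over the $q^2$ entries yields the Frobenius-norm covariance approximation in $\mathfrak{r}$-th moment.

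Given the covariance match, step (ii) follows by a standard coupling: realizing both conditional Gaussians via $\Sigma^{1/2}Z$ for a common standard Gaussian $Z$ and invoking Hölder continuity of the matrix square root gives $W_2$-closeness. Step (iii) follows since $\mathfrak{h}_{V^{(t)}_{[2:L]}}$ is Lipschitz with constant controlled by (A4) together with apriori operator-norm bounds on $V^{(t)}_{[2:L]}$ (themselves obtained from Theorem \ref{thm:se_gd_NN} and stability of the state-evolution recursion); composing with this Lipschitz map converts $W_2$-closeness to $\mathfrak{d}_{\mathrm{BL}}^{(X,Y)}$-closeness. Step (iv) adds an error controlled by $\max_\alpha \pnorm{W_\alpha^{(t)} - V_\alpha^{(t)}}{\infty} \lesssim (K\Lambda\varkappa_\ast)^{c_t} n^{-1/c_t}$ from Theorem \ref{thm:se_gd_NN} together with Lipschitz-in-parameter continuity of $v_{[2:L]}\mapsto \mathfrak{h}_{v_{[2:L]}}$ under (A4). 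The main obstacle is the quantitative propagation: Theorem \ref{thm:se_gd_NN} supplies only averaged-in-$\ell$ moment control on low-degree polynomial statistics, whereas the target is an $\mathfrak{r}$-th moment bound on a $\mathfrak{d}_{\mathrm{BL}}^{(X,Y)}$ distance. Closing this gap requires (a) upgrading the entrywise quadratic-form bound into a moment bound on the covariance operator (routine since $q$ is bounded), (b) deriving polynomial-in-$\varkappa_\ast$ Lipschitz estimates for $\mathfrak{h}_{V^{(t)}_{[2:L]}}$ uniform under (A4)-(A5), and (c) using Gaussian tail integrability of the arguments $W_1^{(t),\top}x$ and $U_{\ast,\texttt{eff}}^{(t),\top}x$ to close the moment chain.
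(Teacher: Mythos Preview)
Your proposal is correct and follows essentially the same approach as the paper. The paper organizes the argument as: first replace $W^{(t)}_{[2:L]}$ by $V^{(t)}_{[2:L]}$ (your step (iv)), then prove the covariance identity $W_1^{(t),\top}W_1^{(t)}\approx U_{\ast,\texttt{eff}}^{(t),\top}U_{\ast,\texttt{eff}}^{(t)}+\Omega_{t,t}$ via the quadratic test functions $\phi(w)=w_dw_{d'}$ in Theorem \ref{thm:se_gd_NN} and the inductive identity $D_t = n^{1/2}U_{\ast,\texttt{eff}}^{(t)}$ (your step (i)), and finally couple the Gaussians through the matrix square-root estimate and the stability of $\mathfrak{h}$ (your steps (ii)--(iii)); the only difference is the ordering of the replacement step, which is cosmetic.
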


As mentioned in the Introduction, the representation in Theorem~\ref{thm:represent_learning} bears a close conceptual connection to the mean-field behavior of a broad class of regularized regression estimators, which we now elaborate. For concreteness, let us consider linear regression (i.e., $\varphi_\ast=\mathrm{id}$) and the regularized estimator
\begin{align*}
\hat{\mu}\in \argmin_{\mu \in \R^n } \bigg\{\frac{1}{2m}\sum_{i \in [m]}(Y_i-\iprod{X_i}{\mu})^2+ \frac{\lambda}{n}\sum_{j \in [n]} \mathsf{f}(n^{1/2}\mu_j)\bigg\},
\end{align*}
where $\mathsf{f}:\R\to \R_{\geq 0}$ is a convex regularizer. For a large class of such convex regularizers, it is known that in the `mean-field/proportional' regime $m\asymp n$, under the same design condition (A2) the fitted model $f_{\hat{\mu}}(x)\equiv \iprod{\hat{\mu}}{x}$ satisfies
\begin{align}\label{eqn:rep_est_linear_model}
f_{\hat{\mu}}(x) \stackrel{d}{\approx} \prox_{\alpha_{\texttt{eff}}\cdot \mathsf{f}} \big(\iprod{\mu_{\ast}}{x}+\sigma_{\texttt{eff}}\cdot\mathsf{Z}\big),\quad \mathsf{Z}\sim \mathcal{N}(0,1),
\end{align}
where the `effective signal' in the linear model reduces to $\mu_\ast$, and the pair of scalars $(\alpha_{\texttt{eff}},\sigma_{\texttt{eff}})\in \R_{>0}^2$ is determined via a system of two equations; see, e.g., \cite{thrampoulidis2018precise,han2023universality}. Clearly, the representation in~\eqref{eqn:rep_est_linear_model} shares the same structural form as that in Theorem~\ref{thm:represent_learning}, but with two crucial differences discussed earlier in the Introduction.

\subsection{Large sample regime $\phi\gg 1$}
Similar to Theorem \ref{thm:se_gd_NN_large_sample},  in the large sample regime $\phi\gg 1$ we may get a simplified representation of $f_{\bm{W}^{(t)}}$. Recall $\{\bar{\tau}_t\}$ and $\{\bar{\delta}_t\}$ defined in Proposition \ref{prop:theoretical_gd_se}. 

\begin{definition}
	\begin{enumerate}
		\item Let $\bar{\mathfrak{m}}_W^{(t)} \in \R^q$ and $\bar{\mathfrak{M}}_W^{(t)} \in \mathbb{M}_q$ be determined recursively via
		\begin{align}\label{eqn:feature_learning_grad_large_sample}
		\bar{\mathfrak{m}}_W^{(t)} = \bar{\delta}_t e_1+ \big(\bar{\tau}_{t}+I_q\big) \bar{\mathfrak{m}}_W^{(t-1)} \in \R^q,\quad \bar{\mathfrak{M}}_W^{(t)} =  \big(\bar{\tau}_{t}+I_q\big) \bar{\mathfrak{M}}_W^{(t-1)} \in \mathbb{M}_q,
		\end{align}
		with initialization $\bar{\mathfrak{m}}_W^{(0)}=0_q$ and $\bar{\mathfrak{M}}_W^{(0)}=I_q$. 
		\item Let the \emph{effective signal} $\bar{U}_{\ast,\texttt{eff}}^{(t)}$ (at iteration $t$) be defined as
		\begin{align}\label{def:effective_signal_large_sample}
		\bar{U}_{\ast,\texttt{eff}}^{(t)}\equiv \mu_\ast \bar{\mathfrak{m}}_W^{(t),\top}+W_1^{(0)}\bar{\mathfrak{M}}_W^{(t),\top} \in \R^{n\times q}.
		\end{align}
	\end{enumerate}
\end{definition}

With the notation above, we have the following simplified version of Theorem \ref{thm:represent_learning} in the regime $\phi \gg 1$; its proof can be found in Section \ref{subsection:proof_represent_learning_large_sample}.
\begin{theorem}\label{thm:represent_learning_large_sample}
	Fix $t\in \N$. Suppose $\phi^{-1}\leq K$ and Assumptions (A3)-(A5) in Assumption \ref{assump:gd_NN} hold for some $K,\Lambda\geq 2$ and $r_0\geq 0$. Then for any $\mathfrak{r}\geq 1$, there exists some $c_t=c_t(t,q,L,\mathfrak{r},r_0)>0$ such that  
	\begin{align*}
	 \E^{(0)}\Big[\mathfrak{d}_{\mathrm{BL}}^{(X,Y)}\Big(f_{\bm{W}^{(t)}}(x), \mathfrak{h}_{\bar{W}^{(t)}_{[2:L]}}\big(\bar{U}_{\ast,\texttt{eff}}^{(t),\top}x\big)\Big)\Big]^{\mathfrak{r}}\leq \big(K\Lambda\varkappa_{\ast}\big)^{c_t}\cdot \big[(1\vee \phi)^{c_t}\cdot n^{-1/c_t}+\phi^{-1/c_t}\big].
	\end{align*}
	Here the conditional bounded-Lipschitz metric $\mathfrak{d}_{\mathrm{BL}}^{(X,Y)}(\cdot,\cdot)$ is defined for pairs of $\R^q$-valued random variables over $x\sim \mathcal{N}(0,I_n)$ independent of all other variables.
\end{theorem}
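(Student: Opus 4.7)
The plan is to reduce the claim to Theorem~\ref{thm:se_gd_NN_large_sample} via an exact deterministic matching between the theoretical first-layer iterate $\bar{W}_1^{(t)}$ and the effective signal $\bar{U}_{\ast,\texttt{eff}}^{(t)}$, followed by a standard Lipschitz-stability argument for the neural network map. The key algebraic step I would carry out first is to prove, by induction on $t$, the identity $\bar{W}_1^{(t)} = \bar{U}_{\ast,\texttt{eff}}^{(t)}$. The base case $t=0$ is immediate from the initializations $\bar{\mathfrak{m}}_W^{(0)}=0_q$ and $\bar{\mathfrak{M}}_W^{(0)}=I_q$. For the inductive step, I would substitute the hypothesis $\bar{W}_1^{(t-1)}=\mu_\ast \bar{\mathfrak{m}}_W^{(t-1),\top}+W_1^{(0)}\bar{\mathfrak{M}}_W^{(t-1),\top}$ into the recursion of Proposition~\ref{prop:theoretical_gd_se}(1), using $\mu_{\ast,[q]}=\mu_\ast e_1^\top$, to obtain
\[
\bar{W}_1^{(t)} = \mu_\ast\bigl[(\bar{\tau}_t+I_q)\bar{\mathfrak{m}}_W^{(t-1)}+\bar{\delta}_t e_1\bigr]^\top + W_1^{(0)}\bigl[(\bar{\tau}_t+I_q)\bar{\mathfrak{M}}_W^{(t-1)}\bigr]^\top,
\]
which matches $\mu_\ast\bar{\mathfrak{m}}_W^{(t),\top}+W_1^{(0)}\bar{\mathfrak{M}}_W^{(t),\top}$ via (\ref{eqn:feature_learning_grad_large_sample}).

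Combined with the factorization $f_{\bm{W}^{(t)}}(x)=\mathfrak{h}_{W_{[2:L]}^{(t)}}(W_1^{(t),\top}x)$, which is immediate from Definitions~\ref{def:grad_descent_fcn} and~\ref{def:mean_field_fcn} in light of (\ref{def:h_v}), the target bound reduces to controlling, in bounded-Lipschitz distance under $x\sim\mathcal{N}(0,I_n)$, the deviation between $\mathfrak{h}_{W_{[2:L]}^{(t)}}(W_1^{(t),\top}x)$ and $\mathfrak{h}_{\bar{W}_{[2:L]}^{(t)}}(\bar{W}_1^{(t),\top}x)$. For any $1$-bounded, $1$-Lipschitz test function $\psi$, I would use the crude estimate $|\E^x[\psi(A)-\psi(B)]|\leq \E^x[\pnorm{A-B}{}\wedge 2]$. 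Exploiting that $(v_{[2:L]},z)\mapsto \mathfrak{h}_{v_{[2:L]}}(z)$ is locally Lipschitz with constants polynomial in $\pnorm{z}{}$ and $\max_{\alpha\in[2:L]}\pnorm{v_\alpha}{\op}$ (a consequence of (A4) and a telescoping argument across the $L$ layers), I would bound $\pnorm{A-B}{}$ pointwise in $x$ by such an explicit polynomial times $\sum_{\alpha\in[1:L]}\pnorm{W_\alpha^{(t)}-\bar{W}_\alpha^{(t)}}{}$. Taking $\E^x$ controls the Gaussian moments of $\pnorm{W_1^{(t),\top}x}{}$ via $\pnorm{W_1^{(t)}}{F}$, and H\"older's inequality at an enlarged exponent $\mathfrak{r}'$ combined with Theorem~\ref{thm:se_gd_NN_large_sample} closes the estimate.

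The hardest part will be the uniform moment control of the Lipschitz factors, which scale multiplicatively across layers as $\prod_{\alpha\in[2:L]}\pnorm{W_\alpha^{(t)}}{\op}$. This requires apriori $L^{\mathfrak{r}'}$-bounds on $\pnorm{W_\alpha^{(t)}}{\op}$, which I would obtain by combining Theorem~\ref{thm:se_gd_NN_large_sample} with a deterministic bound on $\pnorm{\bar{W}_\alpha^{(t)}}{\op}$ derived by iterating Proposition~\ref{prop:theoretical_gd_se}(2) in terms of $\varkappa_\ast$ and $(\bar{\tau}_s,\bar{\delta}_s)_{s\leq t}$. Careful bookkeeping of H\"older exponents is then needed so that the resulting polynomial factor depends only on $(t,q,L,\mathfrak{r},r_0)$ and inherits the claimed rate $(1\vee\phi)^{c_t}\cdot n^{-1/c_t}+\phi^{-1/c_t}$ directly from Theorem~\ref{thm:se_gd_NN_large_sample}.
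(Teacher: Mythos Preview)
Your approach is correct but takes a genuinely different route from the paper. The paper does not use the identity $\bar{W}_1^{(t)}=\bar{U}_{\ast,\texttt{eff}}^{(t)}$ at all; instead it first invokes Theorem~\ref{thm:represent_learning} (applied with $K$ replaced by $K\vee(1\vee\phi)$, which produces the $(1\vee\phi)^{c_t}n^{-1/c_t}$ term) to approximate $f_{\bm{W}^{(t)}}(x)$ by $\mathfrak{h}_{V_{[2:L]}^{(t)}}(U_{\ast,\texttt{eff}}^{(t),\top}x+\Omega_{t,t}^{1/2}\mathsf{Z}_q)$, and then compares this Gaussian representation with $\mathfrak{h}_{\bar{W}_{[2:L]}^{(t)}}(\bar{U}_{\ast,\texttt{eff}}^{(t),\top}x)$ by separately controlling $\pnorm{V_\alpha^{(t)}-\bar{W}_\alpha^{(t)}}{}$, $\pnorm{U_{\ast,\texttt{eff}}^{(t)}-\bar{U}_{\ast,\texttt{eff}}^{(t)}}{}$ (via a dedicated Lemma~\ref{lem:diff_m_M_bar} bounding $\pnorm{\mathfrak{m}_W^{(t)}-\bar{\mathfrak{m}}_W^{(t)}}{}$ and $\pnorm{\mathfrak{M}_W^{(t)}-\bar{\mathfrak{M}}_W^{(t)}}{}$), and $\pnorm{\Omega_{t,t}}{\op}$, each of which is $O(\phi^{-1})$ by the state-evolution stability estimates in the proof of Theorem~\ref{thm:se_gd_NN_large_sample}. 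Your route is more direct: the algebraic identity $\bar{W}_1^{(t)}=\bar{U}_{\ast,\texttt{eff}}^{(t)}$ collapses the target statement to a pure weight-stability bound, so Theorem~\ref{thm:se_gd_NN_large_sample} can be applied in a single stroke without ever introducing the auxiliary Gaussian $\mathsf{Z}_q$ or the intermediate state-evolution quantities $V^{(t)}$, $U_{\ast,\texttt{eff}}^{(t)}$, $\Omega_{t,t}$. The paper's approach is more modular (it exhibits the large-sample result as a degeneration of the proportional-regime representation), while yours is shorter and avoids Lemma~\ref{lem:diff_m_M_bar} altogether.
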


Simply put, the above theorem shows that in the large sample regime $\phi\gg 1$, the high dimensional Gaussian noise in (\ref{ineq:represent_learning_1}) vanishes and the effective signal ${U}_{\ast,\texttt{eff}}^{(t)}$  therein is replaced by $\bar{U}_{\ast,\texttt{eff}}^{(t)}$ defined in (\ref{def:effective_signal_large_sample}).

As noted in the Introduction, Theorem~\ref{thm:represent_learning_large_sample} may offer insight into the sample complexity of gradient descent training. In the optimal regime with sample complexity $\bigo(n)$ (i.e., $\phi = m/n \gg 1$), the additional Gaussian noise term $\mathsf{Z}_{\texttt{hd}}^{(t)}$ in (\ref{ineq:represent_learning_1}) becomes negligible, and the remaining challenge is to understand how the \emph{deterministic} effective link and effective signal approximate the unknown single-index function as the iteration $t \to \infty$. We leave this question to future work.

\section{Simulation studies}\label{section:simulation}

\subsection{Common simulation parameters}\label{subsection:simulation_parameter}
Some numerical experiments in this section share the following common simulation parameters:
\begin{itemize}
    \item The single-index function is taken as $\varphi_*(\cdot) = \tanh(\cdot)$.
    \item The sample size is $m=300$ and the feature dimension is $n=600$.
    \item The signal $\mu_* \in \mathbb{R}^n$ is drawn from $\mathcal{N}(0,I_n/n)$ and is fixed throughout the simulation. The errors $\xi_i$ are i.i.d. samples from $\mathcal{N}(0,\sigma_\xi^2)$.
    \item The data matrix $X$ has i.i.d. entries following either $\mathcal{N}(0,1)$ (green) or a normalized \textit{t}-distribution with $10$ degrees of freedom (red) with unit variance.
    \item Algorithm \ref{alg:aug_gd_nn} is initialized with $n^{1/2} W_1^{(0)}$ and $\{q^{1/2} W_\alpha^{(0)}\}_{\alpha \in [2:L]}$ having i.i.d. $\mathcal{N}(0,1)$ entries and employs a step size of $\eta=2$.
\end{itemize}
The neural network architectures will be specified below in each simulation.

\subsection{Algorithmic estimation of the generalization error} \label{sec:estimation_error}\label{subsection:alg_est_gen_err}

\begin{figure}[!t]
	\centering
	\includegraphics[width=0.99\linewidth]{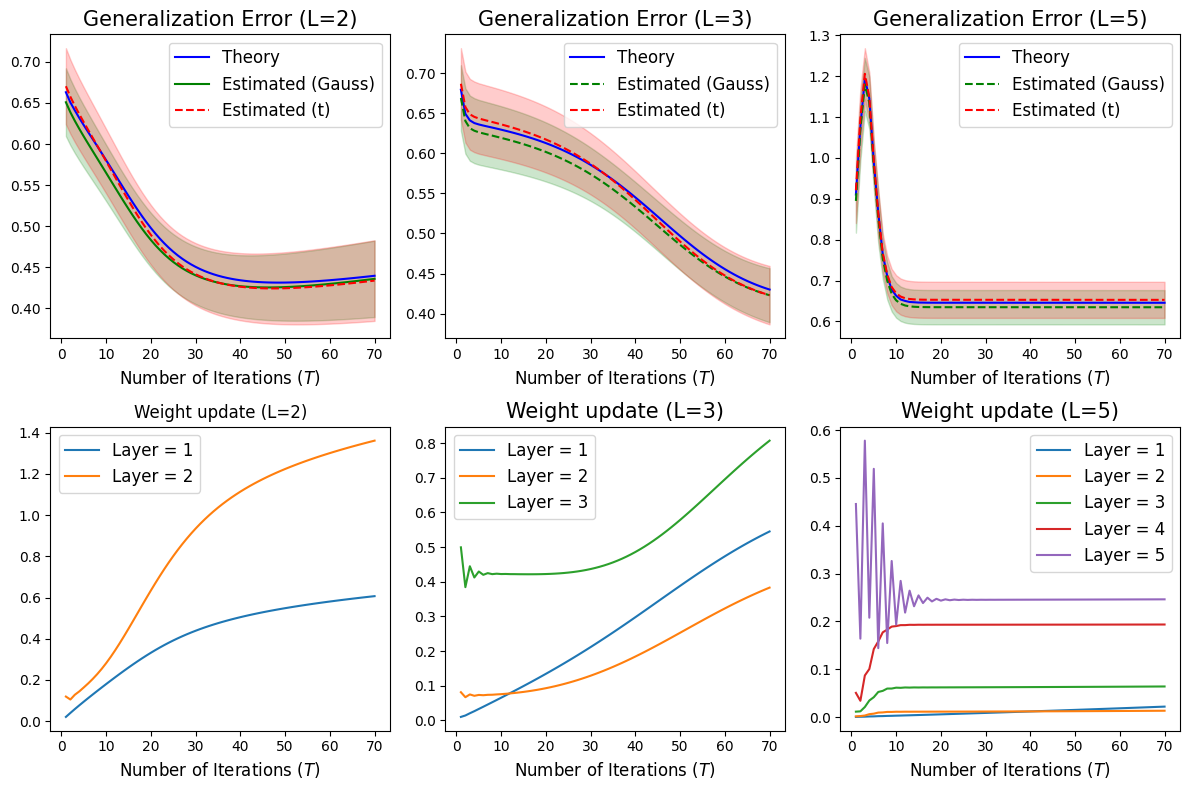}
	\caption{Algorithmic estimation of the generalization error.}
	\label{fig:estimation_error}
\end{figure}

In Figure \ref{fig:estimation_error}, we examine the numerical performance of $\hat{\mathscr{E}}_{\texttt{test}}^{(t)}(X,Y)$  proposed in Algorithm \ref{alg:aug_gd_nn}. We consider the setting in Section \ref{subsection:simulation_parameter} and neural networks with activation functions $\{\sigma_\alpha(x) = 1/(1 +e^{-x})\}_{\alpha \in [1:L-1]}$, width $q=10$, and the number of layers varying among $L=2$ (left), $L=3$ (middle), and $L=5$ (right). The noise level is set as $\sigma_\xi^2=1/4$ and Algorithm \ref{alg:aug_gd_nn} is run for $70$ iterations with $120$ Monte Carlo repetitions.

We report two sets of numerical results in Figure~\ref{fig:estimation_error}:
\begin{itemize}
    \item (\emph{Upper row}). We plot the generalization error estimate $\hat{\mathscr{E}}_{\texttt{test}}^{(t)}(X,Y)$ computed by Algorithm \ref{alg:aug_gd_nn} against its theoretical value $\mathscr{E}_{\texttt{test}}^{(t)}(X,Y)$.
    \item (\emph{Lower row}). We plot the (Monte Carlo averaged) relative distance $\pnorm{W_\alpha^{(t)}-W_\alpha^{(0)}}{}/\pnorm{W_\alpha^{(0)}}{}$ for each layer $\alpha \in [1:L]$.
\end{itemize}

From Figure \ref{fig:estimation_error}, we observe the following:
\begin{enumerate}
    \item In each case with $L=2,3,5$ in the upper row, the algorithmic estimate $\hat{\mathscr{E}}_{\texttt{test}}^{(t)}(X,Y)$ closely matches the theoretical generalization error at every iteration $t$. This phenomenon holds universally for both Gaussian and non-Gaussian data.
    \item For the two-layer case $L=2$, the theoretical generalization error begins to increase during training (around iteration $40$). The proposed algorithmic estimate $\hat{\mathscr{E}}_{\texttt{test}}^{(t)}(X,Y)$ accurately captures this trend and can be used to determine an early stopping point.
    \item In each case with $L=2,3,5$ in the lower row, the weights of the first (and other shallow) layers move non-trivially from their initialization beyond the lazy training regime. The apparently smaller update observed when $L=5$ is likely due to instability in the fifth-layer updates, making them visually minor relative to those in other layers.
\end{enumerate}

In Figure \ref{fig:estimation_error_smallnoise} in Appendix \ref{section:simulation_additional}, we report simulations with noise level $\sigma_\xi^2=0$ (noiseless). We observe that the accuracy (error bars) of Algorithm \ref{alg:aug_gd_nn} can be  substantially improved in the noiseless setting (= a large signal-to-noise ratio).

\subsection{Robustness of Algorithm \ref{alg:aug_gd_nn} for multi-index models}\label{subsection:alg_est_gen_err_multi}

\begin{figure}
    \centering
    \includegraphics[width=0.99\linewidth]{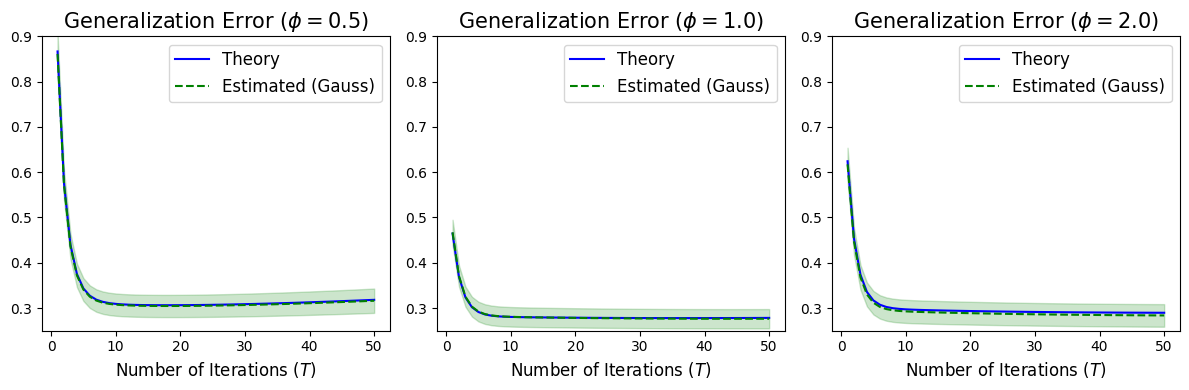}
    \caption{Algorithmic estimation of the generalization error with a multi-index model.}
    \label{fig:multi-index}
\end{figure}

In Figure~\ref{fig:multi-index}, we examine the robustness of Algorithm~\ref{alg:aug_gd_nn} when the regression model (\ref{def:model}) is misspecified. As mentioned in the Introduction, Theorem~\ref{thm:est_test_err} can be generalized to  the class of multi-index models with further technical work. Here we numerically verify this claim by considering a multi-index model $Y_i=\tanh(\pnorm{U_\ast X_i}{})+\xi_i$, $i \in [m]$, where the entries signal $U_\ast \in \R^{10\times n}$ are i.i.d. drawn from $\mathcal{N}(0,1/n)$ and fixed. We consider a two-layer neural network with activation function $\sigma_1(x) = 1/(1 + e^{-x})$ and width $q=10$. We set the sample size at $m=300$ with three aspect ratios $\phi = 0.5$ (left), $\phi = 1$ (middle) and $\phi = 2$ (right).  The noise level is set as $\sigma_\xi^2=1/4$. All other simulation parameters follow Section \ref{subsection:simulation_parameter}. For simplicity we only plot in Figure~\ref{fig:multi-index} the Gaussian feature case, where Algorithm~\ref{alg:aug_gd_nn} is run for $70$ iterations with $120$ Monte Carlo repetitions.

We observe that, similar to Figure~\ref{fig:estimation_error}, the algorithmic estimate $\hat{\mathscr{E}}_{\texttt{test}}^{(t)}(X,Y)$ remains valid for the prescribed multi-index model at each iteration. In Figure \ref{fig:multi-index_noiseless} in Appendix \ref{section:simulation_additional}, we report the same simulations in the noiseless case and observe substantially further reduced error bars in similar spirit to Figure \ref{fig:estimation_error_smallnoise} mentioned above.

\subsection{Some cautions of Algorithm \ref{alg:aug_gd_nn}}\label{subsection:simulation_caution}

We present two practical cautions regarding Algorithm~\ref{alg:aug_gd_nn} when the assumptions of Theorem~\ref{thm:est_test_err} are violated. All simulation parameters follows Section \ref{subsection:simulation_parameter} unless otherwise specified. For simplicity we only present simulations for Gaussian features.

\subsubsection{Effect of wide networks}

In Figure~\ref{fig:wide_network}, we examine the performance of Algorithm~\ref{alg:aug_gd_nn} for wide neural networks. Specifically, we consider two-layer neural networks with activation functions $\{\sigma_\alpha(x) = 1/(1 +e^{-x})\}_{\alpha \in [1:L-1]}$, with width $q/m=0.1$ (left), $q/m=0.2$ (middle), and $q/m=0.5$ (right). The noise level is set as $\sigma_\xi^2=1/4$, and Algorithm \ref{alg:aug_gd_nn} is run for $50$ iterations with fewer $30$ Monte Carlo repetitions due to large computation cost for large $q$'s.

Figure~\ref{fig:wide_network} shows that as the ratio $q/m$ increases, the theoretical generalization error and its algorithmic estimate $\hat{\mathscr{E}}_{\texttt{test}}^{(t)}(X,Y)$ increasingly deviate from each other. This indicates that the validity of Algorithm \ref{alg:aug_gd_nn} is intrinsic to finite width neural networks and essential further corrections will be necessary for wide neural networks. We again note that in practical examples (cf. Table \ref{tab:model-summary}), the width of neural networks is relatively small, so our numerical findings here are mainly of theoretical interest.

\begin{figure}[!t]
	\centering
	\includegraphics[width=0.99\linewidth]{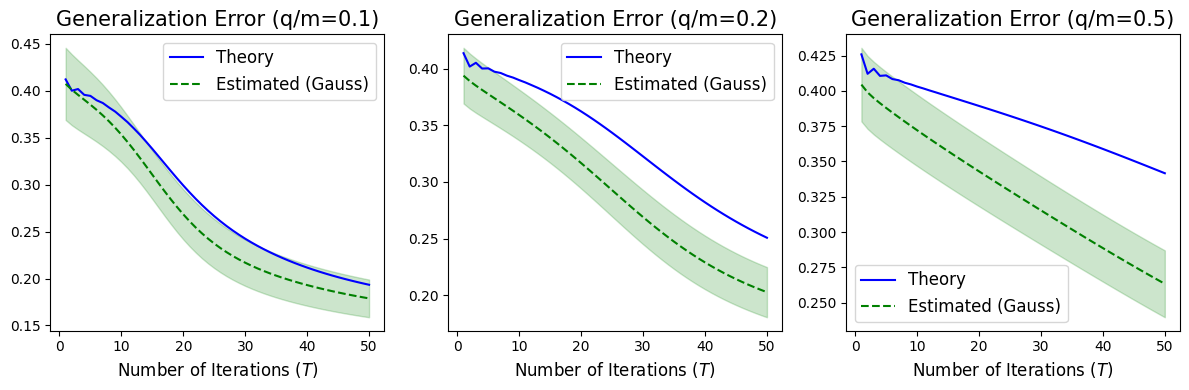}
	\caption{Effect of wide networks.}
	\label{fig:wide_network}
\end{figure}

\subsubsection{Effect of non-regular activations}

In Figure~\ref{fig:relu_activation}, we examine the effect of regularity of the activations on Algorithm~\ref{alg:aug_gd_nn} for two-layer neural networks:
\begin{itemize}
    \item In the left panel, we consider the ReLU activation $\sigma_1(x) = (x)_+$ which only admits a weak first derivative. We set $\sigma_1''(\cdot)\equiv 0$ in the implementation of Algorithm~\ref{alg:aug_gd_nn}.
    \item In the right panel, we consider the smoothed ReLU activation $\sigma_2(x) = (x^2/2)\bm{1}_{x \in (0,1)} + (x-1/2)\bm{1}_{x \geq 1}$ with weak second derivative $\sigma_1''(x) = \bm{1}_{x \in (0,1)}$.
\end{itemize}
In both panels, we set $\sigma_\xi^2=0$ to remove the effect of noises, and Algorithm \ref{alg:aug_gd_nn} is run for $50$ iterations with $120$ Monte Carlo repetitions.

As shown in Figure~\ref{fig:relu_activation}, for the ReLU activation, Algorithm~\ref{alg:aug_gd_nn} tends to become unstable with large fluctuations, and the Monte Carlo averages gradually drift away from the theoretical generalization error as the number of iterations increases. This suggests that it is invalid to simply set $\sigma_\alpha''(\cdot)\equiv 0$ for the ReLU activation, and that further essential corrections are needed. On the other hand, for activations with `minimal regularity' such as the smoothed ReLU with a weak second derivative as conjectured in Remark~\ref{rmk:alg_smooth}, Algorithm~\ref{alg:aug_gd_nn} appears to remain valid although Theorem~\ref{thm:est_test_err} formally assumes more regularity.

\section{Proof sketch of Theorem \ref{thm:se_gd_NN}}\label{section:proof_sketch}

Recall $n_{L,q}= n+(L-1)q$. We define some further notation:
\begin{align}\label{def:GFOM_notation}
A &\equiv 
\begin{pmatrix}
X/\sqrt{n} & 0_{m\times (L-1)q}
\end{pmatrix}
\in \R^{m\times n_{L,q}},\nonumber\\
\overline{\mu}_{\ast,n}  &\equiv 
\begin{pmatrix}
n^{1/2}\mu_\ast & 0_{n\times (q-1)}\\
0_{(L-1)q\times 1} & 0_{(L-1)q\times (q-1)}
\end{pmatrix}
\in \R^{n_{L,q}\times q},\nonumber\\
\bm{W}_n^{(0)} &\equiv \big[ \big( (\bm{1}_{\alpha\neq 1}+\bm{1}_{\alpha=1}n^{1/2})\cdot  W^{(0)}_\alpha\big)_{\alpha \in [1:L]} \big]\in \R^{n_{L,q}\times q}.
\end{align}

The proof of Theorem \ref{thm:se_gd_NN} relies on an extension of a theoretical machinery for characterizing the mean-field behavior of a large class of general first order methods (GFOMs) developed in  \cite{han2024entrywise}. Specifically, we need a version of the GFOM theory in \cite{han2024entrywise} that deals with matrix-variate iterations. We provide these extensions along with some other related delocalization results in Appendix \ref{section:GFOM_theory} for the convenience of the readers.

\begin{figure}[!t]
	\centering
	\includegraphics[width=0.79\linewidth]{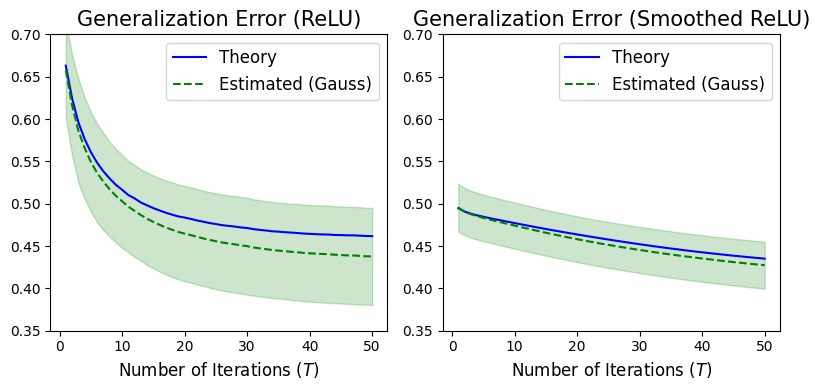}
	\caption{Effect of non-regular activations.}
	\label{fig:relu_activation}
\end{figure}

\subsection{Reformulation of gradient descent}

We shall first reformulate the gradient descent iteration. Let the initialization be
\begin{align*}
\hat{u}^{(-1)} = 0_{m\times q},\quad \hat{v}^{(-1)}\equiv \overline{\mu}_{\ast,n}
\in \R^{n_{L,q}\times q}.
\end{align*}
For $t=0$, let
\begin{align*}
\hat{u}^{(0)}& \equiv A \hat{v}^{(-1)} \in \R^{m\times q},\quad \hat{v}^{(0)} \equiv \bm{W}_n^{(0)}\in \R^{n_{L,q}\times q}.
\end{align*}
Given $\{(u^{(s)},v^{(s)})\}_{s \in [-1:t-1]}$ computed up to iteration $t-1$, by writing 
\begin{align}\label{def:hat_P_G_R}
\mathscr{P}_{  \hat{v}^{(s)}_{[2:L]} }^{(\alpha:L]}\equiv \hat{\mathscr{P}}_{(s)}^{(\alpha:L]},\quad \mathscr{G}_{\hat{v}^{(s)}_{[2:\alpha]} }\equiv \hat{\mathscr{G}}_{(s);\alpha},\quad \mathscr{R}_{\hat{v}^{(s)}_{[2:L]} }\equiv \hat{\mathscr{R}}_{(s)},
\end{align}
for iteration $t\geq 1$, let
\begin{align}\label{def:GD_GFOM_NN}
&\hat{u}^{(t)}\equiv A \hat{v}^{(t-1)} \in \R^{m\times q},\\
&\hat{v}^{(t)}
= A^\top 
\Big[
-\phi^{-1}\eta^{(t-1)}_1\cdot \hat{\mathscr{P}}_{ (t-1) }^{(1:L]}\Big(\hat{u}^{(t)},\hat{\mathscr{R}}_{(t-1)}(\hat{u}^{(t)},\hat{u}^{(0)})\Big) \odot \sigma_1'(\hat{u}^{(t)})\Big]+\hat{v}^{(t-1)}\nonumber\\
&\, -
\begin{pmatrix}
0_{n\times q}\\
\Big\{\eta^{(t-1)}_{\alpha}\cdot m^{-1}\hat{\mathscr{G}}_{(t-1);\alpha-1}(\hat{u}^{(t)})^\top \Big[\hat{\mathscr{P}}_{ (t-1) }^{(\alpha:L]}\Big(\hat{u}^{(t)},\hat{\mathscr{R}}_{(t-1)}(\hat{u}^{(t)},\hat{u}^{(0)})\Big) \odot \hat{\mathscr{G}}_{(t-1);\alpha}'(\hat{u}^{(t)})\Big]\Big\}_{\alpha \in [2:L]}
\end{pmatrix}.\nonumber
\end{align}
We may then identify,
\begin{align}\label{def:GD_GFOM_NN_v}
\hat{v}^{(t)} \equiv \big[ \big( (\bm{1}_{\alpha\neq 1}+\bm{1}_{\alpha=1}n^{1/2})\cdot  W^{(t)}_\alpha\big)_{\alpha \in [1:L]} \big].
\end{align}
Unfortunately the form in (\ref{def:GD_GFOM_NN}) does not allow for a direct application of the matrix-variate GFOM theory in Appendix \ref{section:GFOM_theory} due to the complicated, non-separable dependence structure in the second term of $\hat{v}^{(t)}$ in (\ref{def:GD_GFOM_NN}). 

The key idea from here is to construct a sequence of auxiliary GFOM iterates $\{(u^{(t)},v^{(t)})\}$ that is close to $\{(\hat{u}^{(t)},\hat{v}^{(t)})\}$ in (\ref{def:GD_GFOM_NN}), while enjoying a state evolution characterization via the GFOM theory.

\subsection{Auxiliary GFOM iterates}
For $t\geq 0$, let
\begin{align}\label{def:auxiliary_GFOM_F_G}
&\mathsf{F}_t^{\langle 1\rangle}(v^{([-1:t-1])})\equiv v^{(t-1)}\bm{1}_{t\geq 1}+ \overline{\mu}_{\ast,n}\bm{1}_{t=0},\nonumber \\
& \mathsf{F}_t^{\langle 2\rangle}(v^{([-1:t-1])})\equiv  \bigg[v^{(t-1)}-
\begin{pmatrix}
0_{n\times q}\\
\big\{\eta_\alpha^{(t-1)} M_\alpha^{(t-1)}\big\}_{\alpha \in [2:L]}
\end{pmatrix}\bigg] \bm{1}_{t\geq 1}+ \bm{W}_n^{(0)} \bm{1}_{t=0},\nonumber\\
& \mathsf{G}_{t}^{\langle 1\rangle}(u^{([-1:t-1])})\equiv 0_{m\times q},\nonumber\\
& \mathsf{G}_t^{\langle 2\rangle}(u^{([-1:t])})\equiv -\phi^{-1}\eta^{(t-1)}_1\cdot \mathfrak{S}\big(u^{(t)},u^{(0)}, V^{(t-1)} \big)\bm{1}_{t\geq 1}.
\end{align}
Here $V^{(t)}\equiv (V^{(t)}_\alpha)_{\alpha \in [2:L]}$ is defined as in the state evolution in Definition \ref{def:gd_NN_se}, and for $\alpha \in [2:L]$ and $t\geq 1$, we further let 
\begin{align}\label{def:M_aux_GFOM}
M_{\alpha}^{(t-1)}&\equiv  \E^{(0)}\mathscr{G}_{V^{(t-1)}_{[2:\alpha-1]};\pi_m }\big(\Theta_{\pi_m}^{(t)}(\mathfrak{U}^{([0:t])}) \big) \nonumber\\
&\qquad\qquad \times  \big(\mathfrak{T}_{\alpha}\big)_{\pi_m}\big(\Theta_{\pi_m}^{(t)}(\mathfrak{U}^{([0:t])}) ,\mathfrak{U}^{(0)},V^{(t-1)}\big)^\top \in \mathbb{M}_q.
\end{align}
 We define the \emph{auxiliary GFOM iteration} $\{(u^{(t)},v^{(t)})\}_{t\in \mathbb{Z}_{\geq -1}}$ as follows: with initialization $u^{(-1)} = 0_{m\times q}$, $v^{(-1)}\equiv \overline{\mu}_{\ast,n}
\in \R^{n_{L,q}\times q}$, for $t=0,1,2,\ldots,$ let 
\begin{align}\label{def:GFOM_auxiliary_NN}
\begin{cases}
u^{(t)} = A \mathsf{F}_t^{\langle 1\rangle}(v^{([-1:t-1])})+ \mathsf{G}_{t}^{\langle 1\rangle}(u^{([-1:t-1])}) \in \R^{m\times q},\\
v^{(t)}= A^\top \mathsf{G}_t^{\langle 2\rangle}(u^{([-1:t])})+\mathsf{F}_t^{\langle 2\rangle}(v^{([-1:t-1])})\in \R^{n_{L,q}\times q}.
\end{cases}
\end{align}
In other words, for $t=0$,
\begin{align*}
u^{(0)}& \equiv A v^{(-1)} \in \R^{m\times q},\quad v^{(0)} \equiv \bm{W}_n^{(0)}\in \R^{n_{L,q}\times q},
\end{align*}
and for iteration $t\geq 1$,
\begin{align}\label{def:GFOM_auxiliary_NN_1}
u^{(t)}&= A v^{(t-1)},\\
v^{(t)}& =  A^\top 
\Big[
-\phi^{-1}\eta^{(t-1)}_1\cdot  \mathfrak{S}\big(u^{(t)},u^{(0)},V^{(t-1)}\big) \Big]+v^{(t-1)}-
\begin{pmatrix}
0_{n\times q}\\
\Big\{\eta^{(t-1)}_{\alpha} M_\alpha^{(t-1)}\Big\}_{\alpha \in [2:L]}
\end{pmatrix}.\nonumber
\end{align}
As will be detailed below, the iterates $\{(u^{(t)},v^{(t)})\}$ defined above satisfy the desired properties. An important feature of (\ref{def:GFOM_auxiliary_NN}) is that the iteration $v^{(t)}$ is defined only if the empirical distribution of $u^{(t)}$ is determined in state evolution theory via $\mathfrak{U}^{(t)}$. This successive defining nature will play an important role in our iterative reduction scheme to be described below.

\subsection{Proof sketch of Theorem \ref{thm:se_gd_NN}: Iterative reduction scheme}

\subsubsection{State evolution for the auxiliary GFOM}
In the first step, we prove that the auxiliary GFOM iterates $\{(u^{(t)},v^{(t)})\}$ are amenable to an exact state evolution characterization. The details of the proof can be found in Section \ref{subsection:proof_se_aux_GFOM}.

\begin{proposition}\label{prop:se_aux_GFOM}
	Suppose Assumption \ref{assump:gd_NN} holds for some $K,\Lambda\geq 2$ and $r_0\geq 0$. Fix a sequence of $\Lambda_\psi$-pseudo-Lipschitz functions $\{\psi_k:\R^{q[0:t]} \to \R\}_{k \in [m\vee n]}$ of order $\mathfrak{p}$, where $\Lambda_\psi\geq 2$. Then for any $\mathfrak{r}\geq 1$, there exists some $c_t=c_t(t,\mathfrak{p},q,L,\mathfrak{r},r_0)>0$ such that 
	\begin{align*}
	&\E^{(0)} \bigg[\biggabs{\frac{1}{m}\sum_{k \in [m]} \psi_k\big(u_k^{([0:t])}\big) - \frac{1}{m}\sum_{k \in [m]}  \E^{(0)}  \psi_k\Big(\big\{\Theta_{k}^{(s)}(\mathfrak{U}^{([0:s])})\big\}_{s \in [0:t]}\Big)  }^{\mathfrak{r}}\bigg]\\
	&\qquad + \E^{(0)}  \bigg[\biggabs{\frac{1}{n}\sum_{\ell \in [n]} \psi_\ell\big(v_{1;\ell}^{([0:t])}\big) - \frac{1}{n}\sum_{\ell \in [n]}  \E^{(0)}  \psi_\ell\Big(\big\{\Delta_{\ell}^{(s)}(\mathfrak{V}^{([1:s])} )\big\}_{s \in [0:t]}\Big)   }^{\mathfrak{r}}\bigg]  \\
	&\qquad +\max_{\alpha \in [2:L]} \max_{s \in [1:t]}  \E^{(0)} \pnorm{v_\alpha^{(s)}-V^{(s)}_\alpha}{\infty}^{\mathfrak{r}} \\
	&\leq \big(K\Lambda\varkappa_{\ast}\big)^{c_t}\cdot n^{-1/c_t}. 
	\end{align*}
\end{proposition}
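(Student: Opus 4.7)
The plan is to apply the matrix-variate GFOM state evolution theory from Appendix \ref{section:GFOM_theory} (extending \cite{han2024entrywise}) to the auxiliary iteration \eqref{def:GFOM_auxiliary_NN}, after first decoupling the purely deterministic coordinates. The key preliminary observation is that the matrix $A$ in \eqref{def:GFOM_notation} has nontrivial entries only in its first $n$ columns, so $A^\top$ has zero rows in the last $(L-1)q$ coordinates. Consequently, the blocks of $v^{(t)}$ indexed by $\alpha \in [2:L]$ receive no contribution from $A^\top \mathsf{G}_t^{\langle 2\rangle}$, and \eqref{def:GFOM_auxiliary_NN_1} reduces on those blocks to the deterministic recursion $v_\alpha^{(t)} = v_\alpha^{(t-1)} - \eta_\alpha^{(t-1)} M_\alpha^{(t-1)}$ initialized at $W_\alpha^{(0)}$. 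Comparing with (S5) of Definition \ref{def:gd_NN_se} and using \eqref{def:M_aux_GFOM}, this is \emph{exactly} the recursion defining $V_\alpha^{(t)}$, so $v_\alpha^{(t)} \equiv V_\alpha^{(t)}$ identically. This makes the third line of the claim trivially zero and freezes the deeper-layer coordinates to known deterministic parameters.

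With $V_{[2:L]}^{(s)}$ frozen, the remaining iteration in $(u^{(t)}, v_1^{(t)}) \in \R^{m\times q}\times \R^{n\times q}$ is a standard matrix-variate GFOM driven by $X/\sqrt{n}$, with nonlinearities $\mathsf{F}_t^{\langle 1\rangle}, \mathsf{G}_t^{\langle 2\rangle}$ from \eqref{def:auxiliary_GFOM_F_G} in which $V^{(t-1)}$ enters only as a deterministic parameter. To invoke the matrix-variate GFOM machinery I would verify that these nonlinearities are pseudo-Lipschitz of some order depending only on $(t,L,r_0)$, with constants polynomial in $\Lambda$ and in $\max_{\alpha \in [2:L]} \pnorm{V_\alpha^{(s)}}{\op}$. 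The operator-norm bound on $V_\alpha^{(s)}$ is obtained by induction on $s$ from (S5): since $\mathscr{G}$ and $\mathfrak{T}_\alpha$ inherit polynomial growth of order $r_0$ from (A4)--(A5), one gets $\pnorm{V_\alpha^{(s)}}{\op} \le (\Lambda\varkappa_{\ast})^{c_s}$. With this, the entrywise, non-asymptotic GFOM state evolution theory applied under (A1)--(A2) yields pseudo-Lipschitz concentration of empirical averages of $u^{(s)}$ and $v_1^{(s)}$ with error $(K\Lambda\varkappa_{\ast})^{c_t} n^{-1/c_t}$, thereby producing the first two lines of the claim.

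It remains to identify the Gaussian law and deterministic limits produced by the abstract GFOM theory with the state-evolution parameters of Definition \ref{def:gd_NN_se}, which I would carry out by induction on $t$. At $t=0$, $u^{(0)} = (X/\sqrt{n})\, \overline{\mu}_{\ast,n}$ has rows whose empirical law concentrates on the Gaussian $\mathfrak{U}^{(0)}$ with covariance $\pnorm{\mu_\ast}{}^2 e_1 e_1^\top$, matching (S2). Assuming the identification through iteration $t-1$, the abstract matrix-Onsager recursion produced by the GFOM framework, when specialized to $\mathsf{F}_t^{\langle 1\rangle}, \mathsf{G}_t^{\langle 2\rangle}$, yields precisely the matrices $\tau_{t,s}$ of (S3) via the identity $\tau_{t,s} = -\phi\, \E^{(0)} \nabla_{\mathfrak{U}^{(s)}} \Upsilon^{(t)}_{\pi_m}$, and the formulas \eqref{def:rho_se_matrix} for $\rho_{t,s}, \Omega_{t,s}$, together with (S4) for $D_t$, then follow mechanically by expanding the corresponding GFOM state-evolution recursion. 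This identifies $\Theta^{(t)}$ on the $u$-side and $\Delta^{(t)}$ on the $v_1$-side exactly as in Definition \ref{def:gd_NN_se}.

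The principal technical difficulty lies in the matching step above: the abstract GFOM theory in Appendix \ref{section:GFOM_theory} produces its matrix-valued Onsager coefficients in a generic block form, and rewriting them in the explicit triangular form \eqref{def:rho_se_matrix} adapted to the successive defining structure of \eqref{def:auxiliary_GFOM_F_G} requires careful inductive bookkeeping of which derivatives of $\mathsf{G}_t^{\langle 2\rangle}$ appear at each layer of the unrolling. A secondary difficulty is propagating polynomial growth through the depth $L$: each composition inside $\mathscr{P}^{(1:L]}$ multiplies the effective pseudo-Lipschitz order by a factor depending on $r_0$ and on the operator norms of $V_\alpha^{(s)}$, and keeping these constants quantitative in $\varkappa_{\ast}$ throughout the induction is what drives the explicit $(K\Lambda\varkappa_{\ast})^{c_t}$ dependence in the final bound.
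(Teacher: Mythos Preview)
Your observation that $v_\alpha^{(t)}\equiv V_\alpha^{(t)}$ identically for $\alpha\in[2:L]$ is correct and cleanly handles the third line. Your state evolution identification (the ``matching step'') is also essentially what the paper carries out in Steps~1--3 of Lemma~\ref{lem:se_truncated_aux_GFOM}, so there is no issue there.

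The genuine gap is in invoking the GFOM machinery itself. You write that you would ``verify that these nonlinearities are pseudo-Lipschitz of some order'' and then apply the entrywise/averaged GFOM theory. But Theorems~\ref{thm:GFOM_se_asym} and~\ref{thm:GFOM_se_asym_avg} require the nonlinearities $\mathsf{F}_{s,\ell;r}^{\langle\#\rangle},\mathsf{G}_{s,k;r}^{\langle\#\rangle}$ to be globally Lipschitz (condition~($D^\ast$2)$'$) or $C^3$ with bounded derivatives (condition~($D^\ast$2)); pseudo-Lipschitz is \emph{not} enough as stated. And $\mathsf{G}_t^{\langle 2\rangle}$, built from $\mathfrak{S}(u^{(t)},u^{(0)},V^{(t-1)})$, fails this: by Proposition~\ref{prop:apr_est_det_fcn}-(6)(7), $\mathfrak{S}_k$ grows linearly in $u_{k\cdot}^{(t)}$ (because $\sigma_L=\mathrm{id}$ makes $\mathscr{H}_{v_{[2:L]}}$ linear at the top) and like $\varphi_\ast(u_{k\cdot}^{(0)})$ of order $r_0$ in $u^{(0)}$. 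The paper flags exactly this obstacle in the paragraph following Proposition~\ref{prop:se_aux_GFOM}.

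The paper's fix is a truncation-plus-delocalization argument you do not have: replace $\mathfrak{S}$ by $\check{\mathfrak{S}}_t=\chi_{\mathsf{b}_t}(\mathfrak{S})$ with $\mathsf{b}_t=(K\Lambda\varkappa_\ast\log n)^{c_{0;t}}$, run the corresponding truncated GFOM $\{(\check u^{(t)},\check v^{(t)})\}$, and apply Theorems~\ref{thm:GFOM_se_asym}/\ref{thm:GFOM_se_asym_avg} to it (Lemma~\ref{lem:se_truncated_aux_GFOM}). Separately, the delocalization Proposition~\ref{prop:GD_linfty_est} (which \emph{does} allow pseudo-Lipschitz nonlinearities via condition~($D^\ast$2)$''$) gives $\|u^{(s)}\|_\infty\vee\|v^{(s)}\|_\infty\le(K\Lambda\varkappa_\ast\log n)^{c_t}$ with high probability, so on that event the truncation is inactive and $\check u^{(t)}=u^{(t)}$, $\check v^{(t)}=v^{(t)}$ (Lemma~\ref{lem:diff_u_baru}). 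Finally one checks the truncated and untruncated state evolution parameters differ by $n^{-50}$-type errors (Lemma~\ref{lem:Delta_B_control}). What you describe as a ``secondary difficulty'' of tracking polynomial growth is in fact the primary obstruction, and resolving it requires this extra layer of argument rather than just careful bookkeeping of constants.
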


A major technical challenge in applying the meta GFOM state evolution Theorems \ref{thm:GFOM_se_asym} and \ref{thm:GFOM_se_asym_avg} to the auxiliary GFOM iterates $\{(u^{(t)}, v^{(t)})\}$ defined in (\ref{def:GFOM_auxiliary_NN_1}) lies in the weak regularity of the functions $\mathsf{G}_t^{(2)}$ in (\ref{def:auxiliary_GFOM_F_G}), which are not globally Lipschitz. Fortunately, as shown in Proposition \ref{prop:apr_est_det_fcn} below, each $\mathsf{G}_t^{(2)}$ satisfies a suitable \emph{pseudo-Lipschitz} condition as defined in (\ref{cond:pseudo_lip}). In particular, when the iterates $\{(u^{(t)}, v^{(t)})\}$ are delocalized, $\mathsf{G}_t^{(2)}$ can be effectively treated as `almost' Lipschitz continuous. We establish the required delocalization estimates in Proposition \ref{prop:GD_linfty_est}, which in turn enables the proof of Proposition \ref{prop:se_aux_GFOM}. The proof proceeds by introducing a truncated and smoothed version of the GFOM iterates in (\ref{def:GFOM_auxiliary_NN}), which remains close to the original iterates while allowing direct application of the meta GFOM state evolution Theorems \ref{thm:GFOM_se_asym} and \ref{thm:GFOM_se_asym_avg}.

The proof relies crucially on both the non-asymptotic and entrywise nature of the matrix-variate GFOM theory in Theorems \ref{thm:GFOM_se_asym} and \ref{thm:GFOM_se_asym_avg}. First, since the delocalization estimates for $\{(u^{(t)}, v^{(t)})\}$ necessarily involve poly-logarithmic factors in $n$, a non-asymptotic GFOM theory with polynomial-in-$n$ error bounds is essential. Second, proving $v^{(s)}_\alpha\approx V_\alpha^{(s)}$ requires the GFOM state evolution theory in its strongest, \emph{entrywise} form.

\subsubsection{Error control between the GD and the auxiliary GFOM}

Recall the gradient descent iterate $\{(\hat{u}^{(t)},\hat{v}^{(t)})\}$ in (\ref{def:GD_GFOM_NN}), and we write 
\begin{align*}
\Delta \hat{u}^{(t)}\equiv \hat{u}^{(t)}- u^{(t)},\quad \Delta \hat{v}^{(t)}\equiv \hat{v}^{(t)}- v^{(t)}.
\end{align*}
In the second step, we will use an iterative reduction scheme to prove the following error control between the gradient descent iterates (\ref{def:GD_GFOM_NN}) and the auxiliary GFOM iterates (\ref{def:GFOM_auxiliary_NN_1}).
\begin{proposition}\label{prop:err_gd_GFOM}
	Suppose Assumption \ref{assump:gd_NN} holds for some $K,\Lambda\geq 2$ and $r_0\geq 0$. Then for $\mathfrak{r}\geq 1$, there exists some $c_t=c_t(t,q,L,\mathfrak{r},r_0)>0$ such that 
	\begin{align*}
	& \E^{(0)}\big(n^{-1/2}\pnorm{\Delta \hat{u}^{(t)} }{}\big)^{\mathfrak{r}} + \E^{(0)}\big(n^{-1/2}\pnorm{\Delta \hat{v}_1^{(t)} }{}\big)^{\mathfrak{r}}  +\E^{(0)} \pnorm{\Delta \hat{v}_{[2:L]}^{(t)}}{\infty}^{\mathfrak{r}}\leq \big(K\Lambda\varkappa_{\ast}\big)^{c_t}\cdot n^{-1/c_t}.
	\end{align*}
\end{proposition}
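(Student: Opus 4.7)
The plan is to prove Proposition~\ref{prop:err_gd_GFOM} by induction on $t$, combining two ingredients: the (approximate) state evolution for the auxiliary GFOM obtained in Proposition~\ref{prop:se_aux_GFOM}, and pseudo-Lipschitz control of the theoretical gradient mappings in Definitions~\ref{def:mean_field_fcn} and~\ref{def:mean_field_S_T}. At $t=0$ one has $\hat{u}^{(0)}=A\overline{\mu}_{\ast,n}=u^{(0)}$ and $\hat{v}^{(0)}=\bm{W}_n^{(0)}=v^{(0)}$, so the base case is trivial. For the inductive step I would decompose each error at iteration $t$ into a \emph{propagation} part, inherited from $\Delta\hat{u}^{(s)},\Delta\hat{v}^{(s)}$ with $s<t$ via the pseudo-Lipschitz continuity of $\mathfrak{S},\mathfrak{T}_\alpha,\mathscr{G}_{v_{[2:\alpha]}}$ in the three arguments $(u,w,v)$, and a \emph{concentration} part, arising only for the deeper-layer blocks, where data-based averages $m^{-1}\sum_k(\cdots)$ must be compared to their state-evolution expectations $M_\alpha^{(t-1)}$; this second part is exactly what Proposition~\ref{prop:se_aux_GFOM} delivers, after one notes that by~(S5) the auxiliary iterates satisfy $v^{(t)}_\alpha=V^{(t)}_\alpha$ identically for $\alpha\in[2:L]$.

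The inductive step splits into three clauses. For $\Delta\hat{u}^{(t)}$, the identity $\hat{u}^{(t)}-u^{(t)}=A(\hat{v}^{(t-1)}-v^{(t-1)})=(X/\sqrt{n})\Delta\hat{v}_1^{(t-1)}$ (the deeper blocks are annihilated by $A$) combined with the $\bigo(1)$ sub-gaussian operator-norm moment bound on $X/\sqrt{n}$ closes this clause via the inductive hypothesis on $\Delta\hat{v}_1^{(t-1)}$. For $\Delta\hat{v}_1^{(t)}$, I would use
\begin{align*}
\Delta\hat{v}_1^{(t)}=(X^\top/\sqrt{n})\big(\hat{S}^{(t-1)}-S^{(t-1)}\big)+\Delta\hat{v}_1^{(t-1)},
\end{align*}
and bound $\pnorm{\hat{S}^{(t-1)}-S^{(t-1)}}{}$ by applying pseudo-Lipschitz estimates on the map $(u,w,v)\mapsto \mathscr{P}^{(1:L]}_{v}(u,\mathscr{R}_v(u,w))\odot\sigma_1'(u)$ to the increments $(\Delta\hat{u}^{(t)},0,\Delta\hat{v}_{[2:L]}^{(t-1)})$, giving a bound in terms of the already-controlled quantities at step $t-1$ and the first clause at step $t$. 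For the deeper layers, $v^{(t)}_\alpha=V^{(t)}_\alpha$ yields
\begin{align*}
\Delta\hat{v}_\alpha^{(t)}=\Delta\hat{v}_\alpha^{(t-1)}-\eta_\alpha^{(t-1)}\big(\hat{M}_\alpha^{(t-1)}-M_\alpha^{(t-1)}\big),
\end{align*}
and I would split through an intermediate $\tilde{M}_\alpha^{(t-1)}$ that replaces $(\hat{u}^{(t)},\hat{v}^{(t-1)}_{[2:L]})$ by $(u^{(t)},V^{(t-1)})$: the first difference is again pseudo-Lipschitz and handled by induction, the second is pure concentration of a per-row average and is bounded entrywise by Proposition~\ref{prop:se_aux_GFOM}, producing the $\pnorm{\cdot}{\infty}$ estimate stated.

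The main obstacle is the absence of a global Lipschitz bound on the theoretical mappings $\mathfrak{S},\mathfrak{T}_\alpha,\mathscr{R}$: their pseudo-Lipschitz moduli in~(\ref{cond:pseudo_lip}) grow polynomially in $\pnorm{u}{\infty},\pnorm{w}{\infty}$ and in the operator norms $\pnorm{v_\alpha}{\op}$. Propagating the inductive error through these mappings therefore requires uniform $\ell^\infty$/operator-norm delocalization for both the GD iterates $\{\hat{u}^{(s)},\hat{v}^{(s)}\}_{s\leq t}$ (supplied by Proposition~\ref{prop:GD_linfty_est}) and the auxiliary iterates $\{u^{(s)},v^{(s)}\}_{s\leq t}$ (absorbed into Proposition~\ref{prop:se_aux_GFOM}), and both come only with poly-logarithmic-in-$n$ losses multiplied by $\varkappa_{\ast}^{\bigo(1)}$. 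Marrying these poly-logarithmic losses with the polynomial-in-$n$ error bounds at each iteration is the principal technical bookkeeping, which is precisely why the non-asymptotic GFOM machinery of Theorems~\ref{thm:GFOM_se_asym}--\ref{thm:GFOM_se_asym_avg} is invoked; the $(t,q,L)$-dependence of the constant $c_t$ compounds with each back-propagation through the $L$ layers, but this is a routine consequence of the pseudo-Lipschitz composition rules once the delocalization estimates are in hand.
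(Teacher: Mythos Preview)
Your overall strategy matches the paper's proof: induction on $t$, controlling $\Delta\hat{u}^{(t)}$ via $\pnorm{A}{\op}\cdot n^{-1/2}\pnorm{\Delta\hat{v}_1^{(t-1)}}{}$, $\Delta\hat{v}_1^{(t)}$ via pseudo-Lipschitz continuity of $\mathfrak{S}$, and $\Delta\hat{v}_\alpha^{(t)}$ ($\alpha\geq 2$) via a propagation-plus-concentration split with Proposition~\ref{prop:se_aux_GFOM} supplying the concentration term. Your observation that $v^{(t)}_\alpha=V^{(t)}_\alpha$ identically for $\alpha\in[2:L]$ is correct and in fact slightly streamlines the paper's write-up.

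One point needs correction. You attribute $\ell^\infty$ delocalization of the GD iterates $\{\hat u^{(s)},\hat v^{(s)}\}$ to Proposition~\ref{prop:GD_linfty_est}, but that proposition delocalizes only the \emph{auxiliary} iterates $\{u^{(s)},v^{(s)}\}$; no a priori $\ell^\infty$ bound on the GD iterates is available before Proposition~\ref{prop:err_gd_GFOM} is proved, so invoking one here would be circular. The fix, used implicitly in the paper's Step~1, is that the pseudo-Lipschitz bounds in Proposition~\ref{prop:apr_est_det_fcn} carry a $\min$ over the two arguments---see for instance the factor $\min_{(\#,\$)\in\{(u,w),(u',w')\}}(1+\pnorm{\#_{k\cdot}}{\infty}+\pnorm{\varphi_\ast(\$_{k\cdot})}{\infty}+\abs{\xi_k})$ in part~(7)---so $\ell^\infty$ control is needed only on the auxiliary side; on the GD side the $\ell^2$ and operator-norm estimates from Proposition~\ref{prop:GD_l2_est} are enough to close the recursion.
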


At a high level, the key to the proof of Proposition \ref{prop:err_gd_GFOM} is to provide successive controls for $(\Delta \hat{u}^{(1)},\Delta \hat{v}^{(1)}),\ldots,(\Delta \hat{u}^{(t)},\Delta \hat{v}^{(t)})$ via an alternating residual estimate and a state evolution characterization due to the successive defining nature of the auxiliary GFOM (\ref{def:GFOM_auxiliary_NN}). To get a sense of how this is done, suppose we already have error controls by iteration $t-1$. The error control for $\Delta \hat{u}^{(t)}$ is trivial by that of $\Delta \hat{v}^{(t-1)}$ in view of the definition of $\hat{u}^{(t)},u^{(t)}$. The non-trivial part lies in the control of $\Delta \hat{v}^{(t)}$, and the most non-trivial part is to prove
\begin{align}\label{eqn:err_gd_GFOM_1}
m^{-1}\hat{\mathscr{G}}_{(t-1);\alpha-1}(\hat{u}^{(t)})^\top \Big[\hat{\mathscr{P}}_{ (t-1) }^{(\alpha:L]}\Big(\hat{u}^{(t)},\hat{\mathscr{R}}_{(t-1)}(\hat{u}^{(t)},\hat{u}^{(0)})\Big) \odot \hat{\mathscr{G}}_{(t-1);\alpha}'(\hat{u}^{(t)})\Big]\approx M_\alpha^{(t-1)}.
\end{align}
To prove (\ref{eqn:err_gd_GFOM_1}), we first replace all quantities in the gradient descent on the left hand side of (\ref{eqn:err_gd_GFOM_1}) by the corresponding ones in the auxiliary GFOM,  at the cost of an error term depending on $(\Delta \hat{u}^{(1)},\Delta \hat{v}^{(1)}),\ldots,(\Delta \hat{u}^{(t-1)},\Delta \hat{v}^{(t-1)}), \Delta \hat{u}^{(t)}$. Once this replacement is done, we then use the state evolution characterization in Proposition \ref{prop:se_aux_GFOM} for $(u^{(1)},v^{(1)}),\ldots,(u^{(t-1)},v^{(t-1)}), u^{(t)}$ to relate to $M_\alpha^{(t-1)}$ on the right hand side of (\ref{eqn:err_gd_GFOM_1}). In other words, we may control the error term $\Delta \hat{v}^{(t)}$ by the error terms up to $\Delta \hat{u}^{(t)}$ along with the error estimates in Proposition \ref{prop:se_aux_GFOM}, and thereby completing the inductive step for the error control at iteration $t$. Details of this iterative scheme can be found in Section \ref{subsection:proof_err_gd_GFOM}.

\subsubsection{Putting pieces together}

With Propositions \ref{prop:se_aux_GFOM} and \ref{prop:err_gd_GFOM} at hand, we expect that the state evolution characterization for the auxiliary GFOM iterates $\{(u^{(t)},v^{(t)})\}$ in (\ref{def:GFOM_auxiliary_NN_1}) carries over to the original gradient descent iterates $\{(\hat{u}^{(t)},\hat{v}^{(t)})\}$ in (\ref{def:GD_GFOM_NN}). The details of the proof can be found in Section \ref{subsection:proof_se_gd_NN}.

\section{Proof preliminaries}\label{section:proof_preliminaries}

In this section we provide a few apriori and stability estimates, along with some derivative formulae that will be used in an essential way throughout the proofs. In the sequel, for any $\alpha \in [2:L]$ and any $v_{[2:\alpha]}\in (\mathbb{M}_q)^{[2:\alpha]}$, we write 
\begin{align*}
\kappa_{v_{[2:\alpha]}} \equiv \prod_{\beta \in [2:\alpha]}  \big(1\vee \pnorm{v_\beta}{\op}\big).
\end{align*}
For notational consistency, we write $\kappa_{v_{[2:1]}}\equiv 1$.

\subsection{Apriori and stability estimates for the theoretical functions}
First we provide apriori and stability estimates for the theoretical functions defined in Definitions \ref{def:mean_field_fcn} and \ref{def:mean_field_S_T}; the proof can be found in Section \ref{subsection:proof_apr_est_det_fcn}.
\begin{proposition}\label{prop:apr_est_det_fcn}
	Suppose that $\max_{p=1,2} \max_{\alpha \in [1:L]} \big(\pnorm{\sigma_\alpha^{(p)}}{\infty}+\abs{\sigma_\alpha(0)}\big)\leq \Lambda$ for some $\Lambda\geq 2$. Then the following hold for some universal constant $c_0>0$:
	\begin{enumerate}
		\item For $u\in \R^{m\times q}$, $v_{[2:\alpha]}\in (\mathbb{M}_q)^{[2:\alpha]}$, and any $k \in [m]$,
		\begin{align*}
		&\pnorm{\mathscr{H}_{v_{[2:\alpha]};k}(u_{k\cdot})}{\infty}\vee \pnorm{\mathscr{G}_{v_{[2:\alpha]};k}(u_{k\cdot})}{\infty}\leq (q\Lambda)^{c_0 L}\kappa_{v_{[2:\alpha]}}\cdot (1+\pnorm{u_{k\cdot}}{\infty}).
		\end{align*}
		\item For any $ u,u' \in \R^{m\times q}$, $v_{[2:\alpha]},v_{[2:\alpha]}'\in (\mathbb{M}_q)^{[2:\alpha]}$, and any $k \in [m]$,
		\begin{align*}
		& \max_{\# \in \{\mathscr{H},\mathscr{G},\mathscr{G}'\}}\pnorm{\#_{v_{[2:\alpha]};k}(u_{k\cdot}) - \#_{v_{[2:\alpha]}';k }(u_{k\cdot}')}{}\leq (q\Lambda)^{c_0 L}  \kappa_{v_{[2:\alpha]}}\kappa_{v_{[2:\alpha]}'}\\
		&\qquad \times  \Big[\big(1+\pnorm{u_{k\cdot}}{\infty}\wedge \pnorm{u_{k\cdot}'}{\infty}\big)\cdot  \pnorm{v_{[2:\alpha]}-v_{[2:\alpha]}'}{\infty}+\pnorm{u_{k\cdot}-u_{k\cdot}'}{}\Big].
		\end{align*}
		
		\item For any $u,u' \in \R^{m\times q},v_{[2:\alpha]}, v'_{[2:\alpha]}\in (\mathbb{M}_q)^{[2:\alpha]}$, $z,z' \in \R^{m\times q}$, and any $k \in [m]$,
		\begin{align*}
		&\bigpnorm{\mathscr{P}_{u,v_{[2:\alpha]};k}(z_{k\cdot})- \mathscr{P}_{u',v'_{[2:\alpha]};k}(z_{k\cdot}')}{}\\
		&\leq (q\Lambda)^{c_0 L} \big(\kappa_{v_{[2:\alpha]}}\kappa_{v_{[2:\alpha]}'}\big)^2\cdot \Big\{\pnorm{z_{k\cdot}-z'_{k\cdot}}{}+\big(1+\pnorm{z_{k\cdot}}{\infty}\wedge \pnorm{z_{k\cdot}'}{\infty}\big)\\
		&\qquad \qquad \times  \big[\pnorm{u_{k\cdot}-u'_{k\cdot}}{}+\big(1+\pnorm{u_{k\cdot}}{\infty}\wedge \pnorm{u_{k\cdot}'}{\infty}\big)\cdot \pnorm{v_{[2:\alpha]}-v_{[2:\alpha]}'}{\infty}  \big]  \Big\}.
		\end{align*}
		\item For $u \in \R^{m\times q},v\in (\mathbb{M}_q)^{[2:L]}$, $z \in \R^{m\times q}$, and any $k \in [m]$,
		\begin{align*}
		\bigpnorm{ \mathscr{P}_{v;k}^{(\alpha:L]}(u_{k\cdot},z_{k\cdot}) }{\infty} &\leq (q\Lambda)^{c_0L} \kappa_{v}\cdot \pnorm{z_{k\cdot}}{\infty}.
		\end{align*}
		\item For any $u,u' \in \R^{m\times q},v, v'\in (\mathbb{M}_q)^{[2:L]}$, $z,z' \in \R^{m\times q}$, and any $k \in [m]$,
		\begin{align*}
		&\bigpnorm{\mathscr{P}_{v;k}^{(\alpha:L]}(u_{k\cdot},z_{k\cdot})-\mathscr{P}_{v';k}^{(\alpha:L]}(u_{k\cdot}',z_{k\cdot}')}{}\\
		& \leq (q\Lambda)^{c_0 L} (\kappa_{v}\kappa_{v'})^4 \cdot  \Big\{\pnorm{z_{k\cdot}-z_{k\cdot}'}{\infty}+ \big(1+\pnorm{z_{k\cdot}}{\infty}\wedge \pnorm{z_{k\cdot}'}{\infty}\big) \\
		& \qquad \times  \Big[ \pnorm{u_{k\cdot}-u_{k\cdot}'}{}+ \big(1+\pnorm{u_{k\cdot}}{\infty}\wedge \pnorm{u_{k\cdot}'}{\infty}\big)\cdot \pnorm{v-v'}{\infty}\Big]\Big\}.
		\end{align*}
		\item For $u \in \R^{m\times q},v\in (\mathbb{M}_q)^{[2:L]}$, $w \in \R^{m\times q}$, and $k \in [m]$,
		\begin{align*}
		\pnorm{\mathfrak{S}_{k}\big(u_{k\cdot},w_{k\cdot}, v\big)}{\infty} 
		&\leq (q\Lambda)^{c_0 L} \kappa_{v}^2\cdot\big(1+\pnorm{u_{k\cdot}}{\infty}+\pnorm{\varphi_\ast(w_{k\cdot})}{\infty}+\abs{\xi_k}\big).
		\end{align*}
		The same estimate holds for $(\mathfrak{T}_{\alpha})_k\big(u,w, v\big)$, $\alpha \in [2:L]$.
		\item For any $u,u' \in \R^{m\times q},v, v'\in (\mathbb{M}_q)^{[2:L]}$, $z,z' \in \R^{m\times q}$, and $k \in [m]$,
		\begin{align*}
		&\pnorm{\mathfrak{S}_{k}(u_{k\cdot},w_{k\cdot}, v)-\mathfrak{S}_{k}(u_{k\cdot}',w_{k\cdot}',v')}{}\\
		&\leq (q \Lambda)^{c_0 L} (\kappa_{v}\kappa_{v'})^{c_0}\cdot \Big\{\pnorm{\varphi_\ast(w_{k\cdot})-\varphi_\ast(w_{k\cdot}')}{}\nonumber\\
		&\qquad  +\min_{(\#,\$)\in \{(u,w),(u',w')\}}\big(1+\pnorm{\#_{k\cdot}}{\infty}+\pnorm{\varphi_\ast(\$_{k\cdot})}{\infty}+\abs{\xi_k}\big)\nonumber\\
		&\qquad\qquad \times \Big[ \pnorm{u_{k\cdot}-u_{k\cdot}'}{}+  (1+\pnorm{u_{k\cdot}}{\infty}\wedge \pnorm{u_{k\cdot}'}{\infty})\cdot \pnorm{v-v' }{\infty}\Big]\Big\}.
		\end{align*}
		The same estimate holds for $(\mathfrak{T}_{\alpha})_k\big(u,w, v\big)$, $\alpha \in [2:L]$.
	\end{enumerate}
\end{proposition}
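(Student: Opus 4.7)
The plan is to prove parts (1)--(7) by induction on the layer index $\alpha$, exploiting the recursive definition of $\mathscr{H}_{v_{[2:\alpha]}}, \mathscr{G}_{v_{[2:\alpha]}}, \mathscr{G}'_{v_{[2:\alpha]}}$ and the compositional structure of the back-propagation mapping $\mathscr{P}_v^{(\alpha:L]}$. Throughout, I shall systematically use three elementary facts: (a) the sublinear growth $|\sigma_\alpha(x)|\leq \Lambda(1+|x|)$ coming from $|\sigma_\alpha(0)|+\pnorm{\sigma_\alpha'}{\infty}\leq 2\Lambda$; (b) the Lipschitz continuity of $\sigma_\alpha$ and $\sigma_\alpha'$ with constant $\Lambda$; and (c) the bound $\pnorm{v^\top w}{\infty}\leq \sqrt{q}\,\pnorm{v}{\op}\pnorm{w}{\infty}$, which converts the unavoidable $\ell_2$-to-$\ell_\infty$ conversion cost into a $\sqrt{q}$ factor that is absorbed into $(q\Lambda)^{c_0L}$.

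\textbf{Parts (1)--(3).} For part (1), I unroll the recursion row-wise: the definition gives $\mathscr{H}_{v_{[2:\alpha]};k}(u_{k\cdot})=v_\alpha^\top \sigma_{\alpha-1}(\mathscr{H}_{v_{[2:\alpha-1]};k}(u_{k\cdot}))$, so combining (a) and (c) yields a recursion of the form $\pnorm{\mathscr{H}_{v_{[2:\alpha]};k}(u_{k\cdot})}{\infty}\leq \sqrt{q}\Lambda\pnorm{v_\alpha}{\op}\big(1+\pnorm{\mathscr{H}_{v_{[2:\alpha-1]};k}(u_{k\cdot})}{\infty}\big)$, and iteration from $\alpha=1$ delivers the claimed $(q\Lambda)^{c_0L}\kappa_{v_{[2:\alpha]}}(1+\pnorm{u_{k\cdot}}{\infty})$ bound; the $\mathscr{G}$ bound is immediate from (a). Part (2) proceeds identically via a telescoping decomposition of $\mathscr{H}_{v_{[2:\alpha]};k}(u_{k\cdot})-\mathscr{H}_{v'_{[2:\alpha]};k}(u'_{k\cdot})$ across the $v_\alpha-v'_\alpha$ piece (controlled by (c) and part (1)) and the inner $\sigma_{\alpha-1}$-difference (controlled by (b) and the inductive hypothesis); passing to $\mathscr{G},\mathscr{G}'$ costs only an extra $\Lambda$ via (b). Part (3) then follows directly from the product rule applied to $\mathscr{P}_{u,v_{[2:\alpha]};k}(z_{k\cdot})=v_\alpha(z_{k\cdot}\odot \mathscr{G}'_{v_{[2:\alpha]};k}(u_{k\cdot}))$ (suitably transposed), splitting the difference into a $z$-part (linear and handled by the apriori bound on $\mathscr{G}'$), a $u$-part (from part (2) applied to $\mathscr{G}'$), and a $v_\alpha$-part (trivial from (c)).

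\textbf{Parts (4)--(5).} The back-propagation mapping $\mathscr{P}_v^{(\alpha:L]}(u,z)$ is the composition $\mathscr{P}_{u,v_{[2:\alpha+1]}}\circ\cdots\circ \mathscr{P}_{u,v_{[2:L]}}$ applied to $z$. Because each $\mathscr{P}_{u,v_{[2:\beta]};k}$ acts row-wise and is linear in its $z$-argument, iterating the apriori estimate from part (3) with $\pnorm{\mathscr{G}'_{v_{[2:\beta]};k}}{\infty}\leq \Lambda$ and $\pnorm{v_\beta^\top\cdot}{\infty}\leq \sqrt{q}\pnorm{v_\beta}{\op}$ yields part (4) with a geometric-in-$L$ prefactor. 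For part (5), the inductive step uses the identity $F\circ G - F'\circ G' = (F-F')\circ G + F'\circ(G-G')$ with $F$ taken to be the outermost $\mathscr{P}_{u,v_{[2:\beta]}}$; the first term is controlled by part (3) combined with part (4) applied to $G(z)$ (which supplies the needed $\ell_\infty$ bound on the intermediate $z$-like vector), while the second is handled by induction. The key point is that part (4) provides the apriori $\ell_\infty$ control needed to carry the pseudo-Lipschitz weighting factors $(1+\pnorm{z_{k\cdot}}{\infty}\wedge \pnorm{z'_{k\cdot}}{\infty})$ cleanly through the recursion.

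\textbf{Parts (6)--(7).} Expanding $\mathfrak{S}_k=\mathscr{P}_{v;k}^{(1:L]}(u_{k\cdot},\mathscr{R}_{v;k}(u_{k\cdot},w_{k\cdot}))\odot \sigma_1'(u_{k\cdot})$, the apriori estimate (6) combines part (4) with the trivial bound $\pnorm{\mathscr{R}_{v;k}(u_{k\cdot},w_{k\cdot})}{\infty}\leq \pnorm{\mathscr{G}_{v_{[2:L]};k}(u_{k\cdot})}{\infty}+\pnorm{\varphi_\ast(w_{k\cdot})}{\infty}+|\xi_k|$ (estimating the first summand by part (1)) and $\pnorm{\sigma_1'}{\infty}\leq \Lambda$. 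For the stability bound (7), I split through the three pieces $\mathscr{P}^{(1:L]}$, $\mathscr{R}$, and $\sigma_1'(u_{k\cdot})$: the $\mathscr{R}$-piece is controlled by part (2) applied to $\mathscr{G}_{v_{[2:L]};k}$ plus the Lipschitz decomposition of $\varphi_\ast(w_{k\cdot})-\varphi_\ast(w'_{k\cdot})$ (kept as is in the final bound), while the $\mathscr{P}^{(1:L]}$-piece uses part (5) together with the apriori bound on $\mathscr{R}$ from (6); the $\sigma_1'$ piece is immediate from (b). The corresponding statements for $(\mathfrak{T}_\alpha)_k$ are identical with $\sigma_1'(u_{k\cdot})$ replaced by $\mathscr{G}'_{v_{[2:\alpha]};k}(u_{k\cdot})$, itself controlled by parts (1)--(2).

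\textbf{Main obstacle.} The main technical difficulty is bookkeeping: ensuring that the pseudo-Lipschitz weighting factors $(1+\pnorm{u_{k\cdot}}{\infty}\wedge\pnorm{u'_{k\cdot}}{\infty})$ appearing in parts (2), (5), (7) propagate correctly through the depth-$L$ recursion without being inflated to the worst case $\max$ of the two arguments, and that the cumulative operator-norm contributions organize themselves into the stated $\kappa_v\kappa_{v'}$-products (or their low powers) rather than into a larger uncontrolled product over intermediate layers. This forces a careful choice, at each telescoping step, of which side of the difference inherits the primed versus unprimed argument, so that the inductively available apriori bound always multiplies against the side on which a Lipschitz-type estimate is being applied.
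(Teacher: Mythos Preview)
Your proposal is correct and follows essentially the same approach as the paper: both arguments proceed by induction on the layer index, unrolling the recursion for $\mathscr{H}_{v_{[2:\alpha]}}$ and the composition for $\mathscr{P}_v^{(\alpha:L]}$, using the same telescoping decompositions and the same chain of reductions $(1)\Rightarrow(2)\Rightarrow(3)\Rightarrow(4)\Rightarrow(5)\Rightarrow(6)\Rightarrow(7)$. The only cosmetic difference is that the paper writes the matrix bound as $q\pnorm{v_\alpha}{\infty}$ rather than your $\sqrt{q}\,\pnorm{v_\alpha}{\op}$, but both are absorbed into $(q\Lambda)^{c_0L}\kappa_{v_{[2:\alpha]}}$.
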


\subsection{Derivative formulae and estimates for the theoretical functions}

The following derivative formula for the theoretical functions will be used throughout the proofs; the proof can be found in Section \ref{subsection:proof_derivative_formula}. 

\begin{proposition}\label{prop:derivative_formula}
	The following hold.
	\begin{enumerate}
		\item With $\big\{\partial_{u_{k\ell}} \mathscr{H}_{v_{[2:1]} }\equiv e_ke_\ell^\top, k \in [m], \ell \in [q]\big\}$, for $\alpha \in [2:L]$ and $k \in [m],\ell \in [q]$, 
		\begin{align*}
		\partial_{u_{k\ell}}  \mathscr{H}_{v_{[2:\alpha]}}(u)=\big[\sigma_{\alpha-1}'\big(\mathscr{H}_{v_{[2:\alpha-1]}}(u)\big)\odot\partial_{u_{k\ell}}  \mathscr{H}_{v_{[2:\alpha-1]} }(u)\big] v_\alpha \in \R^{m\times q}.
		\end{align*}
		Consequently, 
		\begin{align*}
		\partial_{u_{k\ell}} \mathscr{G}_{v_{[2:\alpha]}}(u)&= \sigma_\alpha'\big(\mathscr{H}_{v_{[2:\alpha]}}(u)\big)\odot \partial_{u_{k\ell}} \mathscr{H}_{v_{[2:\alpha]}}(u)\in \R^{m\times q}, \\
		\partial_{u_{k\ell}} \mathscr{G}_{v_{[2:\alpha]}}'(u)&= \sigma_\alpha''\big(\mathscr{H}_{v_{[2:\alpha]}}(u)\big)\odot \partial_{u_{k\ell}}  \mathscr{H}_{v_{[2:\alpha]}}(u)\in \R^{m\times q},
		\end{align*}
		and $\partial_{u_{k\ell}} \mathscr{R}_{v}(u,w)=\partial_{u_{k\ell}} \mathscr{H}_{v_{[2:L]}}(u,w)$.
		\item For $\alpha \in [2:L]$ and $k \in [m],\ell \in [q]$,
		\begin{align*}
		\partial_{u_{k\ell}}\mathscr{P}_{u, v_{[2:\alpha]} }(z)&\equiv \Big(z\odot \sigma_\alpha''\big(\mathscr{H}_{v_{[2:\alpha]}}(u)\big)\odot \partial_{u_{k\ell}}  \mathscr{H}_{v_{[2:\alpha]}}(u)\Big) v_\alpha^\top\in \R^{m\times q},\\
		\partial_{z_{k\ell}} \mathscr{P}_{u, v_{[2:\alpha]} }(z)&\equiv \Big( e_ke_\ell^\top \odot \sigma_\alpha'\big(\mathscr{H}_{v_{[2:\alpha]}}(u)\big)\Big) v_\alpha^\top\in \R^{m\times q}.
		\end{align*}
		\item  For $\alpha \in [1:L-1]$ and $k \in [m],\ell \in [q]$,
		\begin{align*}
		\partial_{u_{k\ell}} \mathscr{P}_{v}^{(\alpha:L]}(u,z)&=\Big[\mathscr{P}_{v}^{(\alpha+1:L]}(u,z)\odot \sigma_{\alpha+1}''\big(\mathscr{H}_{v_{[2:\alpha+1]}}(u)\big)\odot \partial_{u_{k\ell}}  \mathscr{H}_{v_{[2:\alpha+1]}}(u)\\
		&\qquad + \sigma_{\alpha+1}'\big(\mathscr{H}_{v_{[2:\alpha+1]}}(u)\big)\odot \partial_{u_{k\ell}} \mathscr{P}_{v}^{(\alpha+1:L]}(u,z) \Big]v_{\alpha+1}^\top \in \R^{m\times q},\\
		\partial_{z_{k\ell}} \mathscr{P}_{v}^{(\alpha:L]}(u,z)&=\Big(\sigma_{\alpha+1}'\big(\mathscr{H}_{v_{[2:\alpha+1]}}(u)\big)\odot \partial_{z_{k\ell}} \mathscr{P}_{v}^{(\alpha+1:L]}(u,z) \Big)v_{\alpha+1}^\top \in \R^{m\times q}.
		\end{align*}
		Furthermore $\partial_{u_{k\ell}} \mathscr{P}_{v}^{(L:L]}(u,z)\equiv 0_{m\times q}$ and $\partial_{z_{k\ell}} \mathscr{P}_{v}^{(L:L]}(u,z)\equiv e_ke_\ell^\top$.
		\item For $k \in [m],\ell \in [q]$, with $\mathscr{H}_v(\cdot)\equiv \mathscr{H}_{v_{[2:L]}}(\cdot)$,
		\begin{align*}
		&\partial_{u_{k\ell}} \mathfrak{S}\big(u,w, v\big) = \mathscr{P}_{v}^{(1:L]}\big(u,\mathscr{R}_{v}(u,w)\big)\odot\sigma_1''(u)\odot e_k e_\ell^\top + \bigg[\big(\partial_{u_{k\ell}} \mathscr{P}_{v}^{(1:L]}\big)\big(u,\mathscr{R}_v(u,w)\big)\\
		&\qquad\qquad\qquad\qquad + \sum_{\ell' \in [q]} \big(\partial_{z_{k \ell'}} \mathscr{P}_{v}^{(1:L]}\big)\big(u, \mathscr{R}_v(u,w)\big) \partial_{u_{k\ell}} \mathscr{H}_{v;(k,\ell')}(u)\bigg]\odot \sigma_1'(u),\\
		&\partial_{w_{k\ell}} \mathfrak{S}\big(u,w, v\big) = -\Big[\big(\partial_{z_{k\ell}} \mathscr{P}_{v}^{(1:L]}\big)\big(u,\mathscr{R}_v(u,w)\big) \odot \sigma_1'(u)\Big]\cdot \varphi_\ast'(w_{k\ell}).
		\end{align*}
	\end{enumerate}
\end{proposition}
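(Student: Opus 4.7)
The plan is to establish all four items by direct chain-rule computations on the recursively defined theoretical functions, with induction used to propagate the derivative identities across layers. Throughout, I will exploit the entrywise action of $\sigma_\alpha,\sigma_\alpha',\sigma_\alpha''$ so that Hadamard products arise naturally from applying the scalar chain rule coordinatewise.

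\textbf{Part (1).} I proceed by induction on $\alpha\in[1:L]$. The base case $\alpha=1$ is the initialization $\mathscr{H}_{v_{[2:1]}}=\mathrm{id}(\R^{m\times q})$, giving $\partial_{u_{k\ell}}\mathscr{H}_{v_{[2:1]}}(u)=e_ke_\ell^\top$. For the inductive step, starting from the recursion $\mathscr{H}_{v_{[2:\alpha]}}(u)=\sigma_{\alpha-1}\bigl(\mathscr{H}_{v_{[2:\alpha-1]}}(u)\bigr)v_\alpha$, I apply $\partial_{u_{k\ell}}$; since $\sigma_{\alpha-1}$ acts entrywise, the scalar chain rule yields $\partial_{u_{k\ell}}\sigma_{\alpha-1}\bigl(\mathscr{H}_{v_{[2:\alpha-1]}}(u)\bigr)=\sigma_{\alpha-1}'\bigl(\mathscr{H}_{v_{[2:\alpha-1]}}(u)\bigr)\odot\partial_{u_{k\ell}}\mathscr{H}_{v_{[2:\alpha-1]}}(u)$. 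Right-multiplying by $v_\alpha$ (which is $u$-independent) gives the claimed recursion for $\partial_{u_{k\ell}}\mathscr{H}_{v_{[2:\alpha]}}(u)$. The formulas for $\partial_{u_{k\ell}}\mathscr{G}_{v_{[2:\alpha]}}$ and $\partial_{u_{k\ell}}\mathscr{G}_{v_{[2:\alpha]}}'$ then follow from $\mathscr{G}_{v_{[2:\alpha]}}=\sigma_\alpha\circ\mathscr{H}_{v_{[2:\alpha]}}$ (and similarly with $\sigma_\alpha'$), plus the entrywise chain rule. Finally, since $\sigma_L=\mathrm{id}$ we have $\mathscr{G}_{v_{[2:L]}}=\mathscr{H}_{v_{[2:L]}}$, and because $\mathscr{R}_v(u,w)=\mathscr{G}_{v_{[2:L]}}(u)-\varphi_\ast(w)-\xi_{[q]}$, the $w$-term drops when differentiating in $u_{k\ell}$, giving $\partial_{u_{k\ell}}\mathscr{R}_v=\partial_{u_{k\ell}}\mathscr{H}_{v_{[2:L]}}$.

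\textbf{Part (2).} This is a direct product-rule calculation on $\mathscr{P}_{u,v_{[2:\alpha]}}(z)=\bigl(z\odot\mathscr{G}'_{v_{[2:\alpha]}}(u)\bigr)v_\alpha^\top$. Since $z$ and $u$ appear in separate factors inside the Hadamard product and $v_\alpha$ is a constant multiplier, $\partial_{u_{k\ell}}$ acts only on $\mathscr{G}'_{v_{[2:\alpha]}}(u)$, for which Part (1) supplies $\partial_{u_{k\ell}}\mathscr{G}'_{v_{[2:\alpha]}}(u)=\sigma_\alpha''(\mathscr{H}_{v_{[2:\alpha]}}(u))\odot\partial_{u_{k\ell}}\mathscr{H}_{v_{[2:\alpha]}}(u)$. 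Differentiating in $z_{k\ell}$ replaces $z$ by $e_ke_\ell^\top$, and $\mathscr{G}'_{v_{[2:\alpha]}}(u)=\sigma_\alpha'(\mathscr{H}_{v_{[2:\alpha]}}(u))$ by its definition.

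\textbf{Part (3).} I induct downward on $\alpha$, starting from the base case $\alpha=L$ where $\mathscr{P}_v^{(L:L]}(u,z)=z$ gives $\partial_{u_{k\ell}}=0_{m\times q}$ and $\partial_{z_{k\ell}}=e_ke_\ell^\top$. For $\alpha\in[1:L-1]$, I use the recursive identity
\[
\mathscr{P}_v^{(\alpha:L]}(u,z)=\mathscr{P}_{u,v_{[2:\alpha+1]}}\bigl(\mathscr{P}_v^{(\alpha+1:L]}(u,z)\bigr),
\]
apply the two-argument chain rule, and substitute the formulas from Part (2) for the derivatives of the outer layer $\mathscr{P}_{u,v_{[2:\alpha+1]}}(\cdot)$ in both its $u$- and $z$-slots. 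For the $u_{k\ell}$ derivative, the $u$-slot contribution produces the $\sigma_{\alpha+1}''\odot\partial_{u_{k\ell}}\mathscr{H}_{v_{[2:\alpha+1]}}$ term, while the $z$-slot contribution (with inner derivative $\partial_{u_{k\ell}}\mathscr{P}_v^{(\alpha+1:L]}$) produces the $\sigma_{\alpha+1}'\odot\partial_{u_{k\ell}}\mathscr{P}_v^{(\alpha+1:L]}$ term; collecting both and right-multiplying by $v_{\alpha+1}^\top$ gives the claimed formula. The $z_{k\ell}$ derivative is cleaner since only the $z$-slot contributes.

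\textbf{Part (4).} Here I combine Parts (1)--(3) on $\mathfrak{S}(u,w,v)=\mathscr{P}_v^{(1:L]}\bigl(u,\mathscr{R}_v(u,w)\bigr)\odot\sigma_1'(u)$. Differentiating in $u_{k\ell}$ by the product rule gives a first piece where $\partial_{u_{k\ell}}\sigma_1'(u)=\sigma_1''(u_{k\ell})\,e_ke_\ell^\top$ (producing the $\mathscr{P}_v^{(1:L]}(\cdots)\odot\sigma_1''(u)\odot e_ke_\ell^\top$ term), and a second piece where I differentiate $\mathscr{P}_v^{(1:L]}(u,\mathscr{R}_v(u,w))$ with respect to $u_{k\ell}$ via the two-argument chain rule: one contribution from the explicit $u$-slot yielding $\partial_{u_{k\ell}}\mathscr{P}_v^{(1:L]}$, and one from the $z$-slot through $\mathscr{R}_v(u,w)$, which by Part (1) has $\partial_{u_{k\ell}}\mathscr{R}_v=\partial_{u_{k\ell}}\mathscr{H}_{v_{[2:L]}}$; summing over $\ell'\in[q]$ (since the $z$-slot is $q$-dimensional per row) gives the $\sum_{\ell'\in[q]}\partial_{z_{k\ell'}}\mathscr{P}_v^{(1:L]}\cdot\partial_{u_{k\ell}}\mathscr{H}_{v;(k,\ell')}$ term. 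For the $w_{k\ell}$ derivative, only the $z$-slot of $\mathscr{P}_v^{(1:L]}$ depends on $w$ (through $\mathscr{R}_v$), and $\partial_{w_{k\ell}}\mathscr{R}_v(u,w)=-\varphi_\ast'(w_{k\ell})\,e_ke_\ell^\top$ yields the formula.

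The calculation is essentially bookkeeping, but the main delicate point is handling the Hadamard-plus-matrix-multiplication structure of each recursive layer cleanly, especially in Part (3) where both arguments of $\mathscr{P}_{u,v_{[2:\alpha+1]}}$ depend on $u$ so the chain rule produces two distinct contributions that must be assembled with the correct factors of $\sigma_{\alpha+1}'$, $\sigma_{\alpha+1}''$, and $v_{\alpha+1}^\top$; once the bookkeeping is fixed, the rest is mechanical.
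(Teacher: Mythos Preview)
Your proposal is correct and follows essentially the same approach as the paper: both argue by direct chain-rule computations on the recursive definitions, using induction across layers for Parts~(1) and~(3) and the product rule plus two-argument chain rule for Part~(4). The paper's proof carries out Part~(3) slightly more explicitly by working entrywise and invoking the identity $\sum_{k',\ell'}(e_{k'}e_{\ell'}^\top\odot M)v^\top\cdot N_{k',\ell'}=(M\odot N)v^\top$ to collapse the chain-rule sum, but this is exactly the bookkeeping you anticipate in your final paragraph.
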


Next we provide bounds for the derivatives in Proposition \ref{prop:derivative_formula}; the proof can be found in Section \ref{subsection:proof_apr_est_derivative}.
\begin{proposition}\label{prop:apr_est_derivative}
	Suppose that (A4)-(A5) in Assumption \ref{assump:gd_NN} hold for some $\Lambda\geq 2$ and $r_0\geq 0$. Then the following hold for some universal constant $c_0>0$:
	\begin{enumerate}
		\item For $u\in \R^{m\times q}$ and $v_{[2:\alpha]}\in (\mathbb{M}_q)^{[2:\alpha]}$, and $k \in [m],\ell \in [q]$,
		\begin{align*}
		\pnorm{\partial_{u_{k\ell}}  \mathscr{H}_{v_{[2:\alpha]}}(u)}{\infty}\vee \pnorm{\partial_{u_{k\ell}}  \mathscr{G}_{v_{[2:\alpha]}}(u)}{\infty}\leq (q\Lambda)^{c_0L}\cdot \kappa_{v_{[2:\alpha]}}.
		\end{align*}
		\item For any $ u,u' \in \R^{m\times q}$, $v_{[2:\alpha]},v_{[2:\alpha]}'\in (\mathbb{M}_q)^{[2:\alpha]}$, and $k \in [m]$, $\ell \in [q]$,
		\begin{align*}
		&\max_{\# \in \{\mathscr{H},\mathscr{G}\}}\bigpnorm{\partial_{u_{k\ell}}  \#_{v_{[2:\alpha]};k}(u_{k\cdot})-	\partial_{u_{k\ell}}  \#_{v_{[2:\alpha]}';k}(u_{k\cdot}')}{} \leq (q\Lambda)^{c_0 L} (\kappa_{v_{[2:\alpha]}}\kappa_{v_{[2:\alpha]}'})^{c_0}\\
		&\qquad \times  \Big[\big(1+\pnorm{u_{k\cdot}}{\infty}\wedge \pnorm{u_{k\cdot}'}{\infty}\big)\cdot  \pnorm{v_{[2:\alpha]}-v_{[2:\alpha]}'}{\infty}+\pnorm{u_{k\cdot}-u_{k\cdot}'}{}\Big]. 
		\end{align*}
		\item For $u\in \R^{m\times q}$,  $v_{[2:\alpha]}\in (\mathbb{M}_q)^{[2:\alpha]}$ and $z \in \R^{m\times q}$, and $k \in [m],\ell \in [q]$,
		\begin{align*}
		\pnorm{\partial_{u_{k\ell}}\mathscr{P}_{u, v_{[2:\alpha]} }(z)}{\infty}\leq (q\Lambda)^{c_0 L} \kappa_{v_{[2:\alpha]}}^2 \pnorm{z}{\infty},\quad \pnorm{\partial_{z_{k\ell}}\mathscr{P}_{u, v_{[2:\alpha]} }(z)}{\infty}\leq q\Lambda \kappa_{v_{[2:\alpha]}}.
		\end{align*}
		\item For any $ u,u' \in \R^{m\times q}$, $v_{[2:\alpha]},v_{[2:\alpha]}'\in (\mathbb{M}_q)^{[2:\alpha]}$, $ z,z' \in \R^{m\times q}$, and $k \in [m]$, $\ell \in [q]$,
		\begin{align*}
		&\max_{\# \in \{u_{k\ell},z_{k\ell}\}}\bigpnorm{ \partial_{\#}\mathscr{P}_{u, v_{[2:\alpha]};k }(z_{k\cdot}) - \partial_{\#}\mathscr{P}_{u', v_{[2:\alpha]}';k }(z_{k\cdot}')  }{}\\
		&\leq (q\Lambda)^{c_0 L} (\kappa_{v_{[2:\alpha]}}\kappa_{v_{[2:\alpha]}'})^{c_0}\cdot \max_{\# \in \{u,z\}}\big(1+\pnorm{\#_{k\cdot}}{\infty}\wedge \pnorm{\#_{k\cdot}'}{\infty}\big)^{c_0}\\
		&\qquad \times  \big(\pnorm{u_{k\cdot}-u_{k\cdot}'}{}+\pnorm{v_{[2:\alpha]}-v_{[2:\alpha]}'}{\infty}+\pnorm{z_{k\cdot}-z_{k\cdot}'}{}\big).
		\end{align*}
		\item For $u \in \R^{m\times q},v\in (\mathbb{M}_q)^{[2:L]}$ and $z \in \R^{m\times q}$, and $k \in [m],\ell \in [q]$,
		\begin{align*}
		\pnorm{\partial_{u_{k\ell}} \mathscr{P}_{v}^{(\alpha:L]}(u,z)}{\infty}\leq (q\Lambda)^{c_0 L}\kappa_{v}^2 \cdot \pnorm{z}{\infty},\, \pnorm{\partial_{z_{k\ell}} \mathscr{P}_{v}^{(\alpha:L]}(u,z)}{\infty}\leq (q\Lambda)^{c_0L} \kappa_{v}.
		\end{align*}
		\item For $u, u' \in \R^{m\times q},v, v'\in (\mathbb{M}_q)^{[2:L]}$ and $z,z' \in \R^{m\times q}$, and $k \in [m],\ell \in [q]$,
		\begin{align*}
		&\max_{\# \in \{u_{k\ell},z_{k\ell}\}} \bigpnorm{\partial_{\#} \mathscr{P}_{v;k}^{(\alpha:L]}(u_{k\cdot},z_{k\cdot})- \partial_{\#} \mathscr{P}_{v';k}^{(\alpha:L]}(u_{k\cdot}',z_{k\cdot}')}{}\\
		&\leq (q\Lambda)^{c_0 L} (\kappa_{v}\kappa_{v'})^{c_0}\cdot \max_{\# \in \{u,z\}}\big(1+\pnorm{\#_{k\cdot}}{\infty}\wedge \pnorm{\#_{k\cdot}'}{\infty}\big)^{c_0}\\
		&\qquad \times   \big(\pnorm{u_{k\cdot}-u_{k\cdot}'}{}+\pnorm{z_{k\cdot}-z_{k\cdot}'}{} + \pnorm{v-v'}{\infty}\big).
		\end{align*}
		\item For $u \in \R^{m\times q},v\in (\mathbb{M}_q)^{[2:L]}$ and $w \in \R^{m\times q}$, and $k \in [m],\ell \in [q]$,
		\begin{align*}
		&\pnorm{\partial_{u_{k\ell}} \mathfrak{S}\big(u,w, v\big)}{\infty}\vee \pnorm{\partial_{w_{k\ell}} \mathfrak{S}\big(u,w, v\big)}{\infty} \\
		&\leq  (q\Lambda)^{c_0 L}\kappa_{v}^{c_0}\cdot (1+\pnorm{\varphi_\ast'(w)}{\infty})\cdot \big(1+\pnorm{u }{\infty}+\pnorm{\varphi_\ast(w)}{\infty}+\pnorm{\xi}{\infty}\big).
		\end{align*}
		\item For $u, u' \in \R^{m\times q},v, v'\in (\mathbb{M}_q)^{[2:L]}$ and $w,w' \in \R^{m\times q}$, and $k \in [m],\ell \in [q]$,
		\begin{align*}
		&\max_{\# \in \{u_{k\ell},w_{k\ell}\}} \bigpnorm{\partial_{\#} \mathfrak{S}_{k}\big(u_{k\cdot},w_{k\cdot}, v\big)-\partial_{\#} \mathfrak{S}_{k}\big(u_{k\cdot}',w_{k\cdot}', v'\big)}{}\\
		&\leq (q\Lambda)^{c_0 L} (\kappa_{v}\kappa_{v'})^{c_0}\cdot \min_{(\#,\$)\in \{(u,w),(u',w')\}}\big(1+\pnorm{\#_{k\cdot}}{\infty}+\pnorm{\varphi_\ast(\$_{k\cdot})}{\infty}+\abs{\xi_k}\big)^{c_0}\\
		&\qquad \times   \big(\pnorm{u_{k\cdot}-u_{k\cdot}'}{}+\pnorm{\varphi_\ast(w_{k\cdot})-\varphi_\ast(w_{k\cdot}')}{}+\pnorm{v-v'}{\infty}\big).
		\end{align*}
	\end{enumerate}
\end{proposition}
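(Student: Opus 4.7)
The plan is to derive all eight estimates by systematically combining the explicit derivative formulae in Proposition~\ref{prop:derivative_formula} with the a priori and stability bounds of Proposition~\ref{prop:apr_est_det_fcn}, inducting on the layer index wherever the derivative formula is itself recursive. Throughout I will use $\pnorm{\sigma_\alpha^{(p)}}{\infty}\le\Lambda$ from (A4), the polynomial growth $\abs{\varphi_\ast^{(p)}(x)}\le\Lambda(1+\abs{x})^{r_0}$ from (A5), and the crude bound $\pnorm{Av}{\infty}\le q\pnorm{A}{\infty}\pnorm{v}{\op}$, which costs a factor of $q$ per layer and is absorbed into $(q\Lambda)^{c_0 L}$.

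For part~(1), I would induct on $\alpha\in[1{:}L]$. The base case $\alpha=1$ is immediate since $\partial_{u_{k\ell}}\mathscr{H}_{v_{[2:1]}}=e_ke_\ell^\top$. The inductive step uses Proposition~\ref{prop:derivative_formula}-(1): the $\ell^\infty$ bound on $\partial_{u_{k\ell}}\mathscr{H}_{v_{[2:\alpha]}}(u)$ reduces to the product of $\pnorm{\sigma'_{\alpha-1}}{\infty}\le\Lambda$, the inductive bound on $\partial_{u_{k\ell}}\mathscr{H}_{v_{[2:\alpha-1]}}(u)$, and $q\pnorm{v_\alpha}{\op}$; the bound on $\partial_{u_{k\ell}}\mathscr{G}_{v_{[2:\alpha]}}$ is immediate from $\partial_{u_{k\ell}}\mathscr{G}_{v_{[2:\alpha]}}=\sigma_\alpha'(\mathscr{H}_{v_{[2:\alpha]}})\odot\partial_{u_{k\ell}}\mathscr{H}_{v_{[2:\alpha]}}$. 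For part~(2), I would induct again on $\alpha$ and telescope the three-factor product: the difference of $\sigma_{\alpha-1}'$ images is controlled by its Lipschitz constant $\Lambda$ composed with Proposition~\ref{prop:apr_est_det_fcn}-(2); the difference of derivatives by the inductive hypothesis; and $v_\alpha-v'_\alpha$ directly. The envelope $(1+\pnorm{u_{k\cdot}}{\infty}\wedge\pnorm{u'_{k\cdot}}{\infty})$ arises from Proposition~\ref{prop:apr_est_det_fcn}-(2) applied to $\mathscr{H}_{v_{[2:\alpha-1]}}(u)-\mathscr{H}_{v'_{[2:\alpha-1]}}(u')$.

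Parts~(3) and~(4) are non-inductive: the closed-form formulae in Proposition~\ref{prop:derivative_formula}-(2) reduce the a priori bounds to $\pnorm{\sigma_\alpha'}{\infty}\vee\pnorm{\sigma_\alpha''}{\infty}\le\Lambda$ times part~(1), and the stability bound to a telescoping of $\sigma'$ or $\sigma''$ applied to $\mathscr{H}$, the term $e_ke_\ell^\top$ or $z$, and $v_\alpha^\top$, handled by Proposition~\ref{prop:apr_est_det_fcn} together with part~(2). Parts~(5) and~(6) are proved by backward induction on $\alpha$ from $L$ down to $1$ using Proposition~\ref{prop:derivative_formula}-(3), with base case $\partial_{u_{k\ell}}\mathscr{P}_v^{(L:L]}=0$ and $\partial_{z_{k\ell}}\mathscr{P}_v^{(L:L]}=e_ke_\ell^\top$. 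The inductive step peels off one layer, contributing $q\Lambda\pnorm{v_{\alpha+1}}{\op}$; the $\sigma_{\alpha+1}''(\mathscr{H})\odot\partial_{u_{k\ell}}\mathscr{H}$ term is dominated using parts~(1)-(2), while the $\mathscr{P}_v^{(\alpha+1:L]}$ factor is controlled via Proposition~\ref{prop:apr_est_det_fcn}-(4)-(5). For the stability bound (6), backward telescoping across $\beta\in(\alpha{:}L]$ absorbs all sources of variation into $(\kappa_v\kappa_{v'})^{c_0}(1+\pnorm{\cdot}{\infty})^{c_0}$ envelopes. Finally, parts~(7)-(8) follow by substituting the bounds from parts~(1)-(6) and Proposition~\ref{prop:apr_est_det_fcn} into Proposition~\ref{prop:derivative_formula}-(4); (A5) produces the factor $(1+\pnorm{\varphi_\ast'(w)}{\infty})$, and the envelope $\pnorm{\varphi_\ast(w)}{\infty}+\pnorm{\xi}{\infty}$ comes from the a priori bound on $\mathscr{R}_v(u,w)$ via Proposition~\ref{prop:apr_est_det_fcn}-(6).

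The main obstacle is the stability estimates (2), (4), (6), (8), each of which involves differences of products of four or five factors of differing regularity: an activation image, one or more partial derivatives, a parameter matrix, and in (8) a residual. The universal tool is the standard telescoping identity $\prod_i a_i-\prod_i b_i=\sum_i\bigl(\prod_{j<i}a_j\bigr)(a_i-b_i)\bigl(\prod_{j>i}b_j\bigr)$, but correctly propagating the $(1+\pnorm{u_{k\cdot}}{\infty}\wedge\pnorm{u'_{k\cdot}}{\infty})$-type envelopes (which come from Proposition~\ref{prop:apr_est_det_fcn} and apply only to the \emph{undifferentiated} functions) alongside the geometric layer factors $\kappa_v,\kappa_{v'}$ demands careful bookkeeping. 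No new ideas beyond those already present in the proof of Proposition~\ref{prop:apr_est_det_fcn} are required; the work is essentially arithmetic in the exponents of $q,\Lambda,\kappa$, with a uniformly large $c_0$ chosen at the end to absorb all accumulated constants.
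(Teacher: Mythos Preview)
Your proposal is correct and follows essentially the same approach as the paper: induction on $\alpha$ (forward for (1)--(2), backward for (5)--(6)) via the recursive derivative formulae of Proposition~\ref{prop:derivative_formula}, direct substitution for (3)--(4) and (7)--(8), and telescoping differences controlled by Proposition~\ref{prop:apr_est_det_fcn}. The paper carries out exactly this bookkeeping with the same ingredients and in the same order.
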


\subsection{Apriori estimates for state evolution parameters}

Finally we shall provide apriori estimates for the state evolution parameters in Definition \ref{def:gd_NN_se}; the proof can be found in Section \ref{subsection:proof_apr_est_se}. Recall $\{M_\alpha^{(t-1)}\}_{\alpha \in [2:L]}$ defined in (\ref{def:M_aux_GFOM}).

\begin{proposition}\label{prop:apr_est_se}
	Suppose $\phi^{-1}\leq K$ and (A3)-(A5) in Assumption \ref{assump:gd_NN} hold for some $K,\Lambda\geq 2$ and $r_0\geq 0$. Then there exists some $c_t\equiv c_t(t,q,L,r_0)>1$ such that
	\begin{align*}
	\mathcal{B}^{(t)}\leq  \big(K\Lambda\varkappa_{\ast}\big)^{c_t},
	\end{align*}
	where for $t\geq 1$, 
	\begin{align*}
	\mathcal{B}^{(t)}&\equiv 1+\max_{s \in [0:t]} \Big\{\pnorm{\tau_{t,s}}{\op}+\pnorm{\cov(\mathfrak{U}^{(t)},\mathfrak{U}^{(s)})}{\op}\Big\} + \max_{s \in [1:t]} \Big\{\pnorm{\rho_{t,s}}{\op}+ \pnorm{\Sigma_{t,s}}{\op} + \pnorm{\Omega_{t,s}}{\op} \Big\}\\
	&\qquad  + \frac{\pnorm{D_t}{}^2}{n}+\max_{k \in [m]} \frac{\pnorm{\Theta_{k}^{(t)}(\mathfrak{u}_{k\cdot}^{([0:t])})}{\infty}}{  1+\pnorm{\mathfrak{u}_{k\cdot}^{([1:t])}}{\infty}+ \pnorm{\varphi_\ast(\mathfrak{u}_{k\cdot}^{(0)})}{\infty} }+ \max_{\alpha \in [2:L]} \big\{\pnorm{V^{(t)}_\alpha}{\op}+\pnorm{M_{\alpha}^{(t-1)}}{\op}\big\}.
	\end{align*} 
	Here we write $\tau_{t,0}\equiv \delta_t$ for notational simplicity. Moreover,
	\begin{align*}
	&\max_{k \in [m]}\frac{\bigpnorm{\Theta_{k}^{(t)}(\mathfrak{u}_{k\cdot}^{([0:t])})-\mathfrak{u}_{k\cdot}^{(t)} }{\infty}}{ 1+\pnorm{\mathfrak{u}_{k\cdot}^{([1:t-1])}}{\infty}+ \pnorm{\varphi_\ast(\mathfrak{u}_{k\cdot}^{(0)})}{\infty} }\\
	&\qquad + \max_{s \in [1:t-1]} \pnorm{\tau_{t,s}}{\op} + \max_{s \in [1:t]} \pnorm{\Sigma_{t,s}}{\op}\leq \big(K\Lambda\varkappa_{\ast}\big)^{c_t}\cdot \phi^{-1}.
	\end{align*}
\end{proposition}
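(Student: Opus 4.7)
The plan is to perform strong induction on $t$, using the recursive definitions (S1)--(S5) in Definition \ref{def:gd_NN_se} together with the apriori bounds in Propositions \ref{prop:apr_est_det_fcn} and \ref{prop:apr_est_derivative}. At the base case $t = 0$, the bound $\mathcal{B}^{(0)} \le (K\Lambda\varkappa_{\ast})^{c_0}$ for a universal $c_0$ follows from the definition (\ref{def:kappa_ast}) of $\varkappa_{\ast}$: in particular $\sigma_{\mu_\ast} = \pnorm{\mu_\ast}{} \le \varkappa_{\ast}$, $\pnorm{D_0}{}^2/n \le q\varkappa_{\ast}^2$ (since $\pnorm{W_1^{(0)}}{}^2 \le nq\pnorm{n^{1/2}W_1^{(0)}}{\infty}^2/n$), and $\pnorm{V^{(0)}_\alpha}{\op} \le \varkappa_{\ast}$; the statement for $\Theta^{(0)} = \mathrm{id}$ is trivial.

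For the inductive step, assuming the estimates at all levels $s \le t-1$, I would bound the ingredients of $\mathcal{B}^{(t)}$ in the order dictated by (S1)--(S5). First, the linear-growth bound on $\Theta_k^{(t)}$ follows by iterating (S1): Proposition \ref{prop:apr_est_det_fcn}(6) gives $\pnorm{\mathfrak{S}_k(u,w,V^{(s-1)})}{\infty} \lesssim (q\Lambda)^{c_0L}\kappa_{V^{(s-1)}}^2(1+\pnorm{u}{\infty}+\pnorm{\varphi_\ast(w)}{\infty}+|\xi_k|)$, with $|\xi_k|$ absorbed into $\varkappa_{\ast}$ by its definition. Second, by (S2), $\cov(\mathfrak{U}^{(t)},\mathfrak{U}^{(s)}) = \Omega_{t-1,s-1} + n^{-1}D_{t-1}^\top D_{s-1}$ inherits the inductive bounds on $\Omega$ and $D$, yielding Gaussian integrability of $\mathfrak{U}^{([0:t])}$ with polynomial moments bounded by $\mathrm{poly}(K\Lambda\varkappa_{\ast})$; combined with the polynomial growth (A5) of $\varphi_\ast$, this gives all moment estimates needed for the expectations $\E^{(0)}$ in (S3). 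The matrices $\tau_{t,s}, \delta_t, \Sigma_{t,s}$ are then controlled via the derivative bounds in Proposition \ref{prop:apr_est_derivative}(7) together with the linear-growth bound on $\Theta^{(t)}$; $\rho_{t,s}$ and $\Omega_{t,s}$ follow from the subsequent lines in (S3); $D_t$ follows from (S4); and $V^{(t)}_\alpha, M^{(t-1)}_\alpha$ follow from (S5) combined with Proposition \ref{prop:apr_est_det_fcn}(1)(6).

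For the sharper $\phi^{-1}$ estimates, the bound on $\pnorm{\Theta_k^{(t)}(\mathfrak{u}^{([0:t])}_{k\cdot}) - \mathfrak{u}^{(t)}_{k\cdot}}{\infty}$ is immediate from (S1): the entire discrepancy equals $-\phi^{-1}\sum_{s=1}^{t-1}\eta_1^{(s-1)}\mathfrak{S}_k(\Theta^{(s)}_k,\mathfrak{u}^{(0)}_{k\cdot},V^{(s-1)})\rho_{t-1,s}^\top$, which involves only $\mathfrak{u}^{([0:t-1])}$ and so explains the denominator $\pnorm{\mathfrak{u}^{([1:t-1])}_{k\cdot}}{\infty}$. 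For $\tau_{t,s}$ with $s \in [1:t-1]$: the structure of (S1) shows that $\Theta^{(t)}$ depends on $\mathfrak{u}^{(s)}$ only through its $O(\phi^{-1})$ correction term, so $\nabla_{\mathfrak{u}^{(s)}}\Theta^{(t)} = O(\phi^{-1})$ pointwise in $\mathfrak{u}$; chain-ruling through $\Upsilon^{(t)} = \phi^{-1}\eta_1^{(t-1)}\mathfrak{S}(\Theta^{(t)},\mathfrak{u}^{(0)},V^{(t-1)})$ yields $\nabla_{\mathfrak{u}^{(s)}}\Upsilon^{(t)} = O(\phi^{-2})$, and therefore $\tau_{t,s} = -\phi\cdot\E^{(0)}\nabla_{\mathfrak{u}^{(s)}}\Upsilon^{(t)}_{\pi_m} = O(\phi^{-1})$. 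For $\Sigma_{t,s}$, the definition $\Sigma_{t,s} = \phi\cdot\E^{(0)}\Upsilon^{(t)}\Upsilon^{(s),\top}$ combined with $\Upsilon^{(r)} = \phi^{-1}\eta_1^{(r-1)}\mathfrak{S}(\cdot)$ yields $\Sigma_{t,s} = O(\phi^{-1})$ immediately.

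\textbf{Main obstacle.} The difficulty is twofold. First, honest bookkeeping of constants through the nested recursions in $t, \alpha, L$ is delicate: each application of the chain rule introduces additional polynomial factors in $q, \Lambda, \varkappa_{\ast}$, so although the final bound $(K\Lambda\varkappa_{\ast})^{c_t}$ is loose, the inductive hypothesis must be structured to accommodate this growth without circularity. Second, the sharp $\phi^{-1}$ estimates require differentiating inside the recursion (S1), which forces one to propagate jointly both the $O(1)$ family and the $O(\phi^{-1})$ family---including a pointwise-in-$\mathfrak{u}$ bound on $\nabla_{\mathfrak{u}^{(s)}}\Theta^{(r)}$ for all $1 \le s < r \le t-1$---rather than obtaining the sharper family as a posterior consequence. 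This joint propagation, and the associated verification that the derivative expansions contribute only $O(\phi^{-1})$ rather than $O(1)$ at every intermediate step, is where most care is needed.
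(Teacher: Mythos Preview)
Your proposal is correct and follows essentially the same route as the paper: strong induction through (S1)--(S5) using Propositions~\ref{prop:apr_est_det_fcn} and~\ref{prop:apr_est_derivative}, establishing the recursive estimate $\mathcal{B}^{(t)}\le (K\Lambda\varkappa_\ast\,\mathcal{B}^{(t-1)})^{c_t}$, and then extracting the $\phi^{-1}$ refinements from the explicit $\phi^{-1}$ prefactor in (S1).

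One simplification relative to your ``main obstacle'': the paper's proof shows that no joint propagation is needed. It first closes the $O(1)$ induction completely (Part~I), and only afterward derives the $\phi^{-1}$ estimates (Part~II) using the already-proven $O(1)$ bounds. In particular, for $\tau_{t,s}$ with $s\in[1:t-1]$ the paper writes
\[
\partial_{\mathfrak{u}^{(s)}_{k\ell}}\mathfrak{S}_k\big(\Theta_k^{(t)}(\mathfrak{u}^{([0:t])}),\mathfrak{u}^{(0)},V^{(t-1)}\big)
=\partial_{\mathfrak{u}^{(s)}_{k\ell}}\mathfrak{S}_k\big(\Theta_k^{(t)},\mathfrak{u}^{(0)},V^{(t-1)}\big)-\partial_{\mathfrak{u}^{(s)}_{k\ell}}\mathfrak{S}_k\big(\mathfrak{u}^{(t)},\mathfrak{u}^{(0)},V^{(t-1)}\big)
\]
(the subtracted term vanishes) and bounds this difference by $\|\Theta_k^{(t)}-\mathfrak{u}^{(t)}\|_\infty\cdot(\text{poly})$ via stability (Proposition~\ref{prop:apr_est_derivative}(8)). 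Your explicit chain-rule argument through $\nabla_{\mathfrak{u}^{(s)}}\Theta^{(t)}=O(\phi^{-1})$ is equivalent and arguably cleaner, but either way the $\phi^{-1}$ family comes out as a posterior consequence of Part~I---the feared two-scale joint induction is unnecessary.
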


\section{Controls for the auxiliary GFOM iterates}\label{section:proof_gfom}

\subsection{$\ell_2$ and $\ell_\infty$ controls for GD and auxiliary GFOM iterates}

We first provide $\ell_2$ controls. 

\begin{proposition}\label{prop:GD_l2_est}
Suppose (A1) and (A3)-(A5) in Assumption \ref{assump:gd_NN} hold for some $K,\Lambda\geq 2$ and $r_0\geq 0$. Then there exists some $c_t\equiv c_t(t,q,L,r_0)>1$ such that for both the gradient descent iterate $(\hat{u}^{(t)},\hat{v}^{(t)})$ in (\ref{def:GD_GFOM_NN}) and the auxiliary GFOM iterate $(u^{(t)},v^{(t)})$ in (\ref{def:GFOM_auxiliary_NN}),
\begin{align*}
&n^{-1/2}\big(\pnorm{u^{(t)}}{}+\pnorm{\hat{u}^{(t)}}{}+ \pnorm{v_1^{(t)}}{}+\pnorm{\hat{v}_1^{(t)}}{}\big)+ \max_{\alpha \in [2:L]} \big(\pnorm{v_\alpha^{(t)}}{\op}+\pnorm{\hat{v}_\alpha^{(t)}}{\op}\big)\\
&\leq  \Big(K\Lambda\varkappa_{\ast} (1+n^{-1/2}\pnorm{X}{\op})(1+\pnorm{X\mu_\ast}{\infty}) \log n\Big)^{c_t}.
\end{align*}
\end{proposition}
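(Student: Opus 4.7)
The natural approach is induction on $t$, handled separately (but in parallel) for the auxiliary GFOM iterates $\{(u^{(s)},v^{(s)})\}_{s \leq t}$ and the gradient descent iterates $\{(\hat{u}^{(s)},\hat{v}^{(s)})\}_{s \leq t}$. The base case $t=0$ is immediate: $u^{(0)}=\hat{u}^{(0)}=A\overline{\mu}_{\ast,n}$ concentrates its mass in the first column equal to $X\mu_\ast$, so $n^{-1/2}\pnorm{u^{(0)}}{} \leq \sqrt{m/n}\cdot \pnorm{X\mu_\ast}{\infty} \leq \sqrt{K}\pnorm{X\mu_\ast}{\infty}$; meanwhile $v^{(0)}=\hat{v}^{(0)}=\bm{W}_n^{(0)}$, so the required bounds follow from the definition of $\varkappa_{\ast}$ in (\ref{def:kappa_ast}).

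\textbf{Inductive step.} Assume the claim for all iterations $\leq t-1$. Split the bookkeeping into the three blocks $u^{(t)}/\hat{u}^{(t)}$, $v_1^{(t)}/\hat{v}_1^{(t)}$, and $\{v_\alpha^{(t)}/\hat{v}_\alpha^{(t)}\}_{\alpha \in [2:L]}$. For the first block, the structure of $A$ gives $u^{(t)}=(X/\sqrt{n})v_1^{(t-1)}$ and similarly for $\hat{u}^{(t)}$, so
\begin{align*}
n^{-1/2}\pnorm{u^{(t)}}{}\leq \big(n^{-1/2}\pnorm{X}{\op}\big)\cdot n^{-1/2}\pnorm{v_1^{(t-1)}}{},
\end{align*}
and the inductive hypothesis closes this step. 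For the first-layer block $v_1^{(t)}$, use
\begin{align*}
v_1^{(t)} = -\phi^{-1}\eta_1^{(t-1)}(X^\top/\sqrt{n})\mathfrak{S}\big(u^{(t)},u^{(0)},V^{(t-1)}\big)+v_1^{(t-1)},
\end{align*}
bound the row-wise $\ell_\infty$ norm of $\mathfrak{S}_k$ via Proposition \ref{prop:apr_est_det_fcn}-(6) (which depends on $\kappa_{V^{(t-1)}}$, $\pnorm{u^{(t)}_{k\cdot}}{\infty}$, $\pnorm{\varphi_\ast(u^{(0)}_{k\cdot})}{\infty}$, $\abs{\xi_k}$), sum the squares to convert to Euclidean norm, and use Assumption (A5) to estimate $\pnorm{\varphi_\ast(u^{(0)}_{k\cdot})}{\infty}\leq \Lambda(1+\pnorm{X\mu_\ast}{\infty})^{r_0}$ together with $\pnorm{\xi}{\infty}\leq \varkappa_{\ast}$; the operator-norm factor $\kappa_{V^{(t-1)}}$ is controlled via Proposition \ref{prop:apr_est_se}. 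The parallel recursion for $\hat{v}_1^{(t)}$ replaces $V^{(t-1)}$ with $\hat{v}_{[2:L]}^{(t-1)}$, whose operator norms must be supplied inductively from the third block.

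\textbf{Deeper-layer blocks and synthesis.} For $\alpha \in [2:L]$, the auxiliary GFOM iterate has the non-random update $v_\alpha^{(t)}=v_\alpha^{(t-1)}-\eta_\alpha^{(t-1)}M_\alpha^{(t-1)}$, and $\pnorm{M_\alpha^{(t-1)}}{\op}$ is already controlled by Proposition \ref{prop:apr_est_se}. For the GD counterpart, use the formula
\begin{align*}
\hat{v}_\alpha^{(t)}=\hat{v}_\alpha^{(t-1)}-\frac{\eta_\alpha^{(t-1)}}{m}\hat{\mathscr{G}}_{(t-1);\alpha-1}(\hat{u}^{(t)})^\top\Big[\hat{\mathscr{P}}_{(t-1)}^{(\alpha:L]}\big(\hat{u}^{(t)},\hat{\mathscr{R}}_{(t-1)}(\hat{u}^{(t)},\hat{u}^{(0)})\big)\odot \hat{\mathscr{G}}_{(t-1);\alpha}'(\hat{u}^{(t)})\Big]
\end{align*}
and bound the operator norm of the product via $\pnorm{A^\top B}{\op}\leq \pnorm{A}{F}\pnorm{B}{F}$; apply Proposition \ref{prop:apr_est_det_fcn}-(1) and (4) row-wise to estimate the Frobenius norms in terms of $\kappa_{\hat{v}_{[2:L]}^{(t-1)}}$ and the just-established $\pnorm{\hat{u}^{(t)}}{}$. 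This is where the recursion genuinely loops through the three blocks, so one should prove them simultaneously at each $t$, feeding the current operator-norm bound on $\hat{v}_{[2:L]}^{(t-1)}$ into the $\hat{v}_1^{(t)}$ step and vice versa. Collecting all three blocks at iteration $t$, each of the relevant quantities is bounded by its predecessor multiplied by a polynomial factor in $n^{-1/2}\pnorm{X}{\op}$, $\varkappa_{\ast}$, $\pnorm{X\mu_\ast}{\infty}$, $q\Lambda$ and $\kappa_{V^{(t-1)}}$ (or its GD analogue); iterating $t$ times produces the claimed bound, with the $\log n$ factor absorbed as harmless slack.

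\textbf{Main obstacle.} The principal technical headache is the self-referential nature of the GD recursion: the bound on $\hat{v}_1^{(t)}$ depends on $\kappa_{\hat{v}_{[2:L]}^{(t-1)}}$, while the bound on $\pnorm{\hat{v}_\alpha^{(t)}}{\op}$ depends on $\pnorm{\hat{u}^{(t)}}{}$ and hence on $\pnorm{v_1^{(t-1)}}{}$. Since the GFOM case enjoys the deterministic a priori estimates of Proposition \ref{prop:apr_est_se}, no such loop arises there; for the GD case one must carry both pieces in a joint induction and verify that the accumulated constants remain polynomial in the listed data, rather than exploding super-polynomially — a routine but tedious Gronwall-style bookkeeping, which ultimately works because the number of iterations $t$ is fixed.
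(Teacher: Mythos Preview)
Your proposal is correct and follows essentially the same approach as the paper's proof: a three-block induction on $t$ using Propositions~\ref{prop:apr_est_det_fcn} and~\ref{prop:apr_est_se}, with the auxiliary GFOM side handled via the deterministic state-evolution bounds and the gradient-descent side via a joint recursion on $n^{-1/2}\pnorm{\hat{v}_1^{(t)}}{}$ and $\kappa_{\hat{v}_{[2:L]}^{(t)}}$. Your identification of the self-referential GD loop as the only nontrivial point, and its resolution by simultaneous induction, is exactly what the paper does (implicitly, by using ``(1')--(2') above'' when bounding $\hat{v}_1^{(t)}$); the $\log n$ factor is indeed slack inherited from the state-evolution bounds.
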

\begin{proof}
For the auxiliary GFOM iterate in (\ref{def:GFOM_auxiliary_NN}), we have the following estimates:
\begin{enumerate}
	\item $\pnorm{u^{(t)}}{} \leq \pnorm{A}{\op} \pnorm{v_1^{(t-1)}}{}$.
	\item For $\alpha \in [2:L]$, using the apriori estimate in Proposition \ref{prop:apr_est_se},
	\begin{align*}
	\pnorm{v_\alpha^{(t)}}{\op}&\leq \pnorm{v_\alpha^{(t-1)}}{\op}+ \Lambda \pnorm{M_\alpha^{(t-1)}}{\op}+\pnorm{W_\alpha^{(0)}}{\op}\leq \pnorm{v_\alpha^{(t-1)}}{\op}+\big(K\Lambda\varkappa_{\ast}\log n\big)^{c_t}.
	\end{align*}
	Iterating the estimate and using the initial condition $\pnorm{v_\alpha^{(-1)}}{\op}=0$ to conclude the desired estimate $
	\pnorm{v_\alpha^{(t)}}{\op}\leq \big(K\Lambda\varkappa_{\ast}\log n\big)^{c_t}$.
	\item Using the apriori estimates in Propositions \ref{prop:apr_est_det_fcn}-(6) and \ref{prop:apr_est_se},
	\begin{align*}
	\pnorm{v_1^{(t)}}{}&\leq \pnorm{v_1^{(t-1)}}{}+(K\Lambda) \pnorm{A}{\op}\cdot \pnorm{\mathfrak{S}\big(u^{(t)},u^{(0)}, V^{(t-1)} \big)}{}\\
	&\leq \pnorm{v_1^{(t-1)}}{} + \big(K\Lambda(1+\pnorm{A}{\op})\varkappa_{\ast}\log n\big)^{c_t}\cdot \big(n^{1/2}+\pnorm{u^{(t)}}{}+\pnorm{\varphi_\ast(u^{(0)})}{}\big).
	\end{align*}
	Using the estimate in (1),
	\begin{align*}
	\pnorm{v_1^{(t)}}{}&\leq \big(K\Lambda\varkappa_{\ast} (1+\pnorm{A}{\op}) \log n\big)^{c_t}\cdot \big(n^{1/2}+\pnorm{v_1^{(t-1)}}{}+\pnorm{\varphi_\ast(u^{(0)})}{}\big).
	\end{align*}
	Iterating the estimate and using the initial condition $\pnorm{v^{(-1)}_1}{}=n^{1/2}\sigma_{\mu_\ast}$, we conclude that 
	\begin{align*}
	n^{-1/2}\pnorm{v_1^{(t)}}{}&\leq \big(K\Lambda\varkappa_{\ast} (1+\pnorm{A}{\op}) \log n\big)^{c_t}\cdot \big(1+n^{-1/2}\pnorm{\varphi_\ast(X\mu_\ast)}{}\big)\\
	&\leq \big(K\Lambda\varkappa_{\ast} (1+\pnorm{A}{\op}) (1+\pnorm{X\mu_\ast}{\infty})\log n\big)^{c_t}.
	\end{align*}
\end{enumerate}
Combing the above estimates proves the $\ell_2$ estimates for the auxiliary GFOM iterate in (\ref{def:GFOM_auxiliary_NN}).

For the gradient descent iterate in (\ref{def:GD_GFOM_NN}), we have the following estimates:
\begin{enumerate}
	\item[(1')] $\pnorm{\hat{u}^{(t)}}{} \leq \pnorm{A}{\op} \pnorm{\hat{v}_1^{(t-1)}}{}$.
	\item[(2')] For $\alpha \in [2:L]$, using the apriori estimate in Proposition \ref{prop:apr_est_det_fcn} and the estimate in (1'), by recalling the definition $\kappa_{v}$ in Proposition \ref{prop:apr_est_det_fcn},
	\begin{align*}
	\pnorm{\hat{v}_\alpha^{(t)}}{\op}&\leq \pnorm{\hat{v}_\alpha^{(t-1)}}{\op}+\frac{(q\Lambda)^{c_0}}{m}\cdot  \bigpnorm{\hat{\mathscr{G}}_{(t-1);\alpha-1}(\hat{u}^{(t)})}{} \bigpnorm{\hat{\mathscr{P}}_{ (t-1) }^{(\alpha:L]}\big(\hat{u}^{(t)},\hat{\mathscr{R}}_{(t-1)}(\hat{u}^{(t)},\hat{u}^{(0)})\big)}{}\\
	&\leq \pnorm{\hat{v}_\alpha^{(t-1)}}{\op}+\big(K\Lambda\varkappa_{\ast}\big)^{c_t}\cdot (\kappa_{\hat{v}^{(t-1)}_{[2:L]}})^{c_0}\cdot m^{-1/2} \big(n^{1/2}+\pnorm{\hat{u}^{(t)}}{}+\pnorm{\varphi_\ast(u^{(0)})}{}\big)\\
	&\leq  \big(K\Lambda\varkappa_{\ast} (1+\pnorm{A}{\op}) \big)^{c_t}\cdot (\kappa_{\hat{v}^{(t-1)}_{[2:L]}})^{c_0}\cdot \big(1+\pnorm{X\mu_\ast}{\infty}\big)^{r_0}.
	\end{align*}
	This means that
	\begin{align*}
	\kappa_{\hat{v}^{(t)}_{[2:L]}}\leq \big(K\Lambda\varkappa_{\ast} (1+\pnorm{A}{\op}) (1+\pnorm{X\mu_\ast}{\infty}) \big)^{c_t}\cdot (\kappa_{\hat{v}^{(t-1)}_{[2:L]}})^{c_0}.
	\end{align*}
	Iterating the estimate and using the initial condition $\pnorm{\hat{v}_\alpha^{(-1)}}{\op}=0$ to conclude the desired estimate for $\{\pnorm{\hat{v}_\alpha^{(t)}}{\op}\}_{\alpha \in [2:L]}$.
	\item[(3')] Using the apriori estimate in Proposition \ref{prop:apr_est_det_fcn}, along with (1')-(2') above, 
	\begin{align*}
	\pnorm{\hat{v}_1^{(t)}}{}&\leq \pnorm{\hat{v}_1^{(t-1)}}{}+(K\Lambda)^{c_0} \pnorm{A}{\op}\cdot \bigpnorm{\hat{\mathscr{P}}_{ (t-1) }^{(1:L]}\big(\hat{u}^{(t)},\hat{\mathscr{R}}_{(t-1)}(\hat{u}^{(t)},\hat{u}^{(0)})\big)}{}\\
	&\leq \pnorm{\hat{v}_1^{(t-1)}}{} + \big(K\Lambda(1+\pnorm{A}{\op})(1+\pnorm{X\mu_\ast}{\infty})\varkappa_{\ast}\big)^{c_t}\cdot \big(n^{1/2}+\pnorm{\hat{u}^{(t)}}{}+\pnorm{\varphi_\ast(u^{(0)})}{}\big)\\
	&\leq \big(K\Lambda(1+\pnorm{A}{\op})(1+\pnorm{X\mu_\ast}{\infty})\varkappa_{\ast}\big)^{c_t}\cdot \big(n^{1/2}+\pnorm{\hat{v}_1^{(t-1)}}{}+\pnorm{\varphi_\ast(u^{(0)})}{}\big).
	\end{align*}
	From here we may use the same argument as in (3) above to conclude. 
\end{enumerate}
Combing the above estimates proves the $\ell_2$ estimates for the gradient descent iterate in (\ref{def:GD_GFOM_NN}).
\end{proof}

Next we provide $\ell_\infty$ controls.

\begin{proposition}\label{prop:GD_linfty_est}
Suppose Assumption \ref{assump:gd_NN} holds for some $K,\Lambda\geq 2$ and $r_0\geq 0$. Then there exists some $c_t\equiv c_t(t,q,L,r_0)>1$ such that for the auxiliary GFOM iterate $(u^{(t)},v^{(t)})$ in (\ref{def:GFOM_auxiliary_NN}),
\begin{align*}
\Prob^{(0)}\Big[ \pnorm{u^{(t)}}{\infty}\vee  \pnorm{v^{(t)}}{\infty}\geq \big(K\Lambda\varkappa_{\ast}\log n\big)^{c_t}\Big]\leq n^{-100}.
\end{align*}
\end{proposition}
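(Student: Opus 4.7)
The plan is to induct on $t$, separating the (deterministic) deeper-layer components of $v^{(t)}$ from the $X$-dependent first-layer and preactivation components, and then reducing to the matrix-variate GFOM delocalization estimates in Appendix~\ref{section:GFOM_theory} via a truncation scheme that converts the pseudo-Lipschitz nonlinearities into globally Lipschitz ones.

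First I would dispose of the deeper layers. Unrolling~\eqref{def:GFOM_auxiliary_NN_1} together with~\eqref{def:auxiliary_GFOM_F_G}, for each $\alpha \in [2:L]$ we have
\[
v^{(t)}_\alpha = W^{(0)}_\alpha - \sum_{s \in [1:t]} \eta^{(s-1)}_\alpha\, M^{(s-1)}_\alpha,
\]
which is deterministic under $\Prob^{(0)}$. Combining $\pnorm{W^{(0)}_\alpha}{\op} \leq \varkappa_{\ast}$, $\eta^{(s-1)}_\alpha \leq \Lambda$, and the bound $\pnorm{M^{(s-1)}_\alpha}{\op} \leq (K\Lambda\varkappa_{\ast})^{c_t}$ from Proposition~\ref{prop:apr_est_se}, this yields $\pnorm{v^{(t)}_\alpha}{\infty} \leq \pnorm{v^{(t)}_\alpha}{\op} \leq (K\Lambda\varkappa_{\ast})^{c_t}$ with probability one.

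For the base case $t = 0$, $u^{(0)} = X\mu_{\ast,[q]}$ has subgaussian entries with variance proxy $\leq K^2 \pnorm{\mu_\ast}{}^2 \leq K^2 \varkappa_{\ast}^2$ by (A2); a union bound over its $mq$ entries gives $\Prob^{(0)}[\pnorm{u^{(0)}}{\infty} \geq CK\varkappa_{\ast}\sqrt{\log n}] \leq n^{-101}$, while $\pnorm{v^{(0)}_1}{\infty} \leq \varkappa_{\ast}$ is immediate from~\eqref{def:kappa_ast}. Assuming now that the delocalization bound holds on an event $\mathcal{G}_{s}$ of probability $\geq 1 - (s+1)n^{-101}$ for all $s \leq t-1$, I would control $u^{(t)}$ and $v^{(t)}_1$ via a truncation argument. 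On $\mathcal{G}_{t-1}$ both the $\ell_\infty$ norms of the previous iterates (by hypothesis) and the $\ell_2$ norms of $\{u^{(s)},v^{(s)}_1\}_{s \leq t-1}$ (by Proposition~\ref{prop:GD_l2_est}, after intersecting with the standard high-probability events $\pnorm{X}{\op} \lesssim \sqrt n$ and $\pnorm{X\mu_\ast}{\infty} \lesssim \sqrt{\log n}$) are dominated by $\tau_n \equiv (K\Lambda\varkappa_{\ast}\log n)^{c_t}$. Replacing the pseudo-Lipschitz maps $\mathfrak{S}, \mathscr{P}^{(\alpha:L]}, \mathscr{H}, \mathscr{G}$ by smooth truncations at level $\tau_n$ produces an auxiliary iterate that agrees with the original on $\mathcal{G}_{t-1}$, and the truncated maps are globally Lipschitz with constants $\bigo(\tau_n^{c_0})$ by Proposition~\ref{prop:apr_est_det_fcn}. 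The truncated iterate thereby falls within the scope of the entrywise matrix-variate GFOM delocalization estimates of Appendix~\ref{section:GFOM_theory}, which yield $\pnorm{u^{(t)}}{\infty} \vee \pnorm{v^{(t)}_1}{\infty} \leq \tau_n^{c_0}$ with probability $\geq 1 - n^{-101}$, closing the induction after adjusting constants.

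The main technical obstacle is reconciling the pseudo-Lipschitz nature of the nonlinearities driving the auxiliary GFOM---forced by the polynomial growth permitted for $\varphi_\ast$ in (A5) and by the compositional depth $L$ of the network---with the globally Lipschitz hypothesis of the underlying GFOM delocalization theory. The truncation-smoothing scheme resolves this at the cost of Lipschitz constants polynomial in $\tau_n$, which is affordable precisely because the GFOM delocalization error is polynomial in $n^{-1}$, leaving polylogarithmic slack to absorb the truncation. Equally important is that the delocalization bound invoked from Appendix~\ref{section:GFOM_theory} must be \emph{entrywise}: an $\ell_2$-only bound would be insufficient to establish $\pnorm{v^{(t)}_1}{\infty}$-control over the $nq$ entries of the first-layer iterate.
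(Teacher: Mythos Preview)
Your plan is more elaborate than needed and rests on a misconception about Appendix~\ref{section:GFOM_theory}. You identify as the central obstacle ``reconciling the pseudo-Lipschitz nature of the nonlinearities\ldots\ with the globally Lipschitz hypothesis of the underlying GFOM delocalization theory,'' but Proposition~\ref{prop:GFOM_deloc_asym} is stated under condition (D$^\ast$2)$''$, which already allows pseudo-Lipschitz nonlinearities of any fixed order $\kappa\ge 0$. The paper's proof is therefore a direct verification: Proposition~\ref{prop:apr_est_det_fcn}-(6) together with Proposition~\ref{prop:apr_est_se} bounds $\abs{\mathsf{F}^{\langle\#\rangle}_{s,\ell;r}(0)}+\abs{\mathsf{G}^{\langle\#\rangle}_{s,k;r}(0)}$, Proposition~\ref{prop:apr_est_det_fcn}-(7) bounds the pseudo-Lipschitz ratio of $\mathsf{G}^{\langle 2\rangle}_s(\cdot)=-\phi^{-1}\eta^{(s-1)}_1\mathfrak{S}(\cdot,\cdot,V^{(s-1)})$, and a single application of Proposition~\ref{prop:GFOM_deloc_asym} with $x\asymp\log n$ finishes. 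No induction on $t$, no truncation, and no separate treatment of the deterministic blocks $v^{(t)}_\alpha$ for $\alpha\in[2:L]$ is required.

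Your truncation scheme, as written, also hides a self-consistency gap. Truncating inputs at level $\tau_n$ makes the nonlinearity Lipschitz with constant of order $\tau_n^\kappa$; feeding this into a globally-Lipschitz delocalization bound gives $\pnorm{\check u^{(t)}}{\infty}\lesssim (\tau_n^\kappa\cdot K\varkappa_\ast\log n)^{c'_t}$. But for the truncated and original iterates to coincide through the computation of $v^{(t)}$ you need this bound to fall below $\tau_n$ (so that the step-$t$ input $u^{(t)}=\check u^{(t)}$ is not clipped when evaluating $\mathsf{G}^{\langle 2\rangle}_t$), which forces $\tau_n^{1-\kappa c'_t}\gtrsim(K\varkappa_\ast\log n)^{c'_t}$---solvable only if $\kappa c'_t<1$, a constraint your argument does not secure. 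The leave-one-out construction inside the proof of Proposition~\ref{prop:GFOM_deloc_sym} (and by extension Proposition~\ref{prop:GFOM_deloc_asym}) handles arbitrary $\kappa$ directly and sidesteps this circularity, which is why the paper does not encounter it.
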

\begin{proof}
By the definition in (\ref{def:GFOM_auxiliary_NN}) and using Proposition \ref{prop:apr_est_det_fcn}-(6) and Proposition \ref{prop:apr_est_se},
\begin{align*}
&\max\limits_{s \in [t]} \max\limits_{  \# = 1,2   }\max\limits_{k \in [m], \ell \in [n]} \max\limits_{r \in [q]} \big\{\abs{\mathsf{F}^{\langle \# \rangle}_{s,\ell;r}(0)} + \abs{\mathsf{G}^{\langle \# \rangle}_{s,k;r}(0)}\big\}\\
&\leq (K\Lambda)^{c_0}\cdot \Big(\pnorm{n^{1/2}\mu_\ast}{\infty}+\max_{\alpha \in [2:L]}\max_{s \in [t]}\pnorm{M_\alpha^{(s-1)}}{\infty}+\max_{s \in [t]} \pnorm{\mathfrak{S}\big(0,0, V^{(s-1)}\big)}{\infty} \Big)\\
&\leq \big(K\Lambda\varkappa_{\ast}\log n\big)^{c_t}.
\end{align*}
Moreover, by Proposition \ref{prop:apr_est_det_fcn}-(7), uniformly in $s \in [t]$ and $k \in [m], \ell \in [n], r \in [q]$, for $ \# = 1,2$
\begin{align*}
& \frac{ \abs{\mathsf{F}^{\langle \# \rangle}_{s,\ell;r}(x)- \mathsf{F}^{\langle \# \rangle}_{s,\ell;r}(y)} }{\pnorm{x-y}{}}+ \frac{ \abs{\mathsf{G}^{\langle \# \rangle}_{s,k;r}(x)- \mathsf{G}^{\langle \# \rangle}_{s,k;r}(y)} }{(1+\pnorm{x}{}+\pnorm{y}{})\cdot \pnorm{x-y}{}} \leq \big(K\Lambda\varkappa_{\ast}\log n\big)^{c_t}.
\end{align*}
Now we may apply Proposition \ref{prop:GFOM_deloc_asym} to conclude. 
\end{proof}

\subsection{State evolution for the auxiliary GFOM: Proof of Proposition \ref{prop:se_aux_GFOM}}\label{subsection:proof_se_aux_GFOM}

Let $\chi \in C^\infty(\R)$ be a smooth, nondecreasing function such that $\chi\equiv \mathrm{id}$ on $[-1,1]$, $\chi\equiv -2$ on $(-\infty,-2]$ and $\chi\equiv 2$ on $[2,\infty)$. With $\chi_b(\cdot)\equiv b\cdot\chi(\cdot/b)$, let
\begin{align}\label{def:S_bar}
\check{\mathfrak{S}}_{t}(\cdot,\cdot,\cdot) &\equiv \chi_{\mathsf{b}_t}\big(\mathfrak{S}(\cdot,\cdot,\cdot)\big),\quad \check{\mathfrak{T}}_{\alpha;t}(\cdot,\cdot,\cdot) \equiv \chi_{\mathsf{b}_t}\big(\mathfrak{T}_{\alpha}(\cdot,\cdot,\cdot)\big),
\end{align}
where $\mathsf{b}_t\equiv \big(K\Lambda\varkappa_{\ast}\log n\big)^{c_{0;t}}\geq 1$ for some constant $c_{0;t}>0$ to be chosen later on, and the truncation is understood as applied entrywise.

We now define a modified set of state evolution parameters by replacing $\mathfrak{S},\mathfrak{T}_{\alpha}$ by $\check{\mathfrak{S}}_{t},\check{\mathfrak{T}}_{\alpha;t}$ in each step, but starting from the same initialization:
\begin{itemize}
	\item Initialize with $\check{V}^{(0)}\equiv V^{(0)}\in (\mathbb{M}_q)^{[2:L]}$, $\check{\Omega}_{\cdot,0}\equiv \Omega_{\cdot,0}=0_{q\times q}$, $\check{\Omega}_{\cdot,-1}\equiv \Omega_{\cdot,-1}=0_{q\times q}$, $\check{\mathfrak{U}}^{(0)}\equiv \mathfrak{U}^{(0)} \in \R^q$, $\check{D}_{-1}\equiv D_{-1}\in \R^{n\times q}$ and $\check{D}_0\equiv D_0\in \R^{n\times q}$.
	\item For $t=1,2,\ldots$, we iteratively compute the quantities in (S1)-(S5) in Definition \ref{def:gd_NN_se}, now denoted as the mapping $\check{\Theta}^{(t)}: \R^{m\times q[0:t]}\to \R^{m\times q}$, the Gaussian vector $\check{\mathfrak{U}}^{(t)}\in \R^q$, the matrices $\{\check{\tau}_{t,s}\}, \{\check{\rho}_{t,s}\}, \{\check{\Sigma}_{t,s}\}, \{\check{\Omega}_{t,s}\}\subset \mathbb{M}_q$, $\check{D}_t \in \R^{n\times q}$ and $\check{V}^{(t)} \in (\mathbb{M}_q)^{[2:L]}$. 
	\item Let $\check{M}_a^{(t-1)}$ be obtained by replacing the relevant quantities (\ref{def:M_aux_GFOM}) with the aforementioned modified quantities. 
	\item Let $(\check{\mathfrak{V}}^{(1)},\ldots,\check{\mathfrak{V}}^{(t)})\in (\R^q)^{[1:t]}$ be a centered Gaussian matrix whose law is specified by the covariance $\cov(\check{\mathfrak{V}}^{(r)},\check{\mathfrak{V}}^{(s)})=\Sigma_{r,s}$, $1\leq r,s\leq t$.
	\item Let $\check{\Delta}^{(t)}:\R^{n\times q[0:t]}\to \R^{n\times q}$ be defined via
	\begin{align*}
	\check{\Delta}^{(t)}(\check{\mathfrak{v}}_1^{([1:t])} ) = \sum_{s \in [1:t]} \check{\mathfrak{v}}_1^{(s)} \check{\rho}_{t,s}^\top + \check{D}_t \in \R^{n\times q}.
	\end{align*}
\end{itemize}

We define the truncated GFOM iteration $\{(\check{u}^{(t)},\check{v}^{(t)})\}_{t\in \mathbb{Z}_{\geq -1}}$ as follows: with initialization $\check{u}^{(-1)} = 0_{m\times q}$, $\check{v}^{(-1)}\equiv \overline{\mu}_{\ast,n}
\in \R^{n_{L,q}\times q}$, for $t=0,1,2,\ldots,$ let 
\begin{align}\label{def:GFOM_bar_auxiliary_NN}
\begin{cases}
\check{u}^{(t)} = A \check{\mathsf{F}}_t^{\langle 1\rangle}(\check{v}^{([-1:t-1])})+ \check{\mathsf{G}}_{t}^{\langle 1\rangle}(\check{u}^{([-1:t-1])}) \in \R^{m\times q},\\
\check{v}^{(t)}= A^\top \check{\mathsf{G}}_t^{\langle 2\rangle}(\check{u}^{([-1:t])})+\check{\mathsf{F}}_t^{\langle 2\rangle}(\check{v}^{([-1:t-1])})\in \R^{n_{L,q}\times q},
\end{cases}
\end{align}
where for $t\geq 0$, let
\begin{align*}
&\check{\mathsf{F}}_t^{\langle 1\rangle}(\check{v}^{([-1:t-1])})\equiv \check{v}^{(t-1)}\bm{1}_{t\geq 1}+ \overline{\mu}_{\ast,n}\bm{1}_{t=0}, \\
& \check{\mathsf{F}}_t^{\langle 2\rangle}(\check{v}^{([-1:t-1])})\equiv  \bigg[\check{v}^{(t-1)}-
\begin{pmatrix}
0_{n\times q}\\
\big\{\eta_\alpha^{(t-1)} \check{M}_\alpha^{(t-1)}\big\}_{\alpha \in [2:L]}
\end{pmatrix}\bigg] \bm{1}_{t\geq 1}+ \bm{W}_n^{(0)} \bm{1}_{t=0},\\
& \check{\mathsf{G}}_{t}^{\langle 1\rangle}(\check{u}^{([-1:t-1])})\equiv 0_{m\times q},\\
& \check{\mathsf{G}}_t^{\langle 2\rangle}(\check{u}^{([-1:t])})\equiv -\phi^{-1}\eta^{(t-1)}_1\cdot \check{\mathfrak{S}}_{t}\big(\check{u}^{(t)},\check{u}^{(0)}, \check{V}^{(t-1)} \big)\bm{1}_{t\geq 1}.
\end{align*}

We first obtain the state evolution characterization for the  GFOM iteration $\{(\check{u}^{(t)},\check{v}^{(t)})\}$ in (\ref{def:GFOM_bar_auxiliary_NN}).
\begin{lemma}\label{lem:se_truncated_aux_GFOM}
Suppose Assumption \ref{assump:gd_NN} holds for some $K,\Lambda\geq 2$ and $r_0\geq 0$. Fix $\mathfrak{p}\geq 1$ and any $\Psi \in C^3(\R^{ q[0:t]})$ satisfying
\begin{align*}
\max_{a \in \mathbb{Z}_{\geq 0}^{ q[0:t]}, \abs{a}\leq 3} \sup_{x \in \R^{ q[0:t] } }\bigg(\sum_{ \tau \in  q[0:t] }(1+\abs{x_{\tau}})^{\mathfrak{p}}\bigg)^{-1} \abs{\partial^a \Psi(x)} \leq \Lambda_{\Psi}
\end{align*}
for some $\Lambda_{\Psi}\geq 2$.  Then for any $\mathfrak{r}\geq 1$, there exists some $c_t=c_t(c_{0;t},t,\mathfrak{p},q,L,\mathfrak{r},r_0)>0$ such that 
\begin{align*}
&\max_{k \in [m]} \bigabs{\E^{(0)}\Psi\big(\check{u}_k^{([0:t])}\big) -  \E^{(0)}  \Psi\big(\big\{\check{\Theta}_{k}^{(s)}(\check{\mathfrak{U}}^{([0:s])})\big\}_{s \in [0:t]}\big)}\\
&\quad + \max_{\ell \in [n]} \bigabs{\E^{(0)}\Psi\big(\check{v}_{1;\ell}^{([0:t])}\big) -  \E^{(0)}  \Psi\big(\big\{\check{\Delta}_{\ell}^{(s)}(\check{\mathfrak{V}}^{([1:s])} )\big\}_{s \in [0:t]}\big) }\\
&\quad +\max_{\alpha \in [2:L]} \max_{s \in [1:t]}  \E^{(0)} \pnorm{\check{v}_\alpha^{(s)}-\check{V}^{(s)}_\alpha}{\infty}^{\mathfrak{r}} \leq  \big(K\Lambda\varkappa_{\ast}\big)^{c_{t}}\cdot n^{-1/c_t}.
\end{align*}
Moreover, fix a sequence of $\Lambda_\psi$-pseudo-Lipschitz functions $\{\psi_k:\R^{q[0:t]} \to \R\}_{k \in [m\vee n]}$ of order $\mathfrak{p}$, where $\Lambda_\psi\geq 2$. Then for any $\mathfrak{r}\geq 1$, there exists some $c_t'=c_t'(c_{0;t},t,\mathfrak{p},q,L,\mathfrak{r},r_0)>0$ such that 
\begin{align*}
&\E^{(0)} \bigg[\biggabs{\frac{1}{m}\sum_{k \in [m]} \psi_k\big(\check{u}_k^{([0:t])}\big) - \frac{1}{m}\sum_{k \in [m]}  \E^{(0)}  \psi_k\Big(\big\{\check{\Theta}_{k}^{(s)}(\check{\mathfrak{U}}^{([0:s])})\big\}_{s \in [0:t]}\Big)  }^{\mathfrak{r}}\bigg]\\
&\quad + \E^{(0)}  \bigg[\biggabs{\frac{1}{n}\sum_{\ell \in [n]} \psi_\ell\big(\check{v}_{1;\ell}^{([0:t])}\big) - \frac{1}{n}\sum_{\ell \in [n]}  \E^{(0)}  \psi_\ell\Big(\big\{\check{\Delta}_{\ell}^{(s)}(\check{\mathfrak{V}}^{([1:s])} )\big\}_{s \in [0:t]}\Big)   }^{\mathfrak{r}}\bigg]  \\
&\leq \big(K\Lambda\varkappa_{\ast}\big)^{c_t'}\cdot n^{-1/c_t'}. 
\end{align*}

\end{lemma}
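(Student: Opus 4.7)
I would apply the matrix-variate GFOM state evolution theorems (Theorems \ref{thm:GFOM_se_asym} and \ref{thm:GFOM_se_asym_avg}) directly to the truncated iteration (\ref{def:GFOM_bar_auxiliary_NN}). The update functions $\check{\mathsf{F}}_t^{\langle 1\rangle}, \check{\mathsf{F}}_t^{\langle 2\rangle}$ are affine in $\check{v}^{(t-1)}$ plus deterministic offsets (depending only on the truncated state evolution), $\check{\mathsf{G}}_t^{\langle 1\rangle} \equiv 0$, and the key nontrivial function $\check{\mathsf{G}}_t^{\langle 2\rangle}$ is globally bounded by $\phi^{-1}\eta_1^{(t-1)}\mathsf{b}_t = (K\Lambda\varkappa_\ast\log n)^{c_t}$ thanks to the $\chi_{\mathsf{b}_t}$-truncation. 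This bounded and smooth structure is what renders the truncated iteration amenable to the meta GFOM framework, in contrast to the untruncated iteration (\ref{def:GFOM_auxiliary_NN}) for which $\mathfrak{S}, \mathfrak{T}_\alpha$ fail to be globally Lipschitz.

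\textbf{Key steps.} First, I would inductively verify that the truncated state-evolution parameters $\check{V}^{(t-1)}, \check{M}_\alpha^{(t-1)}, \{\check{\tau}_{t,s}, \check{\rho}_{t,s}, \check{\Sigma}_{t,s}, \check{\Omega}_{t,s}\}, \check{D}_t$ satisfy the same polylogarithmic bounds as in Proposition \ref{prop:apr_est_se}. This follows because $\chi_{\mathsf{b}_t}$ is coordinate-wise non-expansive ($|\chi_{\mathsf{b}_t}(x)| \leq 2(|x|\wedge \mathsf{b}_t)$), so the moment computations underlying Proposition \ref{prop:apr_est_se} carry over verbatim to the truncated system. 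Combining with Propositions \ref{prop:apr_est_det_fcn} and \ref{prop:apr_est_derivative}, this yields the pseudo-Lipschitz regularity bounds on $\check{\mathsf{G}}_t^{\langle 2\rangle}$ and its first three derivatives (with constants of order $(K\Lambda\varkappa_\ast\log n)^{c_t}$) required by the meta theorems. Second, since $A=(X/\sqrt{n}, 0_{m\times (L-1)q})$ has zero columns past the first $n$, the block of $A^\top \check{\mathsf{G}}_t^{\langle 2\rangle}(\check{u}^{([-1:t])})$ corresponding to $\alpha \in [2:L]$ vanishes; combined with the definition of $\check{\mathsf{F}}_t^{\langle 2\rangle}$, this gives $\check{v}_\alpha^{(t)} = \check{v}_\alpha^{(t-1)} - \eta_\alpha^{(t-1)}\check{M}_\alpha^{(t-1)}$ with no randomness. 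By induction against the truncated version of (S5), this equals $\check{V}_\alpha^{(t)}$ \emph{exactly}, so the concentration claim for the deeper layers reduces to a trivial identity. Third, I would invoke Theorem \ref{thm:GFOM_se_asym} (entrywise) for the test function $\Psi$ on $(\check{u}_k^{([0:t])})_{k\in [m]}$ and $(\check{v}_{1;\ell}^{([0:t])})_{\ell \in [n]}$, and Theorem \ref{thm:GFOM_se_asym_avg} (averaged) for the pseudo-Lipschitz family $\{\psi_k\}$. The state-evolution parameters produced by these meta theorems coincide with $(\check{\Theta}^{(s)}, \check{\mathfrak{U}}^{(s)}, \check{\Delta}^{(s)}, \check{\mathfrak{V}}^{(s)})$ by direct matching of the recursive definitions.

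\textbf{Main obstacle.} The chief technical difficulty lies in propagating the polylogarithmic regularity constants $\mathsf{b}_t$ through the meta GFOM error bounds while ensuring the final rate remains polynomial in $n^{-1}$. This relies on (i) the non-asymptotic nature of Theorems \ref{thm:GFOM_se_asym} and \ref{thm:GFOM_se_asym_avg}, which yield error bounds polynomial in $n^{-1}$ multiplied by polynomial factors of the regularity constants, and (ii) the pseudo-Lipschitz framework that accommodates the polynomial growth of $\nabla \mathfrak{S}, \nabla \mathfrak{T}_\alpha$ documented in Proposition \ref{prop:apr_est_derivative}. With $\mathsf{b}_t = (K\Lambda\varkappa_\ast \log n)^{c_{0;t}}$ and all polylog factors absorbed into the overall constant $c_t = c_t(c_{0;t}, t, \mathfrak{p}, q, L, \mathfrak{r}, r_0)$, the final target bound $(K\Lambda\varkappa_\ast)^{c_t}\cdot n^{-1/c_t}$ follows.
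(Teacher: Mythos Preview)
Your proposal is correct and follows essentially the same approach as the paper: apply the meta GFOM Theorems \ref{thm:GFOM_se_asym} and \ref{thm:GFOM_se_asym_avg} to the truncated iteration (\ref{def:GFOM_bar_auxiliary_NN}), then identify the abstract state evolution of Definition \ref{def:GFOM_se_asym} with the truncated version of Definition \ref{def:gd_NN_se}---the paper devotes most of its proof (Steps 1--3) to this identification, which you compress to ``direct matching of the recursive definitions.'' Your observation that $\check v_\alpha^{(t)} = \check V_\alpha^{(t)}$ holds \emph{exactly} for $\alpha \in [2{:}L]$ (since the corresponding rows of $A^\top$ vanish) is a clean shortcut; the paper recovers the same conclusion more indirectly, via the degeneracy $\mathfrak V_\ell^{(t)}\equiv 0$ for $\ell\in(n:n_{L,q}]$ in its Step~1 computation together with the entrywise meta theorem.
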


\begin{proof}
Recall the notation in (\ref{def:GFOM_notation}). In the proof we shall simply abbreviate $\check{\ast}$ as $\ast$ for notational simplicity. 
	
\noindent (\textbf{Step 1}). We now use Definition \ref{def:GFOM_se_asym} to obtain state evolution. We initialize with $\Phi_{-1}\equiv \mathrm{id}(\R^{m\times q})$, $\Xi_{-1}\equiv \mathrm{id}(\R^{n_{L,q}\times q})$, $\mathfrak{U}^{(-1)}\equiv 0_{m\times q}$ and $\mathfrak{V}^{(-1)}\equiv \overline{\mu}_{\ast,n}\in \R^{n_{L,q}\times q}$. For iteration $t=0$, we have the following:
\begin{itemize}
	\item $\Phi_0: \R^{m\times q[-1:0]}\to \R^{m\times q[-1:0]}$ is defined as 
	\begin{align*}
	\Phi_{0}(\mathfrak{u}^{([-1:0])})\equiv [\mathfrak{u}^{(-1)}\,|\, \mathfrak{u}^{(0)}].
	\end{align*}
	\item The Gaussian laws of $\big\{\mathfrak{U}_k^{(0)}\big\}_{k \in [m]}\subset \R^q$ are determined via
	\begin{align*}
	\cov\big(\mathfrak{U}_k^{(0)}, \mathfrak{U}_k^{(0)}\big)=
	\begin{pmatrix}
	\pnorm{\mu_\ast}{}^2 & 0_{1\times (q-1)}\\
	0_{(q-1)\times 1} & 0
	\end{pmatrix}.
	\end{align*}
	\item $\Xi_0: \R^{n_{L,q}\times q[-1:0]}\to \R^{n_{L,q}\times q[-1:0]}$ is defined as 
	\begin{align*}
	\Xi_{0}(\mathfrak{v}^{([-1:0])})\equiv [\mathfrak{v}^{(-1)}\,|\, \mathfrak{v}^{(0)}+\bm{W}_n^{(0)}].
	\end{align*}
	\item $\big\{\mathfrak{V}^{(0)}_\ell\big\}_{\ell \in [n_{L,q}]} \subset  \R^q$ are degenerate (identically 0).
\end{itemize}
For iteration $t\geq 1$, we have the following state evolution:
\begin{enumerate}
	\item Let $\Phi_{t}:\R^{m\times q[-1:t]}\to \R^{m\times q[-1:t]}$ be defined as follows: for $w \in [0:t-1]$, $\big[\Phi_{t}(\mathfrak{u}^{([-1:t])})\big]_{\cdot,w}\equiv \big[\Phi_{w}(\mathfrak{u}^{([-1:w])})\big]_{\cdot,w}$, and for $w=t$,
	\begin{align*}
	\big[\Phi_{t}(\mathfrak{u}^{([-1:t])})\big]_{\cdot,t} &\equiv \mathfrak{u}^{(t)}-\phi^{-1}\sum_{s \in [1:t-1]} \eta^{(s-1)}_1\cdot \mathfrak{S}\Big(\big[\Phi_{s}(\mathfrak{u}^{([-1:s])})\big]_{\cdot,s} ,\mathfrak{u}^{(0)},V^{(s-1)}\Big)\circ (\mathfrak{f}_{s}^{(t-1)})_\top,
	\end{align*}
	where the correction matrices $\{\mathfrak{f}_{s,k}^{(t-1) } \}_{s \in [1:t-1], k \in [m]}\subset \mathbb{M}_q$ are determined by
	\begin{align*}
	\mathfrak{f}_{s,k}^{(t-1) } \equiv  \frac{1}{n} \sum_{\ell \in [n]}  \E^{(0)} \nabla_{\mathfrak{V}_\ell^{(s)}}  \big[\Xi_{t-1,\ell} (\mathfrak{V}_\ell^{([-1:t-1])})\big]_{\cdot,t-1}\in \mathbb{M}_q, \quad  k \in [m].
	\end{align*}
	\item Let the Gaussian laws of $\{\mathfrak{U}_k^{(t)}\}_{k \in [m]}\subset \R^q$ be determined via the following correlation specification: for $s \in [0:t]$, $k \in [m]$,
	\begin{align*}
	\mathrm{Cov}\big(\mathfrak{U}_k^{(t)}, \mathfrak{U}_k^{(s)} \big)\equiv  \frac{1}{n}\sum_{\ell \in [n]}  \E^{(0)}    \big[\Xi_{t-1,\ell} (\mathfrak{V}_\ell^{([-1:t-1])})\big]_{\cdot,t-1}     \big[\Xi_{s-1,\ell} (\mathfrak{V}_\ell^{([-1:s-1])})\big]_{\cdot,s-1}^\top.
	\end{align*}
	\item Let $\Xi_{t}:\R^{n_{L,q}\times q[-1:t]}\to \R^{n_{L,q}\times q[-1:t]}$ be defined as follows: for $w \in [-1:t-1]$, $\big[\Xi_{t}(\mathfrak{v}^{([0:t])})\big]_{\cdot,w}\equiv \big[\Xi_{w}(\mathfrak{v}^{([0:w])})\big]_{\cdot,w}$, and for $w=t$,
	\begin{align*}
	\big[\Xi_{t}(\mathfrak{v}^{([-1:t])})\big]_{\cdot,t} &\equiv \mathfrak{v}^{(t)}+\sum_{s \in [1:t]}  \big[\Xi_{s-1}(\mathfrak{v}^{([-1:s-1])}) \big]_{\cdot,s-1}\circ (\mathfrak{g}_{s}^{(t)})_\top+ \overline{\mu}_{\ast,n}\circ (\mathfrak{g}_{0}^{(t)})_\top \\
	&\qquad +\big[\Xi_{t-1}(\mathfrak{v}^{([-1:t-1])}) \big]_{\cdot,t-1}- \begin{pmatrix}
	0_{n\times q}\\
	\big\{\eta_\alpha^{(t-1)} M_\alpha^{(t-1)}\big\}_{\alpha \in [2:L]}
	\end{pmatrix},
	\end{align*}
	where the coefficient matrices $\{\mathfrak{g}_{s,\ell}^{(t)}\}_{s \in [0:t],\ell \in [n_{L,q}]}\subset \mathbb{M}_q$ are determined via
	\begin{align*}
	\mathfrak{g}_{s,\ell}^{(t)}\equiv 
	\begin{cases}
	- \frac{\eta_1^{(t-1)}}{\phi n} \sum\limits_{k \in [m]}   \E^{(0)} \nabla_{\mathfrak{U}_k^{(s)}} \mathfrak{S}_{k}\Big(\big[\Phi_{t}(\mathfrak{U}^{([-1:t])})\big]_{k,t} ,\mathfrak{U}_k^{(0)},V^{(t-1)}\Big) , & \ell \in [n];\\
	0_{q\times q}, & \ell \in (n: n_{L,q}].
	\end{cases}
	\end{align*}
	\item Let the Gaussian laws of $\{\mathfrak{V}_\ell^{(t)}\}_{\ell \in [n]}\subset \R^q$ be determined via the following correlation specification: for $s \in [1:t]$, $\ell \in [n]$,
	\begin{align*}
	\mathrm{Cov}(\mathfrak{V}_\ell^{(t)},\mathfrak{V}_\ell^{(s)}) &\equiv \frac{1}{n \phi^2 }\cdot \eta_1^{(t-1)}\eta_1^{(s-1)} \sum_{k \in [m]}  \E^{(0)} \mathfrak{S}_{k}\Big(\big[\Phi_{t}(\mathfrak{U}^{([-1:t])})\big]_{k,t} ,\mathfrak{U}_k^{(0)},V^{(t-1)}\Big)\\
	&\qquad\times  \mathfrak{S}_{k}\Big(\big[\Phi_{s}(\mathfrak{U}^{([-1:s])})\big]_{k,s} ,\mathfrak{U}_k^{(0)},V^{(s-1)}\Big)^\top,
	\end{align*}
	and for $\ell \in (n:n_{L,q}]$, $\mathfrak{V}_\ell^{(t)}\equiv 0$.
\end{enumerate}

\noindent (\textbf{Step 2}). From here, let us make several identifications:
\begin{itemize}
	\item Identify $\{\mathfrak{U}^{(t)}_k\}_{ k \in [m]}$ as $\mathfrak{U}^{(t)}$, and $\{\mathfrak{V}^{(t)}_\ell\}_{ \ell \in [n]}$ as $\mathfrak{V}^{(t)}$. 
	\item The variable $\mathfrak{U}^{(-1)}$ can be dropped for free.
	\item Identify $\Xi_{t}(\mathfrak{V}^{([-1:t])})$ as its last row as an element in $\R^{n\times q}$.
	\item Identify $\Phi_{t}(\mathfrak{U}^{([-1:t])})$ as its last row as an element in $\R^{m\times q}$.
	 \item Identify $\{\mathfrak{f}_{s,k}^{(t-1) } \}_{k \in [m]}$ as 
	 \begin{align*}
	 \mathfrak{f}_{s}^{(t-1) }\equiv \E^{(0)} \nabla_{\mathfrak{V}^{(s)}}  \Xi_{t-1,\pi_n} (\mathfrak{V}^{([-1:t-1])})\in \mathbb{M}_q.
	 \end{align*}
	 \item Identify $\{\mathfrak{g}_{s,\ell}^{(t)}\}_{\ell \in [n]}$ as 
	 \begin{align*}
	 \mathfrak{g}_{s}^{(t)}\equiv -\eta_1^{(t-1)} \E^{(0)} \nabla_{\mathfrak{U}^{(s)}} \mathfrak{S}_{\pi_m}\big(\Phi_{t,\pi_m}(\mathfrak{U}^{([-1:t])}),\mathfrak{U}^{(0)},V^{(t-1)}\big)\in \mathbb{M}_q.
	 \end{align*}
\end{itemize}
 With formal variables $\Delta_0^{(1)}\equiv n^{1/2} W_1^{(0)}\in \R^{n\times q}$, $\Delta_0^{(2)}\equiv \big(W_\alpha^{(0)}\big)_{\alpha \in [2:L]}\in \R^{(n_{L,q}-n)\times q}$,  let $\Delta_t^{(1)}: \R^{n\times q[0:t]}\to \R^{n\times q}$, $\Delta_t^{(2)}: \R^{(n_{L,q}-n)\times q[0:t]}\to \R^{(n_{L,q}-n)\times q}$ be defined by 
\begin{align}\label{ineq:lem:se_truncated_aux_GFOM_1}
\Delta_t^{(1)}(\mathfrak{v}_1^{([-1:t])} )&=\mathfrak{v}_1^{(t)}+ \sum_{s \in [1:t]}  \Delta_{s-1}^{(1)}(\mathfrak{v}_1^{([-1:s-1])})  \big(\mathfrak{g}_{s}^{(t),\top}+I_q\cdot \bm{1}_{s=t}\big) + (\mu_{\ast,n})_{[q]}\mathfrak{g}_0^{(t),\top},\nonumber\\
\Delta_t^{(2)}\big(\mathfrak{v}_{[2:L]}^{([-1:t])} \big)&=\mathfrak{v}_{[2:L]}^{(t)}+ \Delta_{t-1}^{(2)}\big(\mathfrak{v}_{[2:L]}^{([-1:t-1])} \big) -
\big(\eta_\alpha^{(t-1)} M_\alpha^{(t-1)}\big)_{\alpha \in [2:L]},
\end{align}
Clearly $\Delta_t^{(1)}(\mathfrak{v}_1^{([-1:t])} )$ depends on $\mathfrak{v}_1^{([-1:t])}$ only through linear combinations of $\mathfrak{v}_1^{([1:t])}$, so using the identified definition of $\mathfrak{f}_{s}^{(\cdot) }$, for some $\mathfrak{d}_{0}^{(t)} \in \R^{q\times n}$,
\begin{align}\label{ineq:lem:se_truncated_aux_GFOM_2}
\Delta_t^{(1)}(\mathfrak{v}_1^{([-1:t])} ) = \sum_{s \in [1:t]} \mathfrak{v}_1^{(s)} \mathfrak{f}_{s}^{(t),\top } + \mathfrak{d}_{0}^{(t),\top } \in \R^{n\times q}.
\end{align}
Now plugging (\ref{ineq:lem:se_truncated_aux_GFOM_2}) into the first equation of (\ref{ineq:lem:se_truncated_aux_GFOM_1}), we have
\begin{align*}
\mathfrak{f}_{s}^{(t) } &= I_q \bm{1}_{s=t}+\sum_{r \in [s+1:t]} \big(\mathfrak{g}_r^{(t)}+I_q\bm{1}_{r=t}\big)\mathfrak{f}_{s}^{(r-1) } \in \mathbb{M}_q,\quad s \in [1:t],\\
\mathfrak{d}_{0}^{(t) } &= \sum_{r \in [1:t]} \big(\mathfrak{g}_r^{(t)}+I_q\bm{1}_{r=t}\big) \mathfrak{d}_{0}^{(r-1) } + \mathfrak{g}_0^{(t)} (\mu_{\ast,n})_{[q]}^\top \in \R^{q\times n}.
\end{align*}
On the other hand, $\Delta_t^{(2)}\big(\mathfrak{v}_{[2:L]}^{([-1:t])} \big)$ is a  constant shift of $\sum_{s \in [1:t]} \mathfrak{v}^{(s)}_{[2:L]}$:
\begin{align*}
\Delta_t^{(2)}\big(\mathfrak{v}_{[2:L]}^{([-1:t])} \big) = \sum_{s \in [1:t]} \mathfrak{v}^{(s)}_{[2:L]}+\bigg(W_\alpha^{(0)}-\sum_{s \in [1:t]}\eta_\alpha^{(s-1)} M_\alpha^{(s-1)}\bigg)_{\alpha \in [2:L]} \in \R^{(n_{L,q}-n)\times q}.
\end{align*}
Moreover, with some simple calculations,
\begin{align*}
\mathrm{Cov}\big(\mathfrak{U}^{(t)}, \mathfrak{U}^{(s)} \big)&\equiv  \frac{1}{n}\sum_{\ell \in [n]}  \E^{(0)}    \Xi_{t-1,\ell} (\mathfrak{V}^{([-1:t-1])})     \Xi_{s-1,\ell} (\mathfrak{V}^{([-1:s-1])})^\top\\
& = \sum_{r\in [1:t-1],r' \in [1:s-1]}  \mathfrak{f}_{r}^{(t-1) } \Sigma_{r,r'} \mathfrak{f}_{r'}^{(s-1),\top } + \frac{ 1 }{n} \mathfrak{d}_{0}^{(t-1) } \mathfrak{d}_{0}^{(s-1),\top } \in \mathbb{M}_q,
\end{align*}
where $
\Sigma_{t,s}\equiv \mathrm{Cov}(\mathfrak{V}^{(t)},\mathfrak{V}^{(s)})$.

\noindent (\textbf{Step 3}). 
Under a change of notation with $\mathfrak{f}_{s}^{(t)}\equiv \rho_{t,s}$, $\mathfrak{g}_{s}^{(t)}\equiv \tau_{t,s}$ and $D_t\equiv \mathfrak{d}_{0}^{(t),\top}\in \R^{n\times q}$, we now summarize the above identifications into the following state evolution: Recall $\phi=m/n$. Initialize with 
\begin{itemize}
	\item $V^{(0)}=(V_{\alpha}^{(0)})_{\alpha \in [2:L]}\equiv \big(W_\alpha^{(0)}\big)_{\alpha \in [2:L]}\in (\mathbb{M}_q)^{[2:L]}$,
	\item $\mathfrak{U}^{(0)}\equiv \big(\sigma_{\mu_\ast}\mathsf{Z}\,|\,0_{1\times (q-1)}\big)^\top \in \R^q$ where $\sigma_{\mu_\ast}^2\equiv \pnorm{\mu_\ast}{}^2$,
	\item $D_{-1}\equiv (\mu_{\ast,n})_{[q]} = [n^{1/2}\mu_\ast\,|\, 0_{n\times (q-1)}] \in \R^{n\times q}$ and $D_0\equiv n^{1/2}W_1^{(0)} \in \R^{n\times q}$.
\end{itemize}
For $t=1,2,\ldots$, we compute recursively as follows:  
\begin{enumerate}
	\item Let $\Theta^{(t)}:\R^{m\times q[0:t]}\to \R^{m\times q}$ be defined via
	\begin{align*}
	\Theta^{(t)}(\mathfrak{u}^{([0:t])}) &\equiv \mathfrak{u}^{(t)}- \phi^{-1}\sum_{s \in [1:t-1]} \eta^{(s-1)}_1\cdot \mathfrak{S}\big(\Theta^{(s)}(\mathfrak{u}^{([0:s])}) ,\mathfrak{u}^{(0)},V^{(s-1)}\big)\rho_{t-1,s}^\top.
	\end{align*}
	\item Let the Gaussian law of $\mathfrak{U}^{(t)} \in \R^q$ be determined via the following correlation specification: for $s \in [0:t]$, 
	\begin{align*}
	\mathrm{Cov}\big(\mathfrak{U}^{(t)}, \mathfrak{U}^{(s)} \big)\equiv  \sum_{r\in [1:t-1],r' \in [1:s-1]} \rho_{t-1,r} \Sigma_{r,r'} \rho_{s-1,r'}^\top + \frac{ 1 }{n} D_{t-1}^\top  D_{s-1 } \in \mathbb{M}_q.
	\end{align*}
	\item For $s \in [1:t]$, compute $\{\tau_{t,s}\}, \{\rho_{t,s}\}, \{\Sigma_{t,s}\}\subset \mathbb{M}_q$ as follows: 
	\begin{align*}
	\tau_{t,s}&\equiv - \eta_1^{(t-1)}\E^{(0)} \nabla_{\mathfrak{U}^{(s)}} \mathfrak{S}_{\pi_m}\big(\Theta_{\pi_m}^{(t)}(\mathfrak{U}^{([0:t])}),\mathfrak{U}^{(0)},V^{(t-1)}\big) \in \mathbb{M}_q,\\
	\rho_{t,s} &\equiv I_q \bm{1}_{s=t}+\sum_{r \in [s+1:t]}  \big(\tau_{t,r}+I_q\bm{1}_{r=t}\big) \rho_{r-1,s}\in \mathbb{M}_q,\\
	\Sigma_{t,s}&\equiv \phi^{-1}\cdot \eta_1^{(t-1)} \eta_1^{(s-1)}\cdot \E^{(0)}  \mathfrak{S}_{\pi_m}\big(\Theta_{\pi_m}^{(t)}(\mathfrak{U}^{([0:t])}),\mathfrak{U}^{(0)},V^{(t-1)}\big)\\
	&\qquad \times \mathfrak{S}_{\pi_m}\big(\Theta_{\pi_m}^{(s)}(\mathfrak{U}^{([0:s])}),\mathfrak{U}^{(0)},V^{(s-1)}\big)^\top \in \mathbb{M}_q.
	\end{align*}
	\item Compute $\delta_t \in \mathbb{M}_q$ and $D_t \in \R^{n\times q}$ as follows:
	\begin{align*}
	\delta_t&\equiv - \eta_1^{(t-1)}\E^{(0)} \nabla_{\mathfrak{U}^{(0)}} \mathfrak{S}_{\pi_m}\big(\Theta_{\pi_m}^{(t)}(\mathfrak{U}^{([0:t])}),\mathfrak{U}^{(0)},V^{(t-1)}\big)\in \mathbb{M}_q,\\
	D_t&\equiv \sum_{r \in [1:t]} D_{r-1} \big(\tau_{t,r}^\top+I_q\bm{1}_{r=t}\big) + D_{-1}\delta_t^\top \in \R^{n\times q}.
	\end{align*}
	\item Compute $M^{(t-1)}  = (M_{\alpha}^{(t-1)})_{\alpha \in [2:L]}, V^{(t)}  = (V_{\alpha}^{(t)})_{\alpha \in [2:L]}\in (\mathbb{M}_q)^{[2:L]}$ as follows: for $\alpha \in [2:L]$,
	\begin{align*}
	M_{\alpha}^{(t-1)}&\equiv  \E^{(0)}\mathscr{G}_{V^{(t-1)}_{[2:\alpha-1]};\pi_m }\big(\Theta_{\pi_m}^{(t)}(\mathfrak{U}^{([0:t])}) \big) \\
	&\qquad\qquad \times  \big(\mathfrak{T}_{\alpha}\big)_{\pi_m}\big(\Theta_{\pi_m}^{(t)}(\mathfrak{U}^{([0:t])}) ,\mathfrak{U}^{(0)},V^{(t-1)}\big)^\top \in \mathbb{M}_q,\\
	V_{\alpha}^{(t)}&\equiv  V_\alpha^{(t-1)}- \eta_\alpha^{(t-1)} M_\alpha^{(t-1)}=\cdots =W_\alpha^{(0)}-\sum_{s \in [1:t]}\eta_\alpha^{(s-1)} M_{\alpha}^{(s-1)} \in \mathbb{M}_q.
	\end{align*}
\end{enumerate}
Setting $\Upsilon^{(t)}(\mathfrak{u}^{([0:t])})\equiv \phi^{-1} \eta_1^{(t-1)} \mathfrak{S}\big(\Theta^{(t)}(\mathfrak{u}^{([0:t])}) ,\mathfrak{u}^{(0)},V^{(t-1)}\big)$, eliminating $M^{(t)}$, as well as noting the definition for $\Omega_{t,s}$ leads to the desired state evolution. 
\end{proof}

We next control the difference between the state evolution parameters. Let 
\begin{align*}
\Delta\check{\mathcal{B}}_{\mathfrak{p}}^{(t)}&\equiv \max_{s \in [0:t]} \Big\{\pnorm{\Delta \check{\tau}_{t,s}}{\op}+\sup_{\psi \in \mathrm{Lip}(\mathfrak{p})}\bigabs{\E^{(0)}\psi(\check{\mathfrak{U}}^{([0:t])}) - \E^{(0)}\psi(\mathfrak{U}^{([0:t])})  }\Big\}\\
&\qquad  + \max_{s \in [1:t]} \Big\{\pnorm{ \Delta\check{\rho}_{t,s}}{\op}+ \pnorm{\Delta\check{\Sigma}_{t,s}}{\op} + \pnorm{\Delta \check{\Omega}_{t,s}}{\op} \Big\}\\
&\qquad + \frac{\pnorm{\Delta \check{D}_t}{}^2}{n}+ \max_{\alpha \in [2:L]} \big\{\pnorm{\Delta \check{V}^{(t)}_\alpha}{\op}+\pnorm{\Delta\check{M}_{\alpha}^{(t)}}{\op}\big\}.
\end{align*}
Here we let $\Delta \check{\ast}\equiv \check{\ast}-\ast$, and $\check{\mathcal{B}}^{(t)}$ be defined analogous to that in Proposition \ref{prop:apr_est_se}. Moreover, the suprema over $\psi:\R^{[0:t]}\to \R$ runs over all pseudo-Lipschitz functions of order $\mathfrak{p}$.

\begin{lemma}\label{lem:Delta_B_control}
	Suppose Assumption \ref{assump:gd_NN} holds for some $K,\Lambda\geq 2$ and $r_0\geq 0$. There exists a large constant $c_t\equiv c_t(t,q,L,r_0)>0$ and another constant $c_t'\equiv c_t'(t,\mathfrak{p},q,L,r_0)>0$, such that if we choose $\min_{s \in [1:t]}c_{0;s}\geq c_t$ in the definition of $\{\mathsf{b}_s\}_{s \in [1:t]}$ in (\ref{def:S_bar}), then
	\begin{align*}
	\Delta\check{\mathcal{B}}_{\mathfrak{p}}^{(t)}\leq \big(K\Lambda\varkappa_{\ast}\log n\big)^{c_t'}\cdot n^{-50}.
	\end{align*}
\end{lemma}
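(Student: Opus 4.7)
The plan is to prove the lemma by induction on $t$, exploiting the fact that the truncation $\chi_{\mathsf{b}_t}$ only modifies $\mathfrak{S}$ and $\mathfrak{T}_\alpha$ on a tail event of polynomially small probability once $c_{0;t}$ is chosen large enough relative to the apriori bounds in Proposition \ref{prop:apr_est_se}. The base case $t=0$ is immediate since $\check{V}^{(0)}=V^{(0)}$, $\check{\mathfrak{U}}^{(0)}=\mathfrak{U}^{(0)}$, $\check{\Omega}_{\cdot,-1}=\check{\Omega}_{\cdot,0}=0_{q\times q}$, and $\check{D}_{-1}=D_{-1}, \check{D}_0=D_0$ by definition, so $\Delta\check{\mathcal{B}}_{\mathfrak{p}}^{(0)}=0$.

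For the inductive step, I would introduce the good event
\[
E_t \equiv \Big\{\max_{s \in [0:t]}\max_{k \in [m]}\pnorm{\Theta_{k}^{(s)}(\mathfrak{U}^{([0:s])})}{\infty}+\max_{s \in [0:t]}\pnorm{\mathfrak{U}^{(s)}}{\infty}+\pnorm{\varphi_\ast(\mathfrak{U}^{(0)})}{\infty}+\pnorm{\xi}{\infty}\leq \mathsf{b}_t/C_0\Big\}
\]
for a suitable constant $C_0 \equiv C_0(q,L)$. Proposition \ref{prop:apr_est_se} controls $\pnorm{\Theta_k^{(s)}(\mathfrak{U}^{([0:s])})}{\infty}$ linearly in $\pnorm{\mathfrak{U}^{([0:s])}}{\infty}+\pnorm{\varphi_\ast(\mathfrak{U}^{(0)})}{\infty}$ with prefactor $(K\Lambda\varkappa_{\ast})^{c_t}$, while the Gaussian vector $\mathfrak{U}^{([0:t])}$ has covariance bounded by $(K\Lambda\varkappa_{\ast})^{c_t}$ and $\varphi_\ast$ has polynomial growth by (A5). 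A standard sub-Gaussian tail bound then shows that for $\mathsf{b}_t=(K\Lambda\varkappa_{\ast}\log n)^{c_{0;t}}$ with $c_{0;t}$ chosen large enough (depending on $t,q,L,r_0$), one has $\Prob^{(0)}(E_t^c)\leq n^{-200}$. On $E_t$, the apriori bound in Proposition \ref{prop:apr_est_det_fcn}-(6) yields $\pnorm{\mathfrak{S}(\Theta^{(s)},\mathfrak{U}^{(0)},V^{(s-1)})}{\infty}\vee \pnorm{\mathfrak{T}_\alpha(\Theta^{(s)},\mathfrak{U}^{(0)},V^{(s-1)})}{\infty}\leq \mathsf{b}_t$ for all $s \in [1:t]$ and $\alpha \in [2:L]$, so the truncation is inactive and $\check{\mathfrak{S}}_t=\mathfrak{S}$, $\check{\mathfrak{T}}_{\alpha;t}=\mathfrak{T}_\alpha$ together with their gradients coincide pointwise on $E_t$.

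Each component of $\Delta\check{\mathcal{B}}_{\mathfrak{p}}^{(t)}$ then decomposes into a bulk contribution on $E_t$ and a tail contribution on $E_t^c$. The tail piece is handled by Cauchy--Schwarz using the polynomial moment bounds from Proposition \ref{prop:apr_est_se} combined with $\Prob^{(0)}(E_t^c)^{1/2}\leq n^{-100}$, which already yields the stated $n^{-50}$ factor with room to spare. For the bulk piece, the identity $\check{\mathfrak{S}}_t=\mathfrak{S}$, $\check{\mathfrak{T}}_{\alpha;t}=\mathfrak{T}_\alpha$ on $E_t$ reduces each difference to a Lipschitz comparison of the \emph{same} functional ($\mathfrak{S}$, $\mathfrak{T}_\alpha$ or their gradients) evaluated at $(\Theta^{(\cdot)},\mathfrak{U}^{(0)},V^{(\cdot)})$ versus $(\check{\Theta}^{(\cdot)},\check{\mathfrak{U}}^{(0)},\check{V}^{(\cdot)})$; the Lipschitz stability bounds in Propositions \ref{prop:apr_est_det_fcn}-(7) and \ref{prop:apr_est_derivative}-(8) then express this in terms of $\Delta\check{\mathcal{B}}_{\mathfrak{p}'}^{(s)}$ for $s<t$ (for a suitably inflated $\mathfrak{p}'$) plus the Gaussian law discrepancy $\sup_{\psi}|\E^{(0)}\psi(\check{\mathfrak{U}}^{([0:t])})-\E^{(0)}\psi(\mathfrak{U}^{([0:t])})|$, which via (S2) is controlled through $\Delta\check{\Omega}_{t-1,\cdot}$ and $\Delta\check{D}_{t-1}$.

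The main obstacle is the circular interdependence of the components of $\Delta\check{\mathcal{B}}_{\mathfrak{p}}^{(t)}$ \emph{within} iteration $t$: $\Delta\check{\tau}_{t,s}$ involves the law of $\check{\mathfrak{U}}^{(t)}$ and $\check{\Theta}^{(t)}$, which themselves depend on $\Delta\check{\rho}_{t-1,\cdot}$ and $\Delta\check{V}^{(t-1)}$; and $\Delta\check{V}^{(t)}_\alpha$ depends on $\Delta\check{M}^{(t-1)}_\alpha$. To close the induction cleanly, I would process the quantities in the natural causal order dictated by Definition \ref{def:gd_NN_se}: first bound the Gaussian law discrepancy and $\Delta\check{\Theta}^{(t)}$ via (S1)--(S2); then $\Delta\check{\tau}_{t,\cdot}, \Delta\check{\rho}_{t,\cdot}, \Delta\check{\Sigma}_{t,\cdot}, \Delta\check{\Omega}_{t,\cdot}, \Delta\check{\delta}_t$ via (S3); then $\Delta\check{D}_t$ via (S4); and finally $\Delta\check{V}^{(t)}_\alpha, \Delta\check{M}^{(t)}_\alpha$ via (S5). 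At each sub-step only earlier within-level quantities and the inductive hypothesis at level $t-1$ enter the Lipschitz bound, so the induction closes with per-level loss of the form $(K\Lambda\varkappa_{\ast}\log n)^{c_t'}\cdot n^{-100}$, absorbed into $(K\Lambda\varkappa_{\ast}\log n)^{c_t'}\cdot n^{-50}$ as stated.
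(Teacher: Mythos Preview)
Your proposal is correct and follows essentially the same route as the paper's proof: define a high-probability good event on which the truncation is inactive, decompose into bulk and tail, and propagate the differences through (S1)--(S5) via the stability estimates of Propositions~\ref{prop:apr_est_det_fcn} and~\ref{prop:apr_est_derivative}, closing an induction of the form $\Delta\check{\mathcal{B}}_{\mathfrak{p}}^{(t)}\leq (K\Lambda\varkappa_{\ast}\log n)^{c_t}\big(\Delta\check{\mathcal{B}}_{\mathfrak{p}}^{(t-1)}+n^{-50}\big)$. The one point the paper makes explicit that your write-up elides is that your event $E_t$ is stated only for the \emph{unchecked} law $\mathfrak{U}^{([0:t])}$, whereas to remove the truncation on the checked side you also need $\|\check{\mathfrak{U}}^{([0:t])}\|_\infty$ and $\|\check{\Theta}^{(s)}(\check{\mathfrak{U}}^{([0:s])})\|_\infty$ to be small; the paper secures this upfront by noting that the proof of Proposition~\ref{prop:apr_est_se} applies verbatim to the checked state evolution (uniformly in the choice of $\{c_{0;s}\}$), yielding the apriori bound (\ref{ineq:Delta_B_control_1}) and hence the two-sided event (\ref{ineq:Delta_B_control_2}).
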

\begin{proof}
The proof for Proposition \ref{prop:apr_est_se} shows that there exists some $c_{1;t}=c_{1;t}(t,q,L,r_0)>0$ such that for any choice of $\{c_{0;s}\}_{s\in [1:t]}$ in the definition of $\{\mathsf{b}_s\}_{s \in [1:t]}$ in (\ref{def:S_bar}), we have 
\begin{align}\label{ineq:Delta_B_control_1}
\check{\mathcal{B}}^{(t)}+ \max_{s \in [1:t]} \frac{\pnorm{\check{\mathfrak{S}}_{s}\big(\check{\Theta}^{(s)}(\check{\mathfrak{u}}^{([0:s])}),\check{\mathfrak{u}}^{(0)},V^{(s-1)}\big)}{\infty} }{ 1+\pnorm{\check{\mathfrak{u}}^{([1:s])}}{\infty}+ \pnorm{\varphi_\ast(\check{\mathfrak{u}}^{(0)})}{\infty}  }\leq \big(K\Lambda\varkappa_{\ast}\log n\big)^{c_{1;t}}.
\end{align} 
Consequently, there exists some $c_{2;t}=c_{2;t}(t,q,L,r_0)>0$ such that for any choice of $\{c_{0;s}\}_{s\in [1:t]}$, on an event $E_0$ with probability at least $1-c_{2;t} n^{-100}$,
\begin{align}\label{ineq:Delta_B_control_2}
\max_{s \in [0:t]} \big\{ \pnorm{ \mathfrak{U}^{(s)} }{\infty}\vee \pnorm{ \check{\mathfrak{U}}^{(s)} }{\infty} \big\}\leq \big(K\Lambda\varkappa_{\ast}\log n\big)^{c_{2;t}}.
\end{align}
Combining (\ref{ineq:Delta_B_control_1})-(\ref{ineq:Delta_B_control_2}), with the choice $\{c_{0;s}\geq c_{1;t}+r_0 c_{2;t}+2\}_{s \in [1:t]}$ in the definition of $\{\mathsf{b}_s\}_{s \in [1:t]}$ (which means for any given $t$, we may choose the same $\{c_{0;s}\}_{s \in [1:t]}$ above the prescribed threshold), on the event $E_0$ we have 
\begin{align}\label{ineq:Delta_B_control_3}
\check{\Theta}^{(t)}(\check{\mathfrak{U}}^{([0:t])}) &= \check{\mathfrak{U}}^{(t)}-\phi^{-1} \sum_{s \in [1:t-1]} \eta^{(s-1)}_1\cdot \mathfrak{S}\big(\check{\Theta}^{(s)}(\check{\mathfrak{U}}^{([0:s])}),\check{\mathfrak{U}}^{(0)},\check{V}^{(s-1)}\big)\check{\rho}_{t-1,s},\nonumber\\
\check{\Upsilon}^{(t)}(\check{\mathfrak{U}}^{([0:t])}) &= \phi^{-1}\eta_1^{(t-1)} \mathfrak{S}\big(\check{\Theta}^{(t)}(\check{\mathfrak{U}}^{([0:t])}) ,\check{\mathfrak{U}}^{(0)},\check{V}^{(t-1)}\big).
\end{align}
By (S2) and the estimate (\ref{ineq:Delta_B_control_1}), on the event $E_0$,
\begin{align}\label{ineq:Delta_B_control_4}
\sup_{\psi \in \mathrm{Lip}(\mathfrak{p})}\bigabs{\E^{(0)}\psi(\check{\mathfrak{U}}^{([0:t])}) - \E^{(0)}\psi(\mathfrak{U}^{([0:t])})  }\leq \big(K\Lambda\varkappa_{\ast}\log n\big)^{c_{t}}\cdot \Delta\check{\mathcal{B}}_{\mathfrak{p}}^{(t-1)}.
\end{align}
By (S3), we have for $s \in [0:t]$,
\begin{align*}
&\pnorm{\Delta \check{\tau}_{t,s}}{\op}\leq  \eta_1^{(t-1)} \bigpnorm{\E^{(0)}\nabla_{(s)}\mathfrak{S}_{\pi_m}\big(\check{\Theta}^{(t)}(\check{\mathfrak{U}}^{([0:t])}) ,\check{\mathfrak{U}}^{(0)},\check{V}^{(t-1)}\big)\bm{1}_{E_0}\\
&\qquad - \E^{(0)}\nabla_{(s)}\mathfrak{S}_{\pi_m}\big({\Theta}^{(t)}({\mathfrak{U}}^{([0:t])}) ,{\mathfrak{U}}^{(0)},{V}^{(t-1)}\big)\bm{1}_{E_0}  }{\op}+\big(K\Lambda\varkappa_{\ast}\log n\big)^{c_{t}}\cdot n^{-50}.
\end{align*}
With some patient calculations, similar to Proposition \ref{prop:apr_est_derivative}, there exists some universal $c_0>0$ such that for any $k,k' \in [m]$ and $\ell,\ell' \in [q]$,
\begin{align*}
&\pnorm{\partial^2_{(u_{k\ell},u_{k'\ell'}) } \mathfrak{S}\big(u,w, v\big)}{\infty} \leq  (q\Lambda \kappa_{v})^{c_0 L} \cdot \big(1+\pnorm{u }{\infty}+\pnorm{w}{\infty}+\pnorm{\xi}{\infty}\big)^{c_0r_0}.
\end{align*}
Combining the above two displays, we have
\begin{align}\label{ineq:Delta_B_control_5}
\max_{s \in [0:t]}\pnorm{\Delta \check{\tau}_{t,s}}{\op}\leq\big(K\Lambda\varkappa_{\ast}\log n\big)^{c_{t}}\cdot \big(\Delta\check{\mathcal{B}}_{\mathfrak{p}}^{(t-1)}+n^{-50}\big).
\end{align}
Similar (or simpler) arguments combined with (\ref{ineq:Delta_B_control_4}) lead to
\begin{align}\label{ineq:Delta_B_control_6}
&\max_{s \in [1:t]} \Big\{\pnorm{ \Delta\check{\rho}_{t,s}}{\op}+ \pnorm{\Delta\check{\Sigma}_{t,s}}{\op} + \pnorm{\Delta \check{\Omega}_{t,s}}{\op} \Big\} \nonumber\\
&\leq \big(K\Lambda\varkappa_{\ast}\log n\big)^{c_{t}}\cdot \big(\Delta\check{\mathcal{B}}_{\mathfrak{p}}^{(t-1)}+n^{-50}\big).
\end{align}
By (S4), we have
\begin{align}\label{ineq:Delta_B_control_7}
\frac{\pnorm{\Delta \check{D}_t}{}^2}{n}\leq \big(K\Lambda\varkappa_{\ast}\log n\big)^{c_{t}}\cdot \Delta\check{\mathcal{B}}_{\mathfrak{p}}^{(t-1)}.
\end{align}
By (S5), we have similarly 
\begin{align}\label{ineq:Delta_B_control_8}
\max_{\alpha \in [2:L]} \big\{\pnorm{\Delta \check{V}^{(t)}_\alpha}{\op}+\pnorm{\Delta\check{M}_{\alpha}^{(t)}}{\op}\big\}\leq \big(K\Lambda\varkappa_{\ast}\log n\big)^{c_{t}}\cdot \big(\Delta\check{\mathcal{B}}_{\mathfrak{p}}^{(t-1)}+n^{-50}\big).
\end{align}
Now combining (\ref{ineq:Delta_B_control_4})-(\ref{ineq:Delta_B_control_8}), we have
\begin{align*}
\Delta\check{\mathcal{B}}_{\mathfrak{p}}^{(t)}\leq \big(K\Lambda\varkappa_{\ast}\log n\big)^{c_{t}}\cdot \big(\Delta\check{\mathcal{B}}_{\mathfrak{p}}^{(t-1)}+n^{-50}\big).
\end{align*}
Iterating the above estimate and using the initial condition $\Delta\check{\mathcal{B}}_{\mathfrak{p}}^{(1)}\leq \big(K\Lambda\varkappa_{\ast}\log n\big)^{c_{t}}\cdot n^{-50}$ to conclude. 
\end{proof}

We finally compare the GFOM iterations $\{(u^{(t)},v^{(t)})\}$ in (\ref{def:GFOM_auxiliary_NN}) and  $\{(\check{u}^{(t)},\check{v}^{(t)})\}$ in (\ref{def:GFOM_bar_auxiliary_NN}). Let us write $\Delta \check{u}^{(t)}\equiv \check{u}^{(t)}-u^{(t)}$ and $\Delta \check{v}^{(t)}\equiv \check{v}^{(t)}-v^{(t)}$.

\begin{lemma}\label{lem:diff_u_baru}
Suppose Assumption \ref{assump:gd_NN} holds for some $K,\Lambda\geq 2$ and $r_0\geq 0$. There exists a large constant $c_t\equiv c_t(t,q,L,r_0)>0$ and another constant $c_t'\equiv c_t'(t,q,L,r_0)>0$, such that if we choose $\min_{s \in [1:t]}c_{0;s}\geq c_t$ in the definition of $\{\mathsf{b}_s\}_{s \in [1:t]}$ in (\ref{def:S_bar}), then
\begin{align*}
\pnorm{\Delta \check{u}^{(t)}}{}\vee \pnorm{\Delta \check{v}^{(t)}}{}&\leq \big(K\Lambda\varkappa_{\ast}\log n\big)^{c_t'}\cdot n^{-50}.
\end{align*}
\end{lemma}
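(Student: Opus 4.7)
\textbf{Proof plan for Lemma \ref{lem:diff_u_baru}.} I would argue by induction on $t$, working on a high-probability event $\mathcal{E}_t$ on which (i) the delocalization bound of Proposition \ref{prop:GD_linfty_est} holds for $\{(u^{(s)}, v^{(s)})\}_{s \in [0:t]}$, (ii) the operator norm $\pnorm{A}{\op}\leq C$ for a universal constant $C$, and (iii) the state evolution proximity $\Delta\check{\mathcal{B}}^{(t)}_{\mathfrak{p}}$ is small by Lemma~\ref{lem:Delta_B_control}. Together $\mathcal{E}_t$ has probability at least $1-c_t n^{-100}$ and the ambient tail event is absorbed into the $n^{-50}$ loss. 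The base case $t=0$ is trivial since $\check{u}^{(0)}=A\overline{\mu}_{\ast,n}=u^{(0)}$ and $\check{v}^{(0)}=\bm{W}_n^{(0)}=v^{(0)}$, so $\Delta\check{u}^{(0)}=\Delta\check{v}^{(0)}=0$.

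For the inductive step, the control on $\Delta\check{u}^{(t)}$ is immediate from $\Delta\check{u}^{(t)} = A\,\Delta\check{v}^{(t-1)}$, giving $\pnorm{\Delta\check{u}^{(t)}}{}\leq C\pnorm{\Delta\check{v}^{(t-1)}}{}$ on $\mathcal{E}_t$. For $\Delta\check{v}^{(t)}$ I would decompose
\begin{align*}
\Delta\check{v}^{(t)}
&= A^\top\big[\check{\mathsf{G}}_t^{\langle 2\rangle}(\check{u}^{([-1:t])})-\mathsf{G}_t^{\langle 2\rangle}(u^{([-1:t])})\big] \\
&\qquad + \Delta\check{v}^{(t-1)} - \Big(0_{n\times q},\;\big\{\eta_\alpha^{(t-1)}(\check{M}_\alpha^{(t-1)}-M_\alpha^{(t-1)})\big\}_{\alpha\in[2:L]}\Big).
\end{align*}
The matrix $\check{M}_\alpha^{(t-1)}-M_\alpha^{(t-1)}$ is handled directly by Lemma~\ref{lem:Delta_B_control}. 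For the first term I would split the integrand further as
\begin{align*}
\check{\mathfrak{S}}_t(\check{u}^{(t)},\check{u}^{(0)},\check{V}^{(t-1)})
&- \mathfrak{S}(\check{u}^{(t)},\check{u}^{(0)},\check{V}^{(t-1)}) \\
&+ \mathfrak{S}(\check{u}^{(t)},\check{u}^{(0)},\check{V}^{(t-1)}) - \mathfrak{S}(u^{(t)},u^{(0)},V^{(t-1)}).
\end{align*}
The second line is controlled by the Lipschitz bound in Proposition \ref{prop:apr_est_det_fcn}-(7), producing a factor linear in $\pnorm{\Delta\check{u}^{(t)}}{}+\pnorm{\Delta\check{V}^{(t-1)}}{\infty}$ times a polynomial in the delocalized $\ell_\infty$ norms; by the inductive hypothesis and Lemma~\ref{lem:Delta_B_control} these inherit the $n^{-50}$ bound. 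The first line is zero once I ensure that $\mathfrak{S}(\check{u}^{(t)},\check{u}^{(0)},\check{V}^{(t-1)})$ is bounded entrywise by $\mathsf{b}_t/2$: using Proposition \ref{prop:apr_est_det_fcn}-(6) together with the delocalization of $u^{(\cdot)}$, the inductive proximity $\check{u}^{(t)}\approx u^{(t)}$, and the boundedness $\pnorm{\check{V}^{(t-1)}}{\op}\leq\pnorm{V^{(t-1)}}{\op}+\pnorm{\Delta\check{V}^{(t-1)}}{\op}$, the $\ell_\infty$ norm of $\mathfrak{S}$ on these arguments is at most $(K\Lambda\varkappa_\ast\log n)^{c_t''}$ for some $c_t''$ depending only on $t,q,L,r_0$. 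Choosing $\min_{s\leq t}c_{0;s}$ larger than $c_t''$ (consistently with the threshold demanded by Lemma~\ref{lem:Delta_B_control}) ensures the truncation never activates, making the first line vanish on $\mathcal{E}_t$.

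Combining the three estimates and applying the operator norm bound to $A^\top$ closes the induction, yielding $\pnorm{\Delta\check{u}^{(t)}}{}\vee\pnorm{\Delta\check{v}^{(t)}}{}\leq (K\Lambda\varkappa_\ast\log n)^{c_t'}\cdot n^{-50}$ after aggregating the $n^{-100}$ failure probabilities with the trivial apriori norm bounds from Proposition~\ref{prop:GD_l2_est}. The principal obstacle is the bookkeeping of the constants $\{c_{0;s}\}_{s\leq t}$: they must be simultaneously large enough for the truncation $\check{\mathfrak{S}}_s$ to be transparent on the delocalized iterates and consistent with the threshold used by Lemma~\ref{lem:Delta_B_control}. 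This is resolved by fixing a single threshold $c_t^\ast(t,q,L,r_0)$ dominating both, which in turn forces the exponents $c_t,c_t'$ in the final bound to depend on $t,q,L,r_0$ in an unspecified polynomial way—consistent with the stated dependence.
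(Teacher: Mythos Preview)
Your proposal is correct and follows essentially the same strategy as the paper: work on a high-probability event where $\pnorm{A}{\op}$ is bounded and the iterates are delocalized, show that the truncation $\check{\mathfrak{S}}_t$ is inactive there so the checked iterates satisfy the untruncated recursion, then compare via the stability estimate Proposition~\ref{prop:apr_est_det_fcn}-(7) together with Lemma~\ref{lem:Delta_B_control} for the $\check{V}^{(t-1)}$ and $\check{M}_\alpha^{(t-1)}$ discrepancies, iterating from the trivial base case. The only cosmetic difference is that the paper applies the delocalization Proposition~\ref{prop:GD_linfty_est} directly to $(\check{u}^{(\cdot)},\check{v}^{(\cdot)})$ (the truncated maps being bounded, hence pseudo-Lipschitz), whereas you transfer delocalization from $(u^{(\cdot)},v^{(\cdot)})$ via the inductive $\ell_2$ proximity; both routes yield the same bound.
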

\begin{proof}
Note that a standard subgaussian estimate shows that on an event $E_0$ with probability at least $1-e^{-n}$, we have $\pnorm{A}{\op}\leq c_0 K$ for some universal $c_0>1$. Moreover, using Proposition \ref{prop:GD_linfty_est} (which equally applies to $\{(\check{u}^{(t)},\check{v}^{(t)})\}$ in (\ref{def:GFOM_bar_auxiliary_NN})), there exists some $c_{1;t}\equiv c_{1;t}(t,q,L,r_0)>1$ such that on an event $E_1$ with probability at least $1-c_{1;t} n^{-100}$, 
\begin{align}\label{ineq:diff_u_baru_1}
\max_{s \in [0:t]} \big\{\pnorm{u^{(s)}}{\infty}\vee \pnorm{v^{(s)}}{\infty}\vee \pnorm{\check{u}^{(s)}}{\infty}\vee \pnorm{\check{v}^{(s)}}{\infty} \big\}\leq \big(K\Lambda\varkappa_{\ast}\log n\big)^{c_{1;t} }.
\end{align}
Now by Proposition \ref{prop:apr_est_det_fcn}-(6), with $\{c_{0;s}\}_{s \in [1:t]}$ chosen as $\{c_{0;s}\geq 100r_0 c_{1;t}\}_{s \in [1:t]}$, on $E_1$, $(\check{u}^{(*)},\check{v}^{(*)})=(u^{(*)},v^{(*)})$ for $*=-1,0$, and for $s=1,\ldots,t$,
\begin{align}\label{ineq:diff_u_baru_2}
\check{u}^{(s)}&= A \check{v}^{(s-1)},\\
\check{v}^{(s)}& =  A^\top 
\Big[
-\phi^{-1}\eta^{(s-1)}_1\cdot  \mathfrak{S}\big(\check{u}^{(s)},\check{u}^{(0)},\check{V}^{(s-1)}\big) \Big]+\check{v}^{(s-1)}-
\begin{pmatrix}
0_{n\times q}\\
\Big\{\eta^{(s-1)}_{\alpha} \check{M}_\alpha^{(s)}\Big\}_{\alpha \in [2:L]}
\end{pmatrix}.\nonumber
\end{align}
Comparing the above display with the alternative definition of the auxiliary GFOM $\{(u^{(t)},v^{(t)})\}$ in (\ref{def:GFOM_auxiliary_NN_1}), on the event $E_0\cap E_1$, for $s \in [1:t]$, we have $\pnorm{\Delta \check{u}^{(s)}}{}\leq c_0 K\cdot  \pnorm{\Delta \check{v}^{(s-1)}}{}$, and
\begin{align*}
\pnorm{\Delta \check{v}^{(s)}}{}&\leq \pnorm{\Delta \check{v}^{(s-1)}}{}+ (K\Lambda)^{c_0}\cdot \bigg(\sum_{\alpha \in [2:L]} \pnorm{\Delta \check{M}_\alpha^{(s-1)}}{}\\
&\qquad + \bigpnorm{\mathfrak{S}\big(\check{u}^{(s)},\check{u}^{(0)},\check{V}^{(s-1)}\big) -\mathfrak{S}\big(u^{(s)},u^{(0)},V^{(s-1)}\big) }{}\bigg).
\end{align*}
Let us choose $\{c_{0;s}\}_{s \in [1:t]}$ be as in Lemma \ref{lem:Delta_B_control} and which satisfies $\{c_{0;s}\geq 100r_0c_{1;t}\}_{s \in [1:t]}$ as above. Using Proposition \ref{prop:apr_est_det_fcn}-(7) and Lemma \ref{lem:Delta_B_control}, along with the estimate in (\ref{ineq:diff_u_baru_1}) and the estimate for $\check{\mathcal{B}}^{(t)}$ in (\ref{ineq:Delta_B_control_1}), on $E_0\cap E_1$ we have for all $s \in [1:t]$,
\begin{align*}
\pnorm{\Delta \check{v}^{(s)}}{}&\leq \pnorm{\Delta \check{v}^{(s-1)}}{}+ \big(K\Lambda\varkappa_{\ast}\log n\big)^{c_t}\cdot \big(n^{-50}+\pnorm{\Delta \check{u}^{(s)}}{}\big).
\end{align*}
Now using the trivial estimate $\pnorm{\Delta \check{u}^{(s)}}{}\leq \pnorm{A}{\op} \pnorm{\Delta \check{v}^{(s-1)}}{}$, on $E_0\cap E_1$ we have for all $s \in [1:t]$,
\begin{align*}
\pnorm{\Delta \check{v}^{(s)}}{}&\leq \big(K\Lambda\varkappa_{\ast}\log n\big)^{c_t}\cdot \big(n^{-50}+\pnorm{\Delta \check{v}^{(s-1)}}{}\big).
\end{align*}
The claimed error bound follows by iterating the above estimate with initial condition $\pnorm{\Delta \check{v}^{(0)}}{}=0$.
\end{proof}

\begin{proof}[Proof of Proposition \ref{prop:se_aux_GFOM}]
The claimed state evolution now follows by a combination of Lemmas \ref{lem:se_truncated_aux_GFOM}, \ref{lem:Delta_B_control} and \ref{lem:diff_u_baru}. 
\end{proof}

\subsection{Error control between the GD and the auxiliary GFOM: Proof of Proposition \ref{prop:err_gd_GFOM}}\label{subsection:proof_err_gd_GFOM}

	In the proof we write $\sigma_1\equiv \sigma$ for notational simplicity. First note that $\hat{u}^{(t)}=u^{(t)}$ and $\hat{v}^{(t)}=v^{(t)}$ for $t=-1,0$. Recall the notation in (\ref{def:hat_P_G_R}). For $t\geq 1$, let
	\begin{align*}
	\mathscr{P}_{  V^{(s)}_{[2:L]} }^{(\alpha:L]}\equiv {\mathscr{P}}_{(s)}^{(\alpha:L]},\quad \mathscr{G}_{V^{(s)}_{[2:\alpha]} }\equiv {\mathscr{G}}_{(s);\alpha},\quad \mathscr{R}_{V^{(s)}_{[2:L]} }\equiv {\mathscr{R}}_{(s)}.
	\end{align*}
	Then the auxiliary GFOM in (\ref{def:GFOM_auxiliary_NN}) can be written as $u^{(t)}\equiv A v^{(t-1)} \in \R^{m\times q}$ and
	\begin{align}\label{ineq:err_gd_GFOM_1}
	v^{(t)}
	&  = A^\top 
	\Big[
	-\phi^{-1}\eta^{(t-1)}_1\cdot \mathscr{P}_{ (t-1) }^{(1:L]}\big(u^{(t)},\mathscr{R}_{(t-1)}(u^{(t)},u^{(0)})\big) \odot \sigma'(u^{(t)})\Big]\nonumber\\
	&\quad +v^{(t-1)}-
	\begin{pmatrix}
	0_{n\times q}\\
	\Big\{\eta^{(t-1)}_{\alpha} M_\alpha^{(t-1)}\Big\}_{\alpha \in [2:L]}
	\end{pmatrix}.
	\end{align}
	Comparing (\ref{ineq:err_gd_GFOM_1}) and $\{(\hat{u}^{(t)},\hat{v}^{(t)})\}$ in (\ref{def:GD_GFOM_NN}), we then have 
	\begin{align}\label{ineq:err_gd_GFOM_2}
	n^{-1/2}\pnorm{\Delta\hat{v}_1^{(t)}}{} &\leq n^{-1/2}\pnorm{\Delta \hat{v}_1^{(t-1)}}{}+K\Lambda \pnorm{A}{\op}\nonumber\\
	&\quad \times n^{-1/2}\bigpnorm{\mathscr{P}_{ (t-1) }^{(1:L]}\big(u^{(t)},\mathscr{R}_{(t-1)}(u^{(t)},u^{(0)})\big) \odot \sigma'(u^{(t)})\nonumber\\
		&\quad\quad\quad  -\hat{\mathscr{P}}_{ (t-1) }^{(1:L]}\big(\hat{u}^{(t)},\hat{\mathscr{R}}_{(t-1)}(\hat{u}^{(t)},\hat{u}^{(0)})\big) \odot \sigma'(\hat{u}^{(t)})  }{},
	\end{align}
	and
	\begin{align}\label{ineq:err_gd_GFOM_3}
	&\pnorm{\Delta  \hat{v}_{[2:L]}^{(t)}}{\infty}\leq \pnorm{\Delta \hat{v}_{[2:L]}^{(t-1)}}{\infty} +\Lambda\cdot \max_{\alpha \in [2:L]}\bigpnorm{m^{-1}\hat{\mathscr{G}}_{(t-1);\alpha-1}(\hat{u}^{(t)})^\top\nonumber\\
	&\qquad  \Big[\hat{\mathscr{P}}_{ (t-1) }^{(\alpha:L]}\Big(\hat{u}^{(t)},\hat{\mathscr{R}}_{(t-1)}(\hat{u}^{(t)},\hat{u}^{(0)})\Big) \odot \hat{\mathscr{G}}_{(t-1);\alpha}'(\hat{u}^{(t)})\Big]-M_\alpha^{(t-1)}}{\infty}.
	\end{align}
	In the following, we shall provide bounds for the right hand sides of (\ref{ineq:err_gd_GFOM_2}) and (\ref{ineq:err_gd_GFOM_3}) that lead to a recursive estimate for $n^{-1/2}\pnorm{\Delta\hat{v}_1^{(t)}}{}$ and $\pnorm{\Delta  \hat{v}_{[2:L]}^{(t)}}{\infty}$.
	
	\noindent (\textbf{Step 1}). In this step, we prove that for any $\mathfrak{r}\geq 1$, there exists some $c_t=c_t(t,q,L,\mathfrak{r},r_0)>0$ such that
	\begin{align}\label{ineq:err_gd_GFOM_step1}
	&\E^{(0)}\Big(n^{-1/2}\bigpnorm{\mathscr{P}_{ (t-1) }^{(1:L]}\big(u^{(t)},\mathscr{R}_{(t-1)}(u^{(t)},u^{(0)})\big) \odot \sigma'(u^{(t)})\nonumber\\
	&\qquad -\hat{\mathscr{P}}_{ (t-1) }^{(1:L]}\big(\hat{u}^{(t)},\hat{\mathscr{R}}_{(t-1)}(\hat{u}^{(t)},\hat{u}^{(0)})\big) \odot \sigma'(\hat{u}^{(t)})  }{}\Big)^{\mathfrak{r}}\nonumber\\
	&\leq \big(K\Lambda\varkappa_{\ast}\log n\big)^{c_t}\cdot\big[\E^{(0)}\big(n^{-1/2}\pnorm{\Delta \hat{v}_1^{(t-1)} }{}\big)^{\mathfrak{r}}+ n^{-1/c_t}\big].
	\end{align}	
	To prove (\ref{ineq:err_gd_GFOM_step1}), note that
	\begin{align}\label{ineq:err_gd_GFOM_step1_1}
	&\bigpnorm{\mathscr{P}_{ (t-1) }^{(1:L]}\big(u^{(t)},\mathscr{R}_{(t-1)}(u^{(t)},u^{(0)})\big) \odot \sigma'(u^{(t)})\nonumber\\
		&\qquad -\hat{\mathscr{P}}_{ (t-1) }^{(1:L]}\big(\hat{u}^{(t)},\hat{\mathscr{R}}_{(t-1)}(\hat{u}^{(t)},\hat{u}^{(0)})\big) \odot \sigma'(\hat{u}^{(t)})  }{}\nonumber\\
	&\leq \pnorm{\mathscr{P}_{ (t-1) }^{(1:L]}\big(u^{(t)},\mathscr{R}_{(t-1)}(u^{(t)},u^{(0)})\big) }{\infty}\cdot \pnorm{\sigma'(\hat{u}^{(t)})-\sigma'(u^{(t)}) }{}\nonumber\\
	&\quad + \pnorm{\sigma'}{\infty}\cdot \bigpnorm{\mathscr{P}_{ (t-1) }^{(1:L]}\big(u^{(t)},\mathscr{R}_{(t-1)}(u^{(t)},u^{(0)})\big) -\hat{\mathscr{P}}_{ (t-1) }^{(1:L]}\big(\hat{u}^{(t)},\hat{\mathscr{R}}_{(t-1)}(\hat{u}^{(t)},\hat{u}^{(0)})\big)  }{}\nonumber\\
	&\equiv I_1+I_2.
	\end{align}
	For $I_1$, using the estimates in Propositions \ref{prop:apr_est_det_fcn} and \ref{prop:apr_est_se}, we have
	\begin{align*}
	I_1&\leq (q\Lambda)^{c_0 L}\cdot \kappa_{V_{[2:L]}^{(t-1)}}\cdot \pnorm{\mathscr{R}_{(t-1)}(u^{(t)},u^{(0)})}{\infty}\cdot \pnorm{\Delta\hat{u}^{(t)} }{}\\
	&\leq (q\Lambda)^{c_0 L}\cdot \kappa_{V_{[2:L]}^{(t-1)}}\cdot\Big(\bigpnorm{ \mathscr{G}_{ V_{[2:L]}^{(t-1)} }(u^{(t)}) }{\infty}+\pnorm{ \varphi_\ast(u^{(0)}) }{\infty}+\pnorm{\xi}{\infty}\Big)\cdot \pnorm{\Delta\hat{u}^{(t)} }{}\\
	&\leq \big(K\Lambda\varkappa_{\ast}\log n\big)^{c_t}\cdot\big(1+\pnorm{ u^{(t)} }{\infty}+\pnorm{u^{(0)} }{\infty}^{r_0}\big)\cdot \pnorm{\Delta \hat{u}^{(t)} }{}.
	\end{align*}
	Using the delocalization estimate in Proposition \ref{prop:GD_linfty_est} and a standard subgaussian estimate for $\pnorm{A}{\op}$, with probability at least $1-n^{-100}$,
	\begin{align}\label{ineq:err_gd_GFOM_step1_2}
	I_1\leq \big(K\Lambda\varkappa_{\ast}\log n\big)^{c_t}\cdot\pnorm{\Delta \hat{v}_1^{(t-1)} }{}.
	\end{align}
	For $I_2$, again using the estimates in Propositions \ref{prop:apr_est_det_fcn} and \ref{prop:apr_est_se}, the $\ell_2$ and $\ell_\infty$ estimates in Propositions \ref{prop:GD_l2_est} and \ref{prop:GD_linfty_est}, and a standard subgaussian estimate for $\pnorm{A}{\op}$, with probability at least $1-n^{-100}$,
	\begin{align}\label{ineq:err_gd_GFOM_step1_3}
	I_2&\leq (q\Lambda)^{c_0 L} (\kappa_{V^{(t-1)}_{[2:L]}}\kappa_{v^{(t-1)}_{[2:L]}})^{c_0}\cdot \bigg\{ \bigpnorm{\mathscr{R}_{(t-1)}(u^{(t)},u^{(0)})-\hat{\mathscr{R}}_{(t-1)}(\hat{u}^{(t)},\hat{u}^{(0)}) }{}\nonumber\\
	&\qquad + \big(1+\pnorm{\hat{\mathscr{R}}_{(t-1)}(\hat{u}^{(t)},\hat{u}^{(0)})}{\infty}\wedge  \pnorm{\mathscr{R}_{(t-1)}(u^{(t)},u^{(0)})}{\infty}\big) \nonumber\\
	&\qquad \qquad \times \Big[ \pnorm{\Delta \hat{u}^{(t)} }{}+ \big(\sqrt{m}+\pnorm{u^{(t)}}{}+\pnorm{\hat{u}^{(t)}}{}\big) \cdot \pnorm{v^{(t-1)}_{[2:L]}-V^{(t-1)}_{[2:L]}}{\infty}\Big]\nonumber\\
	&\leq \big(K\Lambda\varkappa_{\ast}\pnorm{v^{(t-1)}_{[2:L]}}{\infty}\cdot \log n\big)^{c_t}\cdot \big(1+\pnorm{ u^{(t)} }{\infty}+\pnorm{u^{(0)} }{\infty}^{r_0}+m^{-1/2}\pnorm{\hat{u}^{(t)}}{}\big)\nonumber\\
	&\qquad \times \Big[\pnorm{ \Delta \hat{u}^{(t)} }{}+ \sqrt{m}\cdot \pnorm{v^{(t-1)}_{[2:L]}-V^{(t-1)}_{[2:L]}}{\infty}\Big]\nonumber\\
	&\leq \big(K\Lambda\varkappa_{\ast}\log n\big)^{c_t}\cdot\Big[\pnorm{\Delta \hat{v}_1^{(t-1)} }{}+ \sqrt{m}\cdot \pnorm{v^{(t-1)}_{[2:L]}-V^{(t-1)}_{[2:L]}}{\infty}\Big].
	\end{align}
	Now by combining (\ref{ineq:err_gd_GFOM_step1_1})-(\ref{ineq:err_gd_GFOM_step1_3}) and using the state evolution in Proposition \ref{prop:se_aux_GFOM} that provides moment controls for $\pnorm{v^{(t-1)}_{[2:L]}-V^{(t-1)}_{[2:L]}}{\infty}$, on an event $E_1$ with $\Prob^{(0)}(E_1^c)\leq 2n^{-100}$,  we have 
	\begin{align*}
	&\E^{(0)}\Big(n^{-1/2}\bigpnorm{\mathscr{P}_{ (t-1) }^{(1:L]}\big(u^{(t)},\mathscr{R}_{(t-1)}(u^{(t)},u^{(0)})\big) \odot \sigma'(u^{(t)})\nonumber\\
	&\qquad -\hat{\mathscr{P}}_{ (t-1) }^{(1:L]}\big(\hat{u}^{(t)},\hat{\mathscr{R}}_{(t-1)}(\hat{u}^{(t)},\hat{u}^{(0)})\big) \odot \sigma'(\hat{u}^{(t)})  }{}\Big)^{\mathfrak{r}}\bm{1}_{E_1}\nonumber\\
	&\leq \big(K\Lambda\varkappa_{\ast}\log n\big)^{c_t}\cdot\big[\E^{(0)}\big(n^{-1/2}\pnorm{\Delta \hat{v}_1^{(t-1)} }{}\big)^{\mathfrak{r}}+ n^{-1/c_t}\big].
	\end{align*}
	From here we may use the apriori estimates in Propositions \ref{prop:apr_est_det_fcn} and \ref{prop:GD_l2_est} to conclude the claimed inequality in (\ref{ineq:err_gd_GFOM_step1}).
	
	\noindent (\textbf{Step 2}). In this step, we prove that for any $\mathfrak{r}\geq 1$, there exists some $c_t=c_t(t,q,L,\mathfrak{r},r_0)>0$ such that
	\begin{align}\label{ineq:err_gd_GFOM_step2}
	&\E^{(0)}\bigpnorm{m^{-1}\hat{\mathscr{G}}_{(t-1);\alpha-1}(\hat{u}^{(t)})^\top \Big[\hat{\mathscr{P}}_{ (t-1) }^{(\alpha:L]}\Big(\hat{u}^{(t)},\hat{\mathscr{R}}_{(t-1)}(\hat{u}^{(t)},\hat{u}^{(0)})\Big) \odot \hat{\mathscr{G}}_{(t-1);\alpha}'(\hat{u}^{(t)})\Big]-M_\alpha^{(t-1)}}{\infty}^{\mathfrak{r}}\nonumber\\
	&\leq \big(K\Lambda\varkappa_{\ast}\log n\big)^{c_t}\cdot \Big[\E^{(0)}\big(n^{-1/2}\pnorm{\Delta \hat{v}_1^{(t-1)} }{}\big)^{\mathfrak{r}}  +\E^{(0)} \pnorm{\Delta \hat{v}_{[2:L]}^{(t-1)}}{\infty}^{\mathfrak{r}}+n^{-1/c_t}\Big].
	\end{align}
	To this end, note that for $d,d' \in [q]$,
	\begin{align*}
	&\bigg(m^{-1}\hat{\mathscr{G}}_{(t-1);\alpha-1}(\hat{u}^{(t)})^\top \Big[\hat{\mathscr{P}}_{ (t-1) }^{(\alpha:L]}\Big(\hat{u}^{(t)},\hat{\mathscr{R}}_{(t-1)}(\hat{u}^{(t)},\hat{u}^{(0)})\Big) \odot \hat{\mathscr{G}}_{(t-1);\alpha}'(\hat{u}^{(t)})\Big]\bigg)_{dd'}\\
	& = \frac{1}{m}\sum_{k \in [m]} \Big[\mathscr{G}_{\hat{v}_{[2:\alpha-1]}^{(t-1)} }(\hat{u}^{(t)})\Big]_{kd} \Big[\mathscr{P}_{\hat{v}_{[2:L]}^{(t-1)} }^{(\alpha:L]}\Big(\hat{u}^{(t)},{\mathscr{R}}_{\hat{v}_{[2:L]}^{(t-1)} }(\hat{u}^{(t)},\hat{u}^{(0)})\Big)\Big]_{kd'} \Big[\mathscr{G}'_{ \hat{v}_{[2:\alpha]}^{(t-1)}}(\hat{u}^{(t)})\Big]_{kd'}\\
	& = \bigg(\frac{1}{m}\sum_{k \in [m]} \mathscr{G}_{\hat{v}_{[2:\alpha-1]}^{(t-1)};k }(\hat{u}_{k\cdot}^{(t)}) (\mathfrak{T}_{\alpha})_k\big(\hat{u}^{(t)}_{k\cdot},\hat{u}_{k\cdot}^{(0)},\hat{v}_{[2:L]}^{(t-1)}\big)^\top \bigg)_{dd'}\\
	&\equiv \bigg(\frac{1}{m}\sum_{k \in [m]} \psi_{\alpha;k} \big(\hat{u}^{(t)}_{k\cdot},\hat{u}_{k\cdot}^{(0)},\hat{v}_{[2:L]}^{(t-1)}\big) \bigg)_{dd'}.
	\end{align*}
	This means
	\begin{align}\label{ineq:err_gd_GFOM_step2_1}
	\hbox{LHS of (\ref{ineq:err_gd_GFOM_step2})}&\leq \E^{(0)} \biggpnorm{\frac{1}{m}\sum_{k \in [m]} \psi_{\alpha;k}\big(u^{(t)}_{k\cdot},u_{k\cdot}^{(0)},v_{[2:L]}^{(t-1)}\big)-M_\alpha^{(t-1)} }{\infty}^{\mathfrak{r}} \nonumber\\
	&\quad + \E^{(0)}\biggpnorm{\frac{1}{m}\sum_{k \in [m]} \Big(\psi_{\alpha;k}\big(u^{(t)}_{k\cdot},u_{k\cdot}^{(0)},v_{[2:L]}^{(t-1)}\big)- \psi_{\alpha;k}\big(\hat{u}^{(t)}_{k\cdot},\hat{u}_{k\cdot}^{(0)},\hat{v}_{[2:L]}^{(t-1)}\big)\Big) }{\infty}^{\mathfrak{r}}\nonumber\\
	& \equiv \E^{(0)}J_1^{\mathfrak{r}}+\E^{(0)}J_2^{\mathfrak{r}}.
	\end{align}
	First we consider $J_1$. Using Proposition \ref{prop:apr_est_det_fcn}-(2)(7) and the $\ell_\infty$ estimates in (1)(6), along with the apriori estimates in Proposition \ref{prop:apr_est_se} and the $\ell_2/\ell_\infty$ estimates in Propositions \ref{prop:GD_l2_est} and \ref{prop:GD_linfty_est}, we have on an event $E_{2,1}$ with probability at least $1-n^{-100}$,
	\begin{align*}
	&\biggpnorm{\frac{1}{m}\sum_{k \in [m]} \psi_{\alpha;k}\big(u^{(t)}_{k\cdot},u_{k\cdot}^{(0)},v_{[2:L]}^{(t-1)}\big)-\frac{1}{m}\sum_{k \in [m]} \psi_{\alpha;k}\big(u^{(t)}_{k\cdot},u_{k\cdot}^{(0)},V_{[2:L]}^{(t-1)}\big)}{\infty}\\
	&\leq \big(K\Lambda\varkappa_{\ast}\log n\big)^{c_t}\cdot \bigpnorm{v_{[2:L]}^{(t-1)}-V_{[2:L]}^{(t-1)}}{ \infty}. 
	\end{align*}
	This means on $E_{2,1}$,
	\begin{align*}
	J_1 &\leq \biggpnorm{\frac{1}{m}\sum_{k \in [m]} \psi_{\alpha;k}\big(u^{(t)}_{k\cdot},u_{k\cdot}^{(0)},V_{[2:L]}^{(t-1)}\big)-  \psi_{\alpha;k}\big(\Theta_{k}^{(t)}(\mathfrak{U}^{([0:t])}),\mathfrak{U}^{(0)},V_{[2:L]}^{(t-1)}\big)  }{\infty}\\
	&\qquad + \big(K\Lambda\varkappa_{\ast}\log n\big)^{c_t}\cdot \bigpnorm{v_{[2:L]}^{(t-1)}-V_{[2:L]}^{(t-1)}}{ \infty}.
	\end{align*}
	Using the apriori estimates in Proposition \ref{prop:apr_est_det_fcn}, the functions $\{\psi_{\alpha;k}(\cdot,\cdot,V^{(t-1)}_{[2:L]})\}_{k \in [m]}$ are $\big(K\Lambda\varkappa_{\ast}\log n\big)^{c_t}$-pseudo-Lipschitz of order $c_0r_0$, and therefore we may apply Proposition \ref{prop:se_aux_GFOM} to conclude that for any $\mathfrak{r}\geq 1$, $\E^{(0)} J_1^{\mathfrak{r}}\bm{1}_{E_{2,1}}\leq \big(K\Lambda\varkappa_{\ast}\log n\big)^{c_t} \cdot n^{-1/c_t}$. By apriori estimates, we are then led to 
	\begin{align}\label{ineq:err_gd_GFOM_step2_2}
	\E^{(0)} J_1^{\mathfrak{r}}\leq \big(K\Lambda\varkappa_{\ast}\log n\big)^{c_t} \cdot n^{-1/c_t}. 
	\end{align}
	Next we consider $J_2$. Similarly using Proposition \ref{prop:apr_est_det_fcn}-(2)(7) and the $\ell_\infty$ estimates in (1)(6), along with the apriori estimates in Proposition \ref{prop:apr_est_se} and the $\ell_2/\ell_\infty$ estimates in Propositions \ref{prop:GD_l2_est} and \ref{prop:GD_linfty_est}, by further using the trivial estimate $\pnorm{\Delta \hat{u}^{(t)}}{}\leq \pnorm{A}{\op}\pnorm{\Delta \hat{v}^{(t-1)} }{}$, we have on an event $E_{2,2}$ with probability at least $1-n^{-100}$,
	\begin{align*}
	J_2&\leq \big(K\Lambda\varkappa_{\ast}\log n\big)^{c_t}\cdot \Big[n^{-1/2}\pnorm{\Delta \hat{v}_1^{(t-1)}}{}+ \pnorm{\Delta \hat{v}_{[2:L]}^{(t-1)}}{\infty}\Big].
	\end{align*}
	Now using apriori estimates, we conclude that 
	\begin{align}\label{ineq:err_gd_GFOM_step2_3}
	\E^{(0)} J_2^{\mathfrak{r}}&\leq \big(K\Lambda\varkappa_{\ast}\log n\big)^{c_t}\cdot \Big[\E^{(0)}\big(n^{-1/2}\pnorm{\Delta \hat{v}_1^{(t-1)} }{}\big)^{\mathfrak{r}} +\E^{(0)} \pnorm{\Delta \hat{v}_{[2:L]}^{(t-1)} }{\infty}^{\mathfrak{r}}\Big].
	\end{align}
	Combining (\ref{ineq:err_gd_GFOM_step2_1})-(\ref{ineq:err_gd_GFOM_step2_3}) proves the claimed inequality in (\ref{ineq:err_gd_GFOM_step2}).

    \noindent (\textbf{Step 3}). Finally, combining the estimates in (\ref{ineq:err_gd_GFOM_2}), (\ref{ineq:err_gd_GFOM_3}),  (\ref{ineq:err_gd_GFOM_step1}) and (\ref{ineq:err_gd_GFOM_step2}), by further using the standard subgaussian estimate for $\pnorm{A}{\op}$, the apriori estimates in Proposition \ref{prop:apr_est_det_fcn} and the $\ell_2$ estimate in Proposition \ref{prop:GD_l2_est}, we arrive at the recursion:
	\begin{align*}
	& \E^{(0)}\big(n^{-1/2}\pnorm{\Delta \hat{v}_1^{(t)} }{}\big)^{\mathfrak{r}}  +\E^{(0)} \pnorm{\Delta \hat{v}_{[2:L]}^{(t)}}{\infty}^{\mathfrak{r}}\\
	& \leq \big(K\Lambda\varkappa_{\ast}\log n\big)^{c_t}\cdot \Big[\E^{(0)}\big(n^{-1/2}\pnorm{\Delta \hat{v}_1^{(t-1)} }{}\big)^{\mathfrak{r}}  +\E^{(0)} \pnorm{\Delta \hat{v}_{[2:L]}^{(t-1)}}{\infty}^{\mathfrak{r}}+n^{-1/c_t}\Big].
	\end{align*}
	The claim of the proposition follows by iterating the above estimate with the initial condition $\Delta \hat{v}^{(0)}=0$.\qed

\section{Proofs for Section \ref{section:gd_precise_dynamics}}

\subsection{Proof of Theorem \ref{thm:se_gd_NN}}\label{subsection:proof_se_gd_NN}

Recall the gradient descent iterate $\{(\hat{u}^{(t)},\hat{v}^{(t)})\}$ in (\ref{def:GD_GFOM_NN}); in particular, $\hat{u}^{(t)}=XW_1^{(t-1)}$, $\hat{v}^{(t)}_1=n^{1/2} W_1^{(t)}$ and $\hat{v}^{(t)}_\alpha= W_\alpha^{(t)}$ for $\alpha \in [2:L]$.  Note that an application of Propositions \ref{prop:err_gd_GFOM} and \ref{prop:GD_l2_est} shows that
\begin{align*}
&\E^{(0)} \bigg[\biggabs{\frac{1}{m}\sum_{k \in [m]} \psi_k\big(u_{k\cdot}^{([0:t])}\big) - \frac{1}{m}\sum_{k \in [m]} \psi_k\big(\hat{u}_{k\cdot}^{([0:t])}\big) }^{\mathfrak{r}}\bigg]\\
&\leq (K\Lambda)^{c_t }\cdot \E^{(0),1/2}\big(1+n^{-1/2}\pnorm{u^{([0:t])} }{}+n^{-1/2}\pnorm{\hat{u}^{([0:t])} }{}\big)^{2\mathfrak{r}}\\
&\qquad\qquad  \times \E^{(0),1/2}\big(n^{-1/2}\pnorm{ u^{([0:t])}-\hat{u}^{([0:t])}}{}\big)^{2\mathfrak{r}}\\
&\leq \big(K\Lambda\varkappa_{\ast}\log n\big)^{c_t}\cdot n^{-1/c_t}.
\end{align*}
The claimed state evolution for $\hat{u}^{([0:t])}=\big(X\mu_\ast,\{XW_1^{(s-1)}\}_{s \in [1:t]}\big)$ follows from the above display and Proposition \ref{prop:se_aux_GFOM}, i.e., for a sequence of $\Lambda$-pseudo-Lipschitz functions $\{\psi_k:\R^{q[0:t]} \to \R\}_{k \in [m]}$ of order $2$, we have 
\begin{align}\label{ineq:se_gd_NN_1}
&\E^{(0)} \bigg[\biggabs{\frac{1}{m}\sum_{k \in [m]} \psi_k\big(\hat{u}_{k\cdot}^{([0:t])}\big)- \frac{1}{m}\sum_{k \in [m]}  \E^{(0)}  \psi_k\Big( \big\{\Theta_{k}^{(s)}(\mathfrak{U}^{([0:s])})\big\}_{s \in [0:t]}\Big)  }^{\mathfrak{r}}\bigg]\nonumber\\
&\leq \big(K\Lambda\varkappa_{\ast}\log n\big)^{c_t} \cdot n^{-1/c_t}.
\end{align}
Using almost the same arguments, for a sequence of $\Lambda$-pseudo-Lipschitz functions $\{\phi_\ell:\R^{q[0:t]} \to \R\}_{\ell \in [n]}$ of order $2$, we have
\begin{align}\label{ineq:se_gd_NN_2}
&\E^{(0)}  \bigg[\biggabs{\frac{1}{n}\sum_{\ell \in [n]} \phi_\ell\big(\hat{v}_{\ell\cdot}^{([0:t])}\big) - \frac{1}{n}\sum_{\ell \in [n]}  \E^{(0)}  \phi_\ell\Big(\big\{\Delta_{\ell}^{(s)}(\mathfrak{V}^{([1:s])} )\big\}_{s \in [0:t]}\Big)   }^{\mathfrak{r}}\bigg]  \nonumber\\
&\qquad + \max_{\alpha \in [2:L]} \max_{s \in [1:t]} \E^{(0)} \pnorm{W_\alpha^{(s)}-V^{(s)}_\alpha}{\infty}^{\mathfrak{r}} \leq  \big(K\Lambda\varkappa_{\ast}\log n\big)^{c_{t}}\cdot n^{-1/c_t}.
\end{align}
In order to obtain joint distributional characterizations that include $\{U^{(\cdot)}\}$, note that in view of the definitions of $\mathscr{P}_v^{(1:L]},\mathscr{R}_v$ and $\mathfrak{S}$, we have
\begin{align}\label{ineq:se_gd_NN_3}
U^{(t-1)}&=X W_1^{(t-1)} +\phi^{-1}\sum_{s \in [1:t-1]} \eta^{(s-1)}_1\nonumber\\
&\qquad \times  \Big[\mathscr{P}_{W^{(s-1)}_{[2:L]}}^{(1:L]}\Big(XW_1^{(s-1)},\mathscr{R}_{W_{[2:L]}^{(s-1)}}(XW_1^{(s-1)},X\mu_{\ast,[q]})\Big)\odot \sigma_1'(XW_1^{(s-1)})\Big]\,\rho_{t-1,s}^\top\nonumber\\
&=X W_1^{(t-1)} +\phi^{-1}\sum_{s \in [1:t-1]} \eta^{(s-1)}_1 \mathfrak{S}\big(XW_1^{(s-1)},X\mu_{\ast,[q]},W_{[2:L]}^{(s-1)}\big)\,\rho_{t-1,s}^\top\nonumber\\
&=\hat{u}^{(t)} +\phi^{-1}\sum_{s \in [1:t-1]} \eta^{(s-1)}_1 \mathfrak{S}\big(\hat{u}^{(s)},\hat{u}^{(0)},W_{[2:L]}^{(s-1)}\big)\,\rho_{t-1,s}^\top.
\end{align}
This motivates us to define
\begin{align}\label{ineq:se_gd_NN_4}
\bar{U}^{(t-1)}
& \equiv \hat{u}^{(t)} +\phi^{-1}\sum_{s \in [1:t-1]} \eta^{(s-1)}_1 \mathfrak{S}\big(\hat{u}^{(s)},\hat{u}^{(0)},V_{[2:L]}^{(s-1)}\big)\,\rho_{t-1,s}^\top.
\end{align}
By Proposition \ref{prop:apr_est_det_fcn}-(7), the apriori estimates in Proposition \ref{prop:apr_est_se} and the $\ell_2$ estimates in Proposition \ref{prop:GD_l2_est}, with probability at least $1-n^{-100}$,
\begin{align*}
n^{-1/2} \pnorm{ \bar{U}^{(t-1)}-U^{(t-1)}}{}&\leq \big(K\Lambda\varkappa_{\ast}\log n\big)^{c_t}\cdot \max_{s \in [1:t-1]} \bigpnorm{W_{[2:L]}^{(s-1)}-V_{[2:L]}^{(s-1)}}{\infty}.
\end{align*}
Using (\ref{ineq:se_gd_NN_2}) and the aforementioned apriori/$\ell_2$ estimates, we have 
\begin{align}
\E^{(0)}\big(n^{-1/2} \pnorm{ \bar{U}^{(t-1)}-U^{(t-1)}}{}\big)^{\mathfrak{r}}&\leq \big(K\Lambda\varkappa_{\ast}\log n\big)^{c_t}\cdot n^{-1/c_t}.
\end{align}
To obtain state evolution for $\bar{U}^{(\cdot)}$, for $k \in [m]$, we write $\bar{U}_{k\cdot}^{(t-1)}\equiv \bar{\psi}_k(\hat{u}^{([0:t])})$. It is easy to see that on an event $E_0$ with $\Prob^{(0)}(E_0^c)\leq n^{-100}$, 
\begin{align*}
\bar{U}_{k\cdot}^{(t-1)}=\bar{\psi}_k\big(\mathscr{T}_{\varkappa_{\ast}\log n}(\hat{u}^{(0)}),\hat{u}^{([1:t])}\big)\equiv [\bar{\psi}_k]_n (\hat{u}^{([0:t])}).
\end{align*}
Using Proposition \ref{prop:apr_est_det_fcn}-(7) and the apriori estimates in Proposition \ref{prop:apr_est_se}, $\{[\bar{\psi}_k]_n: \R^{q[0:t]}\to \R\}_{k \in [m]}$ are $\big(K\Lambda\varkappa_{\ast}\log n\big)^{c_t}$-pseudo-Lipschitz functions of order $2$. This means we may now apply (\ref{ineq:se_gd_NN_1}) to obtain 
\begin{align*}
\big(\bar{U}^{(t-1)}\big)
 \stackrel{d}{\approx} \bigg(\Theta^{(t)}(\mathfrak{U}^{([0:t])}) +\phi^{-1}\sum_{s \in [1:t-1]} \eta^{(s-1)}_1\cdot \mathfrak{S}\big(\Theta^{(s)}(\mathfrak{U}^{([0:s])}) ,\mathfrak{U}^{(0)},V^{(s-1)}\big)\rho_{t-1,s}^\top\bigg),
\end{align*}
with an error bound $(K\Lambda\varkappa_{\ast}\log n)^{c_t}\cdot n^{-1/c_t}$ in the sense of (\ref{ineq:se_gd_NN_1}) (note that in the above display we slight abuse the notation $\mathfrak{U}^{(\cdot)}$ as a $m\times q$ matrix with i.i.d. rows of $\mathfrak{U}^{(\cdot)}\in \R^q$ in its strict form). Using the state evolution Definition \ref{def:gd_NN_se}-(S1), the right hand side of the above display is $(\mathfrak{U}^{(t)})$. \qed

\subsection{Proof of Proposition \ref{prop:theoretical_gd_se}}\label{subsection:proof_theoretical_gd_se}
We first define the following simplified state evolution.

\begin{definition}\label{def:gd_NN_se_large_sample}
	Initialize with $\bar{V}^{(0)}\equiv V^{(0)} = W_{[2:L]}^{(0)} \in (\mathbb{M}_q)^{[2:L]}$, $\bar{\mathfrak{U}}^{(0)}\stackrel{d}{\equiv } \mathfrak{U}^{(0)}\in \R^q$, and $\bar{D}_{-1}=D_{-1}=n^{1/2}\mu_{\ast,[q]},\bar{D}_0=D_0 = n^{1/2}W_1^{(0)} \in \R^{n\times q}$. For $t=1,2,\ldots$, we compute recursively as follows:  
	\begin{enumerate}
		\item[(SL1)] Let the Gaussian law of $\bar{\mathfrak{U}}^{(t)} \in \R^q$ be determined via the following correlation specification: for $s \in [0:t]$, 
		\begin{align*}
		\mathrm{Cov}(\bar{\mathfrak{U}}^{(t)}, \bar{\mathfrak{U}}^{(s)})\equiv  n^{-1}\cdot \bar{D}_{t-1}^\top  \bar{D}_{s-1 } \in \mathbb{M}_q.
		\end{align*}
		\item[(SL2)] Compute $\bar{D}_t \in \R^{n\times q}$ by
		\begin{align*}
		\bar{D}_t&\equiv \bar{D}_{t-1} \big(\bar{\tau}_{t}^\top+I_q\big)+\bar{D}_{-1}\bar{\delta}_{t}^\top\in \R^{n\times q},
		\end{align*}
		where 
		\begin{itemize}
			\item $\bar{\tau}_{t}\equiv - \eta_1^{(t-1)} \E^{(0)} \nabla_{\bar{\mathfrak{U}}^{(t)}}  \mathfrak{S}_{\pi_m}(\bar{\mathfrak{U}}^{(t)},\bar{\mathfrak{U}}^{(0)}, \bar{V}^{(t-1)}) \in \mathbb{M}_q$,
			\item $\bar{\delta}_{t}\equiv - \eta_1^{(t-1)} \E^{(0)} \nabla_{\bar{\mathfrak{U}}^{(0)}}  \mathfrak{S}_{\pi_m}(\bar{\mathfrak{U}}^{(t)}, \bar{\mathfrak{U}}^{(0)}, \bar{V}^{(t-1)}) \in \mathbb{M}_q$.
		\end{itemize}
		\item[(SL3)] Compute $\bar{V}^{(t)}  = (\bar{V}_{\alpha}^{(t)})_{\alpha \in [2:L]}\in (\mathbb{M}_q)^{[2:L]}$ as follows: for $\alpha \in [2:L]$,
		\begin{align*}
		\bar{V}_{\alpha}^{(t)}&\equiv \bar{V}_\alpha^{(t-1)}-\eta_\alpha^{(t-1)} \cdot \E^{(0)}\mathscr{G}_{\bar{V}^{(t-1)}_{[2:\alpha-1]};\pi_m }(\bar{\mathfrak{U}}^{(t)} )  \big(\mathfrak{T}_{\alpha}\big)_{\pi_m}\big(\bar{\mathfrak{U}}^{(t)}, \bar{\mathfrak{U}}^{(0)}, \bar{V}^{(t-1)}\big)^\top \in \mathbb{M}_q.
		\end{align*}
	\end{enumerate}
\end{definition}

The above state evolution provides a mid-ground for that in Proposition \ref{prop:theoretical_gd_se} and the original state evolution in Definition \ref{def:gd_NN_se}. In fact, Proposition \ref{prop:theoretical_gd_se} follows by the following.

\begin{proposition}\label{prop:population_gd_V}
	Fix $t\in \N$. Suppose $\phi^{-1}\leq K$ and (A2)-(A5) in Assumption \ref{assump:gd_NN} hold for some $K,\Lambda\geq 2$ and $r_0\geq 0$. Then $n^{1/2}\bar{W}_1^{(t)}=\bar{D}_t$ and $\bar{W}_\alpha^{(t)}=\bar{V}_\alpha^{(t)}$ for $\alpha \in [2:L]$.
\end{proposition}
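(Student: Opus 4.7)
The plan is a direct induction on $t \in \mathbb{Z}_{\geq 0}$, leveraging the fact that Proposition \ref{prop:theoretical_gd_se} has already done all the nontrivial work of differentiating the population risk and has already recast the update of $\bar{W}_1^{(t)}, \bar{W}_{[2:L]}^{(t)}$ in a closed recursive form. What remains is purely bookkeeping: verifying that the simplified state evolution of Definition \ref{def:gd_NN_se_large_sample} encodes exactly the same recursion under the identification $\bar{D}_{t-1} \leftrightarrow n^{1/2}\bar{W}_1^{(t-1)}$ and $\bar{V}^{(t-1)} \leftrightarrow \bar{W}_{[2:L]}^{(t-1)}$.

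The base case $t=0$ is immediate from the initializations: $n^{1/2}\bar{W}_1^{(0)} = n^{1/2}W_1^{(0)} = \bar{D}_0$ and $\bar{W}_\alpha^{(0)} = W_\alpha^{(0)} = \bar{V}_\alpha^{(0)}$ for $\alpha \in [2:L]$. For the inductive step, assume $n^{1/2}\bar{W}_1^{(t-1)} = \bar{D}_{t-1}$ and $\bar{W}_\alpha^{(t-1)} = \bar{V}_\alpha^{(t-1)}$. The key identification is a distributional equivalence between the Gaussian vector $(\bar{\mathfrak{U}}^{(t)}, \bar{\mathfrak{U}}^{(0)})$ defined via (SL1) and the pair $(\bar{W}_1^{(t-1),\top}\mathsf{Z}_n, \mu_{\ast,[q]}^\top\mathsf{Z}_n)$ appearing in Proposition \ref{prop:theoretical_gd_se}. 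Indeed, the covariances in (SL1), combined with $\bar{D}_{-1} = n^{1/2}\mu_{\ast,[q]}$ and the inductive hypothesis, yield
\begin{align*}
\mathrm{Cov}(\bar{\mathfrak{U}}^{(t)},\bar{\mathfrak{U}}^{(t)}) &= n^{-1}\bar{D}_{t-1}^\top \bar{D}_{t-1} = \bar{W}_1^{(t-1),\top}\bar{W}_1^{(t-1)},\\
\mathrm{Cov}(\bar{\mathfrak{U}}^{(0)},\bar{\mathfrak{U}}^{(0)}) &= n^{-1}\bar{D}_{-1}^\top \bar{D}_{-1} = \mu_{\ast,[q]}^\top\mu_{\ast,[q]},\\
\mathrm{Cov}(\bar{\mathfrak{U}}^{(t)},\bar{\mathfrak{U}}^{(0)}) &= n^{-1}\bar{D}_{t-1}^\top\bar{D}_{-1} = \bar{W}_1^{(t-1),\top}\mu_{\ast,[q]},
\end{align*}
which match the covariance structure of $(\bar{W}_1^{(t-1),\top}\mathsf{Z}_n, \mu_{\ast,[q]}^\top\mathsf{Z}_n)$ exactly.

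Combined with $\bar{V}^{(t-1)} = \bar{W}_{[2:L]}^{(t-1)}$, this equivalence forces the scalars $\bar{\tau}_t,\bar{\delta}_t$ defined in Proposition \ref{prop:theoretical_gd_se} to coincide term-by-term with those in (SL2). A direct substitution then gives
\begin{align*}
n^{1/2}\bar{W}_1^{(t)} &= n^{1/2}\bar{W}_1^{(t-1)}(\bar{\tau}_t^\top + I_q) + n^{1/2}\mu_{\ast,[q]}\bar{\delta}_t^\top\\
&= \bar{D}_{t-1}(\bar{\tau}_t^\top + I_q) + \bar{D}_{-1}\bar{\delta}_t^\top = \bar{D}_t.
\end{align*}
Applying the same distributional equivalence to the formula for $\bar{W}_\alpha^{(t)}$ in Proposition \ref{prop:theoretical_gd_se} and comparing to (SL3) yields $\bar{W}_\alpha^{(t)} = \bar{V}_\alpha^{(t)}$ for all $\alpha \in [2:L]$, closing the induction.

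There is essentially no obstacle here: Proposition \ref{prop:theoretical_gd_se} has already performed the gradient computation and exchangeability argument (using that rows of $X \sim \mathsf{Z}_{m\times n}$ are i.i.d.\ $\mathcal{N}(0,I_n)$), while Definition \ref{def:gd_NN_se_large_sample} is rigged precisely so that $\bar{\mathfrak{U}}^{(t)}$ represents the row distribution of $X\bar{W}_1^{(t-1)}$ under a Gaussian design. Since the population gradient descent is deterministic, no concentration or quantitative analysis is needed — the entire argument is an algebraic matching of two closed-form recursions.
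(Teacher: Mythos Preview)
Your inductive covariance-matching is correct, but the proposal is circular relative to the paper's logical structure. Although Proposition~\ref{prop:theoretical_gd_se} is \emph{stated} earlier, its proof (Section~\ref{subsection:proof_theoretical_gd_se}) proceeds by first introducing Definition~\ref{def:gd_NN_se_large_sample}, then stating and proving Proposition~\ref{prop:population_gd_V}, and only then reading off Proposition~\ref{prop:theoretical_gd_se} as a consequence (the paper writes ``In fact, Proposition~\ref{prop:theoretical_gd_se} follows by the following''). So invoking Proposition~\ref{prop:theoretical_gd_se} as ``already done'' here begs the question.

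The paper's self-contained proof computes $\nabla_{\bar W_1}\overline{\mathsf L}(\bar{\bm W}^{(t)}) = \E^{(0)}\mathsf Z_n\,\mathfrak S_{\pi_m}(\bar W_1^{(t),\top}\mathsf Z_n,\mu_{\ast,[q]}^\top\mathsf Z_n,\bar W_{[2:L]}^{(t)})^\top$ via Proposition~\ref{prop:grad_formula}, then applies Gaussian integration by parts to turn the $\mathsf Z_n$ factor into $\bar W_1^{(t)}\,[\E^{(0)}\nabla_u\mathfrak S_{\pi_m}]^\top + \mu_{\ast,[q]}\,[\E^{(0)}\nabla_w\mathfrak S_{\pi_m}]^\top$, and finally compares with the recursion for $\bar D_t$ in (SL2). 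That last comparison is implicitly your induction --- identifying $(\bar{\mathfrak U}^{(t)},\bar{\mathfrak U}^{(0)})$ with $(\bar W_1^{(t-1),\top}\mathsf Z_n,\mu_{\ast,[q]}^\top\mathsf Z_n)$ via matched covariances --- so the bookkeeping you wrote is exactly the finishing step the paper glosses over. What your proposal is missing is the Stein-identity computation that precedes it; once you insert that, you recover both propositions at once.
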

\begin{proof}
	Using Proposition \ref{prop:grad_formula}, with $X\sim \mathsf{Z}_{m\times n}$, for $\alpha \in [1:L]$ and any $\bm{W}$,
	\begin{align*}
	\nabla_{W_\alpha} \overline{\mathsf{L}}(\bm{W})& =  \frac{1}{m}\cdot \E^{(0)}  G_{\bm{W};\alpha-1}(X)^\top \big[\mathsf{P}_{X,\bm{W}}^{(\alpha:L]}\big(G_{\bm{W};L}(X)-Y_{[q]}\big)\odot G_{\bm{W};\alpha}'(X)\big].
	\end{align*}
	Consequently, for $\alpha=1$ and any $t\in \N$, we have
	\begin{align*}
	\nabla_{\bar{W}_1^{(t)}} \overline{\mathsf{L}}(\bar{\bm{W}}^{(t)})& =  \frac{1}{m}\E^{(0)}  X^\top \big[\mathsf{P}_{X,\bar{\bm{W}}^{(t)}}^{(1:L]}\big(G_{\bar{\bm{W}}^{(t)};L}(X)-Y_{[q]}\big)\odot \sigma_1'(X)\big]\\
	& = \frac{1}{m} \E^{(0)} X^\top \mathfrak{S}\big(X \bar{W}_1^{(t)}, X\mu_{\ast,[q]}, \bar{W}_{[2:L]}^{(t)} \big)\\
	& = \E^{(0)} \mathsf{Z}_n \mathfrak{S}_{\pi_m}\big(\bar{W}_1^{(t),\top}\mathsf{Z}_n, \mu_{\ast,[q]}^\top \mathsf{Z}_n, \bar{W}_{[2:L]}^{(t)} \big)^\top.
	\end{align*}
	Now using Gaussian integration-by-parts for the right hand side of the above display, we have for $i \in [n], d \in [q]$, 
	\begin{align*}
	&\Big[\nabla_{\bar{W}_1^{(t)}} \overline{\mathsf{L}}(\bar{\bm{W}}^{(t)})\Big]_{id} =  \E^{(0)} \mathsf{Z}_{n;i} \mathfrak{S}_{(\pi_m,d)}\big(\bar{W}_1^{(t),\top}\mathsf{Z}_n, \mu_{\ast,[q]}^\top \mathsf{Z}_n, \bar{W}_{[2:L]}^{(t)} \big)\\
	& = \sum_{c \in [q]} \E^{(0)} \partial_{u_{\pi_m, c}}\mathfrak{S}_{(\pi_m,d)}\big(\bar{W}_1^{(t),\top}\mathsf{Z}_n, \mu_{\ast,[q]}^\top \mathsf{Z}_n, \bar{W}_{[2:L]}^{(t)} \big)\cdot e_c^\top \bar{W}_1^{(t),\top}e_i\\
	&\qquad  + \sum_{c \in [q]} \E^{(0)} \partial_{w_{\pi_m, c}}\mathfrak{S}_{(\pi_m,d)}\big(\bar{W}_1^{(t),\top}\mathsf{Z}_n, \mu_{\ast,[q]}^\top \mathsf{Z}_n, \bar{W}_{[2:L]}^{(t)} \big)\cdot e_c^\top \mu_{\ast,[q]}^{\top}e_i\\
	& = \Big[ \bar{W}_1^{(t)} \E^{(0)} \nabla_u \mathfrak{S}_{\pi_m}\big(\bar{W}_1^{(t),\top}\mathsf{Z}_n, \mu_{\ast,[q]}^\top \mathsf{Z}_n, \bar{W}_{[2:L]}^{(t)} \big)^\top\\
	&\qquad + \mu_{\ast,[q]} \E^{(0)} \nabla_w \mathfrak{S}_{\pi_m}\big(\bar{W}_1^{(t),\top}\mathsf{Z}_n, \mu_{\ast,[q]}^\top \mathsf{Z}_n, \bar{W}_{[2:L]}^{(t)} \big)^\top\Big]_{id}.
	\end{align*}
	Using the above display in (\ref{def:grad_descent_population}) and comparing with the definition of $\bar{D}_t$ concludes that $n^{1/2}\bar{W}_1^{(t)}=\bar{D}_t$.
	
	For $\alpha \in [2:L]$ and any $t \in \N$, we have 
	\begin{align*}
	\nabla_{\bar{W}_\alpha^{(t)}} \overline{\mathsf{L}}(\bar{\bm{W}}^{(t)})& =  \E^{(0)}\mathscr{G}_{\bar{W}^{(t)}_{[2:\alpha-1]};\pi_m }\big(\bar{W}_1^{(t),\top}\mathsf{Z}_n\big)  \big(\mathfrak{T}_{\alpha}\big)_{\pi_m}\big(\bar{W}_1^{(t),\top}\mathsf{Z}_n, \mu_{\ast,[q]}^\top \mathsf{Z}_n, \bar{W}_{[2:L]}^{(t)}\big)^\top.
	\end{align*}
	Using the above display in (\ref{def:grad_descent_population}) and comparing with the definition of $\bar{V}_\alpha^{(t)}$ concludes that $\bar{W}_\alpha^{(t)}=\bar{V}_\alpha^{(t)}$.
\end{proof}

\subsection{Proof of Theorem \ref{thm:se_gd_NN_large_sample}}\label{subsection:proof_se_gd_NN_large_sample}

	We shall divide the proof into several steps.
	
	\noindent (\textbf{Step 1}). In this step, we derive some apriori estimates for the state evolution parameters in Definition \ref{def:gd_NN_se_large_sample}. In particular, we will show that for $t\geq 1$, there exists some $c_t\equiv c_t(t,q,L,r_0)>1$ such that
	\begin{align}\label{ineq:se_gd_NN_large_sample_step1}
	\bar{\mathcal{B}}^{(t)}&\equiv 1+\max_{s \in [1:t]} \Big\{\pnorm{\bar{\tau}_{s}}{\op}+\pnorm{\bar{\delta}_s}{\op}+n^{-1/2}\pnorm{\bar{D}_s}{}+\max_{\alpha \in [2:L]} \pnorm{\bar{V}^{(s)}_\alpha}{\op}\Big\} \nonumber\\
	&\qquad + \max_{0\leq s\leq r\leq t}\pnorm{\cov(\bar{\mathfrak{U}}^{(r)},\bar{\mathfrak{U}}^{(s)})}{\op} \leq  \big(K\Lambda\varkappa_{\ast}\big)^{c_t}.
	\end{align}
	First, by the definition in (SL1), 
	\begin{align}\label{ineq:se_gd_NN_large_sample_step1_1}
	\max_{0\leq s\leq r\leq t}\pnorm{\cov(\bar{\mathfrak{U}}^{(r)},\bar{\mathfrak{U}}^{(s)})}{\op}\leq \big(K\Lambda\varkappa_{\ast}\cdot \bar{\mathcal{B}}^{(t-1)}\big)^{c_t}.
	\end{align}
	Next, using the definitions of $\bar{\tau}_s,\bar{\delta}_s$, the derivative formulae in Proposition \ref{prop:derivative_formula}, and the apriori estimates in Proposition \ref{prop:apr_est_det_fcn} and (\ref{ineq:se_gd_NN_large_sample_step1_1}),
	\begin{align}\label{ineq:se_gd_NN_large_sample_step1_2}
	\max_{s \in [1:t]} \big\{\pnorm{\bar{\tau}_{s}}{\op}+\pnorm{\bar{\delta}_s}{\op}\big\}\leq \big(K\Lambda\varkappa_{\ast}\cdot \bar{\mathcal{B}}^{(t-1)}\big)^{c_t}.
	\end{align}
	Using the above display and the definition of $\bar{D}_s$, we then have
	\begin{align}\label{ineq:se_gd_NN_large_sample_step1_3}
	\max_{s \in [1:t]} n^{-1/2}\pnorm{\bar{D}_s}{}\leq  \big(K\Lambda\varkappa_{\ast}\cdot \bar{\mathcal{B}}^{(t-1)}\big)^{c_t}.
	\end{align}
	Finally using (SL3) and the apriori estimates in Proposition \ref{prop:apr_est_det_fcn} and (\ref{ineq:se_gd_NN_large_sample_step1_1}), we have 
	\begin{align}\label{ineq:se_gd_NN_large_sample_step1_4}
	\max_{s \in [1:t]}\max_{\alpha \in [2:L]} \pnorm{\bar{V}^{(s)}_\alpha}{\op}\leq \big(K\Lambda\varkappa_{\ast}\cdot \bar{\mathcal{B}}^{(t-1)}\big)^{c_t}.
	\end{align}
	The desired estimate (\ref{ineq:se_gd_NN_large_sample_step1}) follows by combining the estimates (\ref{ineq:se_gd_NN_large_sample_step1_1})-(\ref{ineq:se_gd_NN_large_sample_step1_4}) along with the initial condition $\bar{\mathcal{B}}^{(1)}\leq \big(K\Lambda\varkappa_{\ast}\big)^{c_t}$.
	
	\noindent (\textbf{Step 2}). In this step, we prove that for some $c_t\equiv c_t(t,q,L,r_0)>1$,
	\begin{align}\label{ineq:se_gd_NN_large_sample_step2}
	\Delta \bar{\mathcal{B}}^{(t)}&\equiv \max_{s \in [1:t]} \Big\{\pnorm{\bar{\tau}_s-\tau_{s,s}}{\op}+\pnorm{\bar{\delta}_s-\delta_s}{\op}+ n^{-1/2}\pnorm{\bar{D}_s-D_s}{} +\max_{\alpha \in [2:L]}  \pnorm{\bar{V}_\alpha^{(s)}-V_\alpha^{(s)} }{\op} \Big\} \nonumber\\
	&\quad  + \max_{0\leq s \leq r\leq t} \pnorm{\mathrm{Cov}(\bar{\mathfrak{U}}^{(r)}, \bar{\mathfrak{U}}^{(s)})-\mathrm{Cov}(\mathfrak{U}^{(r)}, \mathfrak{U}^{(s)}) }{\op}\leq \big(K\Lambda\varkappa_{\ast}\big)^{c_t}\cdot \phi^{-1}.
	\end{align}
	First, by the definitions for $\mathrm{Cov}(\bar{\mathfrak{U}}^{(r)}, \bar{\mathfrak{U}}^{(s)})$ and $\mathrm{Cov}({\mathfrak{U}}^{(r)}, {\mathfrak{U}}^{(s)})$, we have
	\begin{align*}
	&\pnorm{\mathrm{Cov}(\bar{\mathfrak{U}}^{(r)}, \bar{\mathfrak{U}}^{(s)})-\mathrm{Cov}(\mathfrak{U}^{(r)}, \mathfrak{U}^{(s)}) }{\op}\\
	&\leq n^{-1}\cdot \pnorm{\bar{D}_{r-1}^\top  \bar{D}_{s-1 }- {D}_{r-1}^\top  {D}_{s-1 }}{\op}+ \pnorm{\Omega_{r-1,s-1}}{\op}.
	\end{align*}
	Using the apriori estimates in (\ref{ineq:se_gd_NN_large_sample_step1}) and Proposition \ref{prop:apr_est_se} for the first term above, and the second estimate in Proposition \ref{prop:apr_est_se} for the second term above, we have 
	\begin{align}\label{ineq:se_gd_NN_large_sample_step2_1}
	&\max_{0\leq s \leq r\leq t}\pnorm{\mathrm{Cov}(\bar{\mathfrak{U}}^{(r)}, \bar{\mathfrak{U}}^{(s)})-\mathrm{Cov}(\mathfrak{U}^{(r)}, \mathfrak{U}^{(s)}) }{\op}\leq  \big(K\Lambda\varkappa_{\ast}\big)^{c_t}\cdot \big(\Delta \bar{\mathcal{B}}^{(t-1)}+\phi^{-1}\big).
	\end{align}
	Next, using the definitions of $\tau_{s,s}$ and $\bar{\tau}_s$, the stability estimate in Proposition \ref{prop:apr_est_derivative}-(8), and the apriori estimates in (\ref{ineq:se_gd_NN_large_sample_step1}) and Proposition \ref{prop:apr_est_se}, we have
	\begin{align*}
	&\pnorm{\bar{\tau}_s-\tau_{s,s}}{\op}\\
	&\leq \big(K\Lambda\varkappa_{\ast}\big)^{c_t}\cdot \Big(\max_{k \in [m]}\E^{(0),1/2} \pnorm{\Theta_{k}^{(s)}(\mathfrak{U}^{([0:s])})-\bar{\mathfrak{U}}^{(s)} }{\infty}^2+ \pnorm{\bar{V}^{(s-1)}-V^{(s-1)}}{\infty}  \Big).
	\end{align*}
	Using the second estimate in Proposition \ref{prop:apr_est_se}, we then conclude that 
	\begin{align}\label{ineq:se_gd_NN_large_sample_step2_2}
	\max_{s \in [1:t]} \pnorm{\bar{\tau}_s-\tau_{s,s}}{\op}\leq \big(K\Lambda\varkappa_{\ast}\big)^{c_t}\cdot \big(\Delta \bar{\mathcal{B}}^{(t-1)}+\phi^{-1}\big).
	\end{align}
	A completely similar argument leads to 
	\begin{align}\label{ineq:se_gd_NN_large_sample_step2_3}
	\max_{s \in [1:t]} \pnorm{\bar{\delta}_s-\delta_{s}}{\op}\leq \big(K\Lambda\varkappa_{\ast}\big)^{c_t}\cdot \big(\Delta \bar{\mathcal{B}}^{(t-1)}+\phi^{-1}\big).
	\end{align}
	From the definitions of $D_s$ and $\bar{D}_s$, we have
	\begin{align*}
	n^{-1/2}\pnorm{\bar{D}_s-D_s}{}& \leq n^{-1/2}\pnorm{D_{-1}\delta_s^\top- \bar{D}_{-1}\bar{\delta}_s^\top}{}+t\cdot \max_{r \in [1:s-1]} n^{-1/2}\pnorm{D_{r-1}}{}\cdot \pnorm{\tau_{s,r}}{\op}\\
	&\qquad + n^{-1/2}\bigpnorm{\bar{D}_{s-1} \big(\bar{\tau}_{s}^\top+I_q\big)- {D}_{s-1} \big({\tau}_{s,s}^\top+I_q\big)}{}.
	\end{align*}
	Using (i) $\bar{D}_{-1}=D_{-1}$ and (\ref{ineq:se_gd_NN_large_sample_step2_3}) for the first term above, (ii) both apriori estimates in Proposition \ref{prop:apr_est_se} for the second term above, and (iii) the apriori estimates in (\ref{ineq:se_gd_NN_large_sample_step1}) and Proposition \ref{prop:apr_est_se}, and the estimate in (\ref{ineq:se_gd_NN_large_sample_step2_2}) for the third term, we have
	\begin{align}\label{ineq:se_gd_NN_large_sample_step2_4}
	\max_{s \in [1:t]} n^{-1/2}\pnorm{\bar{D}_s-D_s}{}& \leq \big(K\Lambda\varkappa_{\ast}\big)^{c_t}\cdot \big(\Delta \bar{\mathcal{B}}^{(t-1)}+\phi^{-1}\big).
	\end{align}
	Finally, using the apriori and stability estimates in Proposition \ref{prop:apr_est_det_fcn}, and the apriori estimates in (\ref{ineq:se_gd_NN_large_sample_step1}) and Proposition \ref{prop:apr_est_se}, we may conclude that 
	\begin{align}\label{ineq:se_gd_NN_large_sample_step2_5}
	\max_{s \in [1:t]} \max_{\alpha \in [2:L]}  \pnorm{\bar{V}_\alpha^{(s)}-V_\alpha^{(s)} }{\op}\leq \big(K\Lambda\varkappa_{\ast}\big)^{c_t}\cdot \big(\Delta \bar{\mathcal{B}}^{(t-1)}+\phi^{-1}\big).
	\end{align}
	Combining (\ref{ineq:se_gd_NN_large_sample_step2_1})-(\ref{ineq:se_gd_NN_large_sample_step2_5}) yields that 
	\begin{align}\label{ineq:se_gd_NN_large_sample_step2_6}
	\Delta \bar{\mathcal{B}}^{(t)} \leq \big(K\Lambda\varkappa_{\ast}\big)^{c_t}\cdot \big(\Delta \bar{\mathcal{B}}^{(t-1)}+\phi^{-1}\big).
	\end{align}
	On the other hands, the same estimates (\ref{ineq:se_gd_NN_large_sample_step2_1})-(\ref{ineq:se_gd_NN_large_sample_step2_5}) yield that $\Delta \bar{\mathcal{B}}^{(1)}\leq \big(K\Lambda\varkappa_{\ast}\big)^{c_t}\cdot \phi^{-1}$, so iterating the above estimate (\ref{ineq:se_gd_NN_large_sample_step2_6}) yields the desired claim (\ref{ineq:se_gd_NN_large_sample_step2}).
	
	\noindent (\textbf{Step 3}). In this step, we prove that there exists some $c_t\equiv c_t(t,q,L,r_0)>1$ such that for fixed sequences of $\Lambda$-pseudo-Lipschitz functions $\{\psi_k:\R^{1+q[1:t]} \to \R\}_{k \in [m]}$ and $\{\phi_\ell:\R^{q[0:t]} \to \R\}_{\ell \in [n]}$ of order $2$, 
	\begin{align}\label{ineq:se_gd_NN_large_sample_step3}
	&\max_{k \in [m]}\bigabs{\E^{(0)}  \psi_k\big(\mathfrak{U}^{(0)}_1, \big\{\Theta_{k}^{(s)}(\mathfrak{U}^{([0:s])})\big\}_{s \in [1:t]}\big)-\E^{(0)}  \psi_k\big(\bar{\mathfrak{U}}^{(0)}_1, \{\bar{\mathfrak{U}}^{(s)}\}_{s \in [1:t]}\big) }\nonumber\\
	&\qquad + \frac{1}{n}\sum_{\ell \in [n]} \bigabs{\E^{(0)}  \phi_\ell\big(\big\{\Delta_{\ell}^{(s)}(\mathfrak{V}^{([1:s])} )\big\}_{s \in [0:t]}\big) -\phi_\ell\big(\big\{\bar{D}_{s;\ell}\big\}_{s \in [0:t]}\big)  }\nonumber\\
	&\leq \big(K\Lambda\varkappa_{\ast}\big)^{c_t}\cdot \phi^{-1}.
	\end{align}
	For the first term in (\ref{ineq:se_gd_NN_large_sample_step3}), as $\psi_k$ is $\Lambda$-pseudo-Lipschitz of order $2$, using the apriori estimates in (\ref{ineq:se_gd_NN_large_sample_step1}) and Proposition \ref{prop:apr_est_det_fcn}, along with the second estimate in Proposition \ref{prop:apr_est_se}, 
	\begin{align*}
	&\hbox{1st term in (\ref{ineq:se_gd_NN_large_sample_step3})} \leq \big(K\Lambda\varkappa_{\ast}\big)^{c_t}\cdot \phi^{-1}\\
	&\quad + \max_{k \in [m]}\bigabs{\E^{(0)}  \psi_k\big(\mathfrak{U}^{(0)}_1, \{{\mathfrak{U}}^{(s)}\}_{s \in [1:t]}\big)-\E^{(0)}  \psi_k\big(\bar{\mathfrak{U}}^{(0)}_1, \{\bar{\mathfrak{U}}^{(s)}\}_{s \in [1:t]}\big) }.
	\end{align*}
	Now by representing $\big(\mathfrak{U}^{(0)}_1, \{{\mathfrak{U}}^{(s)}\}_{s \in [1:t]}\big)$ and $\big(\bar{\mathfrak{U}}^{(0)}_1, \{\bar{\mathfrak{U}}^{(s)}\}_{s \in [1:t]}\big)$ in the canonical Gaussian form, and using the stability estimates for the covariance in (\ref{ineq:se_gd_NN_large_sample_step2}) in conjunction with the square-root estimate for covariance matrices (cf. \cite[Lemma A.3]{bao2023leave}), we arrive at 
	\begin{align}\label{ineq:se_gd_NN_large_sample_step3_1}
	\hbox{1st term in (\ref{ineq:se_gd_NN_large_sample_step3})} \leq \big(K\Lambda\varkappa_{\ast}\big)^{c_t}\cdot \phi^{-1}.
	\end{align}
	For the second term in (\ref{ineq:se_gd_NN_large_sample_step3}), in view of the definition of $\Delta^{(\cdot)}$ in (\ref{def:Delta}), again using the pseudo-Lipschitz property of $\phi_\ell$ and the apriori estimates (as indicated above), 
	\begin{align*}
	\hbox{2nd term in (\ref{ineq:se_gd_NN_large_sample_step3})} &\leq \big(K\Lambda\varkappa_{\ast}\big)^{c_t}\cdot \max_{s \in [1:t]}  \bigg(\frac{1}{n}\sum_{\ell \in [n]} \pnorm{D_{s;\ell}-\bar{D}_{s;\ell}}{}^2+\E^{(0)}\pnorm{\mathfrak{V}^{(s)}}{}^2\bigg)^{1/2}.
	\end{align*}
	Here recall the Gaussian laws of $(\mathfrak{V}^{(s)})$ defined above (\ref{def:Delta}). The first term above can be handled by using (\ref{ineq:se_gd_NN_large_sample_step2}), while the second term above can be handled by noting that $\cov(\mathfrak{V}^{(r)},\mathfrak{V}^{(s)})=\Sigma_{r,s}$ and using the second estimate in Proposition \ref{prop:apr_est_se}. This leads to 
	\begin{align}\label{ineq:se_gd_NN_large_sample_step3_2}
	\hbox{2nd term in (\ref{ineq:se_gd_NN_large_sample_step3})} \leq \big(K\Lambda\varkappa_{\ast}\big)^{c_t}\cdot \phi^{-1}.
	\end{align}
	The claim (\ref{ineq:se_gd_NN_large_sample_step3}) now follows from (\ref{ineq:se_gd_NN_large_sample_step3_1}) and (\ref{ineq:se_gd_NN_large_sample_step3_2}).
	
	\noindent (\textbf{Step 4}). In this step, we shall prove that for fixed sequences of $\Lambda$-pseudo-Lipschitz functions $\{\psi_k:\R^{1+q[1:t]} \to \R\}_{k \in [m]}$ and $\{\phi_\ell:\R^{q[0:t]} \to \R\}_{\ell \in [n]}$ of order $2$, 
	\begin{align}\label{ineq:se_gd_NN_large_sample_step4}
	&\E^{(0)} \bigg[\biggabs{\frac{1}{m}\sum_{k \in [m]} \psi_k\Big((X\mu_{\ast})_{k\cdot},\big\{(XW_1^{(s-1)})_{k\cdot}\big\}_{s \in [1:t]}\Big) - \frac{1}{m}\sum_{k \in [m]}  \E^{(0)}  \psi_k\big(\bar{\mathfrak{U}}^{(0)}_1, \{\bar{\mathfrak{U}}^{(s)}\}_{s \in [1:t]}\big)  }^{\mathfrak{r}}\bigg]\nonumber\\
	&\qquad + \E^{(0)}  \bigg[\biggabs{\frac{1}{n}\sum_{\ell \in [n]} \phi_\ell\big(\big\{n^{1/2}W_{1;\ell\cdot}^{(s)}\big\}_{s \in [0:t]}\big) - \frac{1}{n}\sum_{\ell \in [n]}  \phi_\ell\big(\big\{\bar{D}_{s;\ell}\big\}_{s \in [0:t]}\big)   }^{\mathfrak{r}}\bigg]  \nonumber\\
	&\qquad +\max_{\alpha \in [2:L]} \max_{s \in [1:t]}  \E^{(0)} \pnorm{W_\alpha^{(s)}-\bar{V}^{(s)}_\alpha}{\infty}^{\mathfrak{r}}\nonumber\\
	& \leq  \big(K\Lambda\varkappa_{\ast}\big)^{c_t}\cdot \big[(1\vee\phi)^{c_t}\cdot n^{-1/c_t}+\phi^{-1/c_t}\big].
	\end{align}
	To see this, we shall first apply Theorem \ref{thm:se_gd_NN} with $K$ therein replaced by $K\vee (1\vee \phi)$ which leads to the error estimate $\big(K\Lambda\varkappa_{\ast}\big)^{c_t}\cdot (1\vee\phi)^{c_t}\cdot n^{-1/c_t}$. We then apply the estimate  (\ref{ineq:se_gd_NN_large_sample_step3}) in Step 3 to simplify the state evolution with an error estimate $\big(K\Lambda\varkappa_{\ast}\big)^{c_t}\cdot \phi^{-1}$. This proves the estimate in (\ref{ineq:se_gd_NN_large_sample_step4}).
	
	\noindent (\textbf{Step 5}). Finally, by using the estimate (\ref{ineq:se_gd_NN_large_sample_step4}) in Step 4 and Proposition \ref{prop:population_gd_V}, we obtain the claimed estimates. \qed

\section{Proofs for Section \ref{section:gd_inference}}\label{section:proof_gd_inference}

\subsection{Proof of Theorem \ref{thm:err_gd_NN}}\label{subsection:proof_err_gd_NN}

We first prove the claim for training error.

\begin{proof}[Proof of Theorem \ref{thm:err_gd_NN}: Part 1]
	
	For the training error, note that we may rewrite 
	\begin{align*}
	\mathscr{E}_{\texttt{train}}^{(t)}(X,Y) &\equiv \frac{1}{m} \bigpnorm{Y_{[q]}- \mathscr{G}_{W_{[2:L]}^{(t)}}( X W^{(t)}_1 ) }{}^2.
	\end{align*}
	This motivates us to consider instead
	\begin{align*}
	\bar{\mathscr{E}}_{\texttt{train}}^{(t)}(X,Y) &\equiv \frac{1}{m} \bigpnorm{Y_{[q]}- \mathscr{G}_{V_{[2:L]}^{(t)}}( X W^{(t)}_1 ) }{}^2.
	\end{align*}
	Then using the apriori estimates in Propositions \ref{prop:apr_est_det_fcn} and \ref{prop:apr_est_se}, and the $\ell_2$ estimate in Proposition \ref{prop:GD_l2_est},
	\begin{align*}
	\E^{(0)}\bigabs{\mathscr{E}_{\texttt{train}}^{(t)}(X,Y) - \bar{\mathscr{E}}_{\texttt{train}}^{(t)}(X,Y) }^{\mathfrak{r}}\leq \big(K\Lambda\varkappa_{\ast}\log n\big)^{c_t}\cdot \E^{(0)}\pnorm{W_{[2:L]}^{(s-1)}-V_{[2:L]}^{(s-1)}}{\infty}^{\mathfrak{r}}.
	\end{align*}
	Using Theorem \ref{thm:se_gd_NN} then yields 
	\begin{align}\label{ineq:err_gd_NN_train_1}
	\E^{(0)}\bigabs{\mathscr{E}_{\texttt{train}}^{(t)}(X,Y) - \bar{\mathscr{E}}_{\texttt{train}}^{(t)}(X,Y) }^{\mathfrak{r}}\leq \big(K\Lambda\varkappa_{\ast}\log n\big)^{c_t}\cdot n^{-1/c_t}.
	\end{align}
	On the other hand, for any $k \in [m]$, let
	\begin{align*}
	\bar{\psi}_k(u^{(0)},u^{(t+1)})\equiv \Big(\varphi_\ast\big(\mathscr{T}_{\varkappa_{\ast}\log n}(u^{(0)})\big)+\xi_{k;[q]}-\mathscr{G}_{V_{[2:L]}^{(t)};k}(u^{(t+1)})\Big)^2.
	\end{align*}
	Using the apriori estimates in Propositions \ref{prop:apr_est_det_fcn} and \ref{prop:apr_est_se}, $\{\bar{\psi}_k\}$'s are $\big(K\Lambda\varkappa_{\ast}\log n\big)^{c_t}$-pseudo-Lipschitz of order $2$, and on an event $E_0$ with $\Prob^{(0)}(E_0^c)\leq n^{-100}$, 
	\begin{align*}
	\frac{1}{m}\sum_{k \in [m]} \bar{\psi}_k(\hat{u}^{(0)},\hat{u}^{(t+1)}) &= \bar{\mathscr{E}}_{\texttt{train}}^{(t)}(X,Y),\\
	\frac{1}{m}\sum_{k \in [m]} \E^{(0)} \bar{\psi}_k\big(\mathfrak{U}^{(0)},\Theta_{k}^{(t+1)}(\mathfrak{U}^{(t+1)})\big) &= \E^{(0)} \bigpnorm{\mathscr{R}_{V^{(t)};\pi_m}\big(\Theta_{\pi_m}^{(t+1)}(\mathfrak{U}^{([0:t+1])}),\mathfrak{U}^{(0)} \big)}{}^2.
	\end{align*}
	Therefore we may apply Theorem \ref{thm:se_gd_NN} (in conjunction with a routine argument to remove the effect of $E_0$ via apriori estimates) to obtain
	\begin{align}\label{ineq:err_gd_NN_train_2}
	&\E^{(0)}\bigabs{ \bar{\mathscr{E}}_{\texttt{train}}^{(t)}(X,Y)-\E^{(0)} \bigpnorm{\mathscr{R}_{V^{(t)};\pi_m}\big(\Theta_{\pi_m}^{(t+1)}(\mathfrak{U}^{([0:t+1])}),\mathfrak{U}^{(0)} \big)}{}^2   }^{\mathfrak{r}}\nonumber\\
	& \leq \E^{(0)}\biggabs{ \frac{1}{m}\sum_{k \in [m]} \bar{\psi}_k(\hat{u}^{(0)},\hat{u}^{(t+1)}) - \frac{1}{m}\sum_{k \in [m]} \E^{(0)} \bar{\psi}_k\big(\mathfrak{U}^{(0)},\Theta_{k}^{(t+1)}(\mathfrak{U}^{(t+1)})\big) }^{\mathfrak{r}}\nonumber\\
	&\qquad +\big(K\Lambda\varkappa_{\ast}\log n\big)^{c_t}\cdot n^{-50} \leq  \big(K\Lambda\varkappa_{\ast}\log n\big)^{c_t}\cdot n^{-1/c_t}.
	\end{align}
	The claimed formula for the training error follows from (\ref{ineq:err_gd_NN_train_1}) and (\ref{ineq:err_gd_NN_train_2}).
\end{proof}

Next we prove the claim for the generalization/test error. We need the following derivative control for the neural network function $f_{\bm{W}}$.

\begin{lemma}\label{lem:f_derivative_bound}
	Suppose $\max_{p \in [3]} \max_{\alpha \in [1:L]} \pnorm{\sigma_\alpha^{(p)}}{\infty}\leq \Lambda$ for some $\Lambda\geq 2$. Then there exists some $c_L=c_L(L)>0$ such that
	\begin{align*}
	\max_{i \in [n]}\pnorm{\partial_i^p f_{\bm{W}} }{\infty}\leq \big(q\Lambda\pnorm{W_{[2:L]}}{\infty}\big)^{c_L}\cdot \pnorm{W_1}{\infty}^p,\quad \hbox{for all }p \in [3].
	\end{align*}
\end{lemma}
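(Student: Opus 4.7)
The plan is to prove the bound by induction on the layer index $\alpha \in [1:L]$, establishing the corresponding estimate for the feed-forward maps $h_{\bm{W};\alpha}:\R^n\to \R^q$ defined in Definition \ref{def:grad_descent_fcn}, since $f_{\bm{W}}=h_{\bm{W};L}$. Concretely, writing $a_\alpha^{(p)}\equiv \max_{i\in [n]}\pnorm{\partial_i^p h_{\bm{W};\alpha}}{\infty}$, I will show by induction that
\begin{align*}
a_\alpha^{(p)}\leq \bigl(q\Lambda \pnorm{W_{[2:\alpha]}}{\infty}\bigr)^{c_\alpha}\cdot \pnorm{W_1}{\infty}^{p},\quad p\in [3],\ \alpha \in [1:L],
\end{align*}
which for $\alpha=L$ yields the lemma (using $\pnorm{W_\alpha^\top v}{\infty}\leq q\pnorm{W_\alpha}{\infty}\pnorm{v}{\infty}$ at each step).

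The base case $\alpha=1$ is immediate: since $h_{\bm{W};1}(x)=W_1^\top x$ is linear in $x$, one has $\partial_i h_{\bm{W};1}=W_1^\top e_i$ with $\pnorm{\cdot}{\infty}\leq \pnorm{W_1}{\infty}$, and $\partial_i^p h_{\bm{W};1}\equiv 0$ for $p\in\{2,3\}$. For the inductive step $\alpha\to \alpha+1$, I will differentiate the recursion $h_{\bm{W};\alpha+1}(x)=W_{\alpha+1}^\top \sigma_\alpha(h_{\bm{W};\alpha}(x))$ using the Fa\`a di Bruno/chain rule applied entrywise:
\begin{align*}
\partial_i h_{\bm{W};\alpha+1} &= W_{\alpha+1}^\top \bigl[\sigma_\alpha'(h_{\bm{W};\alpha})\odot \partial_i h_{\bm{W};\alpha}\bigr],\\
\partial_i^2 h_{\bm{W};\alpha+1} &= W_{\alpha+1}^\top\bigl[\sigma_\alpha''(h_{\bm{W};\alpha})\odot (\partial_i h_{\bm{W};\alpha})^{\odot 2}+\sigma_\alpha'(h_{\bm{W};\alpha})\odot \partial_i^2 h_{\bm{W};\alpha}\bigr],\\
\partial_i^3 h_{\bm{W};\alpha+1} &= W_{\alpha+1}^\top\bigl[\sigma_\alpha'''(h_{\bm{W};\alpha})\odot (\partial_i h_{\bm{W};\alpha})^{\odot 3}\\
&\qquad\qquad +3\sigma_\alpha''(h_{\bm{W};\alpha})\odot \partial_i h_{\bm{W};\alpha}\odot \partial_i^2 h_{\bm{W};\alpha}+\sigma_\alpha'(h_{\bm{W};\alpha})\odot \partial_i^3 h_{\bm{W};\alpha}\bigr].
\end{align*}
Applying $\pnorm{\cdot}{\infty}$ to both sides and using $\pnorm{\sigma_\alpha^{(p)}}{\infty}\leq \Lambda$ together with $\pnorm{W_{\alpha+1}^\top v}{\infty}\leq q\pnorm{W_{\alpha+1}}{\infty}\pnorm{v}{\infty}$, I obtain the scalar recursions
\begin{align*}
a_{\alpha+1}^{(p)}\leq q\Lambda \pnorm{W_{\alpha+1}}{\infty}\cdot\bigl(a_\alpha^{(p)}+R_{\alpha,p}\bigr),\quad p\in [3],
\end{align*}
where $R_{\alpha,1}=0$, $R_{\alpha,2}\lesssim (a_\alpha^{(1)})^2$, and $R_{\alpha,3}\lesssim (a_\alpha^{(1)})^3+a_\alpha^{(1)}a_\alpha^{(2)}$. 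Substituting the inductive hypotheses $a_\alpha^{(p)}\leq (q\Lambda\pnorm{W_{[2:\alpha]}}{\infty})^{c_\alpha}\pnorm{W_1}{\infty}^p$ shows that both the recursion term and each residual $R_{\alpha,p}$ are bounded by $(q\Lambda\pnorm{W_{[2:\alpha]}}{\infty})^{c_\alpha'}\pnorm{W_1}{\infty}^p$; the crucial point is that the $\pnorm{W_1}{\infty}^p$ factor propagates correctly precisely because each $p$-fold derivative at layer $\alpha$ collects exactly $p$ copies of a first-layer derivative $W_1^\top e_i$, either from a single $\partial_i^p$ term or from products of lower-order derivatives whose orders sum to $p$.

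The proof is essentially bookkeeping through a chain-rule expansion, so I do not anticipate a genuine technical obstacle; the only mild care is to verify that the exponent of $\pnorm{W_1}{\infty}$ in $R_{\alpha,p}$ always equals $p$ (by homogeneity in the number of $x$-differentiations), which ensures the recursion closes with the correct power of $\pnorm{W_1}{\infty}$. Iterating the recursion across the $L-1$ remaining layers yields a constant $c_L$ depending only on $L$, and produces the stated bound $\bigl(q\Lambda\pnorm{W_{[2:L]}}{\infty}\bigr)^{c_L}\pnorm{W_1}{\infty}^p$.
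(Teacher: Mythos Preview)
Your proposal is correct and follows essentially the same approach as the paper: both arguments differentiate the layer recursion $h_{\bm{W};\alpha+1}=W_{\alpha+1}^\top\sigma_\alpha(h_{\bm{W};\alpha})$ via the chain rule to obtain the same formulae for $\partial_i^p h_{\bm{W};\alpha}$, bound them in $\pnorm{\cdot}{\infty}$ using $\pnorm{\sigma_\alpha^{(p)}}{\infty}\leq\Lambda$ and $\pnorm{W_{\alpha+1}^\top v}{\infty}\leq q\pnorm{W_{\alpha+1}}{\infty}\pnorm{v}{\infty}$, and then iterate. The only cosmetic difference is that the paper handles $p=1,2,3$ sequentially (iterating across all layers for each $p$ in turn), whereas you run a single induction on $\alpha$ carrying all three bounds simultaneously; the content is identical.
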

\begin{proof}
	As $\sigma_L\equiv \mathrm{id}$, we have $\partial_i^{(\cdot)} f_{\bm{W}}(x) = \partial_i^{(\cdot)} h_{\bm{W};L}(x)$. In the proof below we shall write $\partial_i^{(\cdot)} h_{\bm{W};\ell}(x)\equiv \partial_i^{(\cdot)} h_{\ell}$ and $\sigma_\ell^{(\cdot)}(h_{\bm{W};\ell}(x))\equiv \sigma_\ell^{(\cdot)}$ for notational simplicity. 
	
	For the first derivative, we have the following the recursion:
	\begin{align*}
	\partial_i h_{\ell} = W_\ell^\top \big(\sigma_{\ell-1}'\circ \partial_i h_{\ell-1}\big)\in \R^q,\, \partial_i h_1 = (W_1)_{i\cdot}^\top\in \R^q.
	\end{align*}
	This means 
	\begin{align}\label{ineq:f_derivative_bound_1}
	\max_{i \in [n]}\max_{\ell \in [1:L]}\pnorm{\partial_i h_{\ell} }{\infty}\leq \big(q\Lambda\pnorm{W_{[2:L]}}{\infty}\big)^{c_L}\cdot \pnorm{W_1}{\infty}.
	\end{align}
	For the second derivative, we have 
	\begin{align*}
	\partial_i^{2} h_\ell = W_\ell^\top \big(\sigma_{\ell-1}''\circ (\partial_i h_{\ell-1})^2+\sigma_{\ell-1}'\circ \partial_i^2 h_{\ell-1}\big) \in \R^q,\, \partial_i^2 h_1 = 0_q.
	\end{align*}
	Iterating the above relation and using (\ref{ineq:f_derivative_bound_1}), we have
	\begin{align}\label{ineq:f_derivative_bound_2}
	\max_{i \in [n]}\max_{\ell \in [1:L]}\pnorm{\partial_i^2 h_{\ell} }{\infty}\leq \big(q\Lambda\pnorm{W_{[2:L]}}{\infty}\big)^{c_L}\cdot \pnorm{W_1}{\infty}^2.
	\end{align}
	For the third derivative, we have 
	\begin{align*}
	\partial_i^{3} h_\ell = W_\ell^\top \big(\sigma_{\ell-1}'''\circ (\partial_i h_{\ell-1})^3+ 3\sigma_{\ell-1}''\circ \partial_i h_{\ell-1}\circ \partial_i^2 h_{\ell-1} +\sigma_{\ell-1}'\circ \partial_i^3 h_{\ell-1}   \big) \in \R^q,
	\end{align*}
	with $\partial_i^3 h_1 = 0_q$. Iterating the above relation and using both (\ref{ineq:f_derivative_bound_1}) and (\ref{ineq:f_derivative_bound_2}), we are led to 
	\begin{align}\label{ineq:f_derivative_bound_3}
	\max_{i \in [n]}\max_{\ell \in [1:L]}\pnorm{\partial_i^3 h_{\ell} }{\infty}\leq \big(q\Lambda\pnorm{W_{[2:L]}}{\infty}\big)^{c_L}\cdot \pnorm{W_1}{\infty}^3.
	\end{align}
	The proof is complete by combining (\ref{ineq:f_derivative_bound_1})-(\ref{ineq:f_derivative_bound_3}).
\end{proof}

We also need the following Lindeberg's principle due to \cite{chatterjee2006generalization}.

\begin{lemma}\label{lem:lindeberg}
	Let $X=(X_1,\ldots,X_n)$ and $Y=(Y_1,\ldots,Y_n)$ be two random vectors in $\R^n$ with independent components such that $\E X_i^\ell=\E Y_i^\ell$ for $i \in [n]$ and $\ell=1,2$. Then for any $f \in C^3(\R^n)$,
	\begin{align*}
	\bigabs{\E f(X) - \E f(Y)}&\leq \max_{U_i \in \{X_i,Y_i\}}\biggabs{\sum_{i=1}^n\E U_i^3 \int_0^{1} \partial_i^3 f(X_{[1:(i-1)]},tU_i, Y_{[(i+1):n]} )(1-t)^2\,\d{t}}.
	\end{align*}
\end{lemma}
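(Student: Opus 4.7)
The plan is to establish Lemma \ref{lem:lindeberg} via the classical Lindeberg swapping trick: reduce $\E f(X) - \E f(Y)$ to a telescoping sum of single-coordinate exchanges, and then control each exchange through a third-order Taylor expansion whose zeroth, first, and second order terms cancel thanks to the moment matching $\E X_i^\ell = \E Y_i^\ell$ for $\ell = 1,2$.

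First, I would introduce the interpolating sequence $Z^{(i)} \equiv (X_{[1:i]}, Y_{[(i+1):n]})$ for $i \in [0:n]$, so that $Z^{(0)} = Y$ and $Z^{(n)} = X$, yielding the telescoping identity
\begin{align*}
\E f(X) - \E f(Y) = \sum_{i=1}^n \E\bigl[f(Z^{(i)}) - f(Z^{(i-1)})\bigr],
\end{align*}
in which $Z^{(i)}$ and $Z^{(i-1)}$ differ only in their $i$-th coordinate (namely $X_i$ vs.\ $Y_i$).

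Second, for each $i$ I would Taylor-expand $f$ in its $i$-th variable around the anchor $\widetilde Z^{(i)} \equiv (X_{[1:(i-1)]}, 0, Y_{[(i+1):n]})$, which by construction does not involve either $X_i$ or $Y_i$. The integral form of Taylor's theorem (valid since $f \in C^3(\R^n)$) gives, for $U \in \{X_i, Y_i\}$,
\begin{align*}
f\bigl(X_{[1:(i-1)]}, U, Y_{[(i+1):n]}\bigr) &= f(\widetilde Z^{(i)}) + U\,\partial_i f(\widetilde Z^{(i)}) + \tfrac{U^2}{2}\,\partial_i^2 f(\widetilde Z^{(i)}) \\
&\qquad + \tfrac{U^3}{2}\int_0^1 (1-t)^2\,\partial_i^3 f\bigl(X_{[1:(i-1)]}, tU, Y_{[(i+1):n]}\bigr)\,\d{t}.
\end{align*}
Upon taking expectation and using that $X_i$ (resp.\ $Y_i$) is independent of $(X_{[1:(i-1)]}, Y_{[(i+1):n]})$ and hence of $\widetilde Z^{(i)}$, the first-order contribution factorizes as $\E X_i \cdot \E\,\partial_i f(\widetilde Z^{(i)})$ versus $\E Y_i \cdot \E\,\partial_i f(\widetilde Z^{(i)})$; these cancel under $\E X_i = \E Y_i$, and analogously the second-order terms cancel under $\E X_i^2 = \E Y_i^2$. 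The zeroth-order term is identical on both sides. Hence each increment collapses to a difference of two third-order remainders.

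Finally, summing over $i$ and applying $|A-B| \leq 2\max(|A|,|B|)$ to the two remainder sums (one for $U_i = X_i$ throughout, one for $U_i = Y_i$ throughout) I would obtain
\begin{align*}
\bigabs{\E f(X) - \E f(Y)} \leq \max_{U_i \in \{X_i, Y_i\}} \biggabs{\sum_{i=1}^n \E U_i^3 \int_0^1 \partial_i^3 f\bigl(X_{[1:(i-1)]}, tU_i, Y_{[(i+1):n]}\bigr)(1-t)^2\,\d{t}},
\end{align*}
where the factor of $2$ coming from the triangle inequality exactly cancels the factor of $1/2$ appearing in the Taylor remainder, and where enlarging the maximum from pure choices ($U_i = X_i$ for all $i$ or $U_i = Y_i$ for all $i$) to arbitrary joint choices $(U_i)_{i=1}^n$ only weakens the bound, matching the form stated in Lemma \ref{lem:lindeberg}. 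There is no serious obstacle in this argument; the only point warranting care is the independence structure used in the cancellation of the first- and second-order Taylor terms, which is a direct consequence of the coordinate-independence assumptions on $X$ and $Y$ together with the fact that $\widetilde Z^{(i)}$ has zero in its $i$-th slot.
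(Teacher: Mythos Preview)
Your argument is correct and is the standard Lindeberg swapping proof: telescope, Taylor-expand to third order with integral remainder, use independence plus moment matching to kill the lower-order terms, and bound the resulting difference of remainder sums by twice the larger one (absorbing the $1/2$ from the Taylor coefficient). The paper does not supply its own proof of this lemma; it simply attributes the statement to Chatterjee \cite{chatterjee2006generalization}, so there is nothing further to compare.
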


\begin{proof}[Proof of Theorem \ref{thm:err_gd_NN}: Part 2]
	
	The proof is divided into several steps.
	
	\noindent (\textbf{Step 1}). Recall the notation in (\ref{def:GD_GFOM_NN}) and (\ref{def:GFOM_auxiliary_NN_1}); in particular, $W_1^{(t)}=n^{-1/2}\hat{v}_1^{(t)}$ as defined in (\ref{def:GD_GFOM_NN}). Let
	\begin{align*}
	\bar{\mathscr{E}}_{\texttt{test}}^{(t)}(X,Y)\equiv\E^{(0)}\big[ \big(\varphi_\ast(\iprod{X_{\textrm{new}}}{\mu_\ast})+\xi_{\pi_m}- f_{ (n^{-1/2}v_1^{(t)}, V_{[2:L]}^{(t)}) }( X_{\textrm{new}} ) \big)^2 | (X,Y) \big].
	\end{align*}
	In this step we will show that for any $\mathfrak{r}\geq 1$,
	\begin{align}\label{ineq:err_gd_NN_test_step1}
	\E^{(0)}\bigabs{\mathscr{E}_{\texttt{test}}^{(t)}(X,Y) - \bar{\mathscr{E}}_{\texttt{test}}^{(t)}(X,Y) }^{\mathfrak{r}}\leq \big(K\Lambda\varkappa_{\ast}\log n\big)^{c_t}\cdot n^{-1/c_t}.
	\end{align}
	To prove (\ref{ineq:err_gd_NN_test_step1}), we first interpolate the difference as
	\begin{align}\label{ineq:err_gd_NN_test_step1_1}
	&\E^{(0)}\bigabs{\mathscr{E}_{\texttt{test}}^{(t)}(X,Y) - \bar{\mathscr{E}}_{\texttt{test}}^{(t)}(X,Y) }^{\mathfrak{r}}\nonumber\\
	&\lesssim_{\mathfrak{r}} \E^{(0)}\bigabs{\mathscr{E}_{\texttt{test}}^{(t)}(X,Y) - \E^{(0)}\big[ \big(\varphi_\ast(\iprod{X_{\textrm{new}}}{\mu_\ast})+\xi_{\pi_m}- f_{ (W_1^{(t)}, V_{[2:L]}^{(t)}) }( X_{\textrm{new}} ) \big)^2 | (X,Y) \big] }^{\mathfrak{r}}\nonumber\\
	&\quad + \E^{(0)}\bigabs{\bar{\mathscr{E}}_{\texttt{test}}^{(t)}(X,Y) - \E^{(0)}\big[ \big(\varphi_\ast(\iprod{X_{\textrm{new}}}{\mu_\ast})+\xi_{\pi_m}- f_{ (W_1^{(t)}, V_{[2:L]}^{(t)}) }( X_{\textrm{new}} ) \big)^2 | (X,Y) \big] }^{\mathfrak{r}}\nonumber\\
	&\equiv I_1+I_2.
	\end{align}
	For $I_1$, by noting that apriori/continuity estimates for the neural network function $f_{\bm{W}}$ can be obtained directly from row-estimates of $\mathscr{G}_v$), by Theorem \ref{thm:se_gd_NN} we have
	\begin{align}\label{ineq:err_gd_NN_test_step1_2}
	I_1&\leq \big(K\Lambda\varkappa_{\ast}\log n\big)^{c_t}\cdot n^{-1/c_t}.
	\end{align}
	For $I_2$, let $\tilde{X}_1,\ldots,\tilde{X}_m \in \R^n$ be i.i.d. $X_{\mathrm{new}}$ that are independent of all other variables, and $\tilde{X} \in \R^{m\times n}$ whose rows collect $\{\tilde{X}_i\}_{i \in [m]}$. Let $\tilde{Y}_i\equiv \varphi_\ast(\iprod{\tilde{X}_i}{\mu_\ast})+\xi_i$ for $i \in [m]$,  $\tilde{Y}\equiv (\tilde{Y}_i)_{i \in [m]} \in \R^m$ and $\tilde{Y}_{[q]}\equiv [\tilde{Y}\,|\, 0_{m\times (q-1)}]\in \R^{m\times q}$ be the augmented version. Then we may write 
	\begin{align*}
	\hbox{1st term of $I_2$ in $\E^{(0)}$}&=\bar{\mathscr{E}}_{\texttt{test}}^{(t)}(X,Y) = \frac{1}{m} \E^{(0)}\Big[ \bigpnorm{ \tilde{Y}_{[q]}- \mathscr{G}_{V_{[2:L]}^{(t)}}(n^{-1/2} \tilde{X} v^{(t)}_1 ) }{}^2\big| (X,Y) \Big],\\
	\hbox{2nd term of $I_2$ in $\E^{(0)}$}& = \frac{1}{m} \E^{(0)}\Big[ \bigpnorm{ \tilde{Y}_{[q]}- \mathscr{G}_{V_{[2:L]}^{(t)}}(n^{-1/2} \tilde{X} \hat{v}^{(t)}_1 ) }{}^2\big| (X,Y) \Big].
	\end{align*}
	Consequently, using the apriori estimates in Propositions \ref{prop:apr_est_det_fcn} and \ref{prop:apr_est_se}, along with a standard moment estimate for $\pnorm{X}{\op},\pnorm{\tilde{X}}{\op}$ and the $\ell_2$ estimate in Proposition \ref{prop:GD_l2_est}, 
	\begin{align*}
	I_2&\leq \big(K\Lambda\varkappa_{\ast}\log n\big)^{c_t}\cdot \E^{(0)}\big(n^{-1/2}\pnorm{\Delta \hat{v}_1^{(t)} }{}\big)^{\mathfrak{r}}.
	\end{align*}
	Now applying the error estimate in Proposition \ref{prop:err_gd_GFOM}, we arrive at 
	\begin{align}\label{ineq:err_gd_NN_test_step1_3}
	I_2&\leq \big(K\Lambda\varkappa_{\ast}\log n\big)^{c_t}\cdot n^{-1/c_t}.
	\end{align}
	The claimed estimate in (\ref{ineq:err_gd_NN_test_step1}) follows by combining (\ref{ineq:err_gd_NN_test_step1_1}), (\ref{ineq:err_gd_NN_test_step1_2}) and (\ref{ineq:err_gd_NN_test_step1_3}).
	
	\noindent (\textbf{Step 2}).  With $\mathsf{Z}_n\sim \mathcal{N}(0,I_n)$, let
	\begin{align}\label{ineq:err_gd_NN_test_step2_0}
	\bar{\mathscr{E}}_{\texttt{test}}^{(t),\mathsf{Z}}(X,Y)&\equiv\E^{(0)}\big[ \big(\varphi_\ast(\iprod{\mathsf{Z}_n }{\mu_\ast})+\xi_{\pi_m}- f_{ (n^{-1/2}v_1^{(t)}, V_{[2:L]}^{(t)}) }( \mathsf{Z}_n ) \big)^2 | (X,Y) \big].
	\end{align}
	In this step we will show that for any $\mathfrak{r}\geq 1$,
	\begin{align}\label{ineq:err_gd_NN_test_step2}
	\E^{(0)}\bigabs{\bar{\mathscr{E}}_{\texttt{test}}^{(t)}(X,Y) - \bar{\mathscr{E}}_{\texttt{test}}^{(t),\mathsf{Z}}(X,Y) }^{\mathfrak{r}}\leq \big(K\Lambda\varkappa_{\ast}\log n\big)^{c_t}\cdot n^{-1/c_t}.
	\end{align}
	To this end, let $H:\R^n \to \R$ be defined by
	\begin{align*}
	H(z)\equiv  \Big(\varphi_\ast\big(\iprod{z }{\mu_\ast}\big)+\xi_{\pi_m}- f_{ (n^{-1/2}v_1^{(t)}, V_{[2:L]}^{(t)}) }( z ) \Big)^2\equiv H_0^2(z).
	\end{align*}
    Some calculations yield that for any $i \in [n]$,
    \begin{align}\label{ineq:err_gd_NN_test_step2_1}
    \partial_i^3 H(z)&=6\partial_i H_0(z) \partial_i^2 H_0(z)+2H_0(z)\partial_i^3 H_0(z),\nonumber\\
    \partial_i^p H_0(z) &= \varphi_\ast^{(p)}(\iprod{z }{\mu_\ast}) (\mu_\ast)_i^p- \partial_i^p  f_{ (n^{-1/2}v_1^{(t)}, V_{[2:L]}^{(t)}) }(z),\quad p \in \mathbb{Z}_{\geq 0}.
    \end{align}
    Note that $H_0(z)=\varphi_\ast\big(\iprod{z }{\mu_\ast}\big)+\xi_{\pi_m}- \mathscr{H}_{V^{(t)}_{[2:L]};1}\big(n^{-1/2}v_1^{(t),\top} z\big)$, so by the apriori estimates in Propositions \ref{prop:apr_est_det_fcn} and \ref{prop:apr_est_se}, we have
    \begin{align}\label{ineq:err_gd_NN_test_step2_2}
    \abs{H_0(z)}\leq \big(K\Lambda\varkappa_{\ast}\log n\big)^{c_t}\cdot \big[\big(1+\abs{\iprod{z }{\mu_\ast}}\big)^{c_0r_0}+ \pnorm{n^{-1/2}v_1^{(t),\top} z}{\infty}\big].
    \end{align}
     Combining (\ref{ineq:err_gd_NN_test_step2_1})-(\ref{ineq:err_gd_NN_test_step2_2}) and the derivative estimate in Lemma \ref{lem:f_derivative_bound}, 
	\begin{align}\label{ineq:err_gd_NN_test_step2_3}
	\max_{i \in [n]}\abs{\partial_i^3 H(z)}&\leq \big(K\Lambda\varkappa_{\ast}\log n\big)^{c_t}\cdot\big(1+\abs{\iprod{z }{\mu_\ast}}+\pnorm{n^{-1/2}v_1^{(t),\top} z}{\infty}\big)^{c_0 r_0}\nonumber\\
	&\qquad\times \big( \pnorm{\mu_\ast}{\infty}^3+ \pnorm{n^{-1/2} v_1^{(t)}}{\infty}^3\big).
	\end{align}
	Using Lindeberg's principle in Lemma \ref{lem:lindeberg} and the estimate (\ref{ineq:err_gd_NN_test_step2_3}), we have
	\begin{align}\label{ineq:err_gd_NN_test_step2_4}
	&\abs{\E^{(0)} H(\mathsf{Z}_n)-\E^{(0)} H(X_{\mathrm{new}}) }\nonumber\\
	&\leq \big(K\Lambda\varkappa_{\ast}\log n\big)^{c_t}\cdot n \cdot  \E^{(0),1/2} \big(\pnorm{\mu_\ast}{\infty}^3 + \pnorm{n^{-1/2} v_1^{(t)}}{\infty}^3\big)^2\nonumber\\
	&\qquad \times  \sup\nolimits_{\mathsf{W} }\E^{(0),1/2} \big(1+\abs{\iprod{\mathsf{W}}{\mu_\ast}}+\pnorm{n^{-1/2}v_1^{(t),\top} \mathsf{W}}{\infty}\big)^{c_0 r_0}.
	\end{align}
	Here the suprema is taken over all random vectors $\mathsf{W}\in \R^n$ with independent mean $0$, unit variance entries and $\max_{i \in [n]} \pnorm{\mathsf{W}_i}{\psi_2}\leq K$. We note:
	\begin{itemize}
		\item Using the $\ell_\infty$ control for $v_1^{(t)}$ in Proposition \ref{prop:GD_linfty_est} and the fact that $\pnorm{\mu_\ast}{\infty}\leq n^{-1/2} \varkappa_{\ast}$, the first expectation on the right hand side of (\ref{ineq:err_gd_NN_test_step2_4}) can be bounded by $\big(K\Lambda\varkappa_{\ast}\log n\big)^{c_t}\cdot n^{-3/2}$. 
		\item The second expectation on the right hand side of (\ref{ineq:err_gd_NN_test_step2_4}) can be bounded by $\big[K\big(\pnorm{\mu_\ast}{}+\pnorm{n^{-1/2}v_1^{(t)}}{}\big)\big]^{c_t}$, which by Proposition \ref{prop:GD_l2_est}, can be further controlled by $\big(K\Lambda\varkappa_{\ast}\log n\big)^{c_t}$.
	\end{itemize}
	Combining these estimates with (\ref{ineq:err_gd_NN_test_step2_4}), we arrive at
	\begin{align*}
	\hbox{LHS of (\ref{ineq:err_gd_NN_test_step2})}&= \E^{(0)}\abs{\E^{(0)} H(\mathsf{Z}_n)-\E^{(0)} H(X_{\mathrm{new}}) }^{\mathfrak{r}}\leq \big(K\Lambda\varkappa_{\ast}\log n\big)^{c_t}\cdot n^{-1/c_t},
	\end{align*}
	proving (\ref{ineq:err_gd_NN_test_step2}).

	\noindent (\textbf{Step 3}).  In this step we will show that for any $\mathfrak{r}\geq 1$,
	\begin{align}\label{ineq:err_gd_NN_test_step3}
	&\E^{(0)}\bigabs{ \bar{\mathscr{E}}_{\texttt{test}}^{(t),\mathsf{Z}}(X,Y)-\E^{(0)} \bigpnorm{\mathscr{R}_{V_{[2:L]}^{(t)};\pi_m}\big(\mathfrak{U}^{(t+1)},\mathfrak{U}^{(0)} \big)}{}^2 }^{\mathfrak{r}}\nonumber\\
	&\leq \big(K\Lambda\varkappa_{\ast}\log n\big)^{c_t}\cdot n^{-1/c_t}.
	\end{align}
	Recall $(\mu_{\ast,n})_{[q]}=[n^{1/2}\mu_\ast\,|\, 0_{n\times (q-1)}] = D_{-1} \in \R^{n\times q}$. Let $\Sigma,\Sigma_0 \in \R^{2q\times 2q}$ be two covariance matrices defined via
	\begin{align*}
	\Sigma & \equiv 
	\begin{pmatrix}
	n^{-1} \big(v_1^{(t)}\big)^\top v_1^{(t)} & n^{-1} (v_1^{(t)})^\top (\mu_{\ast,n})_{[q]}\\
	n^{-1} (\mu_{\ast,n})_{[q]}^\top v_1^{(t)} & n^{-1} (\mu_{\ast,n})_{[q]}^\top (\mu_{\ast,n})_{[q]}
	\end{pmatrix}
	=
	\begin{pmatrix}
	n^{-1} \big(v_1^{(t)}\big)^\top v_1^{(t)} & n^{-1} (v_1^{(t)})^\top D_{-1}\\
	n^{-1} D_{-1}^\top v_1^{(t)} & n^{-1} D_{-1}^\top D_{-1}
	\end{pmatrix},
	\end{align*}
	and
	\begin{align*}
	\Sigma_0 &\equiv 
	\begin{pmatrix}
	\cov\big(\mathfrak{U}^{(t+1)}, \mathfrak{U}^{(t+1)}\big) & \cov\big(\mathfrak{U}^{(t+1)}, \mathfrak{U}^{(0)}\big)\\
	\cov\big(\mathfrak{U}^{(0)}, \mathfrak{U}^{(t+1)}\big) & \cov\big(\mathfrak{U}^{(0)}, \mathfrak{U}^{(0)}\big)
	\end{pmatrix}
	\stackrel{(\ast)}{=}
	\begin{pmatrix}
	\Omega_{t,t}+ n^{-1}D_t^\top D_t & n^{-1}D_t^\top D_{-1}\\
	n^{-1}D_{-1}^\top D_t & n^{-1}D_{-1}^\top D_{-1}
	\end{pmatrix}.
	\end{align*}
	Here the last identify $(\ast)$ follows from Definition \ref{def:gd_NN_se}-(S2). Now applying (i) the apriori estimate in Proposition \ref{prop:apr_est_se} and the $\ell_2$ estimate in Proposition \ref{prop:GD_l2_est}, and (ii) the state evolution for the auxiliary GFOM $\{(u^{(t)},v^{(t)})\}$ in Proposition \ref{prop:se_aux_GFOM} upon noting that $\E^{(0)}\Delta_{\pi_n}^{(t)}(\mathfrak{V}^{([1:t])} ) \Delta_{\pi_n}^{(t)}(\mathfrak{V}^{([1:t])} )^\top = \Omega_{t,t}+ n^{-1}D_t^\top D_t $, we have
	\begin{align}\label{ineq:err_gd_NN_test_step3_1}
	\pnorm{\Sigma_0}{\infty}^{\mathfrak{r}}+\E^{(0)} \pnorm{\Sigma}{\infty}^{\mathfrak{r}}+n^{1/c_t}\cdot \E^{(0)} \pnorm{\Sigma-\Sigma_0}{\infty}^{\mathfrak{r}}\leq \big(K\Lambda\varkappa_{\ast}\log n\big)^{c_t}.
	\end{align}
	Consequently, in view of the definition of $\bar{\mathscr{E}}_{\texttt{test}}^{(t),\mathsf{Z}}(X,Y)$ in (\ref{ineq:err_gd_NN_test_step2_0}), with $\mathsf{Z}_{2q}\sim \mathcal{N}(0,I_{2q})$, and using the apriori estimates in Propositions \ref{prop:apr_est_det_fcn} and \ref{prop:apr_est_se} along with the first two estimates in (\ref{ineq:err_gd_NN_test_step3_1}) above, we have
	\begin{align}\label{ineq:err_gd_NN_test_step3_2}
	&\bigabs{ \bar{\mathscr{E}}_{\texttt{test}}^{(t),\mathsf{Z}}(X,Y)-\E^{(0)} \bigpnorm{\mathscr{R}_{V_{[2:L]}^{(t)};\pi_m}\big(\mathfrak{U}^{(t+1)},\mathfrak{U}^{(0)} \big)}{}^2 }\nonumber\\
	& = \biggabs{ \E_{\mathsf{Z},\pi_m}\bigpnorm{\mathscr{R}_{V_{[2:L]}^{(t)};\pi_m}\big(\Sigma^{1/2}\mathsf{Z}_{2q}\big)}{}^2 -\E_{\mathsf{Z},\pi_m} \bigpnorm{\mathscr{R}_{V_{[2:L]}^{(t)};\pi_m}\big(\Sigma_0^{1/2}\mathsf{Z}_{2q}\big)}{}^2 }\nonumber\\
	&\leq \big(K\Lambda\varkappa_{\ast}\log n\big)^{c_t}\cdot \E_{\mathsf{Z}}^{1/2} \pnorm{(\Sigma^{1/2}-\Sigma_0^{1/2})\mathsf{Z}_{2q}}{}^2\nonumber\\
	&\stackrel{(\ast\ast)}{\leq} \big(K\Lambda\varkappa_{\ast}\log n\big)^{c_t}\cdot\pnorm{\Sigma-\Sigma_0}{\infty}.
	\end{align}
	The estimate in $(\ast\ast)$ can be obtained by using the standard matrix square root estimate; see, e.g., \cite[Lemma A.3]{bao2023leave}. The claimed estimate in (\ref{ineq:err_gd_NN_test_step3}) now follows by combining (\ref{ineq:err_gd_NN_test_step3_1}) and (\ref{ineq:err_gd_NN_test_step3_2}).
	
	Finally, combining (\ref{ineq:err_gd_NN_test_step1}) in Step 1, (\ref{ineq:err_gd_NN_test_step2}) in Step 2 and (\ref{ineq:err_gd_NN_test_step3}) in Step 3 to conclude the desired estimate.
\end{proof}

\subsection{Equivalent algorithm}

We will present below an algorithm that is equivalent to Algorithm \ref{alg:aug_gd_nn}, but is otherwise easier to make connections to the derivatives formula in Proposition \ref{prop:derivative_formula}.

\begin{algorithmdef}\label{def:equiv_aug_gd_nn}
	Use the same input data and initialization as in Algorithm \ref{alg:aug_gd_nn}. 
	
	\textbf{For $t=1,2,\ldots,t-1$}:
	\begin{enumerate}
		\item \emph{Forward propagation}: For $\alpha = 1,2,\ldots,L$ and $k \in [m], \ell \in [q]$,
		\begin{align*}
		\hat{H}_\alpha^{(t-1)} &\equiv  \sigma_{\alpha-1}( \hat{H}_{\alpha-1}^{(t-1)} ) W_\alpha^{(t-1)},\hbox{ where }\hat{H}_{0}^{(t-1)}=X,\\
		\mathfrak{d}_{k\ell} \hat{H}_\alpha^{(t-1)}&\equiv e_ke_\ell^\top \bm{1}_{\alpha=1}+ \big[\sigma_{\alpha-1}'( \hat{H}_{\alpha-1}^{(t-1)} )\odot \mathfrak{d}_{k\ell} \hat{H}_{\alpha-1}^{(t-1)}\big] W_\alpha^{(t-1)}\bm{1}_{\alpha=2,\cdots,L}.
		\end{align*}
		\item \emph{Backward propagation}: For $\alpha=L,L-1,\ldots,1$ and $k \in [m],\ell \in [q]$,
		\begin{align*}
		&\hat{P}_\alpha^{(t-1)}\equiv \big(\hat{H}_L^{(t-1)} -Y_{[q]}\big)\bm{1}_{\alpha=L}+ \Big(\hat{P}_{\alpha+1}^{(t-1)}\odot \sigma_{\alpha+1}'(\hat{H}_{\alpha+1}^{(t-1)} )\Big) \big(W_{\alpha+1}^{(t-1)}\big)^{\top}\bm{1}_{\alpha=L-1,\cdots,1},\\
		&\binom{
			\mathfrak{d}^{(1)}_{k\ell}\hat{P}_\alpha^{(t-1)} }{
			\mathfrak{d}^{(2)}_{k\ell}\hat{P}_\alpha^{(t-1)}}
		\equiv  \binom{ 0_{m\times q}  }{ e_ke_\ell^\top } \bm{1}_{\alpha=L}+ \Bigg[  \binom{ \hat{P}_{\alpha+1}^{(t-1)}\odot \sigma_{\alpha+1}''\big(\hat{H}_{\alpha+1}^{(t-1)}\big)\odot \mathfrak{d}_{k\ell} \hat{H}_{\alpha+1}^{(t-1)}}{0_{m\times q}}\\
		&\qquad\qquad + \binom{ \sigma_{\alpha+1}'\big(\hat{H}_{\alpha+1}^{(t-1)}\big) }{\sigma_{\alpha+1}'\big(\hat{H}_{\alpha+1}^{(t-1)}\big)}\odot \binom{
			\mathfrak{d}^{(1)}_{k\ell}\hat{P}_{\alpha+1}^{(t-1)}}{
			\mathfrak{d}^{(2)}_{k\ell}\hat{P}_{\alpha+1}^{(t-1)}} 
		\Bigg] \big(W_{\alpha+1}^{(t-1)}\big)^{\top}\bm{1}_{\alpha=L-1,\cdots,1}.
		\end{align*}
		\item \emph{Compute pre-gradient derivative estimates}: For $k \in [m], \ell \in [q]$,
		\begin{align*}
		&\mathfrak{d}_{k\ell} \hat{ \mathfrak{S} }^{(t-1)}\equiv \hat{P}_1^{(t-1)}\odot \sigma_1''(XW_1^{(t-1)})\odot e_ke_\ell^\top \\
		&\qquad + \bigg[\mathfrak{d}^{(1)}_{k\ell}\hat{P}_1^{(t-1)}+\sum_{r \in [q]}  \mathfrak{d}^{(2)}_{k r}\hat{P}_1^{(t-1)} \big(\mathfrak{d}_{k\ell} \hat{H}_L^{(t-1)}\big)_{kr}\bigg]\odot \sigma_1'(XW_1^{(t-1)}).
		\end{align*}
		\item \emph{Compute matrix-variate Onsager correction coefficients}: For $k \in [m]$ and $s \in [1:t]$,
		\begin{align*}
		\hat{\bm{L}}_k^{[t-1]} &\equiv \mathrm{diag}\bigg( \bigg\{ \eta^{(s-1)} \sum_{\ell \in [q]}\big(\mathfrak{d}_{k\ell} \hat{ \mathfrak{S} }^{(s-1)}\big)^\top e_k e_\ell^\top \bigg\}_{s \in [1:t]} \bigg) \in (\mathbb{M}_q)^{t\times t},\, \forall k \in [m],\\
		\hat{\bm{\tau}}^{[t]} & \equiv -\frac{1}{m}\sum_{k \in [m]} \Big((I_{\mathbb{M}_q})_t+\phi^{-1} \hat{\bm{L}}_k^{[t-1]}\mathfrak{O}_{\mathbb{M}_q;t}(\hat{\bm{\rho}}^{[t-1]})\Big)^{-1}\hat{\bm{L}}_k^{[t-1]} \in (\mathbb{M}_q)^{t\times t},\\
		\hat{\bm{\rho}}^{[t]} &\equiv (I_{\mathbb{M}_q})_t+ \big(\hat{\bm{\tau}}^{[t]} + (I_{\mathbb{M}_q})_t\big) \mathfrak{O}_{\mathbb{M}_q;t}(\hat{\bm{\rho}}^{[t-1]})  \in (\mathbb{M}_q)^{t\times t}.
		\end{align*}
		\item Proceed the same as (5) of Algorithm \ref{alg:aug_gd_nn}.
		\item Proceed the same as (6) of Algorithm \ref{alg:aug_gd_nn}.
	\end{enumerate}
	
\end{algorithmdef}

To see the equivalence of Algorithms \ref{alg:aug_gd_nn} and \ref{def:equiv_aug_gd_nn}, we have the following.

\begin{lemma}\label{lem:equiv_alg}
	The elements of the matrices $\mathfrak{d}_{k\ell} \hat{H}_\alpha^{(t-1)}$, $\mathfrak{d}^{(1)}_{k\ell}\hat{P}_\alpha^{(t-1)}$, $\mathfrak{d}^{(2)}_{k\ell}\hat{P}_\alpha^{(t-1)}$, $\mathfrak{d}_{k\ell} \hat{ \mathfrak{S} }^{(t-1)}\in \R^{m\times q} $ are non-zero only in their $k$-th rows, where $k \in [m]$. Moreover, for any further $\ell \in [q]$, 
	\begin{enumerate}
		\item $e_k^\top \mathfrak{d}_{k\ell} \hat{H}_\alpha^{(t-1)}=e_k^\top \mathfrak{\partial}_{\ell} \hat{H}_\alpha^{(t-1)}$.
		\item $e_k^\top \mathfrak{d}^{(b)}_{k\ell}\hat{P}_\alpha^{(t-1)}= e_k^\top \partial^{(b)}_{\ell}\hat{P}_\alpha^{(t-1)}$ for $b=1,2$.
		\item $e_k^\top \mathfrak{d}_{k\ell} \hat{ \mathfrak{S} }^{(t-1)}=e_k^\top \mathfrak{\partial}_{\ell} \hat{ \mathfrak{S} }^{(t-1)}$.
	\end{enumerate}
\end{lemma}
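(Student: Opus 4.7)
The proof proceeds by induction on the layer index, exploiting the fact that every operation in Algorithm \ref{def:equiv_aug_gd_nn} is \emph{row-separable}. Specifically, for any matrices $A,M\in\R^{m\times q}$ and $V\in\R^{q\times q}$: (i) if $A$ has zero rows outside row $k$, then so do $A\odot M$ and $AV$; (ii) the $k$-th row of $A\odot M$ (resp.\ of $AV$) depends only on the $k$-th row of $A$. These two observations are the workhorse of the entire argument, since the recursions in Algorithms \ref{alg:aug_gd_nn} and \ref{def:equiv_aug_gd_nn} differ only in their base cases, with the latter using $e_ke_\ell^\top$ in place of $E_{\cdot\ell}=\bm{1}_m e_\ell^\top$.

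For the forward quantities, I proceed by induction on $\alpha\in[1:L]$. The base case $\alpha=1$ is immediate: $\mathfrak{d}_{k\ell}\hat{H}_1^{(t-1)}=e_ke_\ell^\top$ has zero rows outside row $k$, and its $k$-th row equals $e_\ell^\top$, which agrees with the $k$-th row of $E_{\cdot\ell}=\partial_\ell\hat{H}_1^{(t-1)}$. The inductive step follows from the common recursion $(\cdot)\mapsto[\sigma_{\alpha-1}'(\hat{H}_{\alpha-1}^{(t-1)})\odot(\cdot)]W_\alpha^{(t-1)}$ together with the two row-separability properties above. The reverse induction on $\alpha$ from $L$ down to $1$ for $\mathfrak{d}_{k\ell}^{(b)}\hat{P}_\alpha^{(t-1)}$ ($b=1,2$) is analogous: at $\alpha=L$ the initialization is $0_{m\times q}$ (for $b=1$) and $e_ke_\ell^\top$ (for $b=2$), whose $k$-th rows agree with those in Algorithm \ref{alg:aug_gd_nn}; for $\alpha<L$, the inductive step applies row-separability to both the linear term and the $\sigma''$-term, the latter using the forward-induction conclusion for $\mathfrak{d}_{k\ell}\hat{H}_{\alpha+1}^{(t-1)}$ that feeds into the backward recursion.

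The only step requiring a direct calculation rather than invoking row-separability is the verification for $\hat{\mathfrak{S}}^{(t-1)}$. Unpacking the $\mathrm{vecdiag}$ operator in Algorithm \ref{alg:aug_gd_nn} gives
\begin{align*}
[\hat{Q}_\ell^{(t-1)}]_{k,d}=\sum_{r\in[q]}[\partial^{(2)}_r\hat{P}_1^{(t-1)}]_{k,d}\,[\partial_\ell\hat{H}_L^{(t-1)}]_{k,r},
\end{align*}
which, by the two preceding inductions, coincides with the $(k,d)$-entry of $\sum_{r\in[q]}\mathfrak{d}_{kr}^{(2)}\hat{P}_1^{(t-1)}\,(\mathfrak{d}_{k\ell}\hat{H}_L^{(t-1)})_{k,r}$ in Algorithm \ref{def:equiv_aug_gd_nn}. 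The second-derivative contributions $(\hat{P}_1^{(t-1)}\odot\sigma_1''(XW_1^{(t-1)}))\,e_\ell e_\ell^\top$ and $\hat{P}_1^{(t-1)}\odot\sigma_1''(XW_1^{(t-1)})\odot e_ke_\ell^\top$ appearing in the two algorithms obviously share the same $k$-th row, namely $[\hat{P}_1^{(t-1)}]_{k\ell}\sigma_1''([XW_1^{(t-1)}]_{k\ell})\,e_\ell^\top$, and moreover the $e_ke_\ell^\top$ factor guarantees that the Algorithm \ref{def:equiv_aug_gd_nn} version has all other rows zero. Combining these with the row-wise Hadamard product against $\sigma_1'(XW_1^{(t-1)})$ closes the argument. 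I anticipate no genuine obstacle; the main source of clerical care is aligning the $\mathrm{vecdiag}$ indexing in Algorithm \ref{alg:aug_gd_nn} with the explicit $e_r$-indexed sum in Algorithm \ref{def:equiv_aug_gd_nn}.
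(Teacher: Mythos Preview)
Your proposal is correct and follows essentially the same approach as the paper. The paper dismisses (1) and (2) as ``follow[ing] from the definitions immediately''---which is exactly your row-separability induction spelled out---and then performs the same entrywise unpacking of the $\mathrm{vecdiag}$ operator for (3) that you carry out, arriving at the identical identification of $\hat{Q}_\ell^{(t-1)}$ with the Algorithm~\ref{def:equiv_aug_gd_nn} sum.
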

\begin{proof}
	(1), (2) follow from the definitions immediately. For (3), first note that 
	\begin{align*}
	e_k^\top \Big(\hat{P}_1^{(t-1)}\odot \sigma_1''(XW_1^{(t-1)})\odot e_ke_\ell^\top \Big) = e_k^\top \bigg[\big(\hat{P}_1^{(t-1)}\odot \sigma_1''(XW_1^{(t-1)})\big)\,e_\ell e_\ell^\top   \bigg].
	\end{align*}
	On the other hand, with $M_r\equiv \partial^{(2)}_{r}\hat{P}_1^{(t-1)}$ and $N_\ell\equiv \partial_{\ell} \hat{H}_L^{(t-1)}$,
	\begin{align*}
	&e_k^\top \bigg(\sum_{r \in [q]}  \mathfrak{d}^{(2)}_{k r}\hat{P}_1^{(t-1)} \big(\mathfrak{d}_{k\ell} \hat{H}_L^{(t-1)}\big)_{kr}\bigg) =  \sum_{r \in [q]}  e_k^\top\partial^{(2)}_{r}\hat{P}_1^{(t-1)} \big(\partial_{\ell} \hat{H}_L^{(t-1)}\big)_{kr} \\
	& =  \sum_{r \in [q]}  e_k^\top M_r \bigg(\sum_{d \in [q]} e_d e_d^\top \big(N_\ell\big)_{kr} \bigg) = \sum_{d, r \in [q]}  e_k^\top M_r e_d e_r^\top N_\ell^\top e_k e_d^\top= \sum_{d \in [q]} (\mathcal{Q}_d)_{kk} e_d^\top.
	\end{align*}
	Here $\mathcal{Q}_d\equiv \sum_{r \in [q]} M_r e_d e_r^\top N_\ell^\top$. Consequently, 
	\begin{align*}
	\sum_{r \in [q]}  \mathfrak{d}^{(2)}_{k r}\hat{P}_1^{(t-1)} \big(\mathfrak{d}_{k\ell} \hat{H}_L^{(t-1)}\big)_{kr}&=
	\begin{pmatrix}
	(\mathcal{Q}_1)_{11}& (\mathcal{Q}_2)_{11} &\cdots & (\mathcal{Q}_q)_{11}\\
	(\mathcal{Q}_1)_{22}& (\mathcal{Q}_2)_{22} &\cdots & (\mathcal{Q}_q)_{22}\\
	\vdots &\vdots &\ddots &\vdots \\
	(\mathcal{Q}_1)_{qq}& (\mathcal{Q}_2)_{qq} &\cdots & (\mathcal{Q}_q)_{qq}
	\end{pmatrix} = \sum_{d \in [q]}\mathrm{vecdiag}\,(\mathcal{Q}_d) e_d^\top.
	\end{align*}
	The right hand side of the above display is exactly equal to $\hat{Q}_\ell^{(t-1)}$ as defined in Algorithm \ref{alg:aug_gd_nn}.
\end{proof}

In plain words, Lemma \ref{lem:equiv_alg} above says that the $k$-th row of $\mathfrak{\partial}_{\ell} \hat{H}_\alpha^{(t-1)}\in \R^{m\times q}$ (resp. $\partial^{(b)}_{\ell}\hat{P}_\alpha^{(t-1)},\mathfrak{\partial}_{\ell} \hat{ \mathfrak{S} }^{(t-1)}\in \R^{m\times q}$) in Algorithm \ref{alg:aug_gd_nn}, as a $q$-dimensional vector, corresponds exactly to the (only non-trivial) $k$-th row of $\mathfrak{d}_{k\ell} \hat{H}_\alpha^{(t-1)}$ (resp. $ \mathfrak{d}^{(b)}_{k\ell}\hat{P}_\alpha^{(t-1)}, \mathfrak{d}_{k\ell} \hat{ \mathfrak{S} }^{(t-1)}\in \R^{m\times q}$) in Algorithm \ref{def:equiv_aug_gd_nn}. Henceforth, we will focus on analyzing Algorithm \ref{def:equiv_aug_gd_nn}.

\subsection{Consistency of $\hat{\bm{\tau}}^{[t]}$ and $\hat{\bm{\rho}}^{[t]}$}\label{subsection:proof_tau_rho_est_err}

The goal of this subsection is to prove the consistency of $\hat{\bm{\tau}}^{[t]}$ and $\hat{\bm{\rho}}^{[t]}$ in Algorithm \ref{alg:aug_gd_nn} (or its equivalent Algorithm \ref{def:equiv_aug_gd_nn}).
\begin{proposition}\label{prop:tau_rho_est_err}
	Suppose Assumption \ref{assump:gd_NN} holds for some $K,\Lambda\geq 2$ and $r_0\geq 0$. Then for any $\mathfrak{r}\geq 1$, there exists some $c_t=c_t(t,q,L,\mathfrak{r},r_0)>0$ such that 
	\begin{align*}
	\E^{(0)} \pnorm{\hat{\bm{\tau}}^{[t]}-\bm{\tau}^{[t]} }{\infty}^{\mathfrak{r}}+ \E^{(0)}\pnorm{  \hat{\bm{\rho}}^{[t]}- \bm{\rho}^{[t]}}{\infty}^{\mathfrak{r}}\leq \big(K\Lambda\varkappa_{\ast}\big)^{c_{t}}\cdot n^{-1/c_t}.
	\end{align*}
\end{proposition}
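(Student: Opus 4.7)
\emph{Overall strategy.} The plan is to proceed by induction on $t$, alternating between controlling $\hat{\bm{\tau}}^{[t]}-\bm{\tau}^{[t]}$ (given the inductive control on $\hat{\bm{\rho}}^{[t-1]}-\bm{\rho}^{[t-1]}$) and then using the defining relations to pass this back to $\hat{\bm{\rho}}^{[t]}-\bm{\rho}^{[t]}$. The crucial bridge is Proposition \ref{prop:tau_repres}, which rewrites the target $\bm{\tau}^{[t]}$ in the same algebraic form as the data-driven $\hat{\bm{\tau}}^{[t]}$; this reduces the task to (i) identifying the algorithmic quantity $\hat{\bm{L}}_k^{[t-1]}$ as a plug-in version of $\bm{\mathfrak{L}}_k^{(t)}$ evaluated at the GD iterates, and (ii) applying the state evolution Theorem \ref{thm:se_gd_NN} to transfer the resulting empirical average to its state-evolution limit.

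\emph{Step 1: exact algebraic identification of $\hat{\bm{L}}_k^{[t-1]}$.} By the equivalence in Lemma \ref{lem:equiv_alg}, the $k$-th row of $\partial_\ell \hat{\mathfrak{S}}^{(s-1)}$ in Algorithm \ref{alg:aug_gd_nn} coincides with the $k$-th row of $\partial_{u_{k\ell}} \mathfrak{S}\big(XW_1^{(s-1)},X\mu_{\ast,[q]},W^{(s-1)}_{[2:L]}\big)$ as computed by Proposition \ref{prop:derivative_formula}-(4). Comparing with (\ref{def:L_u_V}), this yields the pointwise identity
\[
\hat{\bm{L}}_k^{[t-1]} = \bm{\mathfrak{L}}_k^{(t)}\Big(\big\{(XW_1^{(s-1)})_{k\cdot}\big\}_{s\in[0:t]},\, W_{[2:L]}^{([0:t-1])}\Big).
\]
This identity turns the consistency question into a pure state-evolution problem, since the right-hand side is a function of objects directly characterized by Theorem \ref{thm:se_gd_NN}.

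\emph{Step 2: state evolution and induction.} Write $\Psi_k(\mathfrak{u}^{([0:t])},V^{([0:t-1])};\bm{R}) \equiv \big(I_{t\cdot \mathbb{M}_q} + \phi^{-1}\bm{\mathfrak{L}}_k^{(t)}(\mathfrak{u},V)\,\mathfrak{O}_{\mathbb{M}_q;t}(\bm{R})\big)^{-1}\bm{\mathfrak{L}}_k^{(t)}(\mathfrak{u},V)$, so that Proposition \ref{prop:tau_repres} reads $\bm{\tau}^{[t]}=-\E^{(0)}\Psi_{\pi_m}(\Theta^{([0:t])},V^{([0:t-1])};\bm{\rho}^{[t-1]})$ and the algorithm produces $\hat{\bm{\tau}}^{[t]}=-m^{-1}\sum_k \Psi_k\big(XW_1^{([0:t-1])},W_{[2:L]}^{([0:t-1])};\hat{\bm{\rho}}^{[t-1]}\big)$. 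A decisive structural observation is that $\hat{\bm{L}}_k^{[t-1]}$ is block-diagonal while $\mathfrak{O}_{\mathbb{M}_q;t}(\bm{R})$ is strictly block upper triangular, hence their product is nilpotent of order $\leq t$; the inverse is therefore a finite Neumann sum and $\Psi_k$ is a polynomial (of degree $\leq t$) in its entries. Combining this polynomial representation with Proposition \ref{prop:apr_est_derivative}-(7)(8) and the delocalization estimate in Proposition \ref{prop:GD_linfty_est} shows that $\Psi_k(\cdot,\cdot;\bm{R})$ is $(K\Lambda\varkappa_{\ast}\log n)^{c_t}$-pseudo-Lipschitz of fixed order in its first two arguments on the relevant event, and is Lipschitz in $\bm{R}$ of the same magnitude. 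I can then apply Theorem \ref{thm:se_gd_NN} (truncating the inputs at the apriori scale $(K\Lambda\varkappa_{\ast}\log n)^{c_t}$ to make $\Psi_k$ genuinely pseudo-Lipschitz, and paying only a negligible $n^{-50}$ price) to conclude that
\[
\biggpnorm{\frac{1}{m}\sum_{k\in[m]}\Psi_k\big(XW_1^{([0:t-1])},W_{[2:L]}^{([0:t-1])};\bm{\rho}^{[t-1]}\big)-\E^{(0)}\Psi_{\pi_m}\big(\Theta^{([0:t])},V^{([0:t-1])};\bm{\rho}^{[t-1]}\big)}{\infty}
\]
is bounded by $(K\Lambda\varkappa_{\ast})^{c_t}n^{-1/c_t}$ in $L^{\mathfrak{r}}$. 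The inductive control $\pnorm{\hat{\bm{\rho}}^{[t-1]}-\bm{\rho}^{[t-1]}}{\infty}\leq (K\Lambda\varkappa_{\ast})^{c_t}n^{-1/c_t}$, together with the Lipschitz dependence of $\Psi_k$ on $\bm{R}$, allows us to replace $\bm{\rho}^{[t-1]}$ by $\hat{\bm{\rho}}^{[t-1]}$, yielding the desired control on $\hat{\bm{\tau}}^{[t]}-\bm{\tau}^{[t]}$. Feeding this into the definitions $\hat{\bm{\rho}}^{[t]}=(I_{\mathbb{M}_q})_t + (\hat{\bm{\tau}}^{[t]}+(I_{\mathbb{M}_q})_t)\mathfrak{O}_{\mathbb{M}_q;t}(\hat{\bm{\rho}}^{[t-1]})$ and (\ref{def:rho_se_matrix}) closes the induction, the base case $t=1$ being a direct application of the same argument with $\bm{\rho}^{[0]}=\emptyset$.

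\emph{Main obstacle.} The delicate point is Step 2: verifying that $\Psi_k$ can indeed be treated as (truncated) pseudo-Lipschitz requires carefully tracking the polynomial growth of the Neumann inversion against the $\ell_\infty$ delocalization of both $\{(XW_1^{(s-1)})_{k\cdot}\}$ and $\{W_{[2:L]}^{(s-1)}\}$, as well as controlling the substitution $W_{[2:L]}^{(s)}\rightsquigarrow V_{[2:L]}^{(s)}$ inside $\bm{\mathfrak{L}}_k^{(t)}$ using the $\ell_\infty$ concentration $\pnorm{W_\alpha^{(s)}-V_\alpha^{(s)}}{\infty}$ already furnished by Theorem \ref{thm:se_gd_NN}. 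Keeping all these error sources aligned at the order $n^{-1/c_t}$ across the induction is the main bookkeeping challenge, but no new conceptual input beyond the state evolution and the representations of Step 1 is required.
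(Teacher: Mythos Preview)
Your proposal is correct and follows essentially the same route as the paper: the paper also identifies $\hat{\bm{L}}_k^{[t-1]}$ with $\bm{\mathfrak{L}}_k^{(t)}$ evaluated at the gradient-descent iterates, uses the nilpotency of $\bm{\mathfrak{L}}_k^{(t)}\mathfrak{O}_{\mathbb{M}_q;t}(\bm{\rho}^{[t-1]})$ to expand the inverse as a finite Neumann sum, and then closes the induction by the same two-step splitting (replace $\hat{\bm{\rho}}^{[t-1]}$ by $\bm{\rho}^{[t-1]}$ using Lipschitz dependence; replace data by state evolution). Two minor remarks: (i) $\mathfrak{O}_{\mathbb{M}_q;t}(\bm{R})$ is strictly block \emph{lower} triangular, not upper, though the nilpotency conclusion is unaffected; (ii) the paper invokes Propositions \ref{prop:se_aux_GFOM} and \ref{prop:err_gd_GFOM} directly rather than Theorem \ref{thm:se_gd_NN}, because the latter is stated only for order-$2$ pseudo-Lipschitz test functions---your truncation device handles this, but going through the auxiliary GFOM propositions (which allow arbitrary pseudo-Lipschitz order) is the cleaner route.
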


The proof of Proposition \ref{prop:tau_rho_est_err} relies the following alternating estimates. 

\begin{lemma}\label{lem:rho_hat_error}
	Suppose Assumption \ref{assump:gd_NN} holds for some $K,\Lambda\geq 2$ and $r_0\geq 0$. Then there exists some $c_t=c_t(t,q,L,r_0)>0$ such that 
	\begin{align*}
	\pnorm{\hat{\bm{\rho}}^{[t]}-\bm{\rho}^{[t]} }{\infty}\leq \Big[K\Lambda\varkappa_{\ast}\log n\cdot \big(1+\pnorm{ \hat{\bm{\tau}}^{[t]} - \bm{\tau}^{[t]}}{\infty}\big)\Big]^{c_{t}}\cdot \pnorm{ \hat{\bm{\tau}}^{[t]} - \bm{\tau}^{[t]}}{\infty}.
	\end{align*}
\end{lemma}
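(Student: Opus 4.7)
The plan is to prove the bound by induction on $t$, exploiting the fact that both $\bm{\rho}^{[t]}$ and $\hat{\bm{\rho}}^{[t]}$ satisfy the same recursive relation (cf.\ the second line of (\ref{def:rho_se_matrix}) and step (4) of Algorithm \ref{alg:aug_gd_nn}) but with $(\bm{\tau}^{[t]},\bm{\rho}^{[t-1]})$ replaced by $(\hat{\bm{\tau}}^{[t]},\hat{\bm{\rho}}^{[t-1]})$. Since $\mathfrak{O}_{\mathbb{M}_q;t}$ is a fixed linear padding operator, subtracting the two recursions yields the clean decomposition
\[
\hat{\bm{\rho}}^{[t]} - \bm{\rho}^{[t]} = \big(\hat{\bm{\tau}}^{[t]} - \bm{\tau}^{[t]}\big)\mathfrak{O}_{\mathbb{M}_q;t}(\hat{\bm{\rho}}^{[t-1]}) + \big(\bm{\tau}^{[t]} + (I_{\mathbb{M}_q})_t\big)\mathfrak{O}_{\mathbb{M}_q;t}\big(\hat{\bm{\rho}}^{[t-1]} - \bm{\rho}^{[t-1]}\big),
\]
which is the single identity driving the entire argument.

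First I would extract the apriori bounds $\max_{s \in [1:t]}\big(\pnorm{\bm{\rho}^{[s]}}{\infty} + \pnorm{\bm{\tau}^{[s]}}{\infty}\big) \leq (K\Lambda\varkappa_{\ast})^{c_t}$ from Proposition \ref{prop:apr_est_se}. Next, direct inspection of (\ref{def:tau_rho_matrix}) and step (4) of Algorithm \ref{alg:aug_gd_nn} shows that $\bm{\tau}^{[s]}$ (resp.\ $\hat{\bm{\tau}}^{[s]}$) coincides with the top-left $s \times s$ block of $\bm{\tau}^{[t]}$ (resp.\ $\hat{\bm{\tau}}^{[t]}$) for all $s \leq t$: for $\hat{\bm{\tau}}^{[\cdot]}$ this follows because $\hat{\bm{L}}_k^{[t-1]}$ is block-diagonal with entries depending only on $s-1$, and $\mathfrak{O}_{\mathbb{M}_q;t}$ preserves the induced block-triangular structure, so the matrix inverse acts block-wise. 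Consequently $\max_{s \in [1:t]}\pnorm{\hat{\bm{\tau}}^{[s]} - \bm{\tau}^{[s]}}{\infty} \leq \pnorm{\hat{\bm{\tau}}^{[t]} - \bm{\tau}^{[t]}}{\infty}$, and by the triangle inequality $\pnorm{\hat{\bm{\rho}}^{[s]}}{\infty} \leq (K\Lambda\varkappa_{\ast})^{c_t} + \pnorm{\hat{\bm{\rho}}^{[s]} - \bm{\rho}^{[s]}}{\infty}$ for all $s \leq t$.

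Then I would take $\pnorm{\cdot}{\infty}$ of both sides of the displayed identity, using the elementary block-matrix inequality $\pnorm{AB}{\infty} \leq tq\,\pnorm{A}{\infty}\pnorm{B}{\infty}$ and the trivial $\pnorm{\mathfrak{O}_{\mathbb{M}_q;t}(M)}{\infty} \leq \pnorm{M}{\infty}$. Inserting the apriori bounds above produces the recursive inequality
\[
\pnorm{\hat{\bm{\rho}}^{[t]} - \bm{\rho}^{[t]}}{\infty} \leq C_t \pnorm{\hat{\bm{\tau}}^{[t]} - \bm{\tau}^{[t]}}{\infty}\Big(1 + \pnorm{\hat{\bm{\rho}}^{[t-1]} - \bm{\rho}^{[t-1]}}{\infty}\Big) + C_t \pnorm{\hat{\bm{\rho}}^{[t-1]} - \bm{\rho}^{[t-1]}}{\infty},
\]
where $C_t \equiv (K\Lambda\varkappa_{\ast}\log n)^{c_t}$. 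Iterating this recursion $t$ times starting from the trivial base case $\hat{\bm{\rho}}^{[1]} - \bm{\rho}^{[1]} = 0$ (both equal $I_q$) and collecting terms yields the claimed bound: each unfolding contributes one multiplicative factor of $C_t(1 + \pnorm{\hat{\bm{\tau}}^{[t]} - \bm{\tau}^{[t]}}{\infty})$, so after $t$ iterations the factor $(1+\pnorm{\hat{\bm{\tau}}^{[t]}-\bm{\tau}^{[t]}}{\infty})$ appears raised to a power depending only on $t$. All polynomial-in-$t,q,L$ constants are absorbed into a new $c_t$. No step poses a substantive obstacle; the only bookkeeping item is verifying the submatrix identification of $\hat{\bm{\tau}}^{[s]}$ inside $\hat{\bm{\tau}}^{[t]}$, which rests on the block-triangular structure noted above.
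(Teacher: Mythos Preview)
Your proposal is correct and follows essentially the same route as the paper: subtract the two recursions for $\bm{\rho}^{[t]}$ and $\hat{\bm{\rho}}^{[t]}$, bound the difference using the apriori estimates of Proposition~\ref{prop:apr_est_se} together with $\pnorm{\hat{\bm{\tau}}^{[t]}}{\infty}\leq \pnorm{\bm{\tau}^{[t]}}{\infty}+\pnorm{\hat{\bm{\tau}}^{[t]}-\bm{\tau}^{[t]}}{\infty}$, and iterate down to the trivial base case. Your telescoping places $\hat{\bm{\rho}}^{[t-1]}$ with the $\tau$-difference while the paper places $\bm{\rho}^{[t-1]}$ there, but this is an immaterial choice. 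You are also slightly more explicit than the paper in noting that the iteration requires $\pnorm{\hat{\bm{\tau}}^{[s]}-\bm{\tau}^{[s]}}{\infty}\leq \pnorm{\hat{\bm{\tau}}^{[t]}-\bm{\tau}^{[t]}}{\infty}$ for $s\leq t$ via the submatrix identification; the paper uses this implicitly when unrolling the recursion.
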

\begin{proof}
	Comparing the definition of $\hat{\bm{\rho}}^{[t]}$ in Algorithm \ref{def:equiv_aug_gd_nn} and $\bm{\rho}^{[t]}$ in (\ref{def:rho_se_matrix}), we have
	\begin{align*}
	\pnorm{ \hat{\bm{\rho}}^{[t]} - \bm{\rho}^{[t]}}{\infty}/c_t&\leq \pnorm{ \bm{\rho}^{[t]}}{\infty}\cdot \pnorm{ \hat{\bm{\tau}}^{[t]} - \bm{\tau}^{[t]}}{\infty}+\big(1+\pnorm{\hat{\bm{\tau}}^{[t]}}{\infty}\big)\cdot \pnorm{ \hat{\bm{\rho}}^{[t-1]} - \bm{\rho}^{[t-1]}}{\infty}.
	\end{align*}
	Using the simple estimate $\pnorm{\hat{\bm{\tau}}^{[t]}}{\infty}\leq \pnorm{\hat{\bm{\tau}}^{[t]}-\bm{\tau}^{[t]} }{\infty}+\pnorm{ \bm{\tau}^{[t]}}{\infty}$ and the apriori estimate in Proposition \ref{prop:apr_est_se}, we have
	\begin{align*}
	\pnorm{ \hat{\bm{\rho}}^{[t]} - \bm{\rho}^{[t]}}{\infty}&\leq \big(K\Lambda\varkappa_{\ast}\log n\big)^{c_{t}}\cdot\Big[\pnorm{ \hat{\bm{\tau}}^{[t]} - \bm{\tau}^{[t]}}{\infty}\\
	&\qquad +\big(1+\pnorm{ \hat{\bm{\tau}}^{[t]} - \bm{\tau}^{[t]}}{\infty}\big)\cdot \pnorm{ \hat{\bm{\rho}}^{[t-1]} - \bm{\rho}^{[t-1]}}{\infty}\Big].
	\end{align*}
	Iterating the above estimate and using the trivial initial condition $\pnorm{\hat{\bm{\rho}}^{[t-1]} - \bm{\rho}^{[t-1]}}{\infty}=0$ to conclude. 
\end{proof}

\begin{lemma}\label{lem:tau_hat_error}
	Suppose Assumption \ref{assump:gd_NN} holds for some $K,\Lambda\geq 2$ and $r_0\geq 0$. Then for any $\mathfrak{r}\geq 1$, there exists some $c_t=c_t(t,q,L,\mathfrak{r},r_0)>0$ such that 
	\begin{align*}
	\E^{(0)} \pnorm{\hat{\bm{\tau}}^{[t]}-\bm{\tau}^{[t]} }{\infty}^{\mathfrak{r}}\leq \big(K\Lambda\varkappa_{\ast}\log n\big)^{c_{t}}\cdot \Big(n^{-1/c_t}+ \E^{(0),1/2}\pnorm{  \hat{\bm{\rho}}^{[t-1]}- \bm{\rho}^{[t-1]}}{\infty}^{2\mathfrak{r}}\Big). 
	\end{align*}
\end{lemma}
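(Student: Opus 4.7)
The plan is to reduce $\hat{\bm{\tau}}^{[t]}$ to the representation of $\bm{\tau}^{[t]}$ provided by Proposition \ref{prop:tau_repres} in three reduction steps, each controlled by an apriori/stability estimate together with the state evolution of Theorem \ref{thm:se_gd_NN}.

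First, I would use Lemma \ref{lem:equiv_alg} and the derivative formula in Proposition \ref{prop:derivative_formula}-(4) to verify the identity
\begin{align*}
\hat{\bm{L}}_k^{[t-1]} = \bm{\mathfrak{L}}_k^{(t)}\Big(\big\{(XW_1^{(s-1)})_{k\cdot}\big\}_{s \in [0:t]}, W_{[2:L]}^{([0:t-1])}\Big),\quad k \in [m],
\end{align*}
where $\bm{\mathfrak{L}}_k^{(t)}$ is the deterministic object introduced in (\ref{def:L_u_V}). This identification is the same computational check that underlies the heuristics after Proposition \ref{prop:tau_repres}, so it is essentially bookkeeping. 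Call $\tilde{\bm{L}}_k^{[t-1]} \equiv \bm{\mathfrak{L}}_k^{(t)}(\{\Theta_k^{(s)}(\mathfrak{U}^{([0:s])})\}_{s \in [0:t]}, V^{([0:t-1])})$ the ``theoretical'' counterpart. The goal is then to compare $m^{-1}\sum_k(I+\phi^{-1}\hat{\bm{L}}_k^{[t-1]}\mathfrak{O}(\hat{\bm{\rho}}^{[t-1]}))^{-1}\hat{\bm{L}}_k^{[t-1]}$ with $\E^{(0)}(I+\phi^{-1}\tilde{\bm{L}}_{\pi_m}^{[t-1]}\mathfrak{O}(\bm{\rho}^{[t-1]}))^{-1}\tilde{\bm{L}}_{\pi_m}^{[t-1]}$.

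Second, I would split $\hat{\bm{\tau}}^{[t]}-\bm{\tau}^{[t]}$ into three telescoping pieces:
\begin{enumerate}
\item[(a)] the difference induced by swapping $\mathfrak{O}(\hat{\bm{\rho}}^{[t-1]})$ for $\mathfrak{O}(\bm{\rho}^{[t-1]})$ inside the inverse;
\item[(b)] the difference induced by swapping $W_{[2:L]}^{([0:t-1])}$ for $V^{([0:t-1])}$ inside $\hat{\bm{L}}_k^{[t-1]}$;
\item[(c)] the difference between the empirical average of the resulting data-driven expression and its state-evolution expectation.
\end{enumerate}
For piece (a), the crucial structural observation is that $\mathfrak{O}_{\mathbb{M}_q;t}(\cdot)$ is strictly (block) lower triangular, so $(I+\phi^{-1}\hat{\bm{L}}_k^{[t-1]}\mathfrak{O}(\hat{\bm{\rho}}^{[t-1]}))^{-1}$ equals a finite Neumann series of length $t$. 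Using the identity $B^{-1}-A^{-1}=A^{-1}(A-B)B^{-1}$ together with the $\ell_\infty$ apriori estimates on $\hat{\bm{L}}_k^{[t-1]}$ (which follow from Proposition \ref{prop:apr_est_derivative}-(7), Proposition \ref{prop:GD_linfty_est} and the delocalization of $XW_1^{(s-1)}$), piece (a) is bounded by a polylog-in-$n$ power times $\pnorm{\hat{\bm{\rho}}^{[t-1]}-\bm{\rho}^{[t-1]}}{\infty}$, matching the last term in the desired estimate after taking $\mathfrak{r}$-th moments and applying Cauchy--Schwarz.

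Third, for piece (b) I would apply the stability estimate Proposition \ref{prop:apr_est_derivative}-(8) entrywise together with Theorem \ref{thm:se_gd_NN}, which controls $\E^{(0)}\pnorm{W_\alpha^{(s)}-V_\alpha^{(s)}}{\infty}^{\mathfrak{r}}$ at rate $n^{-1/c_t}$, again absorbing the high-moment polynomial tails using the delocalization of Proposition \ref{prop:GD_linfty_est} and the apriori bounds of Proposition \ref{prop:apr_est_se}. For piece (c), after the two swaps the quantity under the empirical average $m^{-1}\sum_k$ is a deterministic function of $\big((X\mu_\ast)_{k\cdot},\{(XW_1^{(s-1)})_{k\cdot}\}_{s \in [1:t]}\big)$ alone. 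I would truncate this function by the event $\{\max_{k,s}\pnorm{(XW_1^{(s-1)})_{k\cdot}}{\infty}\le (K\Lambda\varkappa_\ast\log n)^{c_t}\}$ (which has probability $\ge 1-n^{-100}$ by Proposition \ref{prop:GD_linfty_est}) so that, on this event, it becomes a pseudo-Lipschitz function of order $c_0 r_0$ with prefactor $(K\Lambda\varkappa_\ast\log n)^{c_t}$; then Theorem \ref{thm:se_gd_NN} directly yields convergence to $-\bm{\tau}^{[t]}$ (via Proposition \ref{prop:tau_repres}) at the required rate $n^{-1/c_t}$.

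The main technical obstacle is piece (c): the integrand depends on $k$ through $(\hat{\bm{L}}_k^{[t-1]})^{-1}$-type expressions whose pseudo-Lipschitz constants depend polynomially on the local $\ell_\infty$ norms of $\{(XW_1^{(s-1)})_{k\cdot}\}$; handling this cleanly requires a truncation-plus-apriori-bound argument analogous to the one used in Step 2 of the proof of Theorem \ref{thm:err_gd_NN} to pass from a nominally polynomially-growing integrand to a bona fide pseudo-Lipschitz one. Once that truncation is in place, all three pieces collapse into the stated estimate by Cauchy--Schwarz and the fact that the error from the complement of the truncation event is $\big(K\Lambda\varkappa_\ast\big)^{c_t}\cdot n^{-50}$.
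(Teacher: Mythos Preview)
Your proposal follows the same skeleton as the paper's proof: your piece (a) is exactly the paper's Step~2, and your pieces (b)+(c) together constitute the paper's Step~1. The paper packages (b) and (c) into a single stability estimate for the map $\mathfrak{H}_k^{(t)}(u^{([0:t])},V)\equiv -\big(I+\phi^{-1}\bm{\mathfrak{L}}_k^{(t)}\,\mathfrak{O}_{\mathbb{M}_q;t}(\bm{\rho}^{[t-1]})\big)^{-1}\bm{\mathfrak{L}}_k^{(t)}$, proves a pseudo-Lipschitz bound for $\mathfrak{H}_k^{(t)}$ directly from the finite Neumann expansion, and then replaces the gradient-descent quantities by the auxiliary-GFOM quantities via Proposition~\ref{prop:err_gd_GFOM} before applying Proposition~\ref{prop:se_aux_GFOM}. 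Your alternative of invoking Theorem~\ref{thm:se_gd_NN} directly after truncation is equally valid; the paper's detour through the auxiliary GFOM buys access to Proposition~\ref{prop:se_aux_GFOM}, which handles pseudo-Lipschitz functions of arbitrary order $\mathfrak{p}$ and hence avoids the truncation step altogether.

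There is, however, one genuine omission in your piece (a). To bound $A^{-1}(A-B)B^{-1}$ with $B=I+\phi^{-1}\hat{\bm{L}}_k^{[t-1]}\mathfrak{O}_{\mathbb{M}_q;t}(\hat{\bm{\rho}}^{[t-1]})$, you need an apriori bound on the \emph{random} quantity $\pnorm{\hat{\bm{\rho}}^{[t-1]}}{\infty}$, not just on $\hat{\bm{L}}_k^{[t-1]}$; otherwise your estimate carries an uncontrolled factor $(1+\pnorm{\hat{\bm{\rho}}^{[t-1]}-\bm{\rho}^{[t-1]}}{\infty})^{c_t}$, which does not match the lemma as stated. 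The paper's Step~2 devotes a separate recursive argument to this: from the finite Neumann series one first gets $\pnorm{\hat{\bm{\tau}}^{[t]}}{\infty}\le (K\Lambda)^{c_t}\,\E_{\pi_m}\pnorm{\hat{\bm{L}}_{\pi_m}^{[t-1]}}{\infty}^{c_t}\cdot\pnorm{\hat{\bm{\rho}}^{[t-1]}}{\infty}^{c_t}$, feeds this into the recursion for $\hat{\bm{\rho}}^{[t]}$ in Algorithm~\ref{alg:aug_gd_nn}, and iterates from $\hat{\bm{\rho}}^{[0]}=\emptyset$ to obtain $\pnorm{\hat{\bm{\rho}}^{[t]}}{\infty}\le (K\Lambda)^{c_t}(1+\E_{\pi_m}\pnorm{\hat{\bm{L}}_{\pi_m}^{[t-1]}}{\infty}^{c_t})^{c_t}$ independently of any difference term. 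The required moment bound on $\E_{\pi_m}\pnorm{\hat{\bm{L}}_{\pi_m}^{[t-1]}}{\infty}^{c_t}$ is then supplied by the same state-evolution argument you outline for piece (c). This step is routine once the finite Neumann structure is identified, but it is not automatic and should be included.
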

\begin{proof}
	Recall $\bm{\rho}^{[t]}\in (\mathbb{M}_q)^{t\times t}$ defined in (\ref{def:tau_rho_matrix}). Let
	\begin{align}\label{ineq:tau_hat_error_def_tau_bar}
	\bar{\bm{\tau}}^{[t]} &\equiv -\frac{1}{m}\sum_{k \in [m]} \Big((I_{\mathbb{M}_q})_t+\phi^{-1} \hat{\bm{L}}_k^{[t-1]}\mathfrak{O}_{\mathbb{M}_q;t}({\bm{\rho}}^{[t-1]})\Big)^{-1}\hat{\bm{L}}_k^{[t-1]} \in (\mathbb{M}_q)^{t\times t}.
	\end{align}
	\noindent (\textbf{Step 1}). In this step, we shall prove that for any $\mathfrak{r}\geq 1$,
	\begin{align}\label{ineq:tau_hat_error_step1}
	\E^{(0)} \pnorm{\bar{\bm{\tau}}^{[t]}-\bm{\tau}^{[t]} }{\infty}^{\mathfrak{r}}\leq \big(K\Lambda\varkappa_{\ast}\log n\big)^{c_{t}}\cdot n^{-1/c_t}. 
	\end{align}
	Recall $\bm{\mathfrak{L}}^{(t)}_k\big(u^{([0:t])},V^{([0:t-1])}\big)$ defined in (\ref{def:L_u_V}). For $k \in [m]$, consider the mapping $\mathfrak{H}^{(t)}_k: \R^{q[0:t]}\times \big((\mathbb{M}_q)^{[2:L]}\big)^{[0:t-1]}\to (\mathbb{M}_q)^{t\times t}$ defined by
	\begin{align}\label{ineq:tau_hat_error_step1_H}
	&\mathfrak{H}^{(t)}_k\big(u^{([0:t])},V^{([0:t-1])}\big)\\
	&\equiv - \Big( (I_{\mathbb{M}_q})_t+\phi^{-1}\bm{\mathfrak{L}}^{(t)}_k\big(u^{([0:t])},V^{([0:t-1])}\big) \mathfrak{O}_{\mathbb{M}_q;t}(\bm{\rho}^{[t-1]})\Big)^{-1} \bm{\mathfrak{L}}^{(t)}_k\big(u^{([0:t])},V^{([0:t-1])}\big).\nonumber
	\end{align}
	Then Proposition \ref{prop:tau_repres} implies that
	\begin{align}\label{ineq:tau_hat_error_step1_1}
	\bm{\tau}^{[t]}&= \E^{(0)} \mathfrak{H}^{(t)}_{\pi_m} \Big[\Big\{\Theta^{(s)}_{\pi_m}\big(\mathfrak{U}^{([0:s])}\big)\Big\}_{s \in [0:t]},V^{([0:t-1])}\Big].
	\end{align}
	\noindent \emph{Claim 1}: We claim that 
	\begin{align}\label{ineq:tau_hat_error_step1_2}
	\bar{\bm{\tau}}^{[t]}& = \E^{(0)} \mathfrak{H}^{(t)}_{\pi_m} \Big[\Big((X\mu_{\ast,[q]})_{\pi_m\cdot},\big\{\big(XW_1^{(s-1)}\big)_{\pi_m\cdot}\big\}_{s \in [1:t]}\Big),\bm{W}_{[2:L]}^{([0:t-1])}\Big].
	\end{align}
	To see (\ref{ineq:tau_hat_error_step1_2}), first note that for  Algorithm \ref{def:equiv_aug_gd_nn} we have the following identification:
	\begin{align*}
	\hat{H}_\alpha^{(t-1)} &=\mathscr{H}_{W_{[2:L]}^{(t-1)}}(X W_1^{(t-1)}),\nonumber\\
	 \mathfrak{d}_{k\ell} \hat{H}_\alpha^{(t-1)} &=\partial_{u_{k\ell}}\mathscr{H}_{W_{[2:L]}^{(t-1)}}(X W_1^{(t-1)}),\nonumber\\
	\hat{P}_\alpha^{(t-1)} &=\mathscr{P}^{(\alpha:L]}_{W_{[2:L]}^{(t-1)}}\Big(X W_1^{(t-1)}, \mathscr{R}_{W_{[2:L]}^{(t-1)}}\big(XW_1^{(t-1)},X\mu_{\ast, [q]}\big)\Big),\nonumber\\
	\mathfrak{d}^{(1)}_{k\ell}\hat{P}_\alpha^{(t-1)}  &= \partial_{u_{k\ell}}\mathscr{P}^{(\alpha:L]}_{W_{[2:L]}^{(t-1)}}\Big(X W_1^{(t-1)}, \mathscr{R}_{W_{[2:L]}^{(t-1)}}\big(XW_1^{(t-1)},X\mu_{\ast, [q]}\big)\Big),\nonumber\\
	\mathfrak{d}^{(2)}_{k\ell}\hat{P}_\alpha^{(t-1)}  &= \partial_{z_{k\ell}}\mathscr{P}^{(\alpha:L]}_{W_{[2:L]}^{(t-1)}}\Big(X W_1^{(t-1)}, \mathscr{R}_{W_{[2:L]}^{(t-1)}}\big(XW_1^{(t-1)},X\mu_{\ast, [q]}\big)\Big).
	\end{align*}
	So using the derivative formula in Proposition \ref{prop:derivative_formula}-(4), $\mathfrak{d}_{k\ell} \hat{ \mathfrak{S} }^{(t-1)}$ in Algorithm \ref{def:equiv_aug_gd_nn} can be written as 
	\begin{align*}
	\mathfrak{d}_{k\ell} \hat{ \mathfrak{S} }^{(t-1)}= \partial_{u_{k\ell}} \mathfrak{S}\big(XW_1^{(t-1)},X\mu_{\ast, [q]},W_{[2:L]}^{(t-1)}\big).
	\end{align*}
	This means for $s \in [1:t]$,
	\begin{align}\label{ineq:tau_hat_error_step1_3}
	\big(\hat{\bm{L}}_k^{[t-1]} \big)_{ss}& = \eta^{(s-1)} \sum_{\ell \in [q]}\big(\partial_{k\ell} \hat{ \mathfrak{S} }^{(s-1)}\big)^\top e_k e_\ell^\top \nonumber\\
	& = \eta^{(s-1)} \sum_{\ell \in [q]} \partial_{u_{\ell}} \mathfrak{S}_{k}\big( (XW_1^{(s-1)})_{k\cdot}, (X\mu_{\ast, [q]})_{k\cdot},W_{[2:L]}^{(s-1)}\big) e_\ell^\top \nonumber\\
	& = \bigg(\bm{\mathfrak{L}}^{(t)}_k\Big[\Big((X\mu_{\ast,[q]})_{k\cdot},\big\{\big(XW_1^{(s-1)}\big)_{k\cdot}\big\}_{s \in [1:t]}\Big),\bm{W}_{[2:L]}^{([0:t-1])}\Big]\bigg)_{ss},
	\end{align}
	proving (\ref{ineq:tau_hat_error_step1_2}) of Claim 1, in view of the definition of $\mathfrak{H}_k^{(t)}$ in (\ref{ineq:tau_hat_error_step1_H}).
	
	\noindent \emph{Claim 2}: We claim that $\{\mathfrak{H}^{(t)}_k\}_{k \in [m]}$ satisfies
	\begin{align}\label{ineq:tau_hat_error_step1_4}
	&\max_{k \in [m]}\bigpnorm{\mathfrak{H}^{(t)}_k\big(u_1^{([0:t])},V_1^{([0:t-1])}\big)- \mathfrak{H}^{(t)}_k\big(u_2^{([0:t])},V_2^{([0:t-1])}\big) }{}\\
	&\leq \Big[K\Lambda \varkappa_{\ast}\cdot (\pnorm{V_1^{([0:t-1])}}{\infty}+\pnorm{V_2^{([0:t-1])}}{\infty})\cdot \min_{\#=1,2} \Big(1+\pnorm{u_\#^{([1:t])}}{\infty}+ \pnorm{\varphi_\ast(u_\#^{(0)}) }{\infty} \Big)\Big]^{c_t}\nonumber\\
	&\qquad \times \Big(\pnorm{\varphi_\ast(u_1^{(0)})-\varphi_\ast(u_2^{(0)})}{\infty}+\pnorm{u_1^{([1:t])}-u_2^{([1:t])}}{\infty}+\pnorm{V_1^{([0:t-1])}-V_2^{([0:t-1])}}{\infty}\Big).\nonumber
	\end{align}	
To see this, note that in the definition (\ref{ineq:tau_hat_error_step1_H}), as the matrix $\bm{\mathfrak{L}}^{(t)}_k\big(u^{([0:t])},V^{([0:t-1])}\big) \mathfrak{O}_{\mathbb{M}_q;t}(\bm{\rho}^{[t-1]})$ is lower triangular with diagonal elements $0$, then its power to the order $qt$ must become a zero matrix.  So we may write 
\begin{align*}
&\mathfrak{H}^{(t)}_k\big(u^{([0:t])},V^{([0:t-1])}\big)\\
&= \sum_{r=0}^{qt} (-1)^{r+1} \bigg[\phi^{-1}\bm{\mathfrak{L}}^{(t)}_k\big(u^{([0:t])},V^{([0:t-1])}\big) \mathfrak{O}_{\mathbb{M}_q;t}(\bm{\rho}^{[t-1]})\bigg]^r \bm{\mathfrak{L}}^{(t)}_k\big(u^{([0:t])},V^{([0:t-1])}\big).
\end{align*}
Using the definition (\ref{def:L_u_V}) and Proposition \ref{prop:apr_est_derivative}-(8), $\bm{\mathfrak{L}}^{(t)}_k$ satisfies an estimate in the form of (\ref{ineq:tau_hat_error_step1_4}), and thereby concluding the proof of Claim 2.

Now replacing $\{\hat{u}^{(s)}\}_{s \in [0:t]}$ and $\bm{W}_{[2:L]}^{([0:t-1])}$ in (\ref{ineq:tau_hat_error_step1_2}) with the auxiliary GFOM iterates $\{u^{(s)}\}_{s \in [0:t]}$ and $V^{([0:t-1])}$, in view of (\ref{ineq:tau_hat_error_step1_4}), by using the error estimate in Proposition \ref{prop:err_gd_GFOM} along with the delocalization estimate in Proposition \ref{prop:GD_linfty_est} for $\{u^{(s)}\}_{s \in [0:t]}$ and the global estimates for $\big\{(\hat{v}^{(s)},v^{(s)})\big\}_{s \in [0:t]}$ in Proposition \ref{prop:GD_l2_est}, 
\begin{align*}
\bigpnorm{\bar{\bm{\tau}}^{[t]}- \E^{(0)} \mathfrak{H}^{(t)}_{\pi_m} \big(\{(u^{(s)})_{\pi_m\cdot}\}_{s \in [0:t]},V^{([0:t-1])}\big)}{\infty}\leq \big(K\Lambda\varkappa_{\ast}\log n\big)^{c_{t}}\cdot n^{-1/c_t}. 
\end{align*}
The claimed estimate in (\ref{ineq:tau_hat_error_step1}) now follows by applying Proposition \ref{prop:se_aux_GFOM} to the left hand side of the above display and (\ref{ineq:tau_hat_error_step1_1}).

	\noindent (\textbf{Step 2}). In this step, we shall prove that for any $\mathfrak{r}\geq 1$,
	\begin{align}\label{ineq:tau_hat_error_step2}
	\E^{(0)} \pnorm{\bar{\bm{\tau}}^{[t]}-\bm{\hat{\tau}}^{[t]} }{\infty}^{\mathfrak{r}}\leq \big(K\Lambda\varkappa_{\ast}\log n\big)^{c_{t}}\cdot \E^{(0),1/2}\pnorm{  \hat{\bm{\rho}}^{[t-1]}- \bm{\rho}^{[t-1]}}{\infty}^{2\mathfrak{r}}.
	\end{align}
	To this end, from the definition (\ref{ineq:tau_hat_error_def_tau_bar}), we may write 
	\begin{align}\label{ineq:tau_hat_error_step2_0}
	\bar{\bm{\tau}}^{[t]} &\equiv  \sum_{r=0}^{qt} (-1)^{r+1} \phi^{-r}\cdot \bigg[ \frac{1}{m}\sum_{k \in [m]} \Big( \hat{\bm{L}}_k^{[t-1]}\mathfrak{O}_{\mathbb{M}_q;t}({\bm{\rho}}^{[t-1]})\Big)^r\hat{\bm{L}}_k^{[t-1]}\bigg].
	\end{align}
	A similar representation holds for $\hat{\bm{\tau}}^{[t]}$ by replacing ${\bm{\rho}}^{[t-1]}$ in the above display by $\hat{\bm{\rho}}^{[t-1]}$. This means that 
	\begin{align}\label{ineq:tau_hat_error_step2_1}
	\pnorm{\bar{\bm{\tau}}^{[t]}-\bm{\hat{\tau}}^{[t]} }{\infty}&\leq \big[K\Lambda\cdot (\pnorm{{\bm{\rho}}^{[t-1]}}{\infty}+\pnorm{\hat{\bm{\rho}}^{[t-1]}}{\infty}) \big]^{c_t}\cdot \E_{\pi_m} \pnorm{\hat{\bm{L}}_{\pi_m}^{[t-1]}}{\infty}^{c_t}\nonumber\\
	&\qquad\qquad \times \pnorm{  \hat{\bm{\rho}}^{[t-1]}- \bm{\rho}^{[t-1]}}{\infty}.
	\end{align}
	We first estimate $\pnorm{\hat{\bm{\rho}}^{[t-1]}}{\infty}$. Using the representation (\ref{ineq:tau_hat_error_step2_0}) for $\hat{\bm{\tau}}^{[t]}$, 
	\begin{align*}
	\pnorm{\hat{\bm{\tau}}^{[t]}}{\infty}\leq (K\Lambda)^{c_t}\cdot \E_{\pi_m} \pnorm{\hat{\bm{L}}_{\pi_m}^{[t-1]}}{\infty}^{c_t}\cdot \pnorm{\hat{\bm{\rho}}^{[t-1]}}{\infty}^{c_t}.
	\end{align*}
	Using the definition of $\hat{\bm{\rho}}^{[t]}$ in Algorithm \ref{def:equiv_aug_gd_nn}, we have
	\begin{align*}
	\pnorm{\hat{\bm{\rho}}^{[t]}}{\infty}&\leq 1+(K\Lambda)^{c_t}\cdot \big(1+\pnorm{\hat{\bm{\tau}}^{[t]}}{\infty}\big)\cdot \pnorm{\hat{\bm{\rho}}^{[t-1]}}{\infty}.
	\end{align*}
	Combining the above two displays, we have
	\begin{align*}
	\pnorm{\hat{\bm{\rho}}^{[t]}}{\infty}&\leq (K\Lambda)^{c_t}\cdot \big(1+\E_{\pi_m} \pnorm{\hat{\bm{L}}_{\pi_m}^{[t-1]}}{\infty}^{c_t}\big)^{c_t}\cdot \big(1+\pnorm{\hat{\bm{\rho}}^{[t-1]}}{\infty}\big)^{c_t}.
	\end{align*}
	Iterating the above estimate with the trivial initial condition $\pnorm{\hat{\bm{\rho}}^{[0]}}{\infty}=0$, we have
	\begin{align}\label{ineq:tau_hat_error_step2_2}
	\pnorm{\hat{\bm{\rho}}^{[t]}}{\infty}&\leq (K\Lambda)^{c_t}\cdot \big(1+\E_{\pi_m} \pnorm{\hat{\bm{L}}_{\pi_m}^{[t-1]}}{\infty}^{c_t}\big)^{c_t}.
	\end{align}
	Next we provide moment estimates for $\E_{\pi_m} \pnorm{\hat{\bm{L}}_{\pi_m}^{[t-1]}}{\infty}^{c_t}$. By (\ref{ineq:tau_hat_error_step1_3}) and using a similar argument as in Step 1 that replaces $\{\hat{u}^{(s)}\}_{s \in [0:t]}$ and $\bm{W}_{[2:L]}^{([0:t-1])}$ with the auxiliary GFOM iterates $\{u^{(s)}\}_{s \in [0:t]}$ and $V^{([0:t-1])}$, it suffices to provide moment controls for $\E_{\pi_m}\pnorm{\bm{\mathfrak{L}}^{(t)}_{\pi_m} (\{(u^{(s)})_{\pi_m\cdot}\}_{s \in [0:t]},V^{([0:t-1])})}{\infty}^{c_t}$. As $\{\bm{\mathfrak{L}}^{(t)}_k\}$'s are a $(K\Lambda \varkappa_{\ast})^{c_t}$-pseudo-Lipschitz of order $c_0 r_0$, by Proposition \ref{prop:se_aux_GFOM} and the apriori estimate in Proposition \ref{prop:apr_est_se}, for any $\mathfrak{r}\geq 0$,
	\begin{align*}
	\E^{(0)} \big(\E_{\pi_m}\pnorm{\bm{\mathfrak{L}}^{(t)}_{\pi_m} (\{(u^{(s)})_{\pi_m\cdot}\}_{s \in [0:t]},V^{([0:t-1])})}{\infty}^{c_t}\big)^{ \mathfrak{r}}\leq \big(K\Lambda\varkappa_{\ast}\log n\big)^{c_{t}}.
	\end{align*}
	Summarizing the above arguments, we obtain
	\begin{align}\label{ineq:tau_hat_error_step2_3}
	\E^{(0)} \big(\E_{\pi_m} \pnorm{\hat{\bm{L}}_{\pi_m}^{[t-1]}}{\infty}^{c_t}\big)^{\mathfrak{r}}\leq \big(K\Lambda\varkappa_{\ast}\log n\big)^{c_{t}}.
	\end{align}
	The claimed estimate in (\ref{ineq:tau_hat_error_step2}) now follows from a combination of (\ref{ineq:tau_hat_error_step2_1}), (\ref{ineq:tau_hat_error_step2_2}) and (\ref{ineq:tau_hat_error_step2_3}), along with the apriori estimates in proposition \ref{prop:apr_est_se}.
	
	\noindent (\textbf{Step 3}). The desired estimate now follows from (\ref{ineq:tau_hat_error_step1}) and (\ref{ineq:tau_hat_error_step2}). 
\end{proof}

\begin{proof}[Proof of Proposition \ref{prop:tau_rho_est_err}]
	Combining Lemmas \ref{lem:rho_hat_error} and \ref{lem:tau_hat_error}, for any $\mathfrak{r}\geq 1$,
	\begin{align*}
	\E^{(0)} \pnorm{\hat{\bm{\rho}}^{[t]}-\bm{\rho}^{[t]} }{\infty}^{\mathfrak{r}} &\leq  \big(K\Lambda\varkappa_{\ast}\log n\big)^{c_{t}}\cdot \Big(1+ \E^{(0),1/4}\pnorm{  \hat{\bm{\rho}}^{[t-1]}- \bm{\rho}^{[t-1]}}{\infty}^{4\mathfrak{r}}\Big)\\
	&\qquad \times \Big(n^{-1/c_t}+ \E^{(0),1/4}\pnorm{  \hat{\bm{\rho}}^{[t-1]}- \bm{\rho}^{[t-1]}}{\infty}^{4\mathfrak{r}}\Big).
	\end{align*}
	Now iterating the estimate until the trivial initial condition $\pnorm{\hat{\bm{\rho}}^{[0]}- \bm{\rho}^{[0]}}{\infty}=0$ to conclude the estimate for $\E^{(0)}\pnorm{  \hat{\bm{\rho}}^{[t]}- \bm{\rho}^{[t]}}{\infty}^{\mathfrak{r}}$. The estimate for $\E^{(0)} \pnorm{\hat{\bm{\tau}}^{[t]}-\bm{\tau}^{[t]} }{\infty}^{\mathfrak{r}}$ follows from another application of Lemma \ref{lem:tau_hat_error}.
\end{proof}

\subsection{Proof of Theorem \ref{thm:est_test_err}}\label{subsection:proof_est_test_err}

Recall $U^{(t)}$ defined in (\ref{def:U_debiased}), and $\hat{U}^{(t)}$  defined in Algorithm \ref{def:equiv_aug_gd_nn}.
\begin{lemma}\label{lem:U_apr_err_est}
	Suppose Assumption \ref{assump:gd_NN} holds for some $K,\Lambda\geq 2$ and $r_0\geq 0$. Then for any $\mathfrak{r}\geq 1$, there exists some $c_t=c_t(t,q,L,\mathfrak{r},r_0)>0$ such that 
	\begin{enumerate}
		\item $\E^{(0)}\big(n^{-1/2}\pnorm{U^{(t)}}{}\big)^{\mathfrak{r}}+\E^{(0)}\big(n^{-1/2}\pnorm{\hat{U}^{(t)}}{}\big)^{\mathfrak{r}}\leq \big(K\Lambda\varkappa_{\ast}\log n\big)^{c_{t}}$.
		\item $\E^{(0)}\big(n^{-1/2}\pnorm{U^{(t)}-\hat{U}^{(t)} }{}\big)^{\mathfrak{r}}\leq \big(K\Lambda\varkappa_{\ast}\big)^{c_{t}} \cdot n^{-1/c_t}$.
	\end{enumerate}
\end{lemma}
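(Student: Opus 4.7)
The plan is to reduce both statements to a combination of (i) apriori $\ell_2$-type bounds on $XW_1^{(t)}$ and on the pre-gradients $\mathfrak{S}\big(XW_1^{(s-1)}, X\mu_{\ast,[q]}, W_{[2:L]}^{(s-1)}\big)$, and (ii) the operator-norm control of the Onsager matrices $\{\rho_{t,s}\}$ and $\{\hat{\rho}_{t,s}\}$, using the consistency result in Proposition \ref{prop:tau_rho_est_err}. The key structural observation is that in Algorithm \ref{def:equiv_aug_gd_nn}, one has the exact identification (via the derivative formulae in Proposition \ref{prop:derivative_formula} used in the proof of Lemma \ref{lem:tau_hat_error})
\[
\hat{P}_1^{(s-1)}\odot \sigma_1'\big(XW_1^{(s-1)}\big) \;=\; \mathfrak{S}\big(XW_1^{(s-1)}, X\mu_{\ast,[q]}, W_{[2:L]}^{(s-1)}\big),
\]
so $U^{(t)}$ and $\hat{U}^{(t)}$ share the same leading term $XW_1^{(t)}$ and the same pre-gradient vectors, differing \emph{only} through the correction matrices $\{\rho_{t,s}\}$ versus $\{\hat{\rho}_{t,s}\}$.

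For part (1), I would start from the triangle inequality
\[
n^{-1/2}\pnorm{U^{(t)}}{} \;\leq\; n^{-1/2}\pnorm{XW_1^{(t)}}{} + \phi^{-1}\sum_{s\in[1:t]} \eta_1^{(s-1)} \pnorm{\rho_{t,s}}{\op}\cdot n^{-1/2}\bigpnorm{\mathfrak{S}\big(XW_1^{(s-1)}, X\mu_{\ast,[q]}, W_{[2:L]}^{(s-1)}\big)}{},
\]
and similarly for $\hat{U}^{(t)}$ with $\pnorm{\rho_{t,s}}{\op}$ replaced by $\pnorm{\hat{\rho}_{t,s}}{\op}$. The term $n^{-1/2}\pnorm{XW_1^{(t)}}{} = n^{-1/2}\pnorm{\hat{u}^{(t+1)}}{}$ is moment-controlled by Proposition \ref{prop:GD_l2_est}. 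The pre-gradient $\ell_2$-norm is bounded via Proposition \ref{prop:apr_est_det_fcn}-(6) (with $k$ summed) by a polylogarithmic factor times $n^{1/2} + \pnorm{XW_1^{(s-1)}}{} + \pnorm{\varphi_\ast(X\mu_\ast)}{} + \pnorm{\xi}{}$, which after taking $\mathfrak{r}$-th moments is again controlled by Proposition \ref{prop:GD_l2_est} and the subgaussian tail on $X\mu_\ast$. The deterministic bound $\pnorm{\rho_{t,s}}{\op} \leq (K\Lambda\varkappa_{\ast})^{c_t}$ comes from Proposition \ref{prop:apr_est_se}, while the corresponding bound for $\pnorm{\hat{\rho}_{t,s}}{\op}$ follows from the moment estimate (\ref{ineq:tau_hat_error_step2_2}) established in the proof of Lemma \ref{lem:tau_hat_error}. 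Combining via H\"older's inequality yields part (1).

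For part (2), the cancellation yields
\[
\hat{U}^{(t)} - U^{(t)} = \phi^{-1}\sum_{s\in[1:t]} \eta_1^{(s-1)}\cdot \mathfrak{S}\big(XW_1^{(s-1)}, X\mu_{\ast,[q]}, W_{[2:L]}^{(s-1)}\big)\cdot \big(\hat{\rho}_{t,s} - \rho_{t,s}\big)^\top,
\]
so that
\[
n^{-1/2}\pnorm{\hat{U}^{(t)} - U^{(t)}}{} \leq (K\Lambda)^{c_t}\sum_{s\in[1:t]} n^{-1/2}\bigpnorm{\mathfrak{S}\big(XW_1^{(s-1)}, X\mu_{\ast,[q]}, W_{[2:L]}^{(s-1)}\big)}{}\cdot \bigpnorm{\hat{\rho}_{t,s}-\rho_{t,s}}{\op}.
\]
Taking $\mathfrak{r}$-th moments and applying Cauchy--Schwarz separates the two factors: the pre-gradient $\ell_2$-moments are handled exactly as in part (1), and the correction-matrix moment $\E^{(0)}\pnorm{\hat{\rho}_{t,s}-\rho_{t,s}}{\op}^{2\mathfrak{r}}$ is precisely what Proposition \ref{prop:tau_rho_est_err} provides, contributing the polynomial-in-$n^{-1}$ decay.

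No step here appears to involve substantial difficulty beyond careful bookkeeping: the hardest ingredient (consistency of $\hat{\bm{\rho}}^{[t]}$) has already been established in Proposition \ref{prop:tau_rho_est_err}, and the remaining estimates are routine consequences of the apriori/stability lemmas developed in Sections \ref{section:proof_preliminaries}--\ref{section:proof_gfom}. The only mild subtlety is ensuring uniform-in-$\mathfrak{r}$ moment controls on the pre-gradient $\ell_2$ norms, which requires combining the pseudo-Lipschitz bound of Proposition \ref{prop:apr_est_det_fcn}-(6) with the $r_0$-polynomial growth assumption (A5) and the subgaussian tails of $X\mu_\ast$ from (A2); this is handled by standard moment integration once the global $\pnorm{X}{\op}$ tail is controlled on a high-probability event.
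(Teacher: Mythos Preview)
Your proposal is correct and follows essentially the same approach as the paper: both exploit the identification $\hat{P}_1^{(s-1)}\odot\sigma_1'(XW_1^{(s-1)})=\mathfrak{S}(XW_1^{(s-1)},X\mu_{\ast,[q]},W_{[2:L]}^{(s-1)})$ so that $U^{(t)}$ and $\hat{U}^{(t)}$ differ only through $\rho_{t,s}$ versus $\hat{\rho}_{t,s}$, then combine the pre-gradient $\ell_2$ bound via Proposition~\ref{prop:apr_est_det_fcn}-(6) and Proposition~\ref{prop:GD_l2_est} with the Onsager-matrix controls from Propositions~\ref{prop:apr_est_se} and~\ref{prop:tau_rho_est_err}. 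The only cosmetic difference is that the paper handles $\hat{U}^{(t)}$ in part~(1) by reducing to $U^{(t)}$ via Proposition~\ref{prop:tau_rho_est_err}, whereas you bound $\pnorm{\hat{\rho}_{t,s}}{\op}$ directly through~(\ref{ineq:tau_hat_error_step2_2}); both are valid.
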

\begin{proof}
	\noindent (1). By Proposition \ref{prop:tau_rho_est_err}, we only need to verify the claim for $\E^{(0)}\big(n^{-1/2}\pnorm{U^{(t)}}{}\big)^{\mathfrak{r}}$. By the definition (\ref{def:U_debiased}), we have
	\begin{align}\label{ineq:U_apr_err_est_1}
	\frac{\pnorm{U^{(t)}}{}}{\sqrt{n}}&\leq \big(K\Lambda (1+\pnorm{A}{\op})\big)^{c_{t}}\cdot  \pnorm{\bm{\rho}^{[t]}}{\infty}\nonumber\\
	&\qquad \times \max_{s \in [1:t]}\bigg( \pnorm{W_1^{(t)}}{}+ \frac{1}{\sqrt{n}}\bigpnorm{\mathsf{P}_{X,\bm{W}^{(s-1)}}^{(1:L]}\big(G_{\bm{W}^{(s-1)};L}(X)-Y\big)}{} \bigg).
	\end{align}
	Using that for any $s \in [1:t]$,
	\begin{align*}
	\mathsf{P}_{X,\bm{W}^{(s-1)}}^{(1:L]}\big(G_{\bm{W}^{(s-1)};L}(X)-Y\big) = \mathscr{P}_{W^{(s-1)}_{[2:L]}}^{(1:L]}\Big(XW_1^{(s-1)}, \mathscr{R}_{W^{(s-1)}_{[2:L]}}\big(XW_1^{(s-1)},X\mu_{\ast,[q]}\big)\Big),
	\end{align*}
	by the apriori estimate in Proposition \ref{prop:apr_est_det_fcn},
	\begin{align}\label{ineq:U_apr_err_est_2}
	&\frac{1}{\sqrt{n}}\bigpnorm{\mathsf{P}_{X,\bm{W}^{(s-1)}}^{(1:L]}\big(G_{\bm{W}^{(s-1)};L}(X)-Y\big)}{}\nonumber\\
	&\leq (K\Lambda)^{c_t}\cdot \pnorm{W^{(s-1)}_{[2:L]}}{\infty}^{c_0}\cdot \frac{1}{\sqrt{n}} \bigpnorm{\mathscr{R}_{W^{(s-1)}_{[2:L]}}\big(XW_1^{(s-1)},X\mu_{\ast,[q]}\big)}{}\nonumber\\
	&\leq \big(K\Lambda \varkappa_{\ast}(1+\pnorm{A}{\op}+\pnorm{X\mu_\ast}{\infty})\big)^{c_{t}}\cdot \pnorm{W^{(s-1)}_{[2:L]}}{\infty}^{c_0}\cdot \big(1+\pnorm{W^{(s-1)}_1}{}\big).
	\end{align}
	Combining (\ref{ineq:U_apr_err_est_1}) and (\ref{ineq:U_apr_err_est_2}) and the apriori estimate in Proposition \ref{prop:apr_est_se}, 
	\begin{align*}
	\frac{\pnorm{U^{(t)}}{}}{\sqrt{n}}&\leq \big(K\Lambda\varkappa_{\ast} (1+\pnorm{A}{\op}+\pnorm{X\mu_\ast}{\infty})\big)^{c_{t}}\cdot \max_{s \in [1:t]}\Big(1+\pnorm{W^{(s-1)}_{[2:L]}}{\infty}+\pnorm{W^{(s-1)}_1}{}\Big)^{c_0}.
	\end{align*}
	The claimed estimate now follows from the apriori $\ell_2$ control in Proposition \ref{prop:GD_l2_est}.
	
	\noindent (2). By the definition of $U^{(t)}$ and $\hat{U}^{(t)}$,
	\begin{align*}
	\frac{\pnorm{U^{(t)}-\hat{U}^{(t)} }{}}{\sqrt{n}}&\leq \max_{s \in [1:t]} \frac{(K\Lambda)^{c_t}}{\sqrt{n}}\bigpnorm{\mathsf{P}_{X,\bm{W}^{(s-1)}}^{(1:L]}\big(G_{\bm{W}^{(s-1)};L}(X)-Y\big)}{}\cdot \pnorm{\bm{\rho}^{[t]}- \hat{\bm{\rho}}^{[t]}}{\infty}.
	\end{align*}
	Now we may use the estimate (\ref{ineq:U_apr_err_est_2}) along with Proposition \ref{prop:tau_rho_est_err} to conclude. 
\end{proof}

\begin{proof}[Proof of Theorem \ref{thm:est_test_err}]
	We define the intermediate quantity
	\begin{align}\label{ineq:est_test_err_0}
	\tilde{\mathscr{E}}_{\texttt{test}}^{(t)}(X,Y)\equiv 
	\frac{1}{m} \bigpnorm{Y_{[q]}- \mathscr{G}_{V_{[2:L]}^{(t)}}(U^{(t)})}{ }^2.
	\end{align}
	
	\noindent (\textbf{Step 1}). In this step, we shall prove that for any $\mathfrak{r}\geq 1$,
	\begin{align}\label{ineq:est_test_err_step1}
	\E^{(0)}\bigabs{\tilde{\mathscr{E}}_{\texttt{test}}^{(t)}(X,Y)-\hat{\mathscr{E}}_{\texttt{test}}^{(t)}(X,Y)}^{\mathfrak{r}}\leq \big(K\Lambda\varkappa_{\ast}\log n\big)^{c_{t}} \cdot n^{-1/c_t}.
	\end{align}
	To this end, using the apriori and stability estimates in Proposition \ref{prop:apr_est_det_fcn}-(1)(2), and the apriori estimate in Proposition \ref{prop:apr_est_se},
	\begin{align*}
	&\bigabs{\tilde{\mathscr{E}}_{\texttt{test}}^{(t)}(X,Y)-\hat{\mathscr{E}}_{\texttt{test}}^{(t)}(X,Y)}\\
	&\leq \frac{1}{m}\cdot\Big(2\pnorm{Y}{}+\bigpnorm{\mathscr{G}_{V_{[2:L]}^{(t)}}(U^{(t)})}{}+\bigpnorm{\mathscr{G}_{W_{[2:L]}^{(t)}}(\hat{U}^{(t)})}{}\Big)\cdot \bigpnorm{\mathscr{G}_{V_{[2:L]}^{(t)}}(U^{(t)})-\mathscr{G}_{W_{[2:L]}^{(t)}}(\hat{U}^{(t)})}{}\\
	&\leq \big(K\Lambda\varkappa_{\ast}\log n\cdot (1+\pnorm{X\mu_\ast}{\infty})\big)^{c_{t}}\cdot \big(1+\pnorm{V^{(t)}_{[2:L]}}{\infty}+\pnorm{ W^{(t)}_{[2:L]}}{\infty}\big)^{c_t}\\
	&\qquad \times \big(1+m^{-1/2}\pnorm{U^{(t)}}{}+m^{-1/2}\pnorm{\hat{U}^{(t)}}{}\big)^{c_0}\cdot  \big(m^{-1/2}\pnorm{U^{(t)}-\hat{U}^{(t)}}{}+\pnorm{V^{(t)}_{[2:L]}- W^{(t)}_{[2:L]}}{\infty}\big).
	\end{align*}
	The claimed estimate (\ref{ineq:est_test_err_step1}) now follows by an application of the state evolution Theorem \ref{thm:se_gd_NN}, the apriori estimate in Proposition \ref{prop:apr_est_se}, the $\ell_2$ estimate in Proposition \ref{prop:GD_l2_est} and Lemma \ref{lem:U_apr_err_est}.

	\noindent (\textbf{Step 2}). In this step, we shall prove that for any $\mathfrak{r}\geq 1$,
	\begin{align}\label{ineq:est_test_err_step2}
	\E^{(0)}\bigabs{\tilde{\mathscr{E}}_{\texttt{test}}^{(t)}(X,Y)-\E^{(0)} \pnorm{\mathscr{R}_{V^{(t)};\pi_m}\big(\mathfrak{U}^{(t+1)},\mathfrak{U}^{(0)} \big)}{}^2 }^{\mathfrak{r}}\leq \big(K\Lambda\varkappa_{\ast}\log n\big)^{c_{t}} \cdot n^{-1/c_t}.
	\end{align}
	To see this, for $k \in [m]$, with the test function $\tilde{\psi}_k:\R^q\times \R^q\to \R$ defined as
	\begin{align*}
	\tilde{\psi}_k(u^{(0)},u^{(t)})\equiv \bigpnorm{\varphi_\ast\big(\mathscr{T}_{\varkappa_{\ast}\log n}(u^{(0)})\big)+\xi_{k;[q]}- \mathscr{G}_{V_{[2:L]}^{(t)};k }(u^{(t)})}{ }^2,
	\end{align*}
	we have on an event $E_0$ with $\Prob^{(0)}(E_0^c)\leq n^{-100}$,
	\begin{align*}
	\tilde{\mathscr{E}}_{\texttt{test}}^{(t)}(X,Y)\equiv 
	\frac{1}{m} \sum_{k \in [m]} \tilde{\psi}_k\big((X\mu_{\ast,[q]}))_{k\cdot},U^{(t)}_{k\cdot}\big).
	\end{align*}
	Moreover, the apriori and stability estimates in Proposition \ref{prop:apr_est_det_fcn}-(1)(2) along with the apriori estimate in Proposition \ref{prop:apr_est_se} show that $\{\tilde{\psi}_k\}_{k \in [m]}$ are $\big(K\Lambda\varkappa_{\ast}\log n\big)^{c_{t}}$-pseudo-Lipschitz of order $2$. Consequently we may conclude (\ref{ineq:est_test_err_step2}) by (i) applying the state evolution Theorem \ref{thm:se_gd_NN} and (i) using a routine argument to remove the effect of $E_0$ via apriori estimates.

	\noindent (\textbf{Step 3}). Finally, the claimed estimate with $\#=\E^{(0)} \pnorm{\mathscr{R}_{V^{(t)};\pi_m}\big(\mathfrak{U}^{(t+1)},\mathfrak{U}^{(0)} \big)}{}^2$ follows by combining (\ref{ineq:est_test_err_step1}) and (\ref{ineq:est_test_err_step2}). The other claim follows by a further application of Theorem \ref{thm:err_gd_NN}.
\end{proof}

\section{Proofs for Section \ref{section:gaussian_representation_f_W}}\label{section:proof_gaussian_representation_f_W}

\subsection{Proof of Theorem \ref{thm:represent_learning}}\label{subsection:proof_represent_learning}

We shall first prove the following marginal distributional characterization of the first-layer weight $W_1^{(t)}$.

\begin{proposition}\label{prop:feature_learning_grad}
	Fix $t\in \N$. Suppose Assumption \ref{assump:gd_NN} holds for some $K,\Lambda\geq 2$ and $r_0\geq 0$. Fix a sequence of $\Lambda$-pseudo-Lipschitz functions $\{\phi_\ell:\R^{q} \to \R\}_{\ell \in [n]}$ of order $2$. Then for any $\mathfrak{r}\geq 1$, there exists some $c_t=c_t(t,q,L,\mathfrak{r},r_0)>0$ such that 
	\begin{align*}
	&\E^{(0)}  \bigg[\biggabs{\frac{1}{n}\sum_{\ell \in [n]} \phi_\ell\big(n^{1/2}W_{1;\ell\cdot}^{(t)}\big) - \frac{1}{n}\sum_{\ell \in [n]}  \E^{(0)}  \phi_\ell\big(\mathfrak{b}_{\ast,\ell}^{(t)}+\Omega_{t,t}^{1/2}\mathsf{Z}_q\big)   }^{\mathfrak{r}}\bigg] \leq  \big(K\Lambda\varkappa_{\ast}\big)^{c_{t}}\cdot n^{-1/c_t}.
	\end{align*}
	Here
	\begin{align}\label{eqn:feature_learning_grad_1}
	\mathfrak{b}_{\ast,\ell}^{(t)}\equiv n^{1/2}(\mu_\ast)_\ell\cdot \mathfrak{m}_W^{(t)} +   \mathfrak{M}_W^{(t)} \big(n^{1/2}W_1^{(0),\top} e_\ell\big) \in \R^q.
	\end{align}
\end{proposition}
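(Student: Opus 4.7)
The plan is to deduce this directly from the first-layer marginal of Theorem~\ref{thm:se_gd_NN}. Setting the test functions in that theorem to $\psi_\ell(v^{([0:t])}) \equiv \phi_\ell(v^{(t)})$---which remain $\Lambda$-pseudo-Lipschitz of order $2$---immediately yields
\[
\E^{(0)}\biggabs{\frac{1}{n}\sum_{\ell\in[n]} \phi_\ell\big(n^{1/2}W_{1;\ell\cdot}^{(t)}\big) - \frac{1}{n}\sum_{\ell\in[n]} \E^{(0)}\phi_\ell\big(\Delta_\ell^{(t)}(\mathfrak{V}^{([1:t])})\big)}^{\mathfrak{r}} \leq (K\Lambda\varkappa_\ast)^{c_t}\cdot n^{-1/c_t}
\]
at the claimed rate. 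It therefore suffices to identify, conditionally on $(\mu_\ast,\bm{W}^{(0)},\xi)$, the law of $\Delta_\ell^{(t)}(\mathfrak{V}^{([1:t])})$ with $\mathcal{N}\big(\mathfrak{b}_{\ast,\ell}^{(t)},\,\Omega_{t,t}\big)$ in a way compatible with integration against $\phi_\ell$.

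Reading row $\ell$ of (\ref{def:Delta}) gives $\Delta_\ell^{(t)}(\mathfrak{V}^{([1:t])}) = (D_t)_{\ell\cdot}^\top + \sum_{s\in[1:t]}\rho_{t,s}\,\mathfrak{V}^{(s)}$, a Gaussian vector in $\R^q$ whose covariance $\sum_{r,s\in[1:t]}\rho_{t,r}\Sigma_{r,s}\rho_{t,s}^\top$ equals $\Omega_{t,t}$ by the fourth line of (S3). It then remains to show $(D_t)_{\ell\cdot}^\top = \mathfrak{b}_{\ast,\ell}^{(t)}$. Since the recursion (S4) is linear in the two base matrices $D_{-1} = [n^{1/2}\mu_\ast\mid 0_{n\times(q-1)}]$ and $D_0 = n^{1/2}W_1^{(0)}$, I write $D_t = D_{-1}A_t + D_0 B_t$ for matrices $A_t, B_t \in \mathbb{M}_q$ obeying
\[
A_t = \delta_t^\top + \sum_{r\in[1:t]} A_{r-1}(\tau_{t,r}^\top + I_q\bm{1}_{r=t}),\qquad B_t = \sum_{r\in[1:t]} B_{r-1}(\tau_{t,r}^\top + I_q\bm{1}_{r=t}),
\]
with $A_0 = 0$ and $B_0 = I_q$. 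Transposing and comparing with (\ref{eqn:feature_learning_grad_2}) identifies $A_t^\top e_1 = \mathfrak{m}_W^{(t)}$ and $B_t^\top = \mathfrak{M}_W^{(t)}$. Using that only the first column of $D_{-1}$ is non-zero, one obtains $(D_t)_{\ell\cdot}^\top = n^{1/2}(\mu_\ast)_\ell\,\mathfrak{m}_W^{(t)} + \mathfrak{M}_W^{(t)}(n^{1/2}W_1^{(0),\top}e_\ell)$, which is exactly $\mathfrak{b}_{\ast,\ell}^{(t)}$ as in (\ref{eqn:feature_learning_grad_1}).

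No substantial obstacle is expected: the proposition is essentially a Gaussian-law rewriting of the first-layer marginal of Theorem~\ref{thm:se_gd_NN}, with all analytic content (apriori bounds, pseudo-Lipschitz integrability, and control of the Gaussian expectations on the right-hand side) already handled via Proposition~\ref{prop:apr_est_se} and the GFOM machinery underlying that theorem. The only step requiring any care is the algebraic identification in the second paragraph—keeping track of transposition conventions and of the special structure $D_{-1}=[n^{1/2}\mu_\ast\mid 0]$, whereby the full matrix $A_t$ collapses onto its first column $A_t^\top e_1$.
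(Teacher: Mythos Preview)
Your proposal is correct and follows essentially the same route as the paper: apply Theorem~\ref{thm:se_gd_NN} to reduce to the law of $\Delta_\ell^{(t)}(\mathfrak{V}^{([1:t])})$, read off the Gaussian covariance $\Omega_{t,t}$ from (S3), and identify the mean $D_t^\top e_\ell$ with $\mathfrak{b}_{\ast,\ell}^{(t)}$ via the linear recursion (S4). Your introduction of the intermediate matrices $A_t,B_t$ makes the algebra slightly more explicit than the paper's treatment, but the argument is otherwise identical.
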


\begin{proof}
	By the definition of $\Delta_{\ell}^{(t)}(\mathfrak{V}^{([1:t])} )$ in (\ref{def:Delta}), we have
	\begin{align}\label{ineq:feature_learning_grad_dyn_1}
	\frac{1}{n}\sum_{\ell \in [n]} \phi_\ell\big(n^{1/2}W_{1;\ell\cdot}^{(t)}\big) \approx \frac{1}{n}\sum_{\ell \in [n]}  \E^{(0)}  \phi_\ell\big(D_t^{\top}e_\ell+\Omega_{t,t}^{1/2}\mathsf{Z}_q\big),
	\end{align}
	where $\approx$ stands for the error term (in arbitrary moments) as in Theorem \ref{thm:se_gd_NN}.
	
	Now we shall compute $D_t^{\top}e_\ell$. Using the recursive definition of $D_t$ in (S4), as $\delta_t D_{-1}^\top e_\ell = n^{1/2}(\mu_\ast)_\ell\cdot \delta_t e_1$, we have for $t\geq 1$,
	\begin{align}\label{ineq:feature_learning_grad_dyn_2}
	D_t^\top e_\ell & = n^{1/2}(\mu_\ast)_\ell\cdot \delta_t e_1+ \sum\nolimits_{r \in [1:t]}  \big(\tau_{t,r}+I_q\bm{1}_{r=t}\big) D_{r-1}^\top e_\ell. 
	\end{align}
	As $D_0^\top e_\ell= n^{1/2} W_1^{(0),\top} e_\ell$, this means $D_t^{\top}e_\ell$ must take the form 
	\begin{align*}
	D_t^{\top}e_\ell = n^{1/2}(\mu_\ast)_\ell\cdot \mathfrak{m}_W^{(t)} +   \mathfrak{M}_W^{(t)} \big(n^{1/2}W_1^{(0),\top} e_\ell\big)
	\end{align*}
	for some $\mathfrak{m}_W^{(t)} \in \R^q$ and $\mathfrak{M}_W^{(t)} \in \mathbb{M}_q$. Moreover, $\mathfrak{m}_W^{(0)}=0_q$ and $\mathfrak{M}_W^{(0)}=I_q$. Using (\ref{ineq:feature_learning_grad_dyn_2}), we may obtain the recursive formula for $\mathfrak{m}_W^{(t)}$ and $\mathfrak{M}_W^{(t)}$ in (\ref{eqn:feature_learning_grad_2}). 
\end{proof}

\begin{proof}[Proof of Theorem \ref{thm:represent_learning}]
	We note that $f_{\bm{W}^{(t)}}(x) =\mathfrak{h}_{W^{(t)}_{[2:L]}}\big(W_1^{(t),\top} x\big)$. 
	
	\noindent (\textbf{Step 1}). In this step, we show that
	\begin{align}\label{ineq:represent_learning_step1}
	\E^{(0)}\Big[\mathfrak{d}_{\mathrm{BL}}^{(X,Y)}\Big(f_{\bm{W}^{(t)}}(\mathsf{Z}_n),\mathfrak{h}_{V^{(t)}_{[2:L]}}\big(W_1^{(t),\top} \mathsf{Z}_n\big)\Big)\Big]^{\mathfrak{r}} \leq \big(K\Lambda\varkappa_{\ast}\big)^{c_t}\cdot n^{-1/c_t}.
	\end{align}
	Here $\mathsf{Z}_n\sim \mathcal{N}(0,I_n)$ is independent of all other random variables.
	
	To prove (\ref{ineq:represent_learning_step1}), note that by the apriori estimates in Propositions \ref{prop:apr_est_det_fcn} and \ref{prop:apr_est_se} along with Theorem \ref{thm:se_gd_NN},
		\begin{align*}
		&\E^{(0)}\bigpnorm{\mathfrak{h}_{W^{(t)}_{[2:L]}}\big(W_1^{(t),\top} \mathsf{Z}_n\big)- \mathfrak{h}_{V^{(t)}_{[2:L]}}\big(W_1^{(t),\top} \mathsf{Z}_n\big)}{}^{\mathfrak{r}}\\
		&\leq \E^{(0)} \Big\{\Big[K\Lambda\varkappa_{\ast}\cdot \max_{\alpha \in [2:L]} \big(1+\pnorm{W_\alpha^{(t)}}{\infty}\big) \Big]^{c_t}\cdot \big(1+\pnorm{W_1^{(t),\top} \mathsf{Z}_n}{\infty}\big)^{\mathfrak{r}}\cdot \max_{\alpha \in [2:L]}\pnorm{W_\alpha^{(s)}-V^{(s)}_\alpha}{\infty}^{\mathfrak{r}}\Big\}\\
		&\leq \big(K\Lambda\varkappa_{\ast}\log n \big)^{c_t}\cdot n^{-1/c_t}.
		\end{align*}
	The claimed estimate (\ref{ineq:represent_learning_step1}) now follows from the above display upon noting that by Jensen's inequality, for $\mathfrak{r}\geq 1$, 
	\begin{align*}
	&\E^{(0)}\Big[\mathfrak{d}_{\mathrm{BL}}^{(X,Y)}\Big(f_{\bm{W}^{(t)}}(\mathsf{Z}_n),\mathfrak{h}_{V^{(t)}_{[2:L]}}\big(W_1^{(t),\top} \mathsf{Z}_n\big)\Big)\Big]^{\mathfrak{r}} \leq \E^{(0)} \bigpnorm{f_{\bm{W}^{(t)}}(\mathsf{Z}_n)- \mathfrak{h}_{V^{(t)}_{[2:L]}}\big(W_1^{(t),\top} \mathsf{Z}_n\big)}{}^{\mathfrak{r}}.
	\end{align*}

	\noindent (\textbf{Step 2}). In this step, we show that with
	\begin{align}\label{ineq:represent_learning_Sigma}
	\Sigma_{W_1^{(t)}}&\equiv \pnorm{\mu_\ast}{}^2\cdot  \mathfrak{m}_W^{(t)} \mathfrak{m}_W^{(t),\top}+ \mathfrak{M}_W^{(t)}W_1^{(0),\top}W_1^{(0)} \mathfrak{M}_W^{(t),\top} \nonumber\\
	&\qquad  +\mathfrak{m}_W^{(t)}\mu_\ast^\top W_1^{(0)}\mathfrak{M}_W^{(t),\top}+ \mathfrak{M}_W^{(t)}W_1^{(0),\top}\mu_\ast \mathfrak{m}_W^{(t),\top}+ \Omega_{t,t} \in \mathbb{M}_q,
	\end{align}
	we have 
	\begin{align}\label{ineq:represent_learning_step2}
	\E^{(0)}\pnorm{W_1^{(t),\top} W_1^{(t)}- \Sigma_{W_1^{(t)}} }{\infty}^{\mathfrak{r}}\leq \big(K\Lambda\varkappa_{\ast}\big)^{c_{t}}\cdot n^{-1/c_t}.
	\end{align}
	To see (\ref{ineq:represent_learning_step2}), for fixed $d,d' \in [q]$, let  $\phi(w_1,\ldots,w_q)\equiv w_d w_{d'}$. Clearly $\phi$ is $c_0$-pseudo-Lipschitz of order $2$. Then  $n^{-1}\sum_{\ell \in [n]} \phi\big(n^{1/2}W_{1;\ell\cdot}^{(t)}\big)=e_d^\top W_1^{(t),\top} W_1^{(t)} e_{d'}$, and 
	\begin{align*}
	&\frac{1}{n}\sum_{\ell \in [n]}  \E^{(0)}  \phi\big(\mathfrak{b}_{\ast,\ell}^{(t)}+\Omega_{t,t}^{1/2}\mathsf{Z}_q\big)\\
	&= \frac{1}{n}\sum_{\ell \in [n]} e_d^\top \mathfrak{b}_{\ast,\ell}^{(t)} \mathfrak{b}_{\ast,\ell}^{(t),\top } e_{d'}+ \E^{(0)} \big(e_{d}^\top \Omega_{t,t}^{1/2}\mathsf{Z}_q\big)\big(\mathsf{Z}_q^\top \Omega_{t,t}^{1/2} e_{d'} \big)\\
	& =  \Big(\pnorm{\mu_\ast}{}^2\cdot  \mathfrak{m}_W^{(t)} \mathfrak{m}_W^{(t),\top}+ \mathfrak{M}_W^{(t)}W_1^{(0),\top}W_1^{(0)} \mathfrak{M}_W^{(t),\top} \\
	&\qquad  +\mathfrak{m}_W^{(t)}\mu_\ast^\top W_1^{(0)}\mathfrak{M}_W^{(t),\top}+ \mathfrak{M}_W^{(t)}W_1^{(0),\top}\mu_\ast \mathfrak{m}_W^{(t),\top}+ \Omega_{t,t}\Big)_{dd'}=\big(\Sigma_{W_1^{(t)}}\big)_{dd'}.
	\end{align*}
Now we may use Proposition \ref{prop:feature_learning_grad} to conclude the desired estimate (\ref{ineq:represent_learning_step2}). 
	
	\noindent (\textbf{Step 3}). To see the desired representation, first note that for any bounded, Lipschitz test function $\psi:\R^q\to \R$,
	\begin{align}\label{ineq:represent_learning_step3_1}
	\E_{\mathsf{Z}_n} \psi\Big( \mathfrak{h}_{V^{(t)}_{[2:L]}}\big( W_1^{(t),\top} \mathsf{Z}_n\big)\Big) = \E_{\mathsf{Z}_q} \psi\Big( \mathfrak{h}_{V^{(t)}_{[2:L]}}\big((W_1^{(t),\top} W_1^{(t)})^{1/2}\mathsf{Z}_q\big)\Big).
	\end{align}
	On the other hand, using the stability estimate in Proposition \ref{prop:apr_est_det_fcn} and the apriori estimates in Proposition \ref{prop:apr_est_se},
	\begin{align}\label{ineq:represent_learning_step3_2}
	&\bigabs{ \E_{\mathsf{Z}_q} \psi\Big( \mathfrak{h}_{V^{(t)}_{[2:L]}}\big((W_1^{(t),\top} W_1^{(t)})^{1/2}\mathsf{Z}_q\big)\Big)- \E_{\mathsf{Z}_q} \psi\Big( \mathfrak{h}_{V^{(t)}_{[2:L]}}\big(\Sigma_{W_1^{(t)}}^{1/2}\mathsf{Z}_q\big)\Big) }\nonumber\\
	&\leq  \big(K\Lambda\varkappa_{\ast}\big)^{c_t}\cdot \big(1+ \pnorm{W_{1}^{(t)} }{}\big)\cdot \E_{\mathsf{Z}_q} (1+\pnorm{\mathsf{Z}_q}{})\cdot \bigpnorm{\big((W_1^{(t),\top} W_1^{(t)})^{1/2}- \Sigma_{W_1^{(t)}}^{1/2}\big) \mathsf{Z}_q}{}\nonumber\\
	&\leq \big(K\Lambda\varkappa_{\ast}\big)^{c_t}\cdot \big(1+ \pnorm{W_{1}^{(t)} }{}\big)\cdot\bigpnorm{W_1^{(t),\top} W_1^{(t)}- \Sigma_{W_1^{(t)}}}{\infty}^{1/2}.
	\end{align}
	Here in the last inequality we used the square-root estimate in \cite[Lemma A.3]{bao2023leave}. Now applying the apriori estimate in Proposition \ref{prop:GD_l2_est} and (\ref{ineq:represent_learning_step2}) to (\ref{ineq:represent_learning_step3_2}), we conclude in view of (\ref{ineq:represent_learning_step3_1}) that 
	\begin{align}\label{ineq:represent_learning_step3_3}
	&\E^{(0)} \bigabs{ \E_{\mathsf{Z}_n} \psi\Big( \mathfrak{h}_{V^{(t)}_{[2:L]}}\big( W_1^{(t),\top} \mathsf{Z}_n\big)\Big) - \E_{\mathsf{Z}_q} \psi\Big( \mathfrak{h}_{V^{(t)}_{[2:L]}}\big(\Sigma_{W_1^{(t)}}^{1/2}\mathsf{Z}_q\big)\Big) }^{\mathfrak{r}}\leq \big(K\Lambda\varkappa_{\ast}\big)^{c_{t}} n^{-1/c_t}.
	\end{align}
	Recall $U_{\ast,\texttt{eff}}^{(t)}$ defined in (\ref{def:effective_signal}). Note that
	\begin{align}\label{ineq:represent_learning_Sigma_tilde}
	&U_{\ast,\texttt{eff}}^{(t),\top} U_{\ast,\texttt{eff}}^{(t)}+\Omega_{t,t} \nonumber\\
	&=\big( \mathfrak{m}_W^{(t)}\mu_\ast^\top +\mathfrak{M}_W^{(t)}W_1^{(0),\top}\big)  \big(\mu_\ast \mathfrak{m}_W^{(t),\top}+W_1^{(0)}\mathfrak{M}_W^{(t),\top}\big)+\Omega_{t,t} = \Sigma_{W_1^{(t)}} \in \mathbb{M}_q.
	\end{align}
    This means
	\begin{align}\label{ineq:represent_learning_step3_4}
	\E_{\mathsf{Z}_q} \psi\Big( \mathfrak{h}_{V^{(t)}_{[2:L]}}\big(\Sigma_{W_1^{(t)}}^{1/2}\mathsf{Z}_q\big)\Big)  =\E_{(x,\mathsf{Z}_q)} \psi\Big( \mathfrak{h}_{V^{(t)}_{[2:L]}}\big(U_{\ast,\texttt{eff}}^{(t),\top} x+ \Omega_{t,t}^{1/2}\mathsf{Z}_{q}\big)\Big).
	\end{align}
	The claimed estimate now follows by combining (\ref{ineq:represent_learning_step1}) in Step 1, and (\ref{ineq:represent_learning_step3_3}) and (\ref{ineq:represent_learning_step3_4}) above.
\end{proof}

\subsection{Proof of Theorem \ref{thm:represent_learning_large_sample}}\label{subsection:proof_represent_learning_large_sample}

\begin{lemma}\label{lem:diff_m_M_bar}
	Fix $t\in \N$. Suppose $\phi^{-1}\leq K$ and (A3)-(A5) in Assumption \ref{assump:gd_NN} hold for some $K,\Lambda\geq 2$ and $r_0\geq 0$. Then there exists some $c_t=c_t(t,q,L,r_0)>0$ such that  
	\begin{align*}
	\pnorm{\bar{\mathfrak{m}}_W^{(t)} -{\mathfrak{m}}_W^{(t)}}{}\vee \pnorm{\bar{\mathfrak{M}}_W^{(t)}-\mathfrak{M}_W^{(t)}}{}\leq \big(K\Lambda\varkappa_{\ast}\big)^{c_t}\cdot \phi^{-1}. 
	\end{align*}
\end{lemma}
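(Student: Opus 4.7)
The plan is to derive a one-step recursion for the differences $\Delta\mathfrak{m}^{(t)}\equiv \bar{\mathfrak{m}}_W^{(t)}-\mathfrak{m}_W^{(t)}$ and $\Delta\mathfrak{M}^{(t)}\equiv \bar{\mathfrak{M}}_W^{(t)}-\mathfrak{M}_W^{(t)}$, and then iterate it. The two key structural inputs are (i) apriori $\bigo(1)$ bounds on $\mathfrak{m}_W^{(s)},\bar{\mathfrak{m}}_W^{(s)},\mathfrak{M}_W^{(s)},\bar{\mathfrak{M}}_W^{(s)}$ for $s\in[0:t]$, obtained by induction from the apriori estimates on $\{\tau_{s,s},\delta_s,\bar{\tau}_s,\bar{\delta}_s\}$ in Proposition \ref{prop:apr_est_se} and (\ref{ineq:se_gd_NN_large_sample_step1}); and (ii) the fact that the ``multi-step memory'' coefficients $\{\tau_{t,s}\}_{s\in[1:t-1]}$ are $\bigo(\phi^{-1})$ (second part of Proposition \ref{prop:apr_est_se}), while the difference between the last-step coefficients $\bar{\tau}_t-\tau_{t,t}$ and $\bar{\delta}_t-\delta_t$ is also $\bigo(\phi^{-1})$ by (\ref{ineq:se_gd_NN_large_sample_step2}) established in Step 2 of the proof of Theorem \ref{thm:se_gd_NN_large_sample}.

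First I would write
\begin{align*}
\Delta\mathfrak{m}^{(t)}&=(\bar{\delta}_t-\delta_t)e_1+(\bar{\tau}_t+I_q)\bar{\mathfrak{m}}_W^{(t-1)}-\sum_{s\in[1:t]}(\tau_{t,s}+I_q\bm{1}_{s=t})\mathfrak{m}_W^{(s-1)}\\
&=(\bar{\delta}_t-\delta_t)e_1+(\bar{\tau}_t-\tau_{t,t})\mathfrak{m}_W^{(t-1)}+(\bar{\tau}_t+I_q)\Delta\mathfrak{m}^{(t-1)}-\sum_{s\in[1:t-1]}\tau_{t,s}\mathfrak{m}_W^{(s-1)},
\end{align*}
and analogously
\begin{align*}
\Delta\mathfrak{M}^{(t)}=(\bar{\tau}_t-\tau_{t,t})\mathfrak{M}_W^{(t-1)}+(\bar{\tau}_t+I_q)\Delta\mathfrak{M}^{(t-1)}-\sum_{s\in[1:t-1]}\tau_{t,s}\mathfrak{M}_W^{(s-1)}.
\end{align*}
In each identity, the first two terms are $\bigo(\phi^{-1})$ by (\ref{ineq:se_gd_NN_large_sample_step2}) and the apriori bounds on $\mathfrak{m}_W^{(t-1)},\mathfrak{M}_W^{(t-1)}$; the last sum is $\bigo(\phi^{-1})$ because $\pnorm{\tau_{t,s}}{\op}\leq (K\Lambda\varkappa_\ast)^{c_t}\phi^{-1}$ for $s\in[1:t-1]$ by the second estimate in Proposition \ref{prop:apr_est_se}; and the coefficient $\pnorm{\bar{\tau}_t+I_q}{\op}$ multiplying $\Delta\mathfrak{m}^{(t-1)},\Delta\mathfrak{M}^{(t-1)}$ is bounded by $(K\Lambda\varkappa_\ast)^{c_t}$ via (\ref{ineq:se_gd_NN_large_sample_step1}).

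Combining these gives a Gronwall-type recursion of the form
\begin{align*}
\pnorm{\Delta\mathfrak{m}^{(t)}}{}\vee \pnorm{\Delta\mathfrak{M}^{(t)}}{}\leq (K\Lambda\varkappa_\ast)^{c_t}\bigl[\phi^{-1}+\pnorm{\Delta\mathfrak{m}^{(t-1)}}{}\vee \pnorm{\Delta\mathfrak{M}^{(t-1)}}{}\bigr].
\end{align*}
Iterating from the trivial base case $\Delta\mathfrak{m}^{(0)}=0_q$, $\Delta\mathfrak{M}^{(0)}=0_{\mathbb{M}_q}$ yields the claimed $(K\Lambda\varkappa_\ast)^{c_t}\phi^{-1}$ bound.

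I do not expect any serious obstacle. The main thing to handle carefully is the bookkeeping of the apriori bounds on $\mathfrak{m}_W^{(s)},\mathfrak{M}_W^{(s)}$ across iterations, since these enter multiplicatively with $\bar{\tau}_t-\tau_{t,t}$ and with the $\tau_{t,s}$'s; but these bounds follow straightforwardly by induction using the already-established apriori bounds on the state evolution parameters in Proposition \ref{prop:apr_est_se} and (\ref{ineq:se_gd_NN_large_sample_step1}).
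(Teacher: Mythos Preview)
Your proposal is correct and follows essentially the same approach as the paper: both derive the same one-step recursion by splitting $\Delta\mathfrak{m}^{(t)}$ into the $(\bar{\delta}_t-\delta_t)$ term, the $(\bar{\tau}_t-\tau_{t,t})\mathfrak{m}_W^{(t-1)}$ term, the $(\bar{\tau}_t+I_q)\Delta\mathfrak{m}^{(t-1)}$ term, and the multi-step sum $\sum_{s\in[1:t-1]}\tau_{t,s}\mathfrak{m}_W^{(s-1)}$, then bound each piece using (\ref{ineq:se_gd_NN_large_sample_step2}), (\ref{ineq:se_gd_NN_large_sample_step1}), and the second estimate in Proposition \ref{prop:apr_est_se}, before iterating from the trivial initial condition. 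The paper's proof merely phrases the decomposition as a triangle inequality rather than an exact algebraic identity, but the content is identical.
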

\begin{proof}
	Let $\Delta \bar{\mathfrak{m}}_W^{(t)} \equiv \bar{\mathfrak{m}}_W^{(t)} -{\mathfrak{m}}_W^{(t)}$, $\Delta \bar{\mathfrak{M}}_W^{(t)}=\bar{\mathfrak{M}}_W^{(t)}-{\mathfrak{M}}_W^{(t)}$, and $\Delta \bar{\delta}_t\equiv  \bar{\delta}_t-\delta_t$. Comparing the definitions in (\ref{eqn:feature_learning_grad_2}) and (\ref{eqn:feature_learning_grad_large_sample}),
	\begin{align}\label{ineq:diff_m_M_bar_1}
	\pnorm{\Delta \bar{\mathfrak{m}}_W^{(t)}}{} &\leq  \pnorm{\Delta \bar{\delta}_t e_1}{}+ \pnorm{\bar{\tau}_{t} \bar{\mathfrak{m}}_W^{(t-1)}-\tau_{t,t} \mathfrak{m}_W^{(t-1)}  }{} \nonumber\\
	&\qquad + \pnorm{\Delta \bar{\mathfrak{m}}_W^{(t-1)}}{}  + t \max_{s \in [1:t-1]} \pnorm{\tau_{t,s}}{\op} \pnorm{\mathfrak{m}_W^{(s-1)} }{}.
	\end{align}
	It is easy to prove, by the apriori estimates in Proposition \ref{prop:apr_est_se} and (\ref{ineq:se_gd_NN_large_sample_step1}), that
	\begin{align}\label{ineq:diff_m_M_bar_2}
	\pnorm{\mathfrak{m}_W^{(t)} }{}\vee \pnorm{\bar{\mathfrak{m}}_W^{(t)} }{} \leq \big(K\Lambda\varkappa_{\ast}\big)^{c_t}.
	\end{align}
	Combining (\ref{ineq:diff_m_M_bar_1}), (\ref{ineq:diff_m_M_bar_2}) the stability estimate in (\ref{ineq:se_gd_NN_large_sample_step2}) and the second estimate in Proposition \ref{prop:apr_est_se}, we have
	\begin{align*}
	\pnorm{\Delta \bar{\mathfrak{m}}_W^{(t)}}{} &\leq \big(K\Lambda\varkappa_{\ast}\big)^{c_t}\cdot \big(\phi^{-1}+\pnorm{\Delta \bar{\mathfrak{m}}_W^{(t-1)}}{} \big).
	\end{align*} 
	Iterating the above display concludes the desired estimate for $\pnorm{\Delta \bar{\mathfrak{m}}_W^{(t)}}{} $. The estimate for $\pnorm{\Delta \bar{\mathfrak{M}}_W^{(t)}}{}$ can be obtained in a similar fashion. 
\end{proof}

\begin{proof}[Proof of Theorem \ref{thm:represent_learning_large_sample}]
	Fix a bounded Lipschitz function $\psi:\R^q\to \R$. With $(x,\mathsf{Z}_q)\sim \mathcal{N}(0,I_n)\otimes \mathcal{N}(0,I_q)$, by the apriori estimates in Proposition \ref{prop:apr_est_det_fcn}, Proposition \ref{prop:apr_est_se} and (\ref{ineq:se_gd_NN_large_sample_step1}),
	\begin{align*}
	&\bigabs{\E_{(x,\mathsf{Z}_q)} \psi\Big( \mathfrak{h}_{V^{(t)}_{[2:L]}}\big(U_{\ast,\texttt{eff}}^{(t),\top} x+ \Omega_{t,t}^{1/2}\mathsf{Z}_{q}\big)\Big)- \E_x \psi \Big( \mathfrak{h}_{\bar{W}^{(t)}_{[2:L]}}\big(\bar{U}_{\ast,\texttt{eff}}^{(t),\top} x\big) \Big)}\\
	&\leq \big(K\Lambda\varkappa_{\ast}\big)^{c_t}\cdot \E_{(x,\mathsf{Z}_q)} \big(1+ \pnorm{U_{\ast,\texttt{eff}}^{(t),\top} x}{}+\pnorm{\bar{U}_{\ast,\texttt{eff}}^{(t),\top} x}{}+ \pnorm{\mathsf{Z}_q}{} \big)\\
	&\qquad \times \Big[\max_{\alpha \in [2:L]}\pnorm{\bar{W}^{(t)}_{\alpha}-V^{(t)}_{\alpha}}{}+ \bigpnorm{(U_{\ast,\texttt{eff}}^{(t)}-\bar{U}_{\ast,\texttt{eff}}^{(t)})^\top x}{}+\pnorm{\Omega_{t,t}}{\op}^{1/2}\cdot \pnorm{\mathsf{Z}_q}{}\Big].
	\end{align*}
	Now for the terms in the bracket, by using (i) the stability estimate in (\ref{ineq:se_gd_NN_large_sample_step2}) for the first term, (ii) Lemma \ref{lem:diff_m_M_bar} for the second term, and (iii) the second estimate in Proposition \ref{prop:apr_est_se} for the third term, we have
	\begin{align*}
	&\bigabs{\E_{(x,\mathsf{Z}_q)} \psi\Big( \mathfrak{h}_{V^{(t)}_{[2:L]}}\big(U_{\ast,\texttt{eff}}^{(t),\top} x+ \Omega_{t,t}^{1/2}\mathsf{Z}_{q}\big)\Big)- \E_x \psi \Big( \mathfrak{h}_{\bar{W}^{(t)}_{[2:L]}}\big(\bar{U}_{\ast,\texttt{eff}}^{(t),\top} x\big) \Big)}\leq \big(K\Lambda\varkappa_{\ast}\big)^{c_t}\cdot  \phi^{-1}. 
	\end{align*}
	The claimed estimate follows. 
\end{proof}

\section{Remaining proofs}\label{section:remaining_proof}

\subsection{Proof of Proposition \ref{prop:grad_formula}}\label{subsection:proof_grad_formula}

	In the proof we shall write $G_{\alpha}\equiv G_{\bm{W};\alpha}(X)$, $G_{\alpha}'\equiv G_{\bm{W};\alpha}'(X)$ and $\mathsf{P}_{X,\bm{W};\alpha}\equiv \mathsf{P}_{\alpha}$. Fix  $i \in [n\bm{1}_{\alpha=1}+q\bm{1}_{\alpha \in [2:L]}], j \in [q]$ and let $\partial_{ij}\equiv \partial_{(W_\alpha)_{ij}}$. For $\beta \in (\alpha:L]$, by using the definition of $G_\beta$, for any $\epsilon_\beta \in \R^{m\times q_\beta}$, 
	\begin{align*}
	\bigiprod{\epsilon_\beta}{ \partial_{ij} G_\beta} & = \sum_{k \in [m], \ell \in [q]} (\epsilon_\beta)_{k\ell}\cdot  \Big(G_\beta'\odot \big[\partial_{ij} G_{\beta-1} W_\beta\big]\Big)_{k\ell} \\
	&=  \sum_{k \in [m], \ell \in [q], r \in [q]} (\epsilon_\beta)_{k\ell} (G_\beta')_{k\ell} \cdot  (\partial_{ij}G_{\beta-1})_{k r} (W_\beta)_{r \ell}\\
	& = \sum_{k \in [m], r \in [q]} \big((\epsilon_\beta\odot G_\beta')W_\beta^\top \big)_{kr}  (\partial_{ij}G_{\beta-1})_{kr} =  \bigiprod{\mathsf{P}_\beta(\epsilon_\beta)}{ \partial_{ij} G_{\beta-1} }.
	\end{align*}
	For $\alpha \in [1:L-1]$, iterating the above display  from $\beta=L$ with $\epsilon_L\equiv G_{\bm{W};L}(X)-Y_{[q]}$ until $\beta=\alpha+1$, by further using $\partial_{ij}G_\alpha=G_\alpha'\odot (G_{\alpha-1} e_i e_j^\top) $, we have
	\begin{align*}
	m\cdot \frac{\partial \mathsf{L}}{\partial (W_\alpha)_{ij}} (\bm{W}) &=\bigiprod{\epsilon_L}{\partial_{ij} G_L} = \cdots = \bigiprod{\mathsf{P}^{(\alpha:L]} (\epsilon_L) }{\partial_{ij} G_\alpha}\\
	& = \sum_{k \in [m], r \in [q_\alpha]} \mathsf{P}_{kr}^{(\alpha:L]} (\epsilon_L) (G_\alpha')_{kr}  (G_{\alpha-1} e_i e_j^\top)_{kr}\\
	& = \sum_{k \in [m]} (G_{\alpha-1}^\top)_{i k} \mathsf{P}_{k j}^{(\alpha:L]} (\epsilon_L) (G_\alpha')_{k j} = \Big(G_{\alpha-1}^\top \big[ \mathsf{P}^{(\alpha:L]} (\epsilon_L) \odot G_\alpha' \big]\Big)_{ij},
	\end{align*}
	proving the desired gradient formula for $\alpha \in [1:L-1]$. The case $\alpha = L$ is trivial. \qed

\subsection{Proof of Proposition \ref{prop:tau_repres}}\label{subsection:proof_tau_repres}

	Note that $\Upsilon^{(t)}:\R^{m\times q[0:t]}\to \R^{m\times q}$ in (S1) can be defined alternatively via
	\begin{align*}
	\Upsilon^{(t)}(\mathfrak{u}^{([0:t])})\equiv \phi^{-1} \eta_1^{(t-1)} \cdot \mathfrak{S}\bigg(\mathfrak{u}^{(t)}-\sum_{s \in [1:t-1]} \Upsilon^{(s)}(\mathfrak{u}^{([0:s])})\rho_{t-1,s}^\top ,\mathfrak{u}^{(0)},V^{(t-1)}\bigg).
	\end{align*}
	Consequently, for any $k \in [m]$, we may identify $\Upsilon_{k}^{(t)}:\R^{q[0:t]}\to \R^q$ as
	\begin{align*}
	\Upsilon^{(t)}_{k}(u^{([0:t])})\equiv \phi^{-1} \eta_1^{(t-1)} \cdot \mathfrak{S}_{k}\bigg(u^{(t)}-\sum_{s \in [1:t-1]} \rho_{t-1,s} \Upsilon^{(s)}_{k}(u^{([0:s])}) ,u^{(0)},V^{(t-1)}\bigg).
	\end{align*}
	This means for $s \in [1:t]$ and $r \in [q]$,
	\begin{align*}
	\partial_{u^{(s)}_r} \Upsilon^{(t)}_{k}(u^{([0:t])})&\equiv \phi^{-1} \eta_1^{(t-1)} \cdot \sum_{\ell \in [q]} \bigg[ \partial_\ell^{(1)} \mathfrak{S}_{k}\big(\Theta^{(t)}_{k}(u^{([0:t])}),u^{(0)},V^{(t-1)}\big)\\
	&\qquad \times \bigg(\delta_{s,t} \delta_{r,\ell}- \sum_{\tau \in [1:t-1]} e_\ell^\top \rho_{t-1,\tau} \partial_{u^{(s)}_r}  \Upsilon^{(\tau)}_{k}(u^{([0:\tau])}) \bigg)  \bigg]\in \R^q.
	\end{align*}
	Equivalently,
	\begin{align*}
	&\partial_{u^{(s)}_r} \Upsilon^{(t)}_{k}(u^{([0:t])})= \phi^{-1} \eta_1^{(t-1)}\cdot \bigg\{ \big[(\partial_{u_{kr}} \mathfrak{S})^\top e_k\big]\big(\Theta^{(t)}_{k}(u^{([0:t])}),u^{(0)},V^{(t-1)}\big) \delta_{s,t}\\
	&\qquad  - \sum_{\ell \in [q]} \big[(\partial_{u_{k\ell}} \mathfrak{S})^\top e_k\big] \big(\Theta^{(t)}_{k}(u^{([0:t])}),u^{(0)},V^{(t-1)}\big) e_\ell^\top  \sum_{\tau \in [1:t-1]} \rho_{t-1,\tau} \partial_{u^{(s)}_r}  \Upsilon^{(\tau)}_{k}(u^{([0:\tau])})\bigg\}.
	\end{align*}
	Consequently, we may represent 
	$\nabla \bm{\Upsilon}^{[t]}_{k}(u^{([0:t])})\equiv \big(\nabla_{u^{(s)}} \Upsilon^{(\tau)}_{k}(u^{([0:\tau])})\big)_{\tau,s \in [t]}\in (\mathbb{M}_q)^{t\times t}$, where $\nabla_{u^{(s)}} \Upsilon^{(\tau)}_{k}(u^{([0:\tau])})=\sum_{r \in [q]} \partial_{u^{(s)}_r} \Upsilon^{(\tau)}_{k}(u^{([0:\tau])}) e_r^\top \in \mathbb{M}_q$, as follows:
	\begin{align*}
	\phi \cdot \nabla \bm{\Upsilon}^{[t]}_{k}(u^{([0:t])})&=   \bm{\mathfrak{L}}^{(t)}_k\Big(\Big\{\Theta^{(s)}_{k}\big(u^{([0:s])}\big)\Big\}_{s \in [0:t]},V^{([0:t-1])}\Big) \\
	&\quad - \bm{\mathfrak{L}}^{(t)}_k\Big(\Big\{\Theta^{(s)}_{k}\big(u^{([0:s])}\big)\Big\}_{s \in [0:t]},V^{([0:t-1])}\Big)  \mathfrak{O}_{\mathbb{M}_q;t}(\bm{\rho}^{[t-1]}) \nabla \bm{\Upsilon}^{[t]}_{k}(u^{([0:t])}).
	\end{align*}
	This means 
	\begin{align*}
	\nabla \bm{\Upsilon}^{[t]}_{k}(u^{([0:t])}) &= \bigg(\phi\cdot (I_{\mathbb{M}_q})_t+\bm{\mathfrak{L}}^{(t)}_k\Big(\Big\{\Theta^{(s)}_{k}\big(u^{([0:s])}\big)\Big\}_{s \in [0:t]},V^{([0:t-1])}\Big)  \mathfrak{O}_{\mathbb{M}_q;t}(\bm{\rho}^{[t-1]})\bigg)^{-1}\\
	&\qquad \times  \bm{\mathfrak{L}}^{(t)}_k\Big(\Big\{\Theta^{(s)}_{k}\big(u^{([0:s])}\big)\Big\}_{s \in [0:t]},V^{([0:t-1])}\Big).
	\end{align*}
	The claimed formula for $\bm{\tau}^{[t]}$ now follows by the definition of $\tau_{t,s}$ in Definition \ref{def:gd_NN_se}.\qed

\subsection{Proof of Proposition \ref{prop:apr_est_det_fcn}}\label{subsection:proof_apr_est_det_fcn}

	\noindent (1). For $u\in \R^{m\times q}$ and $v_{[2:\alpha]}\in (\mathbb{M}_q)^{[2:\alpha]}$, by definition 
	\begin{align*}
	\pnorm{\mathscr{H}_{v_{[2:\alpha]};k}(u_{k\cdot})}{\infty}\leq q \Lambda \pnorm{v_\alpha}{\infty}\cdot \big(1+\pnorm{\mathscr{H}_{v_{[2:\alpha-1]};k}(u_{k\cdot})}{\infty}\big).
	\end{align*}
	Iterating the estimate and using the initial condition $\pnorm{\mathscr{H}_{v_{[2:1]};k}(u_{k\cdot})}{\infty}=\pnorm{u_{k\cdot}}{\infty}$ to conclude. The estimate for  $\pnorm{\mathscr{G}_{v_{[2:\alpha]};k}(u_{k\cdot})}{\infty}$  now follows trivially.

	\noindent (2). For $u,u' \in \R^{m\times q}$ and $v_{[2:\alpha]},v_{[2:\alpha]}'\in (\mathbb{M}_q)^{[2:\alpha]}$, by definition
	\begin{align*}
	&\pnorm{\mathscr{H}_{v_{[2:\alpha]};k}(u_{k\cdot}) - \mathscr{H}_{v_{[2:\alpha]}';k }(u_{k\cdot}')}{}\leq \bigpnorm{\sigma_{\alpha-1}\big(\mathscr{H}_{v_{[2:\alpha-1]};k}(u_{k\cdot})\big)}{}\cdot \pnorm{v_\alpha-v_\alpha'}{\infty}\\
	&\qquad + \bigpnorm{\sigma_{\alpha-1}\big(\mathscr{H}_{v_{[2:\alpha-1]};k}(u_{k\cdot})\big) - \sigma_{\alpha-1}\big(\mathscr{H}_{v_{[2:\alpha-1]}';k}(u_{k\cdot}')\big) }{}\cdot \pnorm{v_\alpha'}{\infty}\\
	& \leq (q\Lambda)^{c_0 L}\kappa_{v_{[2:\alpha]}} \big(1+\pnorm{u_{k\cdot}}{\infty}\big)\cdot \pnorm{v_\alpha-v_\alpha'}{\infty}+ \Lambda \pnorm{v_\alpha'}{\infty}\cdot \pnorm{\mathscr{H}_{v_{[2:\alpha-1]};k}(u_{k\cdot}) - \mathscr{H}_{v_{[2:\alpha-1]}';k }(u_{k\cdot}')}{}.
	\end{align*}
	Iterating the estimate and using the initial condition $\pnorm{\mathscr{H}_{v_{[2:1]};k}(u_{k\cdot}) - \mathscr{H}_{v_{[2:1]}';k }(u_{k\cdot}')}{}=\pnorm{u_{k\cdot}-u_{k\cdot}'}{}$ to conclude. 
	
	\noindent (3). For $u,u' \in \R^{m\times q},v_{[2:\alpha]}, v'_{[2:\alpha]}\in (\mathbb{M}_q)^{[2:\alpha]}$ and $z,z' \in \R^{m\times q}$, note that
	\begin{align*}
	&\bigpnorm{\mathscr{P}_{u,v_{[2:\alpha]};k}(z_{k\cdot})- \mathscr{P}_{u',v'_{[2:\alpha]};k}(z_{k\cdot}')}{}\\
	& = \bigpnorm{v_\alpha \big(z_{k\cdot}\odot \mathscr{G}_{v_{[2:\alpha]};k }'(u_{k\cdot})\big) -v_\alpha' \big(z_{k\cdot}'\odot \mathscr{G}_{v'_{[2:\alpha]};k }'(u_{k\cdot}')\big)  }{}\\
	&\leq \pnorm{z_{k\cdot}\odot \mathscr{G}_{v_{[2:\alpha]};k }'(u_{k\cdot})}{}\cdot \pnorm{v_\alpha-v_\alpha'}{\infty}+\pnorm{v_\alpha'}{\op}\cdot  \bigpnorm{z_{k\cdot}\odot \mathscr{G}_{v_{[2:\alpha]};k }'(u_{k\cdot})-z_{k\cdot}'\odot \mathscr{G}_{v'_{[2:\alpha]};k }'(u_{k\cdot}')}{}\\
	&\leq \Lambda\cdot  \pnorm{z_{k\cdot}}{} \cdot \pnorm{v_\alpha-v_\alpha'}{\infty}+ \pnorm{v_\alpha'}{\op}\cdot  \Big(\pnorm{z_{k\cdot}}{\infty}  \bigpnorm{\mathscr{G}_{v_{[2:\alpha]};k }'(u_{k\cdot})-\mathscr{G}_{v'_{[2:\alpha]};k }'(u_{k\cdot}')}{} + \Lambda\cdot \pnorm{z_{k\cdot}-z_{k\cdot}'}{}\Big).
	\end{align*}
	Now using the estimates in (2), the right hand side of the above display can be further bounded by
	\begin{align*}
	& \Lambda\cdot \big(\pnorm{z_{k\cdot}}{}\cdot \pnorm{v_\alpha-v_\alpha'}{\infty}+\pnorm{v_\alpha'}{\op} \pnorm{z_{k\cdot}-z_{k\cdot}'}{}\big) + \pnorm{v_\alpha'}{\op}\cdot \pnorm{z_{k\cdot}}{\infty}\\
	&\qquad \times (q\Lambda)^{c_0 L}\kappa_{v_{[2:\alpha]}}\kappa_{v_{[2:\alpha]}'} \cdot \big[\pnorm{u_{k\cdot}-u_{k\cdot}'}{}+\big(1+\pnorm{u_{k\cdot}}{\infty}\wedge \pnorm{u_{k\cdot}'}{\infty}\big)\cdot\pnorm{v_{[2:\alpha]}-v_{[2:\alpha]}'}{\infty}\big].
	\end{align*}
	The claim follows. 
	
	\noindent (4). For $u \in \R^{m\times q},v \in (\mathbb{M}_q)^{[2:L]}$ and $z \in \R^{m\times q}$,
	\begin{align*}
	\bigpnorm{ \mathscr{P}_{v;k}^{(\alpha:L]}(u_{k\cdot},z_{k\cdot}) }{\infty}&= \bigpnorm{ \mathscr{P}_{u,v_{[2:\alpha+1]};k}\big(\mathscr{P}_{v;k}^{(\alpha+1:L]}(u_{k\cdot},z_{k\cdot})\big)}{\infty}\\
	&\leq q\cdot \Lambda\pnorm{v_{\alpha+1}}{\infty} \bigpnorm{\mathscr{P}_{v;k}^{(\alpha+1:L]}(u_{k\cdot},z_{k\cdot})}{\infty}.
	\end{align*}
	Iterating the bound until reaching $\pnorm{ \mathscr{P}_{v;k}^{(L:L]}(u_{k\cdot},z_{k\cdot}) }{\infty}=\pnorm{z_{k\cdot}}{\infty}$ to conclude.
	
	\noindent (5). For $u,u' \in \R^{m\times q},v, v'\in (\mathbb{M}_q)^{[2:L]}$ and $z,z' \in \R^{m\times q}$, by using (3),
	\begin{align*}
	&\bigpnorm{\mathscr{P}_{v;k}^{(\alpha:L]}(u_{k\cdot},z_{k\cdot})-\mathscr{P}_{v';k}^{(\alpha:L]}(u_{k\cdot}',z_{k\cdot}')}{}\\
	& = \bigpnorm{ \mathscr{P}_{u,v_{[2:\alpha+1]};k}\big(\mathscr{P}_{v;k}^{(\alpha+1:L]}(u_{k\cdot},z_{k\cdot})\big)- \mathscr{P}_{u',v'_{[2:\alpha+1]};k}\big(\mathscr{P}_{v';k}^{(\alpha+1:L]}(u_{k\cdot}',z_{k\cdot}')\big)  }{}\\
	&\leq (q\Lambda)^{c_0 L}(\kappa_{v}\kappa_{v'})^2\cdot  \Big\{\bigpnorm{\mathscr{P}_{v;k}^{(\alpha+1:L]}(u_{k\cdot},z_{k\cdot})-\mathscr{P}_{v';k}^{(\alpha+1:L]}(u_{k\cdot}',z_{k\cdot}')}{} \\
	&\qquad + \Big(1+ \pnorm{\mathscr{P}_{v';k}^{(\alpha+1:L]}(u_{k\cdot}',z_{k\cdot}')}{\infty}\wedge \pnorm{\mathscr{P}_{v;k}^{(\alpha+1:L]}(u_{k\cdot},z_{k\cdot})}{\infty}\Big)  \\
	&\qquad \qquad \times  \Big[ \pnorm{u_{k\cdot}-u_{k\cdot}'}{}+ \big(1+\pnorm{u_{k\cdot}}{\infty}\wedge \pnorm{u_{k\cdot}'}{\infty}\big)\cdot \pnorm{v-v'}{\infty}\Big].
	\end{align*}
	Using (4), we have 
	\begin{align*}
	&\bigpnorm{\mathscr{P}_{v;k}^{(\alpha:L]}(u_{k\cdot},z_{k\cdot})-\mathscr{P}_{v';k}^{(\alpha:L]}(u_{k\cdot}',z_{k\cdot}')}{} \leq (q\Lambda)^{c_0 L} (\kappa_{v}\kappa_{v'})^3\\
	&\qquad \times \Big\{ \pnorm{\mathscr{P}_{v;k}^{(\alpha+1:L]}(u_{k\cdot},z_{k\cdot})-\mathscr{P}_{v';k}^{(\alpha+1:L]}(u_{k\cdot}',z_{k\cdot}')}{} +   \big(1+\pnorm{z_{k\cdot}}{\infty}\wedge \pnorm{z_{k\cdot}'}{\infty}\big) \\
	& \qquad \qquad \times  \big[ \pnorm{u_{k\cdot}-u_{k\cdot}'}{}+ \big(1+\pnorm{u_{k\cdot}}{\infty}\wedge \pnorm{u_{k\cdot}'}{\infty}\big)\cdot \pnorm{v-v'}{\infty}\big]\Big\}.
	\end{align*}
	Iterating the estimate and using the initial condition 
	\begin{align*}
	\pnorm{\mathscr{P}_{v;k}^{(L:L]}(u_{k\cdot},z_{k\cdot})-\mathscr{P}_{v';k}^{(L:L]}(u_{k\cdot}',z_{k\cdot}')}{}  =\pnorm{z_{k\cdot}-z_{k\cdot}'}{}
	\end{align*}
	to conclude. 
	
	\noindent (6). For $u \in \R^{m\times q},v\in (\mathbb{M}_q)^{[2:L]}$ and $w \in \R^{m\times q}$, using the estimate in (4), 
	\begin{align*}
	\pnorm{\mathfrak{S}_{k}(u_{k\cdot},w_{k\cdot}, v)}{\infty} &\leq \Lambda\cdot \bigpnorm{\mathscr{P}_{v;k}^{(1:L]}\big(u_{k\cdot},\mathscr{R}_{v;k }(u_{k\cdot},w_{k\cdot})\big)}{\infty}\\
	&\leq (q\Lambda)^{c_0 L} \kappa_{v}\cdot \pnorm{\mathscr{R}_{v;k}(u_{k\cdot},w_{k\cdot})}{\infty}\\
	&\leq (q\Lambda)^{c_0 L} \kappa_{v }\cdot\big(\pnorm{ \mathscr{G}_{v;k}(u_{k\cdot}) }{\infty}+\pnorm{\varphi_\ast(w_{k\cdot})}{\infty}+\abs{\xi_k}\big).
	\end{align*}
	Now using the estimate in (1) to conclude.
	
	\noindent (7). For $u,u' \in \R^{m\times q},v, v'\in (\mathbb{M}_q)^{[2:L]}$ and $z,z' \in \R^{m\times q}$, using the estimates in (4) and then (5),
	\begin{align}\label{ineq:apr_est_det_fcn_1}
	&\pnorm{\mathfrak{S}_{k}(u_{k\cdot},w_{k\cdot}, v)-\mathfrak{S}_{k}(u_{k\cdot}',w_{k\cdot}',v')}{}\nonumber\\
	&\leq (q\Lambda)^{c_0 L} \kappa_{v}\cdot \pnorm{\mathscr{R}_{v;k}(u_{k\cdot},w_{k\cdot})}{\infty}\cdot \pnorm{u_{k\cdot}-u_{k\cdot}'}{}\nonumber\\
	&\qquad + \Lambda\cdot \bigpnorm{\mathscr{P}_{v;k}^{(1:L]}\big(u_{k\cdot},\mathscr{R}_{v;k }(u_{k\cdot},w_{k\cdot})\big)-\mathscr{P}_{v';k}^{(1:L]}\big(u_{k\cdot}',\mathscr{R}_{v';k }(u_{k\cdot}',w_{k\cdot}')\big)}{}\nonumber\\
	&\equiv I_1+I_2.
	\end{align}
	Similar to the derivation in (6), 
	\begin{align}\label{ineq:apr_est_det_fcn_2}
	I_1\leq (q\Lambda)^{c_0 L} \kappa_{v}^2\cdot\big(1+\pnorm{u_{k\cdot}}{\infty}+\pnorm{\varphi_\ast(w_{k\cdot})}{\infty}+\abs{\xi_k}\big)\cdot \pnorm{u_{k\cdot}-u_{k\cdot}'}{}.
	\end{align}
	For $I_2$, using the estimate in (5), 
	\begin{align}\label{ineq:apr_est_det_fcn_3}
	I_2
	&\leq (q \Lambda)^{c_0 L} (\kappa_{v}\kappa_{v'})^{c_0}\cdot \Big\{\pnorm{\varphi_\ast(w_{k\cdot})-\varphi_\ast(w_{k\cdot}')}{}\nonumber\\
	&\qquad  +\big(1+(\pnorm{u_{k\cdot}}{\infty}+\pnorm{\varphi_\ast(w_{k\cdot})}{\infty})\wedge (\pnorm{u_{k\cdot}'}{\infty}+\pnorm{\varphi_\ast(w_{k\cdot}')}{\infty})+\abs{\xi_k}\big)\nonumber\\
	&\qquad\qquad \times \big[ \pnorm{u_{k\cdot}-u_{k\cdot}'}{}+  (1+\pnorm{u_{k\cdot}}{\infty}\wedge \pnorm{u_{k\cdot}'}{\infty})\cdot \pnorm{v-v' }{\infty}\big]\Big\}.
	\end{align}
	Combining the above displays (\ref{ineq:apr_est_det_fcn_1})-(\ref{ineq:apr_est_det_fcn_3}) to conclude. \qed

\subsection{Proof of Proposition \ref{prop:derivative_formula}}\label{subsection:proof_derivative_formula}

	\noindent (1)-(2). The claimed formulae in (1) follow directly from an application of the chain rule to the definition. The formulae in (2) follow by using those derived in (1).
	
	\noindent (3). Note that for $\alpha \in [1:L-1]$ we have 
	\begin{align}\label{ineq:derivative_formula_1}
	\mathscr{P}_{v}^{(\alpha:L]}(u,z)=\mathscr{P}_{u,v_{[2:\alpha+1]} }\big(\mathscr{P}_{v}^{(\alpha+1:L]}(u,z)\big).
	\end{align}
	By using (2) and the chain rule to (\ref{ineq:derivative_formula_1}), for $c \in [m], d\in [q]$,
	\begin{align*}
	&\partial_{u_{k\ell}} \mathscr{P}_{v;(c,d)}^{(\alpha:L]}(u,z) = \big(\partial_{u_{k\ell}} \mathscr{P}_{u,v_{[2:\alpha+1]}; (c,d) }\big)\big(\mathscr{P}_{v}^{(\alpha+1:L]}(u,z)\big)\\
	&\qquad + \sum_{k' \in [m], \ell' \in [q]} \Big[\Big( e_{k'}e_{\ell'}^\top \odot \sigma_{\alpha+1}'\big(\mathscr{H}_{v_{[2:\alpha+1]}}(u)\big)\Big) v_{\alpha+1}^\top\Big]_{cd} \cdot \partial_{u_{k\ell}} \mathscr{P}_{v;(k',\ell')}^{(\alpha+1:L]}(u,z)\\
	&=\bigg[\Big(\mathscr{P}_{v}^{(\alpha+1:L]}(u,z)\odot \sigma_{\alpha+1}''\big(\mathscr{H}_{v_{[2:\alpha+1]}}(u)\big)\odot \partial_{u_{k\ell}}  \mathscr{H}_{v_{[2:\alpha+1]}}(u)\Big) v_{\alpha+1}^\top\\
	&\qquad + \Big(\sigma_{\alpha+1}'\big(\mathscr{H}_{v_{[2:\alpha+1]}}(u)\big)\odot \partial_{u_{k\ell}} \mathscr{P}_{v}^{(\alpha+1:L]}(u,z) \Big)v_{\alpha+1}^\top \bigg]_{cd},
	\end{align*}
	where the last identity follows as 
	\begin{align}\label{ineq:derivative_formula_2}
	\sum_{k',\ell'} (e_{k'}e_{\ell'}^\top \odot M) v^\top\cdot N_{k',\ell'}=\sum_{k',\ell'} M_{k',\ell'}N_{k',\ell'} e_{k'}e_{\ell'}^\top v^\top = (M\odot N) v^\top.
	\end{align}
	Next, using the chain rule to (\ref{ineq:derivative_formula_1}), for $c \in [m], d \in [q]$,
	\begin{align*}
	&\partial_{z_{k\ell}} \mathscr{P}_{v;(c,d)}^{(\alpha:L]}(u,z) \\
	&= \sum_{k' \in [m], \ell' \in [q]} \Big[\Big( e_{k'}e_{\ell'}^\top \odot \sigma_{\alpha+1}'\big(\mathscr{H}_{v_{[2:\alpha+1]}}(u)\big)\Big) v_{\alpha+1}^\top\Big]_{cd} \cdot \partial_{z_{k\ell}} \mathscr{P}_{v;(k',\ell')}^{(\alpha+1:L]}(u,z)\\
	& = \bigg[\Big(\sigma_{\alpha+1}'\big(\mathscr{H}_{v_{[2:\alpha+1]}}(u)\big)\odot \partial_{z_{k\ell}} \mathscr{P}_{v}^{(\alpha+1:L]}(u,z) \Big)v_{\alpha+1}^\top\bigg]_{cd}.
	\end{align*}
	Here in the last line we used (\ref{ineq:derivative_formula_2}) again.
	
	\noindent (4). Note that for $c \in [m], d \in [q]$, 
	\begin{align}\label{ineq:derivative_formula_3}
	\partial_{u_{k\ell}} \mathfrak{S}_{(c,d)}\big(u,w, v\big) &=\Big[\partial_{u_{k\ell}} \mathscr{P}_{v;(c,d)}^{(1:L]}\big(u,\mathscr{R}_{v}(u,w)\big)\Big] \sigma_1'(u_{cd}) \nonumber\\
	&\qquad + \mathscr{P}_{v;(c,d)}^{(1:L]}\big(u,\mathscr{R}_{v}(u,w)\big)\sigma_1''(u_{cd}) \delta_{kc}\delta_{\ell d}.
	\end{align}
	On the other hand, using (3) we have
	\begin{align*}
	&\partial_{u_{k\ell}} \mathscr{P}_{v;(c,d)}^{(1:L]}\big(u,\mathscr{R}_{v }(u,w)\big)=\big(\partial_{u_{k\ell}} \mathscr{P}_{v;(c,d)}^{(1:L]}\big)\big(u,\mathscr{R}_v(u,w)\big)\\
	&\qquad + \sum_{k' \in [m], \ell' \in [q]} \big(\partial_{z_{k'\ell'}} \mathscr{P}_{v;(c,d)}^{(1:L]}\big)\big(u,\mathscr{R}_v(u,w)\big) \partial_{u_{k\ell}} \mathscr{R}_{v;(k',\ell')}(u,w).
	\end{align*}
	Using (1), we have $\partial_{u_{k\ell}} \mathscr{R}_{v;(k',\ell')}(u,w)=\partial_{u_{k\ell}} \mathscr{H}_{v;(k',\ell')}(u)\delta_{kk'}$. This means 
	\begin{align}\label{ineq:derivative_formula_4}
	&\partial_{u_{k\ell}} \mathscr{P}_{v;(c,d)}^{(1:L]}\big(u,\mathscr{R}_{v }(u,w)\big)=\bigg[\big(\partial_{u_{k\ell}} \mathscr{P}_{v}^{(1:L]}\big)\big(u,\mathscr{R}_v(u,w)\big)\nonumber\\
	&\qquad + \sum_{\ell' \in [q]} \big(\partial_{z_{k \ell'}} \mathscr{P}_{v}^{(1:L]}\big)\big(u,\mathscr{R}_v(u,w)\big) \partial_{u_{k\ell}} \mathscr{H}_{v;(k,\ell')}(u)\bigg]_{cd}.
	\end{align}
	Combining (\ref{ineq:derivative_formula_3})-(\ref{ineq:derivative_formula_4}) to conclude the derivative formula for $\partial_{u_{k\ell}} \mathfrak{S}\big(u,w, v\big)$.
	
	Finally, using
	\begin{align*}
	\partial_{w_{k\ell}} \mathfrak{S}_{(c,d)}\big(u,w, v\big) &= \Big[\partial_{w_{k\ell}} \mathscr{P}_{v;(c,d)}^{(1:L]}\big(u,\mathscr{R}_{v}(u,w)\big)\Big] \sigma_1'(u_{cd}),
	\end{align*}
	and
	\begin{align*}
	\partial_{w_{k\ell}} \mathscr{P}_{v;(c,d)}^{(1:L]}\big(u,\mathscr{R}_{v }(u,w)\big) &= \sum_{k' \in [m], \ell' \in [q]} \big(\partial_{z_{k'\ell'}} \mathscr{P}_{v;(c,d)}^{(1:L]}\big)\big(u,\mathscr{R}_v(u,w)\big) \partial_{w_{k\ell}} \mathscr{R}_{v;(k',\ell')}(u,w)\\
	& = - \big(\partial_{z_{k\ell}} \mathscr{P}_{v;(c,d)}^{(1:L]}\big)\big(u,\mathscr{R}_v(u,w)\big) \varphi_\ast'(w_{k\ell}),
	\end{align*}
	we arrive at the derivative formula for $\partial_{w_{k\ell}} \mathfrak{S}\big(u,w, v\big)$.\qed

\subsection{Proof of Proposition \ref{prop:apr_est_derivative}}\label{subsection:proof_apr_est_derivative}

	\noindent (1). Using the recursion in Proposition \ref{prop:derivative_formula}-(1), 
	\begin{align*}
	\pnorm{\partial_{u_{k\ell}}  \mathscr{H}_{v_{[2:\alpha]}}(u)}{\infty}\leq q\Lambda  \pnorm{v_\alpha}{\infty}\cdot \pnorm{\partial_{u_{k\ell}}  \mathscr{H}_{v_{[2:\alpha-1]}}(u)}{\infty}.
	\end{align*}
	Iterating the estimate and using the initial condition $\pnorm{\partial_{u_{k\ell}}  \mathscr{H}_{v_{[2:1]}}(u)}{\infty}=1$ to conclude the estimate for $\pnorm{\partial_{u_{k\ell}}  \mathscr{H}_{v_{[2:\alpha]}}(u)}{\infty}$. The estimate for $\pnorm{\partial_{u_{k\ell}}  \mathscr{G}_{v_{[2:\alpha]}}(u)}{\infty}$ follows trivially. 
	
	\noindent (2). Using the recursive formula in Proposition \ref{prop:derivative_formula}-(1), we have 
	\begin{align*}
	\partial_{u_{k\ell}}  \mathscr{H}_{v_{[2:\alpha]};k}(u_{k\cdot})=v_\alpha^\top \big[\sigma_{\alpha-1}'\big(\mathscr{H}_{v_{[2:\alpha-1]};k}(u_{k\cdot})\big)\odot\partial_{u_{k\ell}}  \mathscr{H}_{v_{[2:\alpha-1]};k }(u_{k\cdot})\big].
	\end{align*}
	Consequently, using the apriori estimate in (1) and Proposition \ref{prop:apr_est_det_fcn}-(2),
	\begin{align*}
	&\bigpnorm{\partial_{u_{k\ell}}  \mathscr{H}_{v_{[2:\alpha]};k}(u_{k\cdot})-	\partial_{u_{k\ell}}  \mathscr{H}_{v_{[2:\alpha]}';k}(u_{k\cdot}')}{}\\
	& \leq (q\Lambda)^{c_0 L} \big(\kappa_{v_{[2:\alpha]}}\kappa_{v_{[2:\alpha]}'}\big)^{c_0}\cdot \Big[\pnorm{u_{k\cdot}-u_{k\cdot}'}{}+\big(1+\pnorm{u_{k\cdot}}{\infty}\wedge \pnorm{u_{k\cdot}'}{\infty}\big)\cdot\pnorm{v_{[2:\alpha]}-v_{[2:\alpha]}'}{\infty}\Big]\\
	&\qquad + (q\Lambda)^{c_0}\cdot \pnorm{v_\alpha}{\infty}\cdot \bigpnorm{\partial_{u_{k\ell}}  \mathscr{H}_{v_{[2:\alpha-1]};k}(u_{k\cdot})-	\partial_{u_{k\ell}}  \mathscr{H}_{v_{[2:\alpha-1]}';k}(u_{k\cdot}')}{}.
	\end{align*}
	Now we may iterate the above estimate and using the initial trivial condition to conclude.	A similar estimate holds for $\pnorm{\partial_{u_{k\ell}}  \mathscr{G}_{v_{[2:\alpha]};k}(u_{k\cdot})-	\partial_{u_{k\ell}}  \mathscr{G}_{v_{[2:\alpha]};k}(u_{k\cdot}')}{}$.
	
	\noindent (3). Using the formula in Proposition \ref{prop:derivative_formula}-(2) for $\alpha \in [2:L]$,
	\begin{align*}
	\pnorm{\partial_{u_{k\ell}}\mathscr{P}_{u, v_{[2:\alpha]} }(z)}{\infty}\leq q \Lambda \pnorm{v_\alpha}{\infty}\cdot \pnorm{z}{\infty}\cdot  \pnorm{\partial_{u_{k\ell}}  \mathscr{H}_{v_{[2:\alpha]}}(u)}{\infty}.
	\end{align*}
	Now using (1) to conclude. The case for $\alpha=1$ is trivial. The bound for $\pnorm{\partial_{z_{k\ell}}\mathscr{P}_{u, v_{[2:\alpha]} }(z)}{\infty}$ follows trivially from the formula proved in Proposition \ref{prop:derivative_formula}-(2).
	
	\noindent (4). Using the recursive formula in Proposition \ref{prop:derivative_formula}-(2), we have 
	\begin{align*}
	\partial_{u_{k\ell}}\mathscr{P}_{u, v_{[2:\alpha]};k }(z_{k\cdot})&\equiv v_\alpha \Big(z_{k\cdot}\odot \sigma_\alpha''\big(\mathscr{H}_{v_{[2:\alpha]};k}(u_{k\cdot})\big)\odot \partial_{u_{k\ell}}  \mathscr{H}_{v_{[2:\alpha]};k}(u_{k\cdot})\Big).
	\end{align*}
	Consequently, using the apriori estimate in (1), the stability estimate in (2) and Proposition \ref{prop:apr_est_det_fcn}-(2),
	\begin{align*}
	&\bigpnorm{ \partial_{u_{k\ell}}\mathscr{P}_{u, v_{[2:\alpha]};k }(z_{k\cdot}) - \partial_{u_{k\ell}}\mathscr{P}_{u', v_{[2:\alpha]}';k }(z_{k\cdot}')  }{}\\
	&\leq (q\Lambda)^{c_0 L} \big(\kappa_{v_{[2:\alpha]}}\kappa_{v_{[2:\alpha]}'}\big)^{c_0}\cdot  \max_{\# \in \{u,z\}}\big(1+\pnorm{\#_{k\cdot}}{\infty}\wedge \pnorm{\#_{k\cdot}'}{\infty}\big)^2\\
	&\qquad \times  \big(\pnorm{u_{k\cdot}-u_{k\cdot}'}{}+\pnorm{v_{[2:\alpha]}-v_{[2:\alpha]}'}{\infty}+\pnorm{z_{k\cdot}-z_{k\cdot}'}{}\big).
	\end{align*}
	A simpler argument leading to the same estimate can be derived for $\pnorm{ \partial_{z_{k\ell}}\mathscr{P}_{u, v_{[2:\alpha]};k }(z_{k\cdot}) - \partial_{z_{k\ell}}\mathscr{P}_{u', v_{[2:\alpha]}';k }(z_{k\cdot}')  }{}$.
	
	\noindent (5). Using the formula in Proposition \ref{prop:derivative_formula}-(3) for $\alpha \in [1:L-1]$,
	\begin{align*}
	&\pnorm{\partial_{u_{k\ell}} \mathscr{P}_{v}^{(\alpha:L]}(u,z)}{\infty}\leq q\Lambda \pnorm{v_{\alpha+1}}{\infty}\cdot \Big[ \pnorm{\partial_{u_{k\ell}} \mathscr{P}_{v}^{(\alpha+1:L]}(u,z)}{\infty}\\
	&\qquad + \pnorm{\mathscr{P}_{v}^{(\alpha+1:L]}(u,z)}{\infty}\cdot \pnorm{\partial_{u_{k\ell}}  \mathscr{H}_{v_{[2:\alpha+1]}}(u)}{\infty} \Big]\\
	&\leq q\Lambda \pnorm{v_{\alpha+1}}{\infty}\cdot  \pnorm{\partial_{u_{k\ell}} \mathscr{P}_{v}^{(\alpha+1:L]}(u,z)}{\infty}+ (q\Lambda)^{c_0 L}\kappa_v^2 \cdot \pnorm{z}{\infty}.
	\end{align*}
	Iterating the estimate and using the initial condition $\pnorm{\partial_{u_{k\ell}} \mathscr{P}_{v}^{(L:L]}(u,z)}{\infty}=0$ to conclude. The case for $\alpha=L$ is trivial. Similarly, by using the second formula in Proposition \ref{prop:derivative_formula}-(3), we arrive at
	\begin{align*}
	\pnorm{\partial_{z_{k\ell}} \mathscr{P}_{v}^{(\alpha:L]}(u,z)}{\infty}\leq q\Lambda \pnorm{v_{\alpha+1}}{\infty}\cdot \pnorm{\partial_{z_{k\ell}} \mathscr{P}_{v}^{(\alpha+1:L]}(u,z)}{\infty}.
	\end{align*} 
	Iterating the estimate and using instead the initial condition $\pnorm{\partial_{z_{k\ell}} \mathscr{P}_{v}^{(L:L]}(u,z)}{\infty}=1$ to conclude. 
	
	\noindent (6). Using the recursive formula in Proposition \ref{prop:derivative_formula}-(3), we have 
	\begin{align*}
	&\partial_{u_{k\ell}} \mathscr{P}_{v;k}^{(\alpha:L]}(u_{k\cdot},z_{k\cdot})\\
	&=v_{\alpha+1}\Big[\mathscr{P}_{v;k}^{(\alpha+1:L]}(u_{k\cdot},z_{k\cdot})\odot \sigma_{\alpha+1}''\big(\mathscr{H}_{v_{[2:\alpha+1]};k }(u_{k\cdot})\big)\odot \partial_{u_{k\ell}}  \mathscr{H}_{v_{[2:\alpha+1]};k}(u_{k\cdot})\\
	&\qquad + \sigma_{\alpha+1}'\big(\mathscr{H}_{v_{[2:\alpha+1]};k }(u_{k\cdot})\big)\odot \partial_{u_{k\ell}} \mathscr{P}_{v;k}^{(\alpha+1:L]}(u_{k\cdot},z_{k\cdot}) \Big].
	\end{align*}
	Consequently, using the apriori estimates in (1) and (3) along with the stability estimates in (2) and (4), as well as the estimates in Proposition \ref{prop:apr_est_det_fcn},
	\begin{align*}
	&\bigpnorm{\partial_{u_{k\ell}} \mathscr{P}_{v;k}^{(\alpha:L]}(u_{k\cdot},z_{k\cdot})- \partial_{u_{k\ell}} \mathscr{P}_{v';k}^{(\alpha:L]}(u_{k\cdot}',z_{k\cdot}')}{}\\
	&\leq (q\Lambda)^{c_0 L} (\kappa_{v}\kappa_{v'})^{c_0}\cdot \max_{\# \in \{u,z\}}\big(1+\pnorm{\#_{k\cdot}}{\infty}\wedge \pnorm{\#_{k\cdot}'}{\infty}\big)^{c_0}\\
	&\qquad \times   \big(\pnorm{u_{k\cdot}-u_{k\cdot}'}{}+\pnorm{z_{k\cdot}-z_{k\cdot}'}{} + \pnorm{v-v'}{\infty}\big).
	\end{align*}
	A simpler argument leading to the same estimate holds for $\pnorm{\partial_{z_{k\ell}} \mathscr{P}_{v;k}^{(\alpha:L]}(u_{k\cdot},z_{k\cdot})- \partial_{z_{k\ell}} \mathscr{P}_{v';k}^{(\alpha:L]}(u_{k\cdot}',z_{k\cdot}')}{}$. 
	
	\noindent (7). Using the formula in Proposition \ref{prop:derivative_formula}-(4), we have
	\begin{align*}
	&\pnorm{\partial_{u_{k\ell}} \mathfrak{S}\big(u,w, v\big)}{\infty}\leq  \Lambda\cdot \pnorm{\mathscr{P}_{v}^{(1:L]}\big(u,\mathscr{R}_{v}(u,w)\big)}{\infty} + \Lambda\cdot \bigg[ \pnorm{\partial_{u_{k\ell}} \mathscr{P}_{v}^{(1:L]}(u,\mathscr{R}_v(u,w))}{\infty} \\
	&\qquad + \sum_{\ell' \in [q]} \pnorm{\partial_{z_{k \ell'}} \mathscr{P}_{v}^{(1:L]}(u, \mathscr{R}_v(u,w))}{\infty}\cdot  \pnorm{\partial_{u_{k\ell}} \mathscr{H}_{v}(u)}{\infty}\bigg].
	\end{align*}
	Now using the estimates obtained in (1)-(3) and the apriori estimate in Proposition \ref{prop:apr_est_det_fcn}-(4), we have 
	\begin{align*}
	\pnorm{\partial_{u_{k\ell}} \mathfrak{S}\big(u,w, v\big)}{\infty}&\leq  (q\Lambda)^{c_0 L}\kappa_{v}^2\cdot  \big(1+\pnorm{\mathscr{R}_{v}(u,w) }{\infty}\big)\\
	&\leq (q\Lambda)^{c_0 L}\kappa_{v}^2\cdot \big(1+\pnorm{\mathscr{G}_{v}(u) }{\infty}+\pnorm{\varphi_\ast(w)}{\infty}+\pnorm{\xi}{\infty}\big).
	\end{align*}
	The claimed estimate for $\pnorm{\partial_{u_{k\ell}} \mathfrak{S}\big(u,w, v\big)}{\infty}$ now follows from the apriori estimate in Proposition \ref{prop:apr_est_det_fcn}-(1). 
	
	Finally, using the second formula in in Proposition \ref{prop:derivative_formula}-(4), 
	\begin{align*}
	\pnorm{\partial_{w_{k\ell}} \mathfrak{S}\big(u,w, v\big)}{\infty} &\leq  \Lambda\cdot \pnorm{\mathscr{R}_{v}(u,w) }{\infty} \cdot  \pnorm{\varphi_\ast'(w)}{\infty}\\
	&\leq \Lambda\cdot \pnorm{\varphi_\ast'(w)}{\infty} \cdot \big(\pnorm{\mathscr{G}_{v}(u) }{\infty}+\pnorm{\varphi_\ast(w)}{\infty}+\pnorm{\xi}{\infty}\big).
	\end{align*}
	The claimed estimate follows.
	
	\noindent (8). Using the formula in Proposition \ref{prop:derivative_formula}-(4), we have 
	\begin{align*}
	\partial_{u_{k\ell}} \mathfrak{S}_{k}\big(u,w, v\big) &= \mathscr{P}_{v;k}^{(1:L]}\big(u_{k\cdot},\mathscr{R}_{v;k}(u_{k\cdot},w_{k\cdot})\big)\odot\sigma_1''(u_{k\cdot})\odot e_\ell  \\
	&  \qquad +\sigma_1'(u_{k\cdot})\odot \bigg[\partial_{u_{k\ell}} \mathscr{P}_{v;k}^{(1:L]}\big(u_{k\cdot},\mathscr{R}_{v;k}(u_{k\cdot},w_{k\cdot})\big) \\
	&\qquad + \sum_{\ell' \in [q]} \partial_{z_{k \ell'}} \mathscr{P}_{v;k}^{(1:L]}\big(u_{k\cdot},\mathscr{R}_{v;k}(u_{k\cdot},w_{k\cdot})\big) \partial_{u_{k\ell}} \mathscr{H}_{v;(k,\ell')}(u_{k\cdot})\bigg].
	\end{align*}
	Now we may use the apriori estimates and the stability estimates in (1)-(7) above, as well as the estimates in Proposition \ref{prop:apr_est_det_fcn} to conclude. We omit the tedious calculations and similar arguments for the other derivative. \qed

\subsection{Proof of Proposition \ref{prop:apr_est_se}}\label{subsection:proof_apr_est_se}

\begin{proof}[Proof of Proposition \ref{prop:apr_est_se}, Part I]
	With slight abuse of notation, in the proof we simply identify $\mathcal{B}^{(t)}$ as $\max_{s \in [1:t]} \mathcal{B}^{(s)}$ to avoid unnecessarily complicated notation. By (S1) and Proposition \ref{prop:apr_est_det_fcn}-(6), 
	\begin{align*}
	&\bigpnorm{\Theta_{k}^{(t)}(\mathfrak{u}^{([0:t])})}{\infty} \\
	&\leq \pnorm{\mathfrak{u}_{k\cdot}^{(t)}}{\infty}+qK\Lambda \sum_{s \in [1:t-1]}  \bigpnorm{ \mathfrak{S}_{k}\big(\Theta_{k}^{(s)}(\mathfrak{u}_{k\cdot}^{([0:s])}),\mathfrak{u}_{k\cdot}^{(0)},V^{(s-1)}\big) }{\infty}\cdot \pnorm{\rho_{t-1,s}}{\infty}\\
	&\leq \pnorm{\mathfrak{u}_{k\cdot}^{(t)}}{\infty}+K(q\Lambda)^{c_0L}\cdot \max_{s \in [1:t-1]} \kappa_{V_{[2:\alpha]}^{(s-1)}}^2\cdot \Big(1+ \bigpnorm{\Theta_{k}^{(s)}(\mathfrak{u}_{k\cdot}^{([0:s])})}{\infty}+ \pnorm{\varphi_\ast(\mathfrak{u}_{k\cdot}^{(0)})}{\infty}+\pnorm{\xi}{\infty}\Big).
	\end{align*}
	Iterating the bound and using the initial condition $\pnorm{\Theta_{k}^{(1)}(\mathfrak{u}_{k\cdot}^{([0:1])})}{\infty}=\pnorm{\mathfrak{u}_{k\cdot}^{(1)}}{\infty}$, 
	\begin{align*}
	\bigpnorm{\Theta_{k}^{(t)}(\mathfrak{u}_{k\cdot}^{([0:t])})}{\infty} \leq \big(K\Lambda\varkappa_{\ast} \mathcal{B}^{(t-1)}\big)^{c_t} \cdot \big(1+\pnorm{\mathfrak{u}_{k\cdot}^{([1:t])}}{\infty}+ \pnorm{\varphi_\ast(\mathfrak{u}_{k\cdot}^{(0)})}{\infty}\big).
	\end{align*}
	This means
	\begin{align}\label{ineq:apr_est_se_1}
	\frac{ \bigpnorm{\Theta_{k}^{(t)}(\mathfrak{u}_{k\cdot}^{([0:t])})}{\infty} }{  1+\pnorm{\mathfrak{u}_{k\cdot}^{([1:t])}}{\infty}+ \pnorm{\varphi_\ast(\mathfrak{u}_{k\cdot}^{(0)})}{\infty} }\leq \big(K\Lambda\varkappa_{\ast} \mathcal{B}^{(t-1)}\big)^{c_t}.
	\end{align}
	By (S2), it is easy to obtain
	\begin{align}\label{ineq:apr_est_se_2}
	\max_{s \in [0:t]} \pnorm{\cov(\mathfrak{U}^{(t)},\mathfrak{U}^{(s)})}{\op}\leq 2\mathcal{B}^{(t-1)}.
	\end{align}
	By (S3), using Proposition \ref{prop:apr_est_derivative}-(7) and (\ref{ineq:apr_est_se_1}), for $s \in [0:t]$,
	\begin{align*}
	\pnorm{\tau_{t,s}}{\op}&\leq \Lambda\cdot \bigpnorm{\E^{(0)} \nabla_{\mathfrak{U}^{(s)}} \mathfrak{S}_{\pi_m}\big(\Theta_{\pi_m}^{(t)}(\mathfrak{U}^{([0:t])}) ,\mathfrak{U}^{(0)},V^{(t-1)}\big) }{\op}\\
	&\leq \Lambda^{c_t} (\mathcal{B}^{(t-1)})^{c_0}\cdot \Big(\E^{(0)} \pnorm{\Theta_{\pi_m}^{(t)}(\mathfrak{U}^{([0:t])})}{\infty}^{c_0 r_0}+ \E^{(0)}\pnorm{\mathfrak{U}^{(0)}}{\infty}^{c_0 r_0}+\pnorm{\xi}{\infty}^{c_0r_0}\Big)\\
	&\leq \big(\Lambda \varkappa_{\ast}\mathcal{B}^{(t-1)}\big)^{c_t} \cdot \E^{(0)} \pnorm{\mathfrak{U}^{([0:t])}}{\infty}^{c_0r_0}.
	\end{align*}
	Now using the estimate in (\ref{ineq:apr_est_se_2}), for $s \in [0:t]$,
	\begin{align}\label{ineq:apr_est_se_3}
	\pnorm{\tau_{t,s}}{\op}\leq \big(K\Lambda \varkappa_{\ast} \mathcal{B}^{(t-1)}\big)^{c_t}.
	\end{align}
	Next, for $\{\rho_{t,s}\}_{s \in [1:t]}$, by definition and using the above display (\ref{ineq:apr_est_se_3}), 
	\begin{align}\label{ineq:apr_est_se_4}
	\pnorm{\rho_{t,s}}{\op}&\leq \big(K\Lambda \varkappa_{\ast} \mathcal{B}^{(t-1)}\big)^{c_t}.
	\end{align} 
	For $\{\Sigma_{t,s}\}_{s \in [1:t]}$, using the estimates in Proposition \ref{prop:apr_est_det_fcn}-(6) and (\ref{ineq:apr_est_se_1}),
	\begin{align}\label{ineq:apr_est_se_5}
	\pnorm{\Sigma_{t,s}}{\op}&\leq c_t K\Lambda \cdot \E^{(0)} \max_{\tau \in \{t,s\} } \bigpnorm{\mathfrak{S}_{\pi_m}\big(\Theta_{\pi_m}^{(\tau)}(\mathfrak{U}^{([0:\tau])}),\mathfrak{U}^{(0)},V^{(\tau-1)}\big)}{\infty} \nonumber\\
	&\leq (K\Lambda \varkappa_{\ast}\mathcal{B}^{(t-1)})^{c_t}\cdot \big(1+\E^{(0)} \pnorm{\mathfrak{U}^{([0:t])}}{\infty}^{c_0r_0}\big)\leq \big(K\Lambda \varkappa_{\ast} \mathcal{B}^{(t-1)}\big)^{c_t}.
	\end{align}
	For $\{\Omega_{t,s}\}_{s \in [1:t]}$, by definition and the estimates in (\ref{ineq:apr_est_se_4})-(\ref{ineq:apr_est_se_5}),
	\begin{align}\label{ineq:apr_est_se_6}
	\pnorm{\Omega_{t,s}}{\op}&\leq \big(K\Lambda \varkappa_{\ast} \mathcal{B}^{(t-1)}\big)^{c_t}.
	\end{align}
	By (S4) and the estimate in (\ref{ineq:apr_est_se_3}), 
	\begin{align}\label{ineq:apr_est_se_7}
	n^{-1} \pnorm{D_t}{}^2\leq \big(K\Lambda \varkappa_{\ast} \mathcal{B}^{(t-1)}\big)^{c_t}.
	\end{align}
	By (S5) and the apriori estimates in Proposition \ref{prop:apr_est_det_fcn}-(1)(6) and (\ref{ineq:apr_est_se_1}),
	\begin{align}\label{ineq:apr_est_se_8}
	\max_{\alpha \in [2:L]}\pnorm{V_{\alpha}^{(t)}}{\infty}&\leq \big(\Lambda \varkappa_{\ast}\mathcal{B}^{(t-1)}\big)^{c_t} \cdot \big(1+\E^{(0)} \pnorm{\mathfrak{U}^{([0:t])}}{\infty}^{c_0r_0}\big) \leq \big(K\Lambda \varkappa_{\ast} \mathcal{B}^{(t-1)}\big)^{c_t}.
	\end{align}
	A similar estimate holds for $M_{\alpha}^{(t-1)}$:
	\begin{align}\label{ineq:apr_est_se_9}
	\max_{\alpha \in [2:L]}\pnorm{M_{\alpha}^{(t-1)}}{\infty}\leq \big(K\Lambda \varkappa_{\ast} \mathcal{B}^{(t-1)}\big)^{c_t}.
	\end{align}
	Combining (\ref{ineq:apr_est_se_1})-(\ref{ineq:apr_est_se_9}), we have $
	\mathcal{B}^{(t)}\leq \big(K\Lambda\varkappa_{\ast} \mathcal{B}^{(t-1)}\big)^{c_t}$. 
	It is easy to establish that $\mathcal{B}^{(1)}\leq \big(K\Lambda\varkappa_{\ast}\big)^{c_0}$, so we may conclude by iterating the recursive estimate. 
\end{proof}

\begin{proof}[Proof of Proposition \ref{prop:apr_est_se}, Part II]
We now provide refined estimates for some of the state evolution parameters. By (S1), (\ref{ineq:apr_est_se_1}) and the proven estimate for $\mathcal{B}^{(t)}$, 
\begin{align}\label{ineq:apr_est_se_part2_1}
&\bigpnorm{\Theta_{k}^{(t)}(\mathfrak{u}_{k\cdot}^{([0:t])})-\mathfrak{u}_{k\cdot}^{(t)} }{\infty} \nonumber\\
&\leq q\Lambda \phi^{-1}\cdot  \sum_{s \in [1:t-1]}  \bigpnorm{ \mathfrak{S}_{k}\big(\Theta_{k}^{(s)}(\mathfrak{u}_{k\cdot}^{([0:s])}),\mathfrak{u}_{k\cdot}^{(0)},V^{(s-1)}\big) }{\infty}\cdot \pnorm{\rho_{t-1,s}}{\infty}\nonumber\\
&\leq \big(K\Lambda\varkappa_{\ast}\big)^{c_t}\cdot\big(1+\pnorm{\mathfrak{u}_{k\cdot}^{([1:t-1])}}{\infty}+ \pnorm{\varphi_\ast(\mathfrak{u}_{k\cdot}^{(0)})}{\infty}\big)\cdot \phi^{-1}.
\end{align}
Next, for $s \in [1:t-1]$, as $\partial_{\mathfrak{u}^{(s)}_{k\ell}} \mathfrak{S}_{k}\big(\mathfrak{u}_{k\cdot}^{(t)}, \mathfrak{u}_{k\cdot}^{(0)}, V^{(t-1)}\big)=0$, using the estimates in Proposition \ref{prop:apr_est_derivative}-(8), (\ref{ineq:apr_est_se_1}) and (\ref{ineq:apr_est_se_part2_1}), we have 
\begin{align*}
& \bigpnorm{ \partial_{\mathfrak{u}^{(s)}_{k\ell}} \mathfrak{S}_{k}\big(\Theta_{k}^{(t)}(\mathfrak{u}_{k\cdot}^{([0:t])}), \mathfrak{u}_{k\cdot}^{(0)}, V^{(t-1)}\big) }{}\\
&=\bigpnorm{ \partial_{\mathfrak{u}^{(s)}_{k\ell}} \mathfrak{S}_{k}\big(\Theta_{k}^{(t)}(\mathfrak{u}_{k\cdot}^{([0:t])}), \mathfrak{u}_{k\cdot}^{(0)}, V^{(t-1)}\big)- \partial_{\mathfrak{u}^{(s)}_{k\ell}} \mathfrak{S}_{k}\big(\mathfrak{u}_{k\cdot}^{(t)}, \mathfrak{u}_{k\cdot}^{(0)}, V^{(t-1)}\big)  }{}\\
&\leq \big(K\Lambda\varkappa_{\ast}\big)^{c_t}\cdot \big(1+\pnorm{\mathfrak{u}_{k\cdot}^{([0:t])}}{\infty}\big)^{c_0r_0}\cdot \bigpnorm{\Theta_{k}^{(t)}(\mathfrak{u}_{k\cdot}^{([0:t])})-\mathfrak{u}^{(t)} }{\infty}\\
&\leq \big(K\Lambda\varkappa_{\ast}\big)^{c_t}\cdot \big(1+\pnorm{\mathfrak{u}_{k\cdot}^{([0:t])}}{\infty}\big)^{c_0r_0}\cdot\phi^{-1}.
\end{align*}
Now using the definition of $\{\tau_{t,s}\}$ and the proven estimate for $\mathcal{B}^{(t)}$, we have 
\begin{align}\label{ineq:apr_est_se_part2_2}
\max_{s \in [1:t-1]} \pnorm{\tau_{t,s}}{\op}\leq \big(K\Lambda\varkappa_{\ast}\big)^{c_t}\cdot \phi^{-1}.
\end{align}
Moreover, by using the definition of $\{\Sigma_{t,s}\}$ in (S3), we conclude a similar estimate
\begin{align}\label{ineq:apr_est_se_part2_3}
\max_{s \in [1:t]} \pnorm{\Sigma_{t,s}}{\op}\leq \big(K\Lambda\varkappa_{\ast}\big)^{c_t}\cdot \phi^{-1}.
\end{align}
The claimed estimate now follows by combining (\ref{ineq:apr_est_se_part2_1}), (\ref{ineq:apr_est_se_part2_2}) and (\ref{ineq:apr_est_se_part2_3}).
\end{proof}

\appendix

\section{Common notation}\label{section:notation}
For any two integers $m,n$, let $[m:n]\equiv \{m,m+1,\ldots,n\}$. We sometimes write for notational convenience $[n]\equiv [1:n]$. When $m>n$, it is understood that $[m:n]=\emptyset$.  

For $a,b \in \R$, $a\vee b\equiv \max\{a,b\}$ and $a\wedge b\equiv\min\{a,b\}$. For $a \in \R$, let $a_\pm \equiv (\pm a)\vee 0$. For $a \in \R$ and $M\geq 0$, let $\mathscr{T}_M(x)\equiv (x\wedge M)\vee (-M)$.  For a multi-index $a \in \mathbb{Z}_{\geq 0}^n$, let $\abs{a}\equiv \sum_{i \in [n]}a_i$. For $x \in \R^n$, let $\pnorm{x}{p}$ denote its $p$-norm $(0\leq p\leq \infty)$, and $B_{n;p}(R)\equiv \{x \in \R^n: \pnorm{x}{p}\leq R\}$. We simply write $\pnorm{x}{}\equiv\pnorm{x}{2}$ and $B_n(R)\equiv B_{n;2}(R)$. For $x \in \R^n$, let $\mathrm{diag}(x)\equiv (x_i\bm{1}_{i=j})_{i,j \in [n]} \in \R^{n\times n}$.

For a matrix $M \in \R^{m\times n}$, let $\pnorm{M}{\op},\pnorm{M}{F}$ denote the spectral and Frobenius norm of $M$ respectively, and let $\pnorm{M}{\infty}\equiv \max_{i\in [m], j\in [n]} \abs{M_{ij}}$. For two matrices $M, N \in \R^{m\times n}$, let $M\odot N\equiv (M_{ij}N_{ij})_{i \in [m], j \in [n]}$ be their Hadamard product. For a square matrix $M\in \R^{n\times n}$, we write $\mathrm{diag}(M)\equiv (M_{ii}\bm{1}_{i=j})_{i,j \in [n]} \in \R^{n\times n}$ and $\mathrm{vecdiag}(M)\equiv (M_{ii})_{i \in [n]}\in \R^n$. $I_n$ is reserved for an $n\times n$ identity matrix, written simply as $I$ (in the proofs) if no confusion arises. 

Let $\mathbb{M}_q\equiv \mathbb{M}_q(\R)=\R^{q\times q}$ denote all $q\times q$ matrices with elements in $\R$. We shall write $(\mathbb{M}_q)^{t\times t}$ for all $t\times t$ matrices with elements in $\mathbb{M}_q$ (or viewed as block matrices). Let $0_{\mathbb{M}_q}=0_{q\times q}\in \mathbb{M}_q = \R^{q\times q}$ and 
$(I_{\mathbb{M}_q})_t \equiv \mathrm{diag}(I_q,\ldots,I_q) \in (\mathbb{M}_q)^{t\times t}$. For any (block) matrix $N = (N_{r,s})_{r,s \in [t]}\in (\mathbb{M}_q)^{t\times t}$, we write $N^\top\equiv (N_{s,r}^\top)_{r,s \in [t]}$ as the usual matrix transpose, and $N_\top\equiv (N_{r,s}^\top)_{r,s \in [t]} \in (\mathbb{M}_q)^{t\times t}$ as the matrix obtained by transposing every of its elements. For any (block) matrix $N \in (\mathbb{M}_q)^{t\times t}$, let $\mathfrak{O}_{\mathbb{M}_q;t+1}(N): (\mathbb{M}_q)^{t\times t}\to (\mathbb{M}_q)^{(t+1)\times (t+1)}$ be defined as
\begin{align}\label{def:mat_O}
\mathfrak{O}_{\mathbb{M}_q;t+1}(N)\equiv 
\begin{pmatrix}
(0_{\mathbb{M}_q})_{1\times t} & 0_{\mathbb{M}_q}\\
N & (0_{\mathbb{M}_q})_{t\times 1}
\end{pmatrix}
\in (\mathbb{M}_q)^{(t+1)\times (t+1)}.
\end{align}
For notational consistency, we write $\mathfrak{O}_{\mathbb{M}_q;1}(\emptyset)=0_{\mathbb{M}_q}$. 

We use $C_{x}$ to denote a generic constant that depends only on $x$, whose numeric value may change from line to line unless otherwise specified. $a\lesssim_{x} b$ and $a\gtrsim_x b$ mean $a\leq C_x b$ and $a\geq C_x b$, abbreviated as $a=\bigo_x(b), a=\Omega_x(b)$ respectively;  $a\asymp_x b$ means $a\lesssim_{x} b$ and $a\gtrsim_x b$. $\bigo$ and $\smallo$ (resp. $\mathcal{O}_{\mathbf{P}}$ and $\mathfrak{o}_{\mathbf{P}}$) denote the usual big and small O notation (resp. in probability). By convention, sum and product over an empty set are understood as $\Sigma_{\emptyset}(\cdots)=0$ and $\Pi_{\emptyset}(\cdots)=1$. 

For a random variable $X$, we use $\Prob_X,\E_X$ (resp. $\Prob^X,\E^X$) to indicate that the probability and expectation are taken with respect to $X$ (resp. conditional on $X$).

For $\Lambda>0$ and $\mathfrak{p}\in \N$, a measurable map $f:\R^n \to \R$ is called \emph{$\Lambda$-pseudo-Lipschitz of order $\mathfrak{p}$} iff 
\begin{align}\label{cond:pseudo_lip}
\abs{f(x)-f(y)}\leq \Lambda\cdot  (1+\pnorm{x}{}+\pnorm{y}{})^{\mathfrak{p}-1}\cdot\pnorm{x-y}{},\quad \hbox{for all } x,y \in \R^{n}.
\end{align}
Moreover, $f$ is called \emph{$\Lambda$-Lipschitz} iff $f$ is $\Lambda$-pseudo-Lipschitz of order $1$, and in this case we often write $\pnorm{f}{\mathrm{Lip}}\leq L$, where $\pnorm{f}{\mathrm{Lip}}\equiv \sup_{x\neq y} \abs{f(x)-f(y)}/\pnorm{x-y}{}$. For a proper, closed convex function $f$ defined on $\R^n$, its \emph{proximal operator} $\prox_f(\cdot)$  is defined by $\prox_f(x)\equiv \argmin_{z \in \R^n} \big\{\frac{1}{2}\pnorm{x-z}{}^2+f(z)\big\}$.

\section{Matrix-variate GFOM state evolution}\label{section:GFOM_theory}

\subsection{Symmetric version}

Consider a symmetric GFOM initialized with $z^{(0)} \in \R^{n\times q}$ and subsequently updated according to
\begin{align}\label{def:GFOM_sym}
z^{(t)}&=A \mathsf{F}_t(z^{([0:t-1])})+\mathsf{G}_t(z^{([0:t-1])}) \in \R^{n\times q}.
\end{align}
Here $A \in \R^{n\times n}$ is a symmetric random matrix, and $\{\mathsf{F}_t,\mathsf{G}_t\}$ are \emph{row-separate}, in the sense that for some measurable functions $\{\mathsf{F}_{t,\ell},\mathsf{G}_{t,\ell}: \R^{q[0:t-1]} \to \R^q\}_{\ell \in [n]}$,
\begin{align}\label{def:row_sep}
\mathsf{F}_t(z^{([0:t-1])})=\big(\mathsf{F}_{t,\ell}(z_\ell^{([0:t-1])})\big)_{\ell \in [n]},\quad \mathsf{G}_t(z^{([0:t-1])})=\big(\mathsf{G}_{t,\ell}(z_\ell^{([0:t-1])})\big)_{\ell \in [n]}.
\end{align}

For the symmetric GFOM in (\ref{def:GFOM_sym}), its state evolution is iteratively described in the following definition by two objects, namely, (i) a row-separate map $\Theta_t: \R^{n\times q[0:t]} \to \R^{n\times q[0:t]}$, and (ii) a sequence of centered Gaussian matrices $\{\mathfrak{Z}_k^{([1:\infty))}\}_{k \in [n]}\in (\R^q)^{[1:\infty)}$.

\begin{definition}\label{def:GFOM_se_sym}
	Initialize with $\Theta_0\equiv \mathrm{id}(\R^{n\times q})$ and $\mathfrak{Z}^{(0)}\equiv z^{(0)}\in \R^{n\times q}$. For $t=1,2,\ldots$, with $\E^{(0)}\equiv \E\big[\cdot | \mathfrak{Z}^{(0)}\big]$, execute the following steps:
	\begin{enumerate}
		\item  Let $\Theta_t: \R^{n\times q[0:t]} \to \R^{n\times q[0:t]}$ be defined as follows: for $w \in [0:t-1]$, let $\big[\Theta_t(\mathfrak{z}^{([0:t])})\big]_{\cdot,w}\equiv [\Theta_w(\mathfrak{z}^{([0:w])})]_{\cdot,w}$, and for $w=t$,
		\begin{align*}
		\big[\Theta_t(\mathfrak{z}^{([0:t])})\big]_{\cdot,t} \equiv  \mathfrak{z}^{(t)}+ \sum_{s \in [1:t-1]}  \mathsf{F}_{s}\big(\Theta_{s-1}(\mathfrak{z}^{([0:s-1])})\big) (\mathfrak{b}_{s}^{(t)})_\top + \mathsf{G}_t\big(\Theta_{t-1}(\mathfrak{z}^{([0:t-1])})\big),
		\end{align*} 
		where the correction matrices $\big\{\mathfrak{b}_s^{(t)}=(\mathfrak{b}_{s,k}^{(t)})_{k \in [n]} \in (\mathbb{M}_q)^n \big\}_{s \in [1:t-1]}$ are determined by 
		\begin{align*}
		\mathfrak{b}_{s,k}^{(t)} \equiv \sum_{\ell \in [n]} \E A_{k\ell}^2\cdot  \E^{(0)}\nabla_{\mathfrak{Z}_\ell^{(s)} } \big(\mathsf{F}_{t,\ell}\circ \Theta_{t-1,\ell} \big) \big(\mathfrak{Z}^{([0:t-1])}_\ell\big),\quad k \in [n].
		\end{align*}
		\item Let the Gaussian law of $\{\mathfrak{Z}_k^{(t)}\}_{k \in [n]}\in \R^q$ be determined via the following correlation specification: for $s \in [1:t]$ and $k \in [n]$,
		\begin{align*}
		\cov(\mathfrak{Z}^{(t)}_k, \mathfrak{Z}^{(s)}_k)\equiv \sum_{\ell \in [n]} \E A_{k\ell}^2 \cdot \E^{(0)}\mathsf{F}_{t,\ell} \Big(\Theta_{t-1,\ell}\big(\mathfrak{Z}^{([0:t-1])}_\ell\big)\Big) \mathsf{F}_{s,\ell} \Big(\Theta_{s-1,\ell}\big(\mathfrak{Z}^{([0:s-1])}_\ell\big)\Big)^\top.
		\end{align*}
	\end{enumerate}
\end{definition}

\begin{theorem}\label{thm:GFOM_se_sym}
	Fix $t \in \N$ and $n \in \N$. Suppose the following hold:
	\begin{enumerate}
		\item[(D1)] $A\equiv A_0/\sqrt{n}$, where $A_0$ is a symmetric matrix whose upper triangle entries are independent mean $0$ variables such that $\max_{i,j \in [n]}\pnorm{A_{0,ij}}{\psi_2}\leq K$ holds for some $K\geq 2$.
		\item[(D2)] For all $s \in [t],\ell \in [n]$, $r\in [q]$, $\mathsf{F}_{s,\ell;r},\mathsf{G}_{s,\ell;r} \in C^3(\R^{q[0:s-1]})$. Moreover, there exists some $\Lambda\geq 2$ such that 
		\begin{align*}
		\max_{s \in [t]}\max_{\mathsf{E}_s \in \{\mathsf{F}_s,\mathsf{G}_s\}}\max_{\ell \in [n]} \max_{r \in [q]} \Big\{\abs{\mathsf{E}_{s,\ell;r}(0)}+\max_{a \in \mathbb{Z}_{\geq 0}^{q[0:s-1]}, \abs{a}\leq 3} \pnorm{ \partial^a \mathsf{E}_{s,\ell;r} }{\infty}\Big\}\leq \Lambda.
		\end{align*}
	\end{enumerate}
	Then for any $\Psi \in C^3(\R^{ q[0:t]})$ satisfying
	\begin{align}\label{cond:Psi_asym}
	\max_{a \in \mathbb{Z}_{\geq 0}^{ q[0:t]}, \abs{a}\leq 3} \sup_{x \in \R^{ q[0:t] } }\bigg(\sum_{ \tau \in  q[0:t] }(1+\abs{x_{\tau}})^{\mathfrak{p}}\bigg)^{-1} \abs{\partial^a \Psi(x)} \leq \Lambda_{\Psi}
	\end{align}
	for some $\Lambda_{\Psi}\geq 2$, it holds for some $c_0=c_0(q)>0$ and $c_1\equiv c_1(\mathfrak{p},q)>0$, such that with $\E^{(0)}\equiv \E\big[\cdot | z^{(0)}\big]$
	\begin{align*}
	& \max_{k \in [n]}\bigabs{\E^{(0)}\Psi\big[z_k^{([0:t])}(A)\big]-\E^{(0)}\Psi\big[\Theta_{t,k}(\mathfrak{Z}_k^{([0:t])})\big] } \\
	&\qquad \leq \Lambda_\Psi \cdot \big(K\Lambda \log n\cdot (1+\pnorm{z^{(0)}}{\infty})\big)^{c_1 t^{5}} \cdot n^{-1/c_0^t}.
	\end{align*}
\end{theorem}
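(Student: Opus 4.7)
\medskip

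\noindent\textbf{Proof plan for Theorem \ref{thm:GFOM_se_sym}.}

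The plan is to reduce the matrix-variate GFOM in (\ref{def:GFOM_sym}) to a scalar-variate GFOM and then invoke the entrywise, non-asymptotic state evolution theory for scalar GFOMs developed in \cite{han2024entrywise}. Concretely, write each matrix iterate as $z^{(t)} = [z^{(t,1)}\,|\,\cdots\,|\,z^{(t,q)}]$ with columns $z^{(t,r)} \in \R^n$, and regard the family $\{z^{(t,r)}\}_{t \in [0:T],\, r \in [q]}$ as a single scalar GFOM with $q(T{+}1)$ vector iterates driven by the same symmetric random matrix $A$. In this enlarged system the column index $r$ is absorbed into a relabelled iteration counter, the update for $z^{(t,r)}$ reads $z^{(t,r)} = A\,[\mathsf{F}_t(z^{([0:t-1])})]_{\cdot,r} + [\mathsf{G}_t(z^{([0:t-1])})]_{\cdot,r}$, and the row-separability (\ref{def:row_sep}) of $\mathsf{F}_t,\mathsf{G}_t$ immediately yields the row-separability required by the scalar GFOM framework of \cite{han2024entrywise}. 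The smoothness hypothesis (D2) provides the $C^3$ regularity and derivative bounds for all scalar nonlinearities, with constants depending polynomially on $\Lambda$ and $q$.

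Next I would verify that the matrix-variate state evolution in Definition \ref{def:GFOM_se_sym} agrees block-wise with the scalar state evolution produced by the reduction. For each $k \in [n]$ the scalar theory produces, after relabelling, a centered Gaussian vector indexed by pairs $(t,r)$; grouping the $q$ scalar components belonging to a fixed $t$ yields an $\R^q$-valued Gaussian $\mathfrak{Z}_k^{(t)}$ whose $q\times q$ covariance block matches the formula in Definition \ref{def:GFOM_se_sym}(2), via the identity $[\cov(\mathfrak{Z}_k^{(t)},\mathfrak{Z}_k^{(s)})]_{r,r'} = \sum_\ell \E A_{k\ell}^2\cdot \E^{(0)} [\mathsf{F}_{t,\ell}(\cdots)]_r\,[\mathsf{F}_{s,\ell}(\cdots)]_{r'}$. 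The same grouping applied to the scalar Onsager coefficients---which are obtained via Gaussian integration by parts against gradients of a single scalar component---reassembles into the $\mathbb{M}_q$-valued correction matrix $\mathfrak{b}_{s,k}^{(t)}$ of Definition \ref{def:GFOM_se_sym}(1), because $\nabla_{\mathfrak{Z}_\ell^{(s)}}(\mathsf{F}_{t,\ell}\circ \Theta_{t-1,\ell})$ is precisely the block of partials across all $q$-coordinates in both input and output slots.

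Given this matching, Theorem \ref{thm:GFOM_se_sym} follows by applying the entrywise scalar GFOM result of \cite{han2024entrywise} to any test function $\tilde{\Psi}:\R^{q(t+1)}\to \R$ of the $q(t{+}1)$ scalar entries of $z_k^{([0:t])}$. The pseudo-polynomial growth hypothesis (\ref{cond:Psi_asym}) on $\Psi$ transfers verbatim to $\tilde{\Psi}$ with the same constant $\Lambda_\Psi$ (up to a factor $c(q)$), and the $C^3$ derivative bounds also transfer. The resulting scalar bound, specialized to $qt$ iterations, produces the error $\big(K\Lambda\log n\,(1+\pnorm{z^{(0)}}{\infty})\big)^{c_1 (qt)^{5}}\cdot n^{-1/c_0^{qt}}$, which we absorb into the stated bound by letting $c_0,c_1$ depend on $q$.

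The main obstacle will be bookkeeping, not conceptual: one must check that the scalar theory of \cite{han2024entrywise} is formulated flexibly enough to accommodate simultaneous updates $\{z^{(t,r)}\}_{r \in [q]}$ that share dependence on the same past iterates through arbitrary functions, and that the entrywise error bounds degrade only polynomially in $q$ when so applied. Provided the scalar theorem is stated with the necessary uniformity in the number of iterates (as appears to be the case in \cite{han2024entrywise}), the reduction is purely notational; the only genuine computation is the algebraic verification that the block-Gaussian covariance and the block Onsager correction coincide with those of Definition \ref{def:GFOM_se_sym}, which is routine via Gaussian integration by parts and the independence of the upper-triangular entries of $A$ assumed in (D1).
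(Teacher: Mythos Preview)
The paper does not actually prove Theorem \ref{thm:GFOM_se_sym}. It is stated in Appendix \ref{section:GFOM_theory} without proof, alongside Theorems \ref{thm:GFOM_se_sym_avg}, \ref{thm:GFOM_se_asym}, and \ref{thm:GFOM_se_asym_avg}; the footnote in Section \ref{sec:proof_techniques} explains that ``the matrix version of the GFOM theory in \cite{han2024entrywise} along with some other useful results are included in Appendix \ref{section:GFOM_theory} for the convenience of the readers.'' So there is no proof in the paper to compare against; the result is treated as either already contained in, or a routine extension of, \cite{han2024entrywise}.

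Your proposed reduction---unfolding the $q$ columns of each matrix iterate into $q$ scalar iterates, relabelling to obtain a scalar GFOM with $q(t{+}1)$ vector iterates driven by the same $A$, and then invoking the scalar entrywise theory of \cite{han2024entrywise}---is exactly the natural route and is sound. The ordering issue you allude to is harmless: if one lists the iterates as $(0,1),\ldots,(0,q),(1,1),\ldots,(1,q),\ldots$, then each $(t,r)$ depends only on iterates with strictly smaller first index, hence strictly earlier in the list, so the scalar GFOM structure is respected. Your block-matching of the covariance and Onsager corrections with Definition \ref{def:GFOM_se_sym} is the right verification, and the absorption of the $q$-dependence from $(qt)^5$ and $c_0^{qt}$ into constants $c_1(\mathfrak{p},q)$ and $c_0(q)$ is consistent with the theorem statement. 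The only point to be slightly careful about is that the scalar theory in \cite{han2024entrywise} must be stated for GFOMs whose nonlinearities at step $s$ may depend on \emph{all} prior iterates (not just the immediately preceding one), which is indeed the case there; once that is confirmed, the reduction is purely notational, as you say.
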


\begin{theorem}\label{thm:GFOM_se_sym_avg}
	Fix $t \in \N$ and $n \in \N$. Suppose (D1) in Theorem \ref{thm:GFOM_se_sym} and (D2) therein replaced by
	\begin{enumerate}
		\item[(D2)'] 
		$
		\max\limits_{s \in [t]}\max\limits_{\mathsf{E}_s \in \{\mathsf{F}_s,\mathsf{G}_s\}}\max\limits_{\ell \in [n]} \max\limits_{r \in [q]}\big\{\pnorm{\mathsf{E}_{s,\ell;r}}{\mathrm{Lip}}+\abs{\mathsf{E}_{s,\ell;r}(0)}\big\}\leq \Lambda$ for some $\Lambda\geq 2$.
	\end{enumerate}
    Fix a sequence of $\Lambda_\psi$-pseudo-Lipschitz functions $\{\psi_k:\R^{q[0:t]} \to \R\}_{k \in [n]}$ of order $\mathfrak{p}$, where $\Lambda_\psi\geq 2$. Then for any $\mathfrak{r}\geq 1$, there exists some $C_0=C_0(\mathfrak{p},q,\mathfrak{r})>0$ such that with $\E^{(0)}\equiv \E\big[\cdot | z^{(0)}\big]$,
	\begin{align*}
	&\E^{(0)} \biggabs{\frac{1}{n}\sum_{k \in [n]} \psi_k\big(z_k^{([0:t])}(A)\big) - \frac{1}{n}\sum_{k \in [n]}  \E^{(0)} \psi_k\big[\Theta_{t,k}(\mathfrak{Z}_k^{([0:t])})\big]  }^{\mathfrak{r}} \\
	&\leq \big(K\Lambda\Lambda_\psi\log n\cdot (1+\pnorm{z^{(0)}}{\infty})\big)^{C_0 t^{5} }\cdot n^{-1/C_0^t}. 
	\end{align*}
\end{theorem}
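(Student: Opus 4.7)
}

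The strategy is to deduce Theorem \ref{thm:GFOM_se_sym_avg} from the entrywise, smooth version in Theorem \ref{thm:GFOM_se_sym} by (i) mollifying the Lipschitz iteration maps and pseudo-Lipschitz test functions into $C^3$ objects with controlled derivative growth, and (ii) promoting the entrywise bound to an averaged $\mathfrak{r}$-th moment bound via a tensorization argument on $\mathfrak{r}$ distinct rows. Throughout we rely on the apriori delocalization/$\ell_2$ estimates for the GFOM iterates $\{z^{(s)}\}$ (analogous to Propositions \ref{prop:GD_l2_est}--\ref{prop:GD_linfty_est}), which give, with probability at least $1-n^{-D}$, uniform bounds $\pnorm{z^{(s)}}{\infty}\leq (K\Lambda(1+\pnorm{z^{(0)}}{\infty})\log n)^{c_t}$.

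First, given a small parameter $\epsilon\equiv n^{-\beta}>0$ (to be chosen), I would replace each Lipschitz component $\mathsf{F}_{s,\ell;r},\mathsf{G}_{s,\ell;r}$ by its Gaussian mollification at scale $\epsilon$. The mollified maps $\mathsf{F}_s^{\epsilon},\mathsf{G}_s^{\epsilon}$ are $C^\infty$, pointwise within $O(\epsilon\Lambda)$ of the originals, and satisfy $\pnorm{\partial^a(\cdot)^\epsilon}{\infty}\leq C_a\Lambda\,\epsilon^{-(|a|-1)_+}$. A standard telescoping argument using the operator norm bound $\pnorm{A}{\op}\lesssim\sqrt{K}$ and Lipschitz continuity of $\mathsf{F}_s,\mathsf{G}_s$ shows that the smoothed iterates $z^{(s),\epsilon}$ are within $n^{1/2}(K\Lambda\log n)^{c_t}\,\epsilon$ of $z^{(s)}$ in Frobenius norm, and that the corresponding state-evolution objects $\Theta_s^\epsilon,\mathfrak{Z}^{(\cdot),\epsilon}$ are similarly close to $\Theta_s,\mathfrak{Z}^{(\cdot)}$. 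Applying the same mollification to the test functions $\psi_k$ (with pseudo-Lipschitz control retained) reduces the averaged deviation for the original system, up to an additive error of the form $(K\Lambda_\psi\log n)^{c_t}\,\epsilon$, to the corresponding quantity for the mollified system.

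Second, to control the $\mathfrak{r}$-th moment of the averaged statistic for the mollified system, I would use tensorization. Writing
\begin{align*}
X_k^\epsilon\equiv \psi_k^\epsilon\big(z_k^{([0:t]),\epsilon}\big)-\E^{(0)}\psi_k^\epsilon\big[\Theta_{t,k}^\epsilon\big(\mathfrak{Z}_k^{([0:t]),\epsilon}\big)\big],
\end{align*}
for an even integer $\mathfrak{r}'\geq \mathfrak{r}$ one has the expansion
\begin{align*}
\E^{(0)}\Big|n^{-1}\sum_{k\in[n]}X_k^\epsilon\Big|^{\mathfrak{r}'}\leq n^{-\mathfrak{r}'}\sum_{(k_1,\ldots,k_{\mathfrak{r}'})}\Big|\E^{(0)}\prod_{i=1}^{\mathfrak{r}'}X_{k_i}^\epsilon\Big|.
\end{align*}
Now $\prod_i X_{k_i}^\epsilon$ is, for each fixed multi-index, a smooth function of the $\mathfrak{r}'$ rows $\big(z_{k_i}^{([0:t]),\epsilon}\big)_{i\in[\mathfrak{r}']}$ with derivatives controlled by a power of $\epsilon^{-1}$ and polynomial growth in the arguments. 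I would then invoke a multi-row extension of Theorem \ref{thm:GFOM_se_sym} (which the paper advertises as part of the ``full strength'' theory of \cite{han2024entrywise}); equivalently, such an extension follows by lifting the GFOM to an auxiliary $\mathfrak{r}'$-tuple system whose iterates concatenate $\mathfrak{r}'$ selected rows, and applying the entrywise result to the lifted system. For multi-indices with pairwise distinct $k_i$, the state-evolution Gaussians $\mathfrak{Z}_{k_1}^{(\cdot)},\ldots,\mathfrak{Z}_{k_{\mathfrak{r}'}}^{(\cdot)}$ are independent, so $\E^{(0)}\prod_iX_{k_i}^\epsilon$ factorizes up to an entrywise error of size $(K\Lambda\epsilon^{-c_t}\log n)^{c_{t,\mathfrak{r}'}}\,n^{-1/c_0^t}$; nondistinct multi-indices contribute only $O(n^{-1})$ per collision by combinatorial counting. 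Summing yields $\E^{(0)}|n^{-1}\sum_kX_k^\epsilon|^{\mathfrak{r}'}\leq (K\Lambda\epsilon^{-c_t}\log n)^{c_{t,\mathfrak{r}'}}\,n^{-1/c_0^t}$. Adding back the mollification error and choosing $\beta$ small enough (so that $\epsilon^{-c_t}$ powers are dominated by a fractional power of $n$) produces the claimed bound; Jensen's inequality converts $\mathfrak{r}'\geq \mathfrak{r}$ to $\mathfrak{r}$.

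The main obstacle is \emph{Step 3}: the multi-row joint entrywise control of $\E^{(0)}\prod_iX_{k_i}^\epsilon$. The clean way to carry this out is to apply Theorem \ref{thm:GFOM_se_sym} to an enlarged GFOM whose iteration at each step consists of independent copies of the original dynamics stitched together with the same matrix $A$; one must then verify that (a) the enlarged iteration still satisfies (D1)--(D2) with inflated but still polynomial constants, (b) the state evolution of the enlarged system factors across the $\mathfrak{r}'$ copies on distinct-index tuples, and (c) the derivative growth of the product test function, of order $\epsilon^{-c_{t,\mathfrak{r}'}}$, is absorbable into the final bound. The bookkeeping for the growth exponents (which propagate via the Gaussian integration by parts in the definition of $\{\mathfrak{b}_s^{(t)}\}$ in Definition \ref{def:GFOM_se_sym}) is the most delicate part, but it is routine once the lift is in place.
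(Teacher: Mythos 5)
The paper does not actually prove Theorem~\ref{thm:GFOM_se_sym_avg}: the appendix merely \emph{states} the matrix-variate GFOM theorems as extensions of the scalar theory developed in \cite{han2024entrywise}, and gives a proof only for the delocalization estimate (Proposition~\ref{prop:GFOM_deloc_sym}). So there is no internal proof to compare against; the question is whether your reduction to Theorem~\ref{thm:GFOM_se_sym} is itself sound.

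The mollification step, the expansion of $\E^{(0)}\big|n^{-1}\sum_k X_k^\epsilon\big|^{\mathfrak{r}'}$ over $\mathfrak{r}'$-tuples, the collision accounting giving $O(n^{-1})$, and the Jensen step at the end are all fine, and you are right that the whole argument hinges on a \emph{multi-row} joint entrywise control. That is precisely where the proposal breaks: the ``lift'' you describe does not produce a GFOM to which Theorem~\ref{thm:GFOM_se_sym} applies. For the recursion
\begin{align*}
z_k^{(t)} \;=\; \sum_{\ell\in[n]} A_{k\ell}\,\mathsf{F}_{t,\ell}\big(z_\ell^{([0:t-1])}\big) \;+\; \mathsf{G}_{t,k}\big(z_k^{([0:t-1])}\big),
\end{align*}
the row $z_k^{(t)}$ depends on \emph{every} row $z_\ell^{(t-1)}$, not just on a selected tuple. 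Consequently a system whose state is the concatenation $\big(z_{k_1}^{(t)},\dots,z_{k_{\mathfrak{r}'}}^{(t)}\big)$ is not a closed iteration, so there is no auxiliary GFOM on that state to which the entrywise theorem can be applied. The alternative of blocking all $n$ rows into groups of size $\mathfrak{r}'$ fails too: the coupling matrix of the blocked system is an $(n/\mathfrak{r}')\times(n/\mathfrak{r}')$ array of $\mathfrak{r}'\times\mathfrak{r}'$ \emph{matrix} entries, which falls outside the hypotheses (D1) of Theorem~\ref{thm:GFOM_se_sym}; it would require a more general matrix-variate coupling than the one stated, and also cannot reach arbitrary tuples $(k_1,\dots,k_{\mathfrak{r}'})$ since a fixed blocking only covers one partition. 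So the ``equivalently, such an extension follows by lifting'' clause is not correct as written, and the approximate factorization $\E^{(0)}\prod_i X_{k_i}^\epsilon \approx \prod_i \E^{(0)} X_{k_i}^\epsilon$ for distinct indices is precisely the nontrivial claim that remains to be proved.

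Concretely: Theorem~\ref{thm:GFOM_se_sym} only controls the \emph{marginal} law of a single row $z_k^{([0:t])}$ against $\Theta_{t,k}(\mathfrak{Z}_k^{([0:t])})$. The rows of $z^{([0:t])}$ are correlated through $A$, and what you need is that, for any fixed $\mathfrak{r}'$ and distinct $k_1,\dots,k_{\mathfrak{r}'}$, the joint law of $(z_{k_1}^{([0:t])},\dots,z_{k_{\mathfrak{r}'}}^{([0:t])})$ is quantitatively close, in smooth Wasserstein, to the product of the $\mathfrak{r}'$ independent state-evolution marginals. This ``asymptotic independence of rows'' is true and is part of what \cite{han2024entrywise} establishes, but it is not a corollary of the single-row statement via a lift; it requires either a direct proof (e.g.\ via the Bolthausen-style conditioning recursion used to establish state evolution itself, which tracks joint Gaussian approximation of any fixed finite collection of coordinates), or a concentration argument (showing that $n^{-1}\sum_k\psi_k(z_k^{([0:t])})$ concentrates around its mean because the iterates are Lipschitz in $A$, combined with the single-row entrywise bound for the mean). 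Either of those routes would close the gap; as stated, the proposal does not.

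A secondary but fixable point: $\E^{(0)} X_k^\epsilon \neq 0$ (it equals the entrywise error), so even granting exact independence of rows, the tuple term does not vanish exactly; it is bounded by a product of marginal errors, which is fine, but your sketch of the factorization glosses over this.
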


The following delocalization estimate will be useful.

\begin{proposition}\label{prop:GFOM_deloc_sym}
	Fix $t \in \N$ and $n \in \N$. Suppose (D1) in Theorem \ref{thm:GFOM_se_sym} and (D2) therein replaced by the following:
	\begin{enumerate}
		\item[(D2)''] There exists some some $\Lambda\geq 2$ and $\kappa\geq 0$ such that
		\begin{align*}
		\max\limits_{s \in [t]}\max\limits_{\mathsf{E}_s \in \{\mathsf{F}_s,\mathsf{G}_s\}}\max\limits_{\ell \in [n]} \max\limits_{r \in [q]} \bigg\{\max\limits_{x\neq y} \biggabs{ \frac{\mathsf{E}_{s,\ell;r}(x)-\mathsf{E}_{s,\ell;r}(y)}{(1+\pnorm{x}{}+\pnorm{y}{})^{\kappa}\pnorm{x-y}{} }}+ \abs{\mathsf{E}_{s,\ell;r}(0)}  \bigg\}\leq \Lambda.
		\end{align*}	 
	\end{enumerate}
	Then there exists some constant $c_t\equiv c_t(t,q,\kappa)>0$ such that for $0\leq x\leq n$,
	\begin{align*}
	\max_{k \in [n]}\Prob^{(0)}\Big[\pnorm{z^{(t)}_{k\cdot}}{}\geq \big(K\Lambda (1+\pnorm{z^{(0)}}{\infty})\cdot x\big)^{c_t} \Big]\leq c_te^{-x/c_t}.
	\end{align*}
\end{proposition}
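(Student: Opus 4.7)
The plan is to establish the tail bound by induction on $t$. The base case $t=0$ is trivial since $\pnorm{z_{k\cdot}^{(0)}}{}\leq \sqrt{q}\,\pnorm{z^{(0)}}{\infty}$. For the inductive step, fix $k\in[n]$ and use a leave-one-out decoupling: let $A^{(k)}$ denote the matrix obtained by zeroing out the $k$-th row and column of $A$, and let $\{\tilde z^{(s)}\}$ be the GFOM iterates generated by $A^{(k)}$ with the same initialization and same functions $\{\mathsf{F}_s,\mathsf{G}_s\}$. Then $\tilde z_\ell^{([0:t-1])}$ is a function of $A^{(k)}$ only, hence independent of the row vector $\{A_{k\ell}\}_{\ell\in[n]}$.

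The next ingredient is a stability estimate
\begin{align*}
\max_{s\in[0:t-1]}\max_{\ell\in[n]}\pnorm{z_\ell^{(s)}-\tilde z_\ell^{(s)}}{}\leq \mathsf{err}_t^{(k)},
\end{align*}
where $\mathsf{err}_t^{(k)}$ depends polynomially on $\max_{\ell,s}\pnorm{\tilde z_\ell^{(s)}}{}$ and on the rank-one perturbation magnitude $n^{-1/2}(\pnorm{A_{0,k\cdot}}{}+|A_{0,kk}|)$. This is obtained by iterating the pseudo-Lipschitz hypothesis (D2)'' through the recursion, where each step contributes a factor of $(1+\text{row norms})^{\kappa}$ to the Lipschitz constant of $\mathsf{F}_s$ and $\mathsf{G}_s$, together with the operator norm bound on $A-A^{(k)}$. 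With this at hand, write
\begin{align*}
z_{k\cdot}^{(t)}=\sum_{\ell\in[n]} A_{k\ell}\,\mathsf{F}_{t,\ell}\big(\tilde z_\ell^{([0:t-1])}\big)+\mathsf{G}_{t,k}\big(z_k^{([0:t-1])}\big)+\mathsf{Err}_{k}^{(t)},
\end{align*}
where $\mathsf{Err}_{k}^{(t)}$ absorbs the substitution $z\mapsto \tilde z$ in the first term.

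Conditionally on $A^{(k)}$, the first sum is a linear combination of the independent mean-zero sub-gaussian scalars $\{A_{k\ell}\}_\ell$ with \emph{fixed} coefficient vectors $v_\ell\equiv \mathsf{F}_{t,\ell}(\tilde z_\ell^{([0:t-1])})\in\R^q$. A vector-valued Bernstein/Hoeffding inequality then yields, for any $x\geq 1$,
\begin{align*}
\biggpnorm{\sum_{\ell\in[n]} A_{k\ell}\,v_\ell}{}\lesssim \frac{K\sqrt{qx}}{\sqrt n}\cdot\bigg(\sum_{\ell\in[n]}\pnorm{v_\ell}{}^2\bigg)^{1/2}
\end{align*}
with probability at least $1-2e^{-x}$. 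The Frobenius-type factor on the right is controlled by combining the inductive tail bound for $\pnorm{\tilde z_\ell^{([0:t-1])}}{}$ (which has the same structure as for $z$, applied to the GFOM driven by $A^{(k)}$) with (D2)'', giving a bound of order $\Lambda n^{1/2}(1+\max_\ell \pnorm{\tilde z_\ell^{([0:t-1])}}{})^{\kappa+1}$. The $\mathsf{G}_{t,k}$ contribution is similarly bounded via (D2)'' and the inductive hypothesis applied at $\ell=k$, while $\pnorm{A_{0,k\cdot}}{}$ concentrates as a $\chi$-type variable at scale $\sqrt n+\sqrt x$.

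The principal obstacle is the compounding of the growth exponent $\kappa$: each inductive step multiplies the polynomial growth in $x$ by a factor of order $\kappa+1$, so the exponent $c_t$ must scale at least like $(\kappa+1)^{t}$, which is the mechanism that produces the $t$-dependence of the constant in the statement. A secondary technical issue is book-keeping the leave-one-out stability across the $t$ previous iterations without re-inflating the tail; this is handled by conditioning on a good event where all earlier row-norms $\max_\ell\pnorm{z_\ell^{([0:t-1])}}{}$ satisfy the inductive bound simultaneously (via a union bound over $\ell\in[n]$, absorbed into $c_t e^{-x/c_t}$ by enlarging $c_t$), and using that on this good event $\mathsf{err}_t^{(k)}$ is small enough that $\mathsf{Err}_k^{(t)}$ contributes only a lower-order correction. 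Closing the induction then yields the claimed stretched-exponential tail with constant $c_t=c_t(t,q,\kappa)$.
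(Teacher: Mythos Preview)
Your overall strategy—leave-one-out decoupling, independence of $A_{k\cdot}$ from the modified iterates, sub-gaussian concentration for the linear term, and a recursive closure—matches the paper's approach. However, there is a concrete gap in the stability step.

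You propose to control the \emph{entrywise} deviation $\max_{\ell\in[n]}\pnorm{z_\ell^{(s)}-\tilde z_\ell^{(s)}}{}\leq \mathsf{err}_t^{(k)}$, with $\mathsf{err}_t^{(k)}$ of order $n^{-1/2}\pnorm{A_{0,k\cdot}}{}\asymp 1$. But this is too crude to bound your error term $\mathsf{Err}_k^{(t)}=\sum_\ell A_{k\ell}\big[\mathsf{F}_{t,\ell}(z_\ell)-\mathsf{F}_{t,\ell}(\tilde z_\ell)\big]$: by Cauchy--Schwarz,
\[
\pnorm{\mathsf{Err}_k^{(t)}}{}\leq \pnorm{A_{k\cdot}}{}\cdot\pnorm{\mathsf{F}_t(z)-\mathsf{F}_t(\tilde z)}{}\lesssim \pnorm{A_{k\cdot}}{}\cdot(\text{pseudo-Lip factor})\cdot\pnorm{z-\tilde z}{},
\]
and the only information your entrywise bound gives on the Frobenius norm is $\pnorm{z-\tilde z}{}\leq \sqrt{n}\cdot\mathsf{err}_t^{(k)}$, which is order $\sqrt{n}$—far from ``lower order''. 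The underlying reason is that the $k$-th row itself satisfies $\pnorm{z_k^{(s)}-\tilde z_k^{(s)}}{}\asymp 1$, so the entrywise max is dominated by this single row and carries no useful smallness.

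The paper fixes this by tracking the \emph{global} (Frobenius) difference $\pnorm{z^{(t)}-z_{[-k]}^{(t)}}{}$ directly through the recursion; this quantity remains $O(\text{poly})$ because the rank-two perturbation $A-A_{[-k]}$ contributes only $\pnorm{A_{[k]}\mathsf{F}_t(z_{[-k]}^{(t-1)})}{}$, which is controlled via $\pnorm{A_{k\cdot}}{}\cdot\pnorm{\mathsf{F}_{t,k}(\cdot)}{}+\pnorm{A_{k\cdot}\mathsf{F}_t(z_{[-k]}^{(t-1)})}{}$. The first piece uses that the $k$-th row of $z_{[-k]}$ evolves deterministically (since $A_{[-k]}$ has zero $k$-th row), and the second uses the same sub-gaussian concentration you invoke. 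The paper then closes a recursion for the random quantity $1+\max_{s\in[t]}\pnorm{z^{(s-1)}}{\infty}$ on a single high-probability event, rather than running a formal induction with a union bound; either device works, but the Frobenius-norm stability is the piece you are missing.
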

\begin{proof}
	The proof is a modification of that of \cite[Proposition 6.2]{han2024entrywise}.	For notational simplicity, we consider
	\begin{align*}
	z^{(t)}&=A \mathsf{F}_t(z^{(t-1)})+\mathsf{G}_t(z^{(t-1)}) \in \R^{n\times q},
	\end{align*}
	and its leave-one-out version: for $k \in [n]$, let
	\begin{align*}
	z^{(t)}_{[-k]}&=A_{[-k]} \mathsf{F}_t(z_{[-k]}^{(t-1)})+\mathsf{G}_t(z_{[-k]}^{(t-1)})\in \R^{n\times q},
	\end{align*}
	where $z^{(0)}_{[-k]}\equiv z^{(0)}$, and $A_{[-k]}$ is obtained by setting the elements of the $k$-th row and column of $A$ to be $0$. Before proceeding with adaptation of the arguments in \cite[Proposition 6.2]{han2024entrywise}, it is convenient to note that from (2) we have for any $x, y \in \R^{n\times q}$ and $\mathsf{E}_t \in \{\mathsf{F}_t,\mathsf{G}_t\}$,
	\begin{align*}
	\pnorm{ \mathsf{E}_t(x)- \mathsf{E}_t(y)}{}^2&= \sum_{\ell \in [n],r \in [q]} \abs{\mathsf{E}_{s,\ell;r}(x_{\ell\cdot})-\mathsf{E}_{s,\ell;r}(y_{\ell\cdot})}^2\\
	&\leq q\cdot \sum_{\ell \in [n]} \big(1+\pnorm{x_{\ell\cdot} }{}+\pnorm{y_{\ell\cdot} }{}\big)^{2\kappa}\pnorm{x_{\ell\cdot}-y_{\ell\cdot} }{}^{2}\\
	&\leq q\cdot \Big[\max_{\ell \in [n]} \big(1+\pnorm{x_{\ell\cdot} }{}+\pnorm{y_{\ell\cdot} }{}\big)^{2\kappa}\Big]\cdot \pnorm{x-y}{}^{2}.
	\end{align*}
	Let us work on the event $
	\mathscr{E}_0\equiv \big\{\pnorm{A}{\op}\vee \pnorm{A_{[-k]}}{\op}\vee\max_{k \in [n]}\pnorm{A_{k\cdot}}{}\leq c_0 K e_n(x)\big\}$ for some large enough $c_0>2$, and $e_n(x)\equiv 1+\sqrt{x/n}$. By choosing $c_0$ large, in view of \cite[Theorem 4.4.5]{vershynin2018high} and the subsequent remarks, we have $\Prob(\mathscr{E}_0^c)\leq c e^{-x/c}$. Now note that on the event $\mathscr{E}_0$, 
	\begin{align*}
	&\bigpnorm{ z^{(t)}-z_{[-k]}^{(t)}  }{}\\
	&\leq \bigpnorm{A \mathsf{F}_t(z^{(t-1)})- A_{[-k]} \mathsf{F}_t\big(z^{(t-1)}_{[-k]}\big)}{}+ \bigpnorm{\mathsf{G}_t(z^{(t-1)})- \mathsf{G}_t\big(z^{(t-1)}_{[-k]}  }{}\nonumber\\
	&\leq \pnorm{A}{\op}\bigpnorm{\mathsf{F}_t(z^{(t-1)})- \mathsf{F}_t\big(z^{(t-1)}_{[-k]}\big)}{}+ \bigpnorm{ \big(A- A_{[-k]}\big) \mathsf{F}_t\big(z^{(t-1)}_{[-k]}\big)}{} + \bigpnorm{\mathsf{G}_t(z^{(t-1)})- \mathsf{G}_t\big(z^{(t-1)}_{[-k]}}{}\nonumber\\
	&\leq c_q K\Lambda e_n(x)\cdot \max_{\ell \in [n]} \Big(1+\pnorm{z_{\ell\cdot}^{(t-1)}}{}+\pnorm{z_{[-k],\ell\cdot}^{(t-1)}}{}\Big)^\kappa  \bigpnorm{ z^{(t-1)}-z_{[-k]}^{(t-1)}  }{} + \bigpnorm{ A_{[k]} \mathsf{F}_t\big(z^{(t-1)}_{[-k]}\big)}{}.\nonumber
	\end{align*}
	Here $A_{[k]}\equiv A-A_{[-k]}$. By writing $\Delta z^{(t)}_{[-k]}\equiv  z^{(t)}-z_{[-k]}^{(t)}$, we have for $0\leq x\leq n$,
	\begin{align}\label{ineq:gfom_deloc_1}
	\pnorm{\Delta z^{(t)}_{[-k]}}{}&\leq (K\Lambda)^{c_t}\cdot \big(1+\pnorm{z^{(t-1)}}{\infty}\big)\cdot \big(1+\pnorm{\Delta z^{(t-1)}_{[-k]}}{}\big)^{\kappa+1}+\bigpnorm{ A_{[k]} \mathsf{F}_t\big(z^{(t-1)}_{[-k]}\big)}{}.
	\end{align}
	On the other hand, 
	\begin{align}\label{ineq:gfom_deloc_2}
	1+\bigpnorm{z^{(t-1)}_{[-k],k\cdot}}{} \leq c_q\Lambda\cdot \Big( 1+\bigpnorm{z^{(t-2)}_{[-k],k\cdot}}{}\Big) \leq\cdots\leq (c_q\Lambda)^{t-1} \Big(1+\bigpnorm{z^{(0)}_{k\cdot}}{}\Big),
	\end{align}
	we have on the event $\mathscr{E}_0$, for $0\leq x\leq n$,
	\begin{align}\label{ineq:gfom_deloc_3}
	\bigpnorm{ A_{[k]} \mathsf{F}_t\big(z^{(t-1)}_{[-k]}\big)}{}&= \bigg\{\sum_{\ell \neq k} \abs{A_{k,\ell}}^2\pnorm{ \mathsf{F}_{t,k}\big(z^{(t-1)}_{[-k],k\cdot} \big)}{}^2+\bigpnorm{ A_{k,\cdot} \mathsf{F}_t\big(z^{(t-1)}_{[-k]}\big)  }{}^2\bigg\}^{1/2}\nonumber\\
	&\leq \pnorm{A_{k,\cdot}}{}\cdot\bigpnorm{ \mathsf{F}_{t,k}\big(z^{(t-1)}_{[-k],k\cdot}\big)  }{}+ \bigpnorm{ A_{k,\cdot} \mathsf{F}_t\big(z^{(t-1)}_{[-k]}\big)  }{}\nonumber\\
	&\leq c_q K\cdot \Lambda\Big(1+\bigpnorm{z^{(t-1)}_{[-k],k\cdot}}{}^\kappa\Big)+\bigpnorm{ A_{k,\cdot} \mathsf{F}_t\big(z^{(t-1)}_{[-k]}\big)  }{}\nonumber\\
	&\leq (K\Lambda)^{c_t} \cdot\big(1+\pnorm{z^{(0)}}{\infty}\big)^{\kappa}+ \bigpnorm{ A_{k,\cdot} \mathsf{F}_t\big(z^{(t-1)}_{[-k]}\big)  }{}.
	\end{align}
	As $A_{k\cdot}$ is independent of $z_{[-k]}^{(t-1)}$, for $0\leq x\leq n$, on an event $\mathscr{E}_{1,t}\cap \mathscr{E}_0$, where $\Prob(\mathscr{E}_{1,t}^c)\leq c_qe^{-x/c_q}$, 
	\begin{align}\label{ineq:gfom_deloc_4}
	\bigpnorm{ A_{k,\cdot} \mathsf{F}_t\big(z^{(t-1)}_{[-k]}\big)  }{}&\leq c_q \sqrt{x}\cdot K n^{-1/2}\bigpnorm{ \mathsf{F}_t\big(z_{[-k]}^{(t-1)}\big)  }{}\nonumber\\
	&\leq c_q \sqrt{x}\cdot K \Lambda\cdot  \max_{k \in [n]}\Big(1+\bigpnorm{z_{[-k],k\cdot}^{(t-1)} }{}\Big)^{\kappa}\nonumber\\
	&\leq (K\Lambda)^{c_t}\sqrt{x}\cdot \big(1+\pnorm{z^{(0)}}{\infty}\big)^\kappa.
	\end{align}
	Combining the above two displays (\ref{ineq:gfom_deloc_3})-(\ref{ineq:gfom_deloc_4}), for $0\leq x\leq n$, on the event $\mathscr{E}_0\cap \mathscr{E}_{1,t}$, 
	\begin{align*}
	\bigpnorm{ A_{[k]} \mathsf{F}_t\big(z^{(t-1)}_{[-k]}\big)}{}&\leq (K\Lambda)^{c_t}\sqrt{x}\cdot \big(1+\pnorm{z^{(0)}}{\infty}\big)^\kappa.
	\end{align*}
	Consequently, combined with (\ref{ineq:gfom_deloc_1}), on the event $\cap_{s \in [t]}\big(\mathscr{E}_0\cap \mathscr{E}_{1,s}\big)$, for $0\leq x\leq n$, 
	\begin{align*}
	1+\bigpnorm{ \Delta z_{[-k]}^{(t)}  }{}
	&\leq (K\Lambda)^{c_t}\cdot \big(1+\pnorm{z^{(t-1)}}{\infty}\big)\cdot \big(1+\pnorm{\Delta z^{(t-1)}_{[-k]}}{}\big)^{\kappa+1}\\
	&\qquad\qquad +(K\Lambda)^{c_t}\sqrt{x}\cdot \big(1+\pnorm{z^{(0)}}{\infty}\big)^\kappa \\
	&\leq \cdots \leq \Big[K\Lambda \Big(1+\max_{s \in [t]}\pnorm{z^{(s-1)}}{\infty}\Big)\cdot x\Big]^{c_{t}}.
	\end{align*}
	By noting that $\pnorm{z^{(t)}_{k\cdot}}{}\leq \bigpnorm{ z^{(t)}-z_{[-k]}^{(t)}  }{}+\pnorm{z^{(t)}_{[-k],k\cdot}}{}$ and using the estimate (\ref{ineq:gfom_deloc_2}), we arrive at the recursion
	\begin{align*}
	1+\max_{s \in [t+1]}\pnorm{z^{(s-1)}}{\infty}\leq \Big[K\Lambda \Big(1+\max_{s \in [t]}\pnorm{z^{(s-1)}}{\infty}\Big)\cdot x\Big]^{c_{t}}.
	\end{align*}
	Iterating the above estimate to conclude.
\end{proof}

\subsection{Asymmetric version}
Consider an asymmetric GFOM initialized with $(u^{(0)},v^{(0)})\in \R^{m\times q}\times \R^{n\times q}$, and subsequently updated according to 
\begin{align}\label{def:GFOM_asym}
\begin{cases}
u^{(t)} = A \mathsf{F}_t^{\langle 1\rangle}(v^{([0:t-1])})+ \mathsf{G}_{t}^{\langle 1\rangle}(u^{([0:t-1])}) \in \R^{m\times q},\\
v^{(t)}= A^\top \mathsf{G}_t^{\langle 2\rangle}(u^{([0:t])})+\mathsf{F}_t^{\langle 2\rangle}(v^{([0:t-1])})\in \R^{n\times q}.
\end{cases}
\end{align}
Here we denote $A$ as an $m\times n$ random matrix, and the row-separate functions $\mathsf{F}_t^{\langle 1 \rangle}, \mathsf{F}_t^{\langle 2 \rangle}:\R^{n\times q[0:(t-1)]} \to \R^{n\times q}$, $\mathsf{G}_t^{\langle 1 \rangle}: \R^{m\times q[0: (t-1)]} \to \R^{m\times q}$ and $\mathsf{G}_t^{\langle 2 \rangle}:\R^{m\times q[0: t]} \to \R^{m\times q}$ are understood as applied row-wise.

The state evolution for the asymmetric GFOM (\ref{def:GFOM_asym}) is iteratively described---in the following definition---by (i) two row-separate maps $\Phi_t: \R^{m\times q[0:t]} \to \R^{m\times q[0:t]}$ and $\Xi_t: \R^{n\times q[0:t]} \to \R^{n\times q[0:t]}$, and (ii) two sequences of centered Gaussian matrices $\{\mathfrak{U}_k^{([1:\infty))}\}_{k \in [m]}\in (\R^q)^{ [1:\infty)}$ and $\{\mathfrak{V}_\ell^{([1:\infty))}\}_{\ell \in [n]}\in (\R^q)^{ [1:\infty)}$. 

\begin{definition}\label{def:GFOM_se_asym}
	Initialize with $\Phi_0=\mathrm{id}(\R^{m\times q})$, $\Xi_0\equiv \mathrm{id}(\R^{n\times q})$, and $\mathfrak{U}^{(0)}\equiv u^{(0)}$, $\mathfrak{V}^{(0)}\equiv v^{(0)}$. For $t=1,2,\ldots$, with $\E^{(0)}\equiv \E\big[\cdot|(\mathfrak{U}^{(0)},\mathfrak{V}^{(0)})\big]$, and $\pi_n$ denoting the uniform distribution on $[n]$, we execute the following steps:
	\begin{enumerate}
		\item Let $\Phi_{t}:\R^{m\times q[0:t]}\to \R^{m\times q[0:t]}$ be defined as follows: for $w \in [0:t-1]$, $\big[\Phi_{t}(\mathfrak{u}^{([0:t])})\big]_{\cdot,w}\equiv \big[\Phi_{w}(\mathfrak{u}^{([0:w])})\big]_{\cdot,w}$, and for $w=t$,
		\begin{align*}
		\big[\Phi_{t}(\mathfrak{u}^{([0:t])})\big]_{\cdot,t} \equiv \mathfrak{u}^{(t)}+\sum_{s \in [1:t-1]}  \mathsf{G}_{s}^{\langle 2\rangle}\big(\Phi_{s}(\mathfrak{u}^{([0:s])}) \big)\circ (\mathfrak{f}_{s}^{(t-1)})_\top +\mathsf{G}_{t}^{\langle 1\rangle}\big(\Phi_{t-1}(\mathfrak{u}^{([0:t-1])}) \big),
		\end{align*}
		where the correction matrices $\big\{\mathfrak{f}_{s}^{(t-1) } = (\mathfrak{f}_{s,k}^{(t-1) } )_{k \in [m]} \in (\mathbb{M}_q)^m\big\}_{s \in [1:t-1]}$ are determined by
		\begin{align*}
		\mathfrak{f}_{s,k}^{(t-1) } \equiv  \sum_{\ell \in [n]} \E A_{k\ell}^2\cdot \E^{(0)} \nabla_{\mathfrak{V}_\ell^{(s)}} \big\{ \mathsf{F}_{t,\ell}^{\langle 1\rangle}\circ \Xi_{t-1,\ell}\big\} (\mathfrak{V}_\ell^{([0:t-1])})\in \mathbb{M}_q, \quad  k \in [m].
		\end{align*}
		\item Let the Gaussian laws of $\{\mathfrak{U}_k^{(t)}\}\subset \R^q$ be determined via the following correlation specification: for $s \in [1:t]$, $k \in [m]$,
		\begin{align*}
		\mathrm{Cov}\big(\mathfrak{U}_k^{(t)}, \mathfrak{U}_k^{(s)} \big)\equiv  \sum_{\ell \in [n]} \E A_{k\ell}^2 \cdot \E^{(0)}  \big\{ \mathsf{F}_{t,\ell}^{\langle 1\rangle}\circ \Xi_{t-1,\ell} \big\} (\mathfrak{V}_\ell^{([0:t-1])})   \big\{ \mathsf{F}_{s,\ell}^{\langle 1\rangle}\circ \Xi_{s-1,\ell}\big\} (\mathfrak{V}_\ell^{([0:s-1])})^\top.
		\end{align*}
		\item Let $\Xi_{t}:\R^{n\times q[0:t]}\to \R^{n\times q[0:t]}$ be defined as follows: for $w \in [0:t-1]$, $\big[\Xi_{t}(\mathfrak{v}^{([0:t])})\big]_{\cdot,w}\equiv \big[\Xi_{w}(\mathfrak{v}^{([0:w])})\big]_{\cdot,w}$, and for $w=t$,
		\begin{align*}
		\big[\Xi_{t}(\mathfrak{v}^{([0:t])})\big]_{\cdot,t} \equiv \mathfrak{v}^{(t)}+\sum_{s \in [1:t]}  \mathsf{F}_{s}^{\langle 1\rangle}\big(\Xi_{s-1}(\mathfrak{v}^{([0:s-1])}) \big)\circ \mathfrak{g}_{s}^{(t),\top}+\mathsf{F}_{t}^{\langle 2\rangle}\big(\Xi_{t-1}(\mathfrak{v}^{([0:t-1])}) \big),
		\end{align*}
		where  the correction matrices $\big\{\mathfrak{g}_{s}^{(t)}=(\mathfrak{g}_{s,\ell}^{(t)})_{\ell \in [n]} \in (\mathbb{M}_q)^n\big\}_{s \in [1:t]}$ are determined by
		\begin{align*}
		\mathfrak{g}_{s,\ell}^{(t)}\equiv \sum_{k \in [m]} \E A_{k\ell}^2 \cdot \E^{(0)} \nabla_{\mathfrak{U}_k^{(s)}} \big\{ \mathsf{G}_{t,k}^{\langle 2\rangle}\circ\Phi_{t,k}\big\}  (\mathfrak{U}_k^{([0:t])})  \in \mathbb{M}_q,\quad \ell \in [n].
		\end{align*}
		\item Let the Gaussian law of $\{\mathfrak{V}^{(t)}_\ell\}$ be determined via the following correlation specification: for $s \in [1:t]$, $\ell \in [n]$,
		\begin{align*}
		\mathrm{Cov}(\mathfrak{V}_\ell^{(t)},\mathfrak{V}_\ell^{(s)})\equiv  \sum_{k \in [m]} \E A_{k\ell}^2\cdot \E^{(0)} \big\{ \mathsf{G}_{t,k}^{\langle 2\rangle}\circ\Phi_{t,k}\big\}  (\mathfrak{U}_k^{([0:t])}) \big\{ \mathsf{G}_{s,k}^{\langle 2\rangle}\circ\Phi_{s,k}\big\} (\mathfrak{U}_k^{([0:s])})^\top.
		\end{align*}
	\end{enumerate}
\end{definition}

\begin{theorem}\label{thm:GFOM_se_asym}
	Fix $t \in \N$ and $n \in \N$. Suppose the following hold:
	\begin{enumerate}
		\item[($D^\ast$1)] $A\equiv A_0/\sqrt{n}$, where the entries of $A_0\in \R^{m\times n}$ are independent mean $0$ variables such that $\max_{i,j \in [n]}\pnorm{A_{0,ij}}{\psi_2}\leq K$ holds for some $K\geq 2$.
		\item[($D^\ast$2)] For all $s \in [t], \#\in \{1,2\}, k \in [m], \ell \in [n], r\in [q]$, $\mathsf{F}_{s,\ell;r}^{\langle \# \rangle},\mathsf{G}_{s,k;r}^{\langle 1 \rangle} \in C^3(\R^{q[0:s-1]})$ and $\mathsf{G}_{s,k;r}^{\langle 2 \rangle} \in C^3(\R^{q[0:s]})$. Moreover, there exists some $\Lambda\geq 2$ and $\mathfrak{p}\in \N$ such that 
		\begin{align*}
		&\max_{s \in [t]}\max_{\# =1,2}\max_{k\in[m], \ell \in [n]}\max_{r \in [q]}\Big\{ \abs{ \mathsf{F}_{s,\ell;r}^{\langle \# \rangle}(0) }+ \abs{\mathsf{G}_{s,k;r}^{\langle \# \rangle}(0)}\\
		&\quad +\max_{a \in \mathbb{Z}_{\geq 0}^{q[0:s-1]},b \in \mathbb{Z}_{\geq 0}^{q[0:s]},\abs{a}\vee \abs{b}\leq 3 } \Big(\bigpnorm{ \partial^a \mathsf{F}_{s,\ell;r}^{\langle \# \rangle} }{\infty}+\bigpnorm{ \partial^a \mathsf{G}_{s,k;r}^{\langle 1 \rangle}  }{\infty}+ \bigpnorm{  \partial^b \mathsf{G}_{s,k;r}^{\langle 2 \rangle}   }{\infty} \Big)   \Big\}\leq \Lambda.
		\end{align*}
	\end{enumerate}
	Further suppose $1/K\leq m/n\leq K$. Then for any $\Psi \in C^3(\R^{ q[0:t]})$ satisfying (\ref{cond:Psi_asym}), for some $\Lambda_{\Psi}\geq 2$, it holds for some $c_0=c_0(q)>0$ and $c_1\equiv c_1(\mathfrak{p},q)>0$, such that with $\E^{(0)}\equiv \E[\cdot|(u^{(0)},v^{(0)})]$, 
	\begin{align*}
	& \max_{k \in [m]}\bigabs{ \E^{(0)} \Psi\big(u_{k}^{([0:t])}(A)\big)-\E^{(0)}\Psi\big(\Phi_{t,k}\big(\mathfrak{U}_k^{([0:t])}\big)\big) } \\
	& \quad \vee \max_{\ell \in [n]} \bigabs{\E^{(0)} \Psi\big(v_{\ell}^{([0:t])}(A)\big) -\E \Psi\big(\Xi_{t,\ell}\big(\mathfrak{V}_\ell^{([1:t])}\big)\big) } \\
	& \leq \Lambda_\Psi \cdot \big(K\Lambda \log n\cdot (1+\pnorm{u^{(0)}}{\infty}+\pnorm{v^{(0)}}{\infty})\big)^{c_1 t^5} \cdot n^{-1/c_0^t}.
	\end{align*}
\end{theorem}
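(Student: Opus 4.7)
The plan is to establish Theorem \ref{thm:GFOM_se_asym} by symmetrization, reducing it to the already-stated matrix-variate symmetric GFOM state evolution in Theorem \ref{thm:GFOM_se_sym}. Concretely, I would form the enlarged symmetric matrix
\[
\tilde{A}\equiv
\begin{pmatrix}
0_{m\times m} & A\\
A^\top & 0_{n\times n}
\end{pmatrix}
\in\R^{(m+n)\times(m+n)},
\]
whose upper-triangular entries inherit independence, subgaussianity, and variance $1/n$ from $A$; under the assumption $1/K\leq m/n\leq K$ we have $m+n\asymp n$, so $\tilde{A}$ satisfies the symmetric-case variance scaling of (D1) up to absolute constants that can be absorbed into $K$. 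I would then encode the asymmetric iterates $(u^{(t)},v^{(t)})$ as row blocks of a single enlarged sequence $\{\tilde{z}^{(s)}\}_{s\geq 0}\subset\R^{(m+n)\times q}$ via two symmetric half-steps per asymmetric iteration: at odd half-step $s=2t-1$ the symmetric iteration produces a new $u$-block (since $\tilde{A}$ acting on the stacked vector with only a $v$-block yields $Av^{(t-1)}$ in the top $m$ rows), and at even half-step $s=2t$ it produces a new $v$-block (since $\tilde{A}$ acting on the stacked vector with only a $u$-block yields $A^\top u^{(t)}$ in the bottom $n$ rows). The symmetric nonlinearities $\tilde{\mathsf{F}}_s,\tilde{\mathsf{G}}_s$ are assembled block-wise from $\mathsf{F}_t^{\langle 1\rangle},\mathsf{F}_t^{\langle 2\rangle},\mathsf{G}_t^{\langle 1\rangle},\mathsf{G}_t^{\langle 2\rangle}$, padded with zeros on the opposite block. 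Crucially, the same-iterate dependence $\mathsf{G}_t^{\langle 2\rangle}(u^{([0:t])})$ is respected automatically because the $u^{(t)}$-block has already been produced at the earlier half-step $2t-1$ before $v^{(t)}$ is generated at step $2t$.

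Next, I would verify that ($D^\ast$1)--($D^\ast$2) imply the hypotheses (D1)--(D2) of Theorem \ref{thm:GFOM_se_sym} for $\{\tilde{\mathsf{F}}_s,\tilde{\mathsf{G}}_s\}$, with constants changed only by absolute factors, and that $\pnorm{\tilde{z}^{(0)}}{\infty}\leq \pnorm{u^{(0)}}{\infty}+\pnorm{v^{(0)}}{\infty}$. Applying Theorem \ref{thm:GFOM_se_sym} to the enlarged iteration produces an entrywise Gaussian characterization of $\tilde{z}_k^{([0:2t])}$ via Gaussian laws $\tilde{\mathfrak{Z}}_k^{([0:2t])}$ and Onsager-corrected maps $\tilde{\Theta}_{2t,k}$, with the quantitative error $\Lambda_\Psi\cdot(K\Lambda\log n\cdot(1+\pnorm{\tilde{z}^{(0)}}{\infty}))^{c_1(2t)^5}\cdot n^{-1/c_0^{2t}}$, which after adjusting the constant $c_1$ yields the rate stated in the theorem in terms of $t$.

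The final step is a matching of state-evolution parameters: the variance profile $\E\tilde{A}_{k\ell}^2$ vanishes unless $k,\ell$ lie in opposite blocks, and its non-zero values equal $\E A_{k,\ell-m}^2$ (or $\E A_{\ell,k-m}^2$) after the obvious index identification. Consequently the symmetric Onsager coefficient $\tilde{\mathfrak{b}}_{s,k}^{(t)}$ from Definition \ref{def:GFOM_se_sym} splits into exactly $\mathfrak{f}_s^{(t-1)}$ when $k\in[1:m]$ and $\mathfrak{g}_s^{(t)}$ when $k\in[m+1:m+n]$, because derivatives taken with respect to same-block iterates are multiplied by zero and drop out. Similarly the covariance structure of $\tilde{\mathfrak{Z}}_k^{(\cdot)}$ reduces to that of $\mathfrak{U}_k^{(\cdot)}$ or $\mathfrak{V}_{k-m}^{(\cdot)}$. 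Composing $\Psi$ with the projection onto the odd- or even-indexed half-step coordinates then translates the enlarged bound into the two claimed bounds on $\E^{(0)}\Psi(u_k^{([0:t])})$ and $\E^{(0)}\Psi(v_\ell^{([0:t])})$.

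The main obstacle I anticipate is the careful bookkeeping of the interleaved construction, especially ensuring that the derivative of $\tilde{\mathsf{G}}_{2t}$ with respect to the most-recent symmetric iterate $\tilde{z}^{(2t-1)}$ corresponds precisely to the derivative $\nabla_{\mathfrak{U}^{(t)}}\mathsf{G}_{t}^{\langle 2\rangle}\circ\Phi_t$ appearing in $\mathfrak{g}_t^{(t)}$ of Definition \ref{def:GFOM_se_asym}; this requires that the indexing convention for the symmetric GFOM correctly picks up all preceding half-steps, and that the block-diagonal vanishing of the variance profile kills exactly the spurious cross-correction terms that do not appear in the asymmetric formulation. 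A secondary technical point is that Theorem \ref{thm:GFOM_se_sym} is formulated for a single matrix-variable iterate, so the initial row block $\tilde{z}^{(0)}=(u^{(0)},v^{(0)})^\top$ must be accommodated either by a trivial zeroth half-step or by absorbing the initial data into $\tilde{\mathsf{G}}_1$; both are benign.
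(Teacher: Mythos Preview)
The paper does not actually prove Theorem~\ref{thm:GFOM_se_asym}: Appendix~\ref{section:GFOM_theory} collects the matrix-variate GFOM statements ``for the convenience of the readers'' (see the footnote in Section~\ref{sec:proof_techniques}) and attributes them to the cited work \cite{han2024entrywise}; no argument is given in the present paper for either the symmetric or the asymmetric state evolution theorem. So there is no in-paper proof to compare against.

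That said, your symmetrization plan is the standard and correct route: embedding $A$ into the $(m+n)\times(m+n)$ block-antidiagonal symmetric matrix $\tilde{A}$ and interleaving the asymmetric half-steps into a single symmetric GFOM is exactly how one deduces the asymmetric state evolution from the symmetric one, and it is the method used in \cite{han2024entrywise} (and in the AMP literature more broadly). Your identification of the two main bookkeeping points---that the block-zero variance profile $\E\tilde{A}_{k\ell}^2$ kills the unwanted Onsager cross-terms so that $\tilde{\mathfrak b}$ splits into $\mathfrak f$ and $\mathfrak g$, and that the same-iterate dependence of $\mathsf G_t^{\langle 2\rangle}$ on $u^{(t)}$ is handled because $u^{(t)}$ appears at an earlier half-step---is accurate. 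The only cosmetic issue is the doubling $t\mapsto 2t$ in the exponent, which as you note is absorbed into $c_0,c_1$.
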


\begin{theorem}\label{thm:GFOM_se_asym_avg}
	Fix $t \in \N$ and $n \in \N$, and suppose $1/K\leq m/n\leq K$ for some $K\geq 2$. Suppose $(D^\ast 1)$ in Theorem \ref{thm:GFOM_se_asym} holds and $(D^\ast2)$ therein is replaced by
	\begin{enumerate}
		\item[($D^\ast $2)'] 
		$
		\max\limits_{s \in [t]}\max\limits_{  \# = 1,2   }\max\limits_{k \in [m], \ell \in [n]} \max\limits_{r \in [q]}\big\{\pnorm{ \mathsf{F}_{s,\ell;r}^{\langle \# \rangle} }{\mathrm{Lip}}+ \pnorm{\mathsf{G}_{s,k;r}^{\langle \# \rangle}}{\mathrm{Lip}}+\abs{\mathsf{F}_{s,\ell;r}^{\langle \# \rangle} (0)}+ \abs{\mathsf{G}_{s,k;r}^{\langle \# \rangle} (0)}\big\}\leq \Lambda$ for some $\Lambda\geq 2$.
	\end{enumerate}
	Fix a sequence of $\Lambda_\psi$-pseudo-Lipschitz functions $\{\psi_k:\R^{q[0:t]} \to \R\}_{k \in [m\vee n]}$ of order $\mathfrak{p}$, where $\Lambda_\psi\geq 2$. Then for any $\mathfrak{r}\geq 1$, there exists some $C_0=C_0(\mathfrak{p},q,\mathfrak{r})>0$ such that with $\E^{(0)}\equiv \E[\cdot|(u^{(0)},v^{(0)})]$,  
	\begin{align*}
	&\E^{(0)} \bigg[\biggabs{\frac{1}{m}\sum_{k \in [m]} \psi_k\big(u_k^{([0:t])}(A)\big) - \frac{1}{m}\sum_{k \in [m]}  \E^{(0)}  \psi_k\big[\Phi_{t,k}\big(\mathfrak{U}_k^{([0:t])}\big)\big]  }^{\mathfrak{r}}\bigg]\\
	&\quad \vee \E^{(0)}  \bigg[\biggabs{\frac{1}{n}\sum_{\ell \in [n]} \psi_\ell\big(v_\ell^{([0:t])}(A)\big) - \frac{1}{n}\sum_{\ell \in [n]}  \E^{(0)}  \psi_\ell\big[\Xi_{t,\ell}\big(\mathfrak{V}_\ell^{([0:t])}\big)\big]   }^{\mathfrak{r}}\bigg]  \\
	&\leq \big(K\Lambda\Lambda_\psi\log n\cdot (1+\pnorm{u^{(0)}}{\infty}+\pnorm{v^{(0)}}{\infty})\big)^{C_0 t^5}\cdot n^{-1/C_0^t}. 
	\end{align*}
\end{theorem}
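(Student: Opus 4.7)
\textbf{Proof plan for Theorem~\ref{thm:GFOM_se_asym_avg}.} The strategy is to reduce the averaged, pseudo-Lipschitz, merely-Lipschitz statement to the entrywise, polynomially-bounded, $C^3$ statement of Theorem~\ref{thm:GFOM_se_asym}, which we may invoke as a black box. The reduction proceeds in three layers: (i) mollify the Lipschitz maps $\mathsf{F}^{\langle \#\rangle}_s,\mathsf{G}^{\langle \#\rangle}_s$ to produce $C^3$ surrogates $\widetilde{\mathsf{F}}^{\langle\#\rangle}_{s,\epsilon},\widetilde{\mathsf{G}}^{\langle\#\rangle}_{s,\epsilon}$ obtained by convolution with a smooth kernel at scale $\epsilon = n^{-\alpha_t}$ for a small $\alpha_t>0$; (ii) smooth the pseudo-Lipschitz $\psi_k$ via the same device so that the bound (\ref{cond:Psi_asym}) is satisfied with $\Lambda_\Psi \lesssim \Lambda_\psi\cdot \epsilon^{-3}$; and (iii) replace averages $m^{-1}\sum_k$ by $\max_k$ via the trivial inequality, and leverage the entrywise rate from Theorem~\ref{thm:GFOM_se_asym} with a small enough $\alpha_t$ so that the mollification error is absorbed into $n^{-1/C_0^t}$.

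The first step requires a stability estimate for the GFOM iteration: if $\tilde{u}^{(s)},\tilde{v}^{(s)}$ denote the iterates driven by the mollified $\widetilde{\mathsf{F}}^{\langle\#\rangle}_{s,\epsilon},\widetilde{\mathsf{G}}^{\langle\#\rangle}_{s,\epsilon}$, then a straightforward induction using $\pnorm{A}{\op}=\bigop(1)$ (which holds under $(D^\ast 1)$) gives
\begin{align*}
\max_{s\leq t}\big(\pnorm{u^{(s)}-\tilde{u}^{(s)}}{\infty} + \pnorm{v^{(s)}-\tilde{v}^{(s)}}{\infty}\big) \leq (K\Lambda)^{c_t}\cdot \epsilon,
\end{align*}
on an event of probability $\geq 1-n^{-D}$ for any prescribed $D$. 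A parallel stability estimate holds for the state-evolution objects $\Phi_{t,k},\Xi_{t,\ell}$ and the associated Gaussian laws (via the correction matrices and covariance updates in Definition~\ref{def:GFOM_se_asym}), so the $C^3$ state-evolution applied to the mollified system closely tracks the Lipschitz one.

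For step (ii), the pseudo-Lipschitz test function $\psi_k$ is handled by writing $\psi_k = \chi_R \cdot \psi_k + (1-\chi_R)\cdot \psi_k$ where $\chi_R$ is a smooth cutoff at radius $R = (\log n)^{c_t}$, then mollifying $\chi_R\psi_k$. The uncut tail contributes to the error bound at scale $R^{\mathfrak{p}}\cdot \Prob^{(0)}\big(\max_{s,k}\pnorm{u^{(s)}_k}{}+\pnorm{v^{(s)}_\ell}{}>R\big)$, which is polynomially small in $n$ by the asymmetric analog of the delocalization estimate in Proposition~\ref{prop:GFOM_deloc_sym} (proved by the same leave-one-out argument, which carries over directly to the block-symmetrization $\begin{pmatrix} 0 & A\\ A^\top & 0\end{pmatrix}$). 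Together with $\E^{(0)} \max_{s\leq t,k\leq m}\pnorm{\mathfrak{U}^{(s)}_k}{}^{c_t\mathfrak{p}\mathfrak{r}} = \bigo((\log n)^{c_t})$ for the state-evolution Gaussians, we obtain moment control of the pseudo-Lipschitz test function in terms of the entrywise smooth-test-function error from Theorem~\ref{thm:GFOM_se_asym}.

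The main obstacle is quantitative bookkeeping: one must choose the mollification scale $\epsilon$, the truncation radius $R$, and the degree of moment control to jointly satisfy (a) $\epsilon\cdot R^{c_t\mathfrak{p}}\leq n^{-1/C_0^t}$ after propagation through the recursion, (b) $\Lambda_\Psi = \epsilon^{-3}$ is absorbed by the $n^{-1/c_0^t}$ factor from Theorem~\ref{thm:GFOM_se_asym} (forcing the $t^5$-type exponent to grow polynomially and accounting for the doubly-exponential constant $C_0^t$), and (c) passing from entrywise bounds $\max_k$ to $\ell_{\mathfrak{r}}$-averaged bounds via Jensen and Markov. This bookkeeping inflates the exponent on $t$ from $t^5$ (entrywise) to a possibly larger polynomial, but since the theorem only claims $t^5$, one verifies the inductive step carefully to keep the exponent unchanged. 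The remaining calculations are routine consequences of the apriori/stability estimates for both $(u^{(s)},v^{(s)})$ and the state-evolution parameters.
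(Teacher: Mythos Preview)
The paper does not actually prove Theorem~\ref{thm:GFOM_se_asym_avg}: it is stated without proof in Appendix~\ref{section:GFOM_theory} as (a matrix-variate version of) a result from \cite{han2024entrywise}, included ``for the convenience of the readers.'' So there is no in-paper proof to compare your proposal against.

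That said, your plan is the standard route and is essentially how this reduction is carried out in \cite{han2024entrywise}: mollify the Lipschitz nonlinearities at scale $\epsilon=n^{-\alpha_t}$ to access the $C^3$ entrywise theorem, smooth and truncate the pseudo-Lipschitz test functions, and use delocalization (the asymmetric analogue of Proposition~\ref{prop:GFOM_deloc_sym}, which the paper records as Proposition~\ref{prop:GFOM_deloc_asym}) to kill the tails. One small imprecision: your stability claim $\max_s\|u^{(s)}-\tilde u^{(s)}\|_\infty\le (K\Lambda)^{c_t}\epsilon$ does not follow directly from $\|A\|_{\op}=\bigo(1)$, which only gives an $\ell_2$ (Frobenius) bound row-wise. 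For the \emph{averaged} statement this is harmless---Cauchy--Schwarz on $m^{-1}\sum_k(1+\|u_k\|+\|\tilde u_k\|)^{\mathfrak p-1}\|u_k-\tilde u_k\|$ converts the $\ell_2$ stability into what you need---but you should phrase the perturbation step accordingly rather than asserting $\ell_\infty$ control. The rest of the bookkeeping (choosing $\epsilon$, $R$, and tracking the $t^5$ exponent) is exactly the kind of routine-but-tedious calibration you describe.
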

\begin{figure}[t!]
	\centering
	\includegraphics[width=0.99\linewidth]{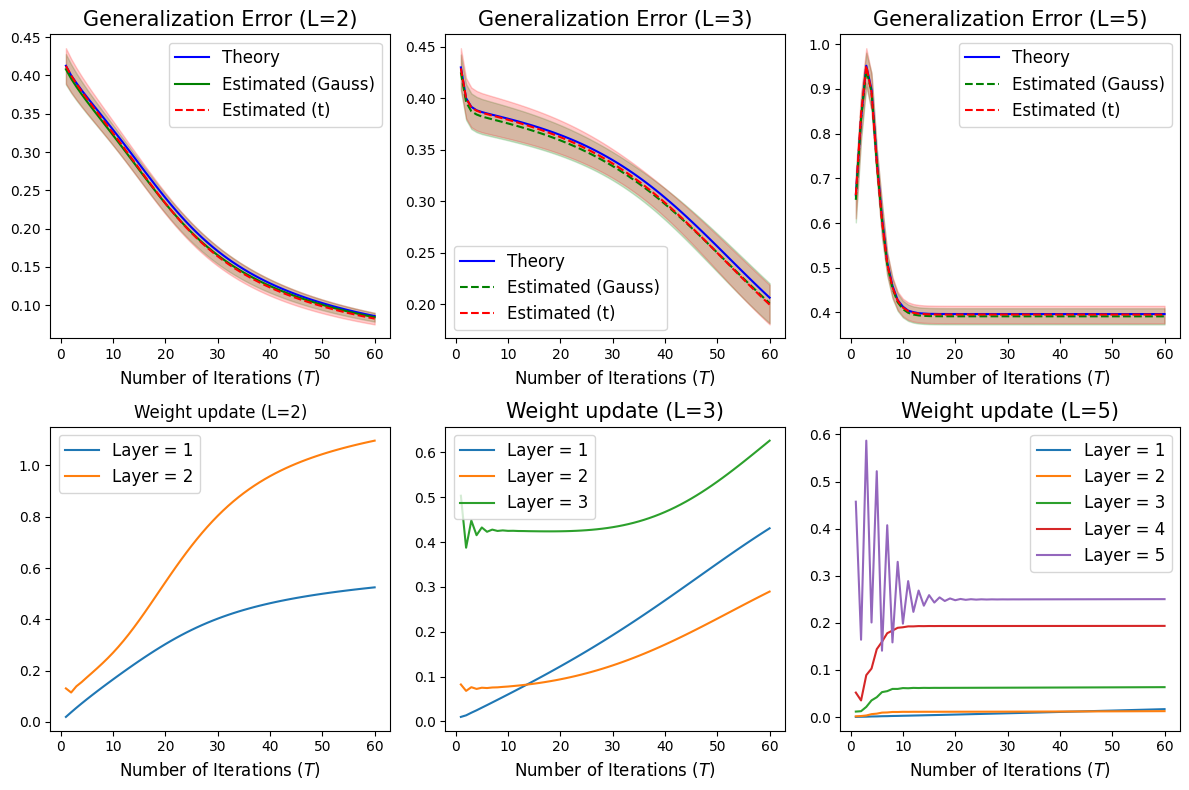}
	\caption{Algorithmic estimation of the generalization error (noiseless). }
	\label{fig:estimation_error_smallnoise}
\end{figure}

We have the following analogue of Proposition \ref{prop:GFOM_deloc_sym}; the proof is omitted for simplicity.

\begin{proposition}\label{prop:GFOM_deloc_asym}
Fix $t \in \N$ and $n \in \N$, and suppose $1/K\leq m/n\leq K$ for some $K\geq 2$. Suppose $(D^\ast 1)$ in Theorem \ref{thm:GFOM_se_asym} holds and $(D^\ast2)$ therein is replaced by the following:
\begin{enumerate}
	\item[(D*2)''] There exists some some $\Lambda\geq 2$ and $\kappa\geq 0$ such that
	\begin{align*}
	&\max\limits_{s \in [t]} \max\limits_{  \# = 1,2   }\max\limits_{k \in [m], \ell \in [n]} \max\limits_{r \in [q]} \bigg\{\max\limits_{x\neq y} \biggabs{ \frac{\mathsf{F}^{\langle \# \rangle}_{s,\ell;r}(x)-\mathsf{F}^{\langle \# \rangle}_{s,\ell;r}(y)}{(1+\pnorm{x}{}+\pnorm{y}{})^\kappa \pnorm{x-y}{} }}+ \abs{\mathsf{F}^{\langle \# \rangle}_{s,\ell;r}(0)}\\
	&\qquad\qquad +  \max\limits_{x\neq y} \biggabs{ \frac{\mathsf{G}^{\langle \# \rangle}_{s,k;r}(x)-\mathsf{G}^{\langle \# \rangle}_{s,k;r}(y)}{(1+\pnorm{x}{}+\pnorm{y}{})^\kappa\pnorm{x-y}{} }}+ \abs{\mathsf{G}^{\langle \# \rangle}_{s,k;r}(0)} \bigg\}\leq \Lambda.
	\end{align*}	 
\end{enumerate}
Then there exists some constant $c_t\equiv c_t(t,q,\kappa)>0$ such that for $0\leq x\leq n$,
\begin{align*}
\max_{k \in [m],\ell \in [n]}\Prob^{(0)}\Big[\pnorm{u^{(t)}_{k\cdot}}{}\vee \pnorm{v^{(t)}_{\ell\cdot}}{}\geq \big(K\Lambda (1+\pnorm{u^{(0)}}{\infty}+\pnorm{v^{(0)}}{\infty})\cdot x\big)^{c_t}\Big]\leq c_te^{-x/c_t}.
\end{align*}
\end{proposition}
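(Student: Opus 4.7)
The plan is to mirror the leave-one-out strategy used for Proposition \ref{prop:GFOM_deloc_sym}, suitably adapted to the two-sided asymmetric iteration. Fix $k \in [m]$ and $\ell \in [n]$. To control $\|u^{(t)}_{k\cdot}\|$, I would introduce the leave-one-out matrix $A_{[-k,:]}$ obtained by zeroing out the $k$-th row of $A$, and the corresponding leave-one-out iterates $(u^{(s)}_{[-k]}, v^{(s)}_{[-k]})$ defined by the same recursion (\ref{def:GFOM_asym}) but with $A$ replaced by $A_{[-k,:]}$, initialized at the same $(u^{(0)}, v^{(0)})$. By construction $(u^{(s)}_{[-k]}, v^{(s)}_{[-k]})_{s\geq 0}$ is independent of $A_{k\cdot}$. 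Symmetrically, to control $\|v^{(t)}_{\ell\cdot}\|$, I would use $A_{:,[-\ell]}$ obtained by zeroing out the $\ell$-th column of $A$.

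The argument then proceeds in three steps. First, work on the high-probability event $\mathscr{E}_0$ where $\|A\|_{\op}$, $\|A_{[-k,:]}\|_{\op}$, $\|A_{k\cdot}\|$, and (for the second claim) $\|A_{:,[-\ell]}\|_{\op}$ and $\|A_{\cdot\ell}\|$ are all bounded by $c_0 K e_n(x)$ with $e_n(x) = 1 + \sqrt{x/n}$, which holds except on an event of probability $\leq c e^{-x/c}$ by (D*1) and the fact that $m\asymp n$. Second, since the $k$-th row of $A_{[-k,:]}$ is zero, the $k$-th row of the leave-one-out iterate satisfies the purely scalar recursion $u^{(s)}_{[-k],k\cdot} = \mathsf{G}^{\langle 1 \rangle}_{s,k}\big(u^{([0:s-1])}_{[-k],k\cdot}\big)$, so iterating the $\kappa$-pseudo-Lipschitz bound in (D*2)'' yields the analogue of (\ref{ineq:gfom_deloc_2}):
\begin{align*}
1 + \bigpnorm{u^{(s)}_{[-k],k\cdot}}{} \leq (K\Lambda)^{c_t}\big(1 + \pnorm{u^{(0)}_{k\cdot}}{}\big)^{(\kappa+1)^{s}}.
\end{align*}
Note that this bound does not need any control on the full $v^{(s)}_{[-k]}$ trajectory, since only the single designated row enters the recursion for $u^{(s)}_{[-k],k\cdot}$.

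Third, the heart of the argument is a joint recursive difference estimate for $\Delta u^{(s)} \equiv u^{(s)} - u^{(s)}_{[-k]}$ and $\Delta v^{(s)} \equiv v^{(s)} - v^{(s)}_{[-k]}$. Splitting $A = A_{[-k,:]} + A_{[k,:]}$ (with $A_{[k,:]}$ supported only on row $k$), a triangle-inequality decomposition using $\|A_{[-k,:]}\|_{\op}$ and the pseudo-Lipschitz property of $\mathsf{F}_s^{\langle \#\rangle}, \mathsf{G}_s^{\langle \#\rangle}$ yields, on $\mathscr{E}_0$,
\begin{align*}
\pnorm{\Delta u^{(s)}}{} &\leq (K\Lambda e_n(x))^{c_t} M_{s-1}^{\kappa}\big(\pnorm{\Delta u^{(s-1)}}{} + \pnorm{\Delta v^{(s-1)}}{}\big) + \bigpnorm{A_{[k,:]}\mathsf{F}_s^{\langle 1\rangle}\big(v^{(s-1)}_{[-k]}\big)}{},\\
\pnorm{\Delta v^{(s)}}{} &\leq (K\Lambda e_n(x))^{c_t} M_{s}^{\kappa}\big(\pnorm{\Delta u^{(s)}}{} + \pnorm{\Delta v^{(s-1)}}{}\big) + \bigpnorm{A_{[k,:]}^\top\mathsf{G}_s^{\langle 2\rangle}\big(u^{(s)}_{[-k]}\big)}{},
\end{align*}
where $M_{s} \equiv 1 + \pnorm{u^{(s)}}{\infty} + \pnorm{v^{(s)}}{\infty}$. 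The last `boundary' terms on each line are handled exactly as in (\ref{ineq:gfom_deloc_3})--(\ref{ineq:gfom_deloc_4}): the support of $A_{[k,:]}$ on the $k$-th row, combined with (D*2)'' applied to $\mathsf{F}^{\langle 1\rangle}_{s,k}$ or $\mathsf{G}^{\langle 2 \rangle}_{s,k}$, reduces the estimate to controlling $\|A_{k\cdot} \mathsf{F}^{\langle 1\rangle}_s(v^{(s-1)}_{[-k]})\|$ and $\|A_{k\cdot}^\top \mathsf{G}^{\langle 2\rangle}_{s,k}(u^{(s)}_{[-k],k\cdot})\|$; by the independence of $A_{k\cdot}$ from $v^{(s-1)}_{[-k]}$ and the apriori row bound from step two, a conditional subgaussian concentration inequality gives, on an additional event of probability $1 - c e^{-x/c}$, a bound of the form $(K\Lambda)^{c_t}\sqrt{x}\,(1 + \pnorm{u^{(0)}}{\infty} + \pnorm{v^{(0)}}{\infty})^\kappa$. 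The triangle inequality $\pnorm{u^{(t)}_{k\cdot}}{} \leq \pnorm{\Delta u^{(t)}}{} + \|u^{(t)}_{[-k],k\cdot}\|$ (and its analogue for $v$) then closes the argument by union bound.

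The main obstacle I anticipate is the self-referential bookkeeping of the pseudo-Lipschitz exponent: the factor $M_s^\kappa$ in the recursion above must itself be controlled inductively in $t$, exactly as in the last display of the symmetric proof. The way to close this is to bootstrap: the difference recursion, combined with the apriori row estimate $\|u^{(t)}_{[-k],k\cdot}\| + \|v^{(t)}_{[-\ell]^\top,\ell\cdot}\|$ from step two, yields a self-improving bound $M_t \leq [K\Lambda(1 + \pnorm{u^{(0)}}{\infty} + \pnorm{v^{(0)}}{\infty}) x]^{c_t}$ by induction on $t$. Once this is established, the final $\ell_2$-row bounds follow. The only genuinely new element relative to the symmetric case is carrying two interleaved iterates through the Gronwall step, which introduces a factor of $e_n(x)$ per level of coupling but does not change the polynomial structure of the final bound.
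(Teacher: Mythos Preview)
The paper actually omits the proof of this proposition entirely, stating only that it is an analogue of Proposition~\ref{prop:GFOM_deloc_sym} and ``the proof is omitted for simplicity.'' Your proposal is precisely the natural adaptation the paper has in mind: the row (respectively column) leave-one-out construction for $u$ (respectively $v$), the apriori scalar recursion for the designated row, the joint difference recursion for $(\Delta u^{(s)}, \Delta v^{(s)})$, and the inductive bootstrap on $M_t$ all mirror the symmetric argument correctly.

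One small point worth tightening: in your step three, the boundary term $\|A_{k\cdot}\mathsf{F}_s^{\langle 1\rangle}(v^{(s-1)}_{[-k]})\|$ requires, after the conditional subgaussian bound, control on $n^{-1/2}\|\mathsf{F}_s^{\langle 1\rangle}(v^{(s-1)}_{[-k]})\|_F$, which in turn needs a row-wise bound on the \emph{full} leave-one-out trajectory $v^{(s-1)}_{[-k]}$, not just the single designated row $u^{(s)}_{[-k],k\cdot}$ from step two. This is not a gap---it is handled by the triangle-inequality splitting $\|v^{(s-1)}_{[-k]}\|_\infty \leq \|v^{(s-1)}\|_\infty + \|\Delta v^{(s-1)}\|$, which feeds back into the recursion and is absorbed by the bootstrap on $M_t$ you describe in your final paragraph (exactly as in the symmetric proof's passage from (\ref{ineq:gfom_deloc_4}) to the final display). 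Just be sure when writing it out that you invoke the bootstrap, not step two alone, at that point.
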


\section{Additional simulation results}\label{section:simulation_additional}

In this appendix, we present analogues of Figures \ref{fig:estimation_error} and \ref{fig:multi-index} in the noiseless setting, i.e., $\xi = 0_m$. The simulations parameters are otherwise the same as those used in Sections \ref{subsection:alg_est_gen_err} and \ref{subsection:alg_est_gen_err_multi}, and Algorithm~\ref{alg:aug_gd_nn} is run for $60$ iterations with $80$ Monte Carlo repetitions. Specifically:
\begin{itemize}
    \item In Figure \ref{fig:estimation_error_smallnoise}, we present the noiseless analogue of Figure \ref{fig:estimation_error}.
    \item In Figure \ref{fig:multi-index_noiseless}, we present the noiseless analogue of Figure \ref{fig:multi-index}.
\end{itemize}

Not surprisingly, in both Figures \ref{fig:estimation_error_smallnoise} and \ref{fig:multi-index_noiseless} we observe that the error bars are substantially smaller than their counterparts in Figures \ref{fig:estimation_error} and \ref{fig:multi-index}. This shows that the accuracy of Algorithm~\ref{alg:aug_gd_nn} can be substantially improved for large signal-to-noise ratios, which can be achieved either by reducing the noise level or by increasing the sample size.

\section*{Acknowledgments}
The research of Q. Han is partially supported by NSF grant DMS-2143468. The research of M. Imaizumi is supported by JSPS KAKENHI (24K02904), JST CREST (JPMJCR21D2), and JST FOREST (JPMJFR216I).

\begin{figure}[t!]
    \centering
    \includegraphics[width=0.99\linewidth]{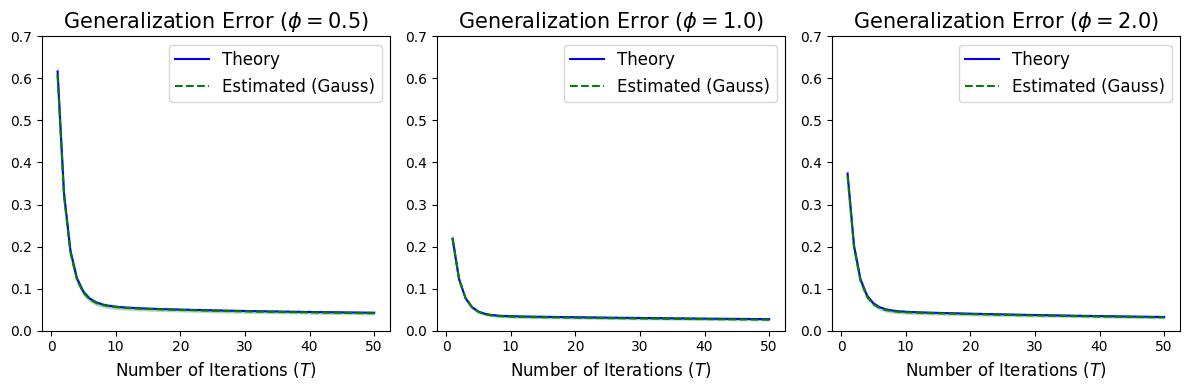}
    \caption{Algorithmic estimation of the generalization error with
a multi-index model (noiseless).}
    \label{fig:multi-index_noiseless}
\end{figure}

\bibliographystyle{alpha}
\bibliography{mybib}

\end{document}